\definecolor{yuting}{RGB}{255,69,0}
\definecolor{gen}{RGB}{199,21,133} 
\definecolor{yxc}{RGB}{21,199,133} 
\definecolor{yc}{RGB}{21,0,255}
\DeclareMathOperator{\ind}{\mathds{1}}
\newcommand{\mymid}{\,|\,}
\newcommand{\overalpha}{\overline{\alpha}}
\newcommand{\Pdata}{p_{\mathsf{data}}}
\newcommand{\diff}{\mathrm{d}}
\newcommand{\score}{\mathsf{score}}
\newcommand{\Jacobi}{\mathsf{Jacobi}}
\newtheorem{assumption}{\textbf{Assumption}}
\newtheorem{definition}{\textbf{Definition}}
\newtheorem{claim}{\textbf{Claim}}
\newtheorem{remark}{\textbf{Remark}}
\theoremstyle{plain}
\newtheorem{theo}{Theorem}[section]
\newtheorem{lem}{Lemma}[section]
\newtheorem{prop}{Proposition}[section]
\newtheorem{cor}{Corollary}[section]
\theoremstyle{definition} 
\newtheorem{nota}{Notation}[section]
\newtheorem{de}{Definition}[section]
\newtheorem{exa}{Example}[section]
\newtheorem{as}{Assumption}[section]
\newtheorem{alg}{Algorithm}[section]
\newcommand{\btheo}{\begin{theo}}
\newcommand{\bde}{\begin{de}}
\newcommand{\ble}{\begin{lem}}
\newcommand{\bpr}{\begin{prop}}
\newcommand{\bno}{\begin{nota}}
\newcommand{\bex}{\begin{exa}}
\newcommand{\bcor}{\begin{cor}}
\newcommand{\spro}{\begin{proof}}
\newcommand{\bas}{\begin{as}}
\newcommand{\balg}{\begin{alg}}
\newcommand{\etheo}{\end{theo}}
\newcommand{\ede}{\end{de}}
\newcommand{\ele}{\end{lem}}
\newcommand{\epr}{\end{prop}}
\newcommand{\eno}{\end{nota}}
\newcommand{\eex}{\end{exa}}
\newcommand{\ecor}{\end{cor}}
\newcommand{\fpro}{\end{proof}}
\newcommand{\eas}{\end{as}}
\newcommand{\ealg}{\end{alg}}
\theoremstyle{plain}
\newtheorem{theos}{Theorem}
\newtheorem{props}{Proposition}
\newtheorem{lems}{Lemma}
\newtheorem{cors}{Corollary}
\theoremstyle{definition}
\newtheorem{exas}{Example}
\newtheorem{algs}{Algorithm}
\newtheorem{asss}{Assumption}
\newtheorem{defns}{Definition}
\newcommand{\btheos}{\begin{theos}}
\newcommand{\etheos}{\end{theos}}
\newcommand{\bprops}{\begin{props}}
\newcommand{\eprops}{\end{props}}
\newcommand{\bdes}{\begin{defns}}
\newcommand{\edes}{\end{defns}}
\newcommand{\blems}{\begin{lems}}
\newcommand{\elems}{\end{lems}}
\newcommand{\bcors}{\begin{cors}}
\newcommand{\ecors}{\end{cors}}
\newcommand{\bexs}{\begin{exas}}
\newcommand{\eexs}{\end{exas}}
\newcommand{\balgs}{\begin{algs}}
\newcommand{\ealgs}{\end{algs}}
\newcommand{\bass}{\begin{asss}}
\newcommand{\eass}{\end{asss}}
\newcommand{\ltwo}[1]{\|#1\|_2}
\newcommand{\real}{\ensuremath{\mathbb{R}}}
\newcommand{\defn}{\coloneqq}
\long\def\comment#1{}
\newcommand{\HACKPROOF}{\begin{proof}}
\newcommand{\HACKENDPROOF}{\end{proof}}
\newlength{\widebarargwidth}
\newlength{\widebarargheight}
\newlength{\widebarargdepth}
\long\def\@makecaption#1#2{
        \vskip 0.8ex
        \setbox\@tempboxa\hbox{\small {\bf #1:} #2}
        \parindent 1.5em  %% How can we use the global value of this???
        \dimen0=\hsize
        \advance\dimen0 by -3em
        \ifdim \wd\@tempboxa >\dimen0
                \hbox to \hsize{
                        \parindent 0em
                        \hfil 
                        \parbox{\dimen0}{\def\baselinestretch{0.96}\small
                                {\bf #1.} #2
                                %%\unhbox\@tempboxa
                                } 
                        \hfil}
        \else \hbox to \hsize{\hfil \box\@tempboxa \hfil}
        \fi
        }
\begin{document}

\title{A Sharp Convergence Theory for \\ The Probability Flow ODEs of Diffusion Models\footnotetext{This manuscript presents improved theory for probability flow ODEs compared to its earlier version \citet{li2023towards}.}}

\author{Gen Li\thanks{Department of Statistics, The Chinese University
of Hong Kong, Hong Kong.} \and  Yuting Wei\thanks{Department of Statistics and Data Science, Wharton School, University
of Pennsylvania, Philadelphia, PA 19104, USA.} 
\and Yuejie Chi\thanks{Department of Electrical and Computer Engineering, Carnegie Mellon University, Pittsburgh, PA 15213, USA.}
\and Yuxin Chen\footnotemark[2] \thanks{Department of Electrical and Systems Engineering, University
of Pennsylvania, Philadelphia, PA 19104, USA.} 
 }

%\date{}
\date{\today}
% \date{\today}

\maketitle

\medskip

\begin{abstract}

Diffusion models, which convert noise into new data instances by learning to reverse a diffusion process, have become a cornerstone in contemporary generative modeling.   
	In this work, we develop non-asymptotic convergence theory for a popular diffusion-based sampler (i.e., the probability flow ODE sampler) in discrete time,  assuming access to $\ell_2$-accurate estimates of the (Stein) score functions. 
For distributions in $\mathbb{R}^d$, 
we prove that $d/\varepsilon$ iterations --- modulo some logarithmic and lower-order terms --- are sufficient 
to approximate the target distribution to within $\varepsilon$ total-variation distance. 
This is the first result establishing nearly linear dimension-dependency (in $d$) for the probability flow ODE sampler.  
	Imposing only minimal assumptions on the target data distribution (e.g., no smoothness assumption is imposed), our results also characterize how $\ell_2$ score estimation errors affect the quality of the data generation processes.  
	In contrast to prior works, our theory is developed based on an elementary yet versatile non-asymptotic approach without the need of resorting to SDE and ODE toolboxes. 
 
\end{abstract}

	%for another mainstream stochastic sampler (i.e., a type of the denoising diffusion probabilistic model), we derive a convergence rate proportional to $1/\sqrt{T}$, matching the state-of-the-art theory. 

	%Further, we design two accelerated variants, improving the convergence to $1/T^2$ for the ODE-based sampler and $1/T$ for the DDPM-type sampler, which might be of independent theoretical and empirical interest.

% towards understanding the data generation process of diffusion models in discrete time,

% Further, we demonstrate how to accelerate these two samplers, assuming availability of estimates of a few additional objects. 

% Score-based generative models have received considerable attention in recent years due to their empirical successes in achieving state-of-the-art performances in generating audio and image data.
% In this work, we provide tight characterizations for the sampling qualities for two types of widely-used sampling approaches, including  the denoising diffusion probabilistic models (DDPMs) and the probability flow ODE model. Our analyses uncover fast convergences of these two methods, improving upon  prior results in terms of the dependence on $T$ (with $T$ being the number of steps). 
% In particular, for probability flow ODE methods, we establish an $O(1/T)$ error rate without imposing the smoothness of the underlying distribution, which improves upon the error bound $O(1/\sqrt{T})$ established in existing theory.  

\noindent \textbf{Keywords:} diffusion models, score-based generative modeling, non-asymptotic theory, probability flow ODE 

\setcounter{tocdepth}{2}
\tableofcontents

\section{Introduction}

Diffusion models have emerged as a cornerstone in contemporary generative modeling, a task that learns to generate new data instances (e.g., images, text, audio) that look similar in distribution to the training data \citep{ho2020denoising,sohl2015deep,song2019generative,dhariwal2021diffusion,jolicoeur2021adversarial,chen2021wavegrad,kong2021diffwave,austin2021structured}.  
Originally proposed by \citet{sohl2015deep} and later popularized by \citet{song2019generative,ho2020denoising}, 
the mainstream diffusion generative models --- e.g., denoising diffusion implicit models (DDIMs) \citep{song2020denoising} 
and denoising diffusion probabilistic models (DDPMs) \citep{ho2020denoising}   
 --- have underpinned major successes in content generators like DALL$\cdot$E \citep{ramesh2022hierarchical}, Stable Diffusion \citep{rombach2022high} 
and Imagen \citep{saharia2022photorealistic}, 
claiming state-of-the-art performance in the now broad field of generative artificial intelligence (AI). 
See  \citet{yang2022diffusion,croitoru2023diffusion,chen2024overview} for overviews of recent development.

In a nutshell, a diffusion generative model is based upon two stochastic processes in $\real^d$: 
\begin{itemize}
	\item[1)] a forward process 
		\begin{equation}
			X_0 \rightarrow X_1 \rightarrow \cdots \rightarrow X_T
			\label{eq:forward-process-informal}
		\end{equation}
		that starts from a sample drawn from the target data distribution (e.g., of natural images) 
		and gradually diffuses it into a noise-like distribution (e.g., standard Gaussians); 
	\item[2)] a reverse process 
		\begin{equation}
			Y_T \rightarrow Y_{T-1} \rightarrow \cdots \rightarrow Y_0
			\label{eq:reverse-process-informal}
		\end{equation}
		that starts from pure noise (e.g., standard Gaussians) and successively converts it into new samples sharing similar distributions as the target data distribution. 
\end{itemize}
\noindent 
Transforming data into noise in the forward process is straightforward, often hand-crafted by increasingly injecting more noise into the data at hand.   
What is challenging is the construction of the reverse process: 
how to generate the desired information out of pure noise? 
To do so, a diffusion model learns to build a reverse process \eqref{eq:reverse-process-informal}
 that imitates the dynamics of the forward process \eqref{eq:forward-process-informal} in a time-reverse fashion; 
 more precisely, the design goal is to ascertain  distributional proximity\footnote{Two random vectors $X$ and $Y$ are said to obey $X \overset{\mathrm{d}}{=} Y$ (resp.~$X \overset{\mathrm{d}}{\approx} Y$) if they are equivalent (resp.~close) in distribution.} 
\begin{equation}
	Y_t \,\overset{\mathrm{d}}{\approx}\, X_t, \qquad t = T,\cdots,1	
\end{equation}
through proper learning based on how the training data propagate in the forward process. 
Encouragingly, there often exist feasible strategies to achieve this goal  
as long as faithful estimates about the (Stein) score functions --- the gradients of the log marginal density of the forward process --- are available \citep{anderson1982reverse,haussmann1986time}. 
Viewed in this light, a diverse array of diffusion models are frequently referred to as {\em score-based generative modeling (SGM)}. 
The popularity of SGM was initially motivated by, and has since further inspired,  numerous recent studies on the problem of learning score functions, 
a subroutine that also goes by the name of score matching (e.g., \citet{hyvarinen2005estimation,hyvarinen2007some,vincent2011connection,song2020sliced,koehler2022statistical}).

Nonetheless, despite the mind-blowing empirical advances, a mathematical theory for diffusion generative models is still in its infancy. 
Given the complexity of developing a full-fledged end-to-end theory, a divide-and-conquer approach has been advertised, 
decoupling the score learning phase (i.e., how to estimate score functions reliably from training data) 
and the generative sampling phase (i.e., how to generate new data instances given the score estimates). 
In particular, 
the past few years have witnessed growing interest and remarkable progress from the theoretical community
towards understanding the generative sampling phase 
\citep{block2020generative,de2021diffusion,liu2022let,de2022convergence,lee2023convergence,pidstrigach2022score,chen2022sampling,chen2022improved,chen2023restoration,tang2024contractive,tang2024score,pedrotti2023improved,liang2024non,li2024adapting}. 
For instance, 
polynomial-time convergence guarantees have been established for stochastic samplers (e.g., \citet{chen2022sampling,chen2022improved,benton2023linear,li2023towardsICLR,tang2024contractive,li2024accelerating,mbacke2023note,liang2024non,li2024adapting}) and deterministic samplers (e.g., \citet{chen2023restoration,benton2023error,li2023towardsICLR,gao2024convergence2,li2024accelerating,huang2024convergence}),  both of which accommodated a fairly general family of data distributions.

%which also inspire our theoretical pursuit in the current work. 

\paragraph{This paper.} 
The present paper contributes to this growing list of theoretical endeavors by 
developing non-asymptotic convergence theory for a popular deterministic sampler \citep{song2020score} --- originally proposed based on a sort of ordinary differential equations (ODEs) for the reverse process called probability flow ODEs or diffusion ODEs, closely related to the DDIM sampler \citep{song2020denoising}. 
For concreteness, we prove that the iteration complexity
is no larger than the order of 
\begin{align}
	(\text{iteration complexity}):\quad d/\varepsilon
\end{align}
(up to some logarithmic factor and lower-order term), with $d$ the data dimension and $\varepsilon$ the target accuracy level in total-variation (TV) distance. 
We impose only minimal assumptions on the target distribution (e.g., no smoothness condition is needed), 
and quantify the impact of $\ell_2$ score estimation errors upon convergence.   
In comparisons to past works, our main contributions are as follows.

\begin{itemize}

	\item {\em Linear $d$-dependency.} 
		Our iteration complexity scales nearly linearly in the dimension $d$, 
		which improves upon all prior theoretical guarantees for deterministic samplers \citep{li2023towards,chen2023restoration,huang2024convergence}; in fact, the state-of-the-art $d$-dependency before our work scales with $d^2$
		\citep{li2023towards,huang2024convergence}. 
		Note that $d$-linear convergence theory was established for the stochastic sampler DDPM \citep{benton2023linear};  the theoretical framework for DDPM is not applicable for analyzing probability flow ODEs, but the use of a stochastic localization result in \citet{benton2023linear} motivates our approach in sharpening the $d$ dependency. 
		 Additionally,  our result does not exhibit exponential dependency on the smoothness or regularity conditions as in \citet{chen2023restoration,benton2023error} (e.g., the regularity parameter used in \citet{benton2023error} might even scale with the dimension $d$).

	\item {\em Linear dependency on $1/\varepsilon$.}  
		We derive an iteration complexity upper bound that is proportional to $1/\varepsilon$. 
		Note that this was already accomplished in an earlier version of this work \citep{li2023towards}, 
		strengthening prior convergence guarantees considerably \citep{chen2023restoration}.  
		This scaling $1/\varepsilon$ was also proven by a recent work 
		\citet{huang2024convergence} via a completely different ODE-based approach.

		%For another DDPM-type stochastic sampler, we establish an iteration complexity proportional to $1/\varepsilon^2$ via a new non-asymptotic analysis framework, matching existing theory \citet{chen2022sampling,chen2022improved,benton2023linear} in terms of the $\varepsilon$-dependency. 
		%In comparison, prior works \citet{chen2023restoration,chen2022sampling,chen2022improved} were only able to establish a scaling of $O(1/\varepsilon^2)$ w.r.t.~the target accuracy level $\varepsilon$.  

	\item {\em $\ell_2$ score estimation errors for the deterministic sampler.} 
		Our theory reveals that the TV distance between $X_1$ and $Y_1$ is proportional to the $\ell_2$ score estimation error as well as the associated mean Jacobian errors, an appealing property already established in an earlier version of this work \citep{li2023towards}. 
		%This feature, which was an earlier version of this work \citep{li2023towards}, 
		%As far as we know, this is the first result for this deterministic sampler that accounts for score estimation errors in discrete time. 
		In comparison, prior theoretical results   
		either study stochastic variations of this deterministic sampler \citep{chen2023probability} (so that the samplers are no longer the original deterministic sampler) or fall short of accommodating discretization errors \citep{benton2023error}, 
		with the only exception being the recent work \citet{huang2024convergence} that also accounts for score errors for deterministic samplers.

	\item {\em An elementary analysis framework.}  
		From the technical point of view, the analysis framework laid out in this paper is fully non-asymptotic in nature. 
		In contrast to prior theoretical analyses that take a detour to study the continuum limits and then control the discretization error, 
		our approach tackles the discrete-time processes directly using elementary analysis strategies.  
		No knowledge on SDEs or ODEs is needed for establishing our theory, 
		resulting in a versatile framework and sometimes lowering the technical barrier towards understanding diffusion models (for those with no background in SDEs/ODEs). 
		
	%\item {\em Accelerating data generation processes.} 
	%	In order to further speed up the sampling processes, 
	%	we develop an accelerated variant for each of the above two samplers, taking advantage of estimates of a small number of additional quantities. 
	%	As it turns out, these variants achieve more rapid convergence, 
	%	with the deterministic (resp.~stochastic) variant exhibiting a $1/\sqrt{\varepsilon}$ (resp.~$1/\varepsilon$) scaling in the accuracy level $\varepsilon$ (again measured in terms of the TV distance). 

\end{itemize}

\noindent It is worth emphasizing that our analysis for the probability flow ODE differs drastically from the analysis for DDPM \citep{chen2022sampling,chen2022improved,benton2023linear}.  
More concretely, the state-of-the-art analysis for DDPM \citep{benton2023linear} is built upon 
the Girsanov theorem, a hammer that provides a powerful way to control the Kullback-Leibler (KL) divergence between the forward process and the sampling process. 
This approach, however, is known to be inapplicable to ODE-based deterministic samplers, 
given that the aforementioned KL divergence might even approach infinity. 
Working backward, 
our proof attempts to track the proximity of $p_{X_t}$ and $p_{Y_t}$ 
by iteratively computing how $p_{X_t}/p_{Y_t}$ evolves from $p_{X_{t+1}}/p_{Y_{t+1}}$.

%\newcommand{\topsepremove}{\aboverulesep = 0mm \belowrulesep = 0mm} \topsepremove
% 
%\begin{table}[!t]
%
%
%\begin{center}
%
%\begin{tabular}{c|c|c|c|c}
%%\hline 
%\toprule
%Reference & Error guarantee &  Iteration complexity & Score estimates & Data assumptions \tabularnewline
%%\hline \hline
%\toprule
% DDPM & \multirow{2}{*}{ } & \multirow{2}{*}{ \vphantom{$\frac{1^{7^{7}}}{1^{7^{7^{7^{7}}}}}$}\hspace{-0.4em}} & \multirow{2}{*}{$L_2$ } &  \tabularnewline
%\cite{lee2023convergence} &  &  &  & \tabularnewline
%%\hline 
%% Empirical QVI & \multirow{2}{*}{$\big[\frac{|\mathcal{S}||\mathcal{A}|}{(1-\gamma)^{5}},\infty)$} & \multirow{2}{*}{$\frac{|\mathcal{S}||\mathcal{A}|}{(1-\gamma)^{5}\varepsilon^{2}}$
%% \vphantom{$\frac{1^{7^{7}}}{1^{7^{7^{7^{7}}}}}$}\hspace{-0.4em}} & \multirow{2}{*}{$(0,1]$}\tabularnewline
%% \cite{azar2013minimax} &  &  & \tabularnewline
%\hline 
% DDPM & \multirow{2}{*}{ } & \multirow{2}{*}{ $O(1/\varepsilon^2)$ \vphantom{$\frac{1^{7^{7}}}{1^{7^{7^{7^{7}}}}}$}\hspace{-0.4em}} & \multirow{2}{*}{$L_2$ } &  \tabularnewline
%\cite{chen2022improved} &  &  &  & \tabularnewline
%%\hline 
%\toprule
%\end{tabular}
%\end{center}
%	\caption{Comparisons with prior results (up to log factors) regarding finding an $\varepsilon$-optimal policy with a generative model.  
%	\label{tab:prior-work}}  
%\end{table}
%

\paragraph{Notation.}
Before proceeding, we introduce a couple of notation to be used throughout. 
For any two functions $f(d,T)$ and $g(d,T)$, 
we adopt the notation $f(d,T)\lesssim g(d,T)$ or $f(d,T)=O( g(d,T) )$ (resp.~$f(d,T)\gtrsim g(d,T)$)
to mean that there exists some universal constant $C_1>0$ such that $f(d,T)\leq C_1 g(d,T)$ (resp.~$f(d,T)\geq C_1 g(d,T)$) for all $d$ and $T$; moreover, the notation  $f(d,T)\asymp g(d,T)$ indicates that $f(d,T)\lesssim g(d,T)$ and $f(d,T)\gtrsim g(d,T)$ hold at once. 
The notation $\widetilde{O}(\cdot)$ is defined similar to $O(\cdot)$ except that it hides the logarithmic dependency. 
Additionally, the notation  $f(d,T)=o\big( g(d,T) \big)$ means that $f(d,T)/g(d,T) \rightarrow 0$ as $d,T$ tend to infinity. 
We shall often use capital letters to denote random variables/vectors/processes, and lowercase letters for deterministic variables. 
For any two probability measures $P$ and $Q$, the TV distance between them is defined to be $\mathsf{TV}(P,Q)\coloneqq \frac{1}{2}\int |\mathrm{d}P - \mathrm{d}Q|$. 
Throughout the paper, $p_X(\cdot)$ (resp.~$p_{X\mymid Y}(\cdot \mymid \cdot)$) denotes the probability density function of $X$ (resp.~$X$ given $Y$). 
For any matrix $A$, we denote by $\|A\|$ (resp.~$\|A\|_{\mathrm{F}}$) the spectral norm (resp.~Frobenius norm) of $A$. 
Also, for any vector-valued function $f$, we let $J_f$ or $\frac{\partial f}{\partial x}$ represent the Jacobian matrix of $f$.

% In addition, $P(\cdot)$ and $P(\cdot \mymid \cdot)$ are used for the general probability measure, and $p(\cdot)$ and $p(\cdot \mymid \cdot)$ denote the probability distribution function.

% For example, the diffusion process is denoted by $\{X_t = x_t\}_{0\le t \le T}$, and the estimated reverse process will be denoted by $\{Y_t = y_t\}_{0\le t \le T}$.

%For example, $p_{X_t}(\cdot) = P(X_t = \cdot)$ denotes the probability distribution function of $X_t$, and $p_{X_s \mymid X_t}(\cdot \mymid \cdot) = P(X_s = \cdot \mymid X_t = \cdot)$ denotes the probability distribution function of $X_s$ conditioned on $X_t$. 
%which may be abbreviated as $p_{s}(\cdot \mymid \cdot)$ if $t = 0$ or $p(\cdot \mymid \cdot)$.
% Let $\overline{\alpha}_t := \prod_{k = 1}^t \alpha_k$, and $\overline{W}_t := \frac{1}{\sqrt{1-\overline{\alpha}_t}}\sum_{k = 1}^t \prod_{i = k+1}^{t}\sqrt{\alpha_i}\sqrt{1-\alpha_k}W_k$.
% Here, we denote $\prod_{i = t+1}^{t}\sqrt{\alpha_i} = 1$.
% So that $\overline{W}_t \sim \mathcal{N}(0,I).$ 

%%%%%%%%%%%%%%%%%%%%%%%%%%%%%%%%%%%%%%%%%%%%%%%%%%%%%%%%%%%%%%%%%%%%%%

\section{Preliminaries} 
\label{sec:preliminaries}

In this section, we introduce the basics of diffusion generative models. 
The ultimate goal of a generative model can be concisely stated: 
given  data samples drawn from an unknown distribution of interest $\Pdata$ in $\real^d$, 
we wish to generate new samples whose distributions closely resemble $\Pdata$.

%(i) a forward process that progressively injects noise into the data samples,  
%and (ii) a reverse process that reverts the forward process, transforming noise into new samples with matching distribution as the original data. 

%A denoising diffusion probabilistic model (DDPM) is concerned with two Markov processes: a forward process that gradually adds noise to the data and finally transforms the data into pure noise, and a reverse process that converts noise back to samples matching the data. 

\subsection{Diffusion generative models}

Towards achieving the above goal, 
a diffusion generative model typically encompasses two Markov processes: a forward process and a reverse process, as described below.

\paragraph{The forward process.}
In the forward chain, one progressively injects noise into the data samples to diffuse and obscure the data. 
The distributions of the injected noise are often hand-picked, 
with the standard Gaussian distribution receiving widespread adoption. 
More specifically, the forward Markov process produces a sequence of $d$-dimensional random vectors $X_1\rightarrow X_2\rightarrow \cdots\rightarrow X_T $ as follows:   
\begin{subequations}
\label{eq:forward-process}
\begin{align}
	X_0 &\sim \Pdata,\\
	X_t &= \sqrt{1-\beta_t}X_{t-1} + \sqrt{\beta_t}\,W_{t}, \qquad 1\leq t\leq T,
\end{align}
\end{subequations}
where $\{W_t\}_{1\leq t\leq T}$ 
indicates a sequence of independent noise vectors drawn from $W_{t} \overset{\mathrm{i.i.d.}}{\sim} \mathcal{N}(0, I_d)$. 
The hyper-parameters $\{\beta_t \in (0,1)\}$ represent prescribed learning rate schedules 
that control the variance of the noise injected in each step. 
If we define 
\begin{align}
	\alpha_t\coloneqq 1 - \beta_t, 
	\qquad \overline{\alpha}_t \coloneqq \prod_{k = 1}^t \alpha_k ,\qquad 1\leq t\leq T,
\end{align}
then it can be straightforwardly verified that for every $1\leq t\leq T$, 
\begin{align}
\label{eqn:Xt-X0}
	X_t = \sqrt{\overalpha_t} X_{0} + \sqrt{1-\overalpha_t} \,\overline{W}_{t}
	\qquad \text{for some } \overline{W}_{t}\sim \mathcal{N}(0,I_d) .
\end{align}
%
% with the vector $\overW_{t}$ obeying $\overW_{t}\sim \mathcal{N}(0,I_d)$. 
Clearly, if the covariance of $X_0$ is also equal to $I_d$, then the covariance of $X_t$ is preserved throughout the forward process; 
for this reason, this forward process \eqref{eq:forward-process} is sometimes referred to as variance-preserving \citep{song2020score}. 
Throughout this paper, we employ the notation
\begin{equation}
	q_t \coloneqq \mathsf{distribution}\big( X_t \big)
	\label{eq:defn-qt}
\end{equation}
to denote the distribution of $X_t$. 
As long as  $\overalpha_{T}$ is vanishingly small, 
one has the following property for a fairly general family of data distributions: 
\begin{equation}
	q_T \approx \mathcal{N}(0,I_d). 
\end{equation}
%

%once $\overalpha_{T}$ becomes vanishingly small, $X_T$ becomes exceedingly close in distribution to a standard Gaussian vector.  

% for each $1 \leq t \leq T$, with prescribed variance schedule $0 < \alpha_t < 1$ and independent Gaussian vectors $W_{t} \sim \mathcal{N}(0, I_d)$. 

%For each $1\leq t\leq T$, let us denote
%\begin{align}
%	\overline{\alpha}_t := \prod_{k = 1}^t \alpha_k 
%	~\text{ and }~ \overline{W}_t := \frac{1}{\sqrt{1-\overline{\alpha}_t}}\sum_{k = 1}^t \prod_{i = k+1}^{t}\sqrt{\alpha_i}\sqrt{1-\alpha_k}W_k.
%\end{align}
%with the convention that $\prod_{i = t+1}^{t}\sqrt{\alpha_i} = 1$. 
%Therefore, $X_{t}$ can be alternatively written as 

% where $\overline{W}_t$ follows $\mathcal{N}(0,I_d)$. 

%Eventually, as $\overalpha_{T}$ becomes vanishingly small, $X_{T}$ admits an approximate standard Gaussian distribution. 

\paragraph{The reverse process.} 
The reverse chain $Y_T\rightarrow Y_{T-1}\rightarrow \ldots\rightarrow Y_1 $ is designed to (approximately) revert the forward process, 
allowing one to transform pure noise into new samples with matching distributions as the original data. 
To be more precise, by initializing it as
\begin{subequations}
	\label{eq:goal-reverse-process}
\begin{equation}
	Y_T \sim \mathcal{N}(0,I_d), 
\end{equation}
we seek to design a reverse-time process with nearly identical marginals as the forward process,  namely, 
\begin{equation}
	(\text{goal})\qquad\qquad
	Y_t \,\overset{\mathrm{d}}{\approx}\, X_t, \qquad t = T, T-1,\cdots, 1.  
\end{equation}
\end{subequations}
Throughout the paper, we shall often employ the following notation to indicate the distribution of $Y_t$: 
\begin{equation}
	p_t \coloneqq \mathsf{distribution}\big( Y_t \big) .
	\label{eq:defn-pt}
\end{equation}

\subsection{The probability flow ODE}
\label{sec:det-stochastic-samplers}

Evidently, the most crucial step of the diffusion model lies in effective design of the reverse process. 
The data-generation process of a deterministic sampler typically proceeds as follows: 
starting from $Y_T \sim \mathcal{N}(0,I_d)$, one selects a set of functions $\{\Phi_t(\cdot)\}_{1\leq t\leq T}$ and computes:
	\begin{subequations}
\label{eqn:ode-sampling}
	\begin{equation}
		Y_T\sim \mathcal{N}(0,I_d),\qquad 
		Y_{t-1}=\Phi_{t}\big(Y_{t}\big)\quad~ \text{for }  t= T,\cdots,1.
		\label{eqn:ode-sampling-Y}
		%\label{eq:deterministic-sampler}
	\end{equation}
	Clearly, the sampling process is fully deterministic except for the initialization $Y_T$. 
	Suppose now that we are armed with the estimates $\{s_t(\cdot)\}_{1\leq t\leq T}$ for the log density functions $\{s_t^{\star}(\cdot)\coloneqq \nabla\log q_{t}(\cdot)\}_{1\leq t\leq T}$ --- often referred to as the (Stein) score functions. 
Then a discrete-time version of the probability flow ODE approach (cf.~\eqref{eq:prob-flow-ODE}) adopts the following mapping: 
%
%\begin{align}
%\label{eqn:ode-sampling-Y}
%	Y_T\sim \mathcal{N}(0,I_d),\qquad
%	Y_{t-1} = \Phi_t\big(Y_{t}\big)
%	\quad~ \text{for } t = T,\cdots, 1, 
%\end{align}
%%
%where $\Phi_t(\cdot)$ is taken to be
%
\begin{align}
\label{eqn:phi-func}
	\Phi_t(x) \coloneqq 
	\frac{1}{\sqrt{\alpha_t}} \bigg( x + \frac{1-\alpha_{t}}{2}s_{t}(x) \bigg). 
%	\notag\\
%	&= x - \frac{1-\alpha_{t}}{2(\alpha_t-\overline{\alpha}_{t})}\int_{x_0}   p_{X_0 \mymid X_{t}}(x_0 \mymid x)\big(x - \sqrt{\overline{\alpha}_{t}}x_0\big) \mathrm{d} x_0. 
\end{align}
\end{subequations}
This approach, based on the probability flow ODE \eqref{eq:prob-flow-ODE}, often achieves faster sampling compared to the stochastic counterpart like DDPM \citep{song2020score}.  

%Despite the empirical advances, however, the theoretical understanding of this type of deterministic samplers remained far from mature. 

	%We shall make precise the sampler to be analyzed shortly in \eqref{eqn:ode-sampling}. 

%Two mainstream approaches stand out: 
%
%\begin{itemize}

	%\item 

	%\item {\em Stochastic samplers.} 
	%	Initialized again at $Y_T \sim \mathcal{N}(0,I_d)$, this approach computes another collection of functions $\{\Psi_t(\cdot,\cdot)\}_{1\leq t\leq T}$ and performs the updates:
	%%	
	%\begin{equation}
	%	Y_{t-1}=\Psi_{t}\big(Y_{t},Z_{t}\big),\qquad t= T,\cdots,1,\label{eq:stochastic-sampler}
	%\end{equation}
	%%
	%where the $Z_t$'s are independent noise vectors obeying $Z_t \overset{\mathrm{i.i.d.}}{\sim} \mathcal{N}(0,I_d)$. 
	%
%\end{ite%mize}
%

In order to elucidate the plausibility of a deterministic approach, 
we find it helpful to look at the continuum limit through the lens of SDEs and ODEs. 
It is worth emphasizing, however, that the development of our main theory does not rely on knowledge of SDEs and ODEs.   
\begin{itemize}
	\item {\em The forward process.} 
		A continuous-time analog of the forward diffusion process can be modeled as 
		\begin{equation}
			\mathrm{d}X_{t}=f(X_{t},t)\mathrm{d}t+g(t)\mathrm{d}W_{t} \quad (0\leq t\leq T),
			\qquad X_{0}\sim\Pdata
			\label{eq:forward-SDE-general}
		\end{equation}	
		for some functions $f(\cdot,\cdot)$ and $g(\cdot)$ (denoting respectively the drift and diffusion coefficient), 
		where $W_t$ denotes a $d$-dimensional standard Brownian motion. 
		As a special example, the continuum limit of \eqref{eq:forward-process} takes the following form\footnote{To see its connection with \eqref{eq:forward-process}, it suffices to derive from \eqref{eq:forward-process} that 
$X_{t}-X_{t-\mathrm{d}t}=\sqrt{1-\beta_{t}}X_{t-\mathrm{d}t}-X_{t-\mathrm{d}t}+\sqrt{\beta_{t}}W_{t}\approx-\frac{1}{2}\beta_{t}X_{t-\mathrm{d}t}+\sqrt{\beta_{t}}W_{t}$.} \citep{song2020score}
		\begin{equation}
			\mathrm{d}X_{t}= - \frac{1}{2} \beta(t) X_t \mathrm{d}t+ \sqrt{\beta(t)}\,\mathrm{d}W_{t} \quad (0\leq t\leq T),
			\qquad X_{0}\sim\Pdata
			\label{eq:forward-SDE}
		\end{equation}	
		for some function $\beta(t)$.  
		As before, we denote by $q_t$ the distribution of $X_t$ in \eqref{eq:forward-SDE-general}.

	\item {\em The reverse process.} 
		As it turns out, 
		there exist reverse processes capable of reconstructing the marginal distribution of the forward process. 
		In particular, the {\em probability flow ODE} is a reverse process taking the following form \citep{song2020score}
				\begin{align}
					\mathrm{d}Y_{t}^{\mathsf{ode}} 
					& =\Big(-f\big(Y_{t}^{\mathsf{ode}},T-t\big)+\frac{1}{2}g(T-t)^{2}\nabla\log q_{T-t}\big(Y_{t}^{\mathsf{ode}}\big)\Big)
					\mathrm{d}t\quad(0\leq t\leq T),\qquad Y_{0}^{\mathsf{ode}}\sim q_{T}, 
					\label{eq:prob-flow-ODE}
				\end{align}
		where we use $\nabla \log q_{t}(X)$ to abbreviate $\nabla_X \log q_{t}(X)$ for notational simplicity.
		This ODE exhibits matching distributions with the forward process in that  
				\begin{align*}
					Y_{T-t}^{\mathsf{ode}}  \, \overset{\mathrm{d}}{=} \, X_t, \qquad 0\leq t\leq T. 
				\end{align*}
				As can be easily shown, 
				the continuous-time limit of \eqref{eqn:ode-sampling} falls under this category. 
				%The deterministic nature of this approach often enables faster sampling. 
				Note that this family of deterministic samplers is closely related to the DDIM sampler \citep{karraselucidating,song2020score}.

		%
	%	%\begin{itemize}

	%		\item One feasible approach is to resort to the so-called 
	%			which exhibits matching distributions as follows: 
	%			%
	%			\begin{align*}
	%				Y_{T-t}^{\mathsf{ode}}  \, \overset{\mathrm{d}}{=} \, X_t, \qquad 0\leq t\leq T. 
	%			\end{align*}
	%			%
	%			As can be easily shown, 
	%			the continuous-time limit of \eqref{eqn:ode-sampling} falls under this category. 
	%			The deterministic nature of this approach often enables faster sampling. 
	%			It has been shown that this family of deterministic samplers is closely related to the DDIM sampler \citep{karraselucidating,song2020score}. 

%			\item In view of the classical results \citet{anderson1982reverse,haussmann1986time}, 
%			      one can also construct a ``reverse-time'' SDE 
%				%
%				\begin{align}
%					\mathrm{d}Y_{t}^{\mathsf{sde}} 
%					& =\Big(-f\big(Y_{t}^{\mathsf{sde}},T-t\big)+g(T-t)^{2}\nabla\log q_{T-t}\big(Y_{t}^{\mathsf{sde}}\big)\Big)\mathrm{d}t
%					+g(T-t)\mathrm{d}Z_{t}^{\mathsf{sde}}\quad(0\leq t\leq T) 	
%				\label{eq:reverse-SDE}
%				\end{align}
%				%
%				with $Y_{0}^{\mathsf{sde}}\sim q_{T}$ and $Z_t^{\mathsf{sde}}$ being a standard Brownian motion. 
%			 	Strikingly, this process also satisfies
%				%
%				\begin{align*}
%					Y_{T-t}^{\mathsf{sde}}  \, \overset{\mathrm{d}}{=} \, X_t, \qquad 0\leq t\leq T. 
%				\end{align*}
%				%
%				The popular DDPM sampler \citep{ho2020denoising,nichol2021improved} falls under this category.
%
		%\end{itemize}
		%

\end{itemize}

\noindent 
Interestingly, in addition to the functions $f$ and $g$ that define the forward process, construction of \eqref{eq:prob-flow-ODE} 
%and \eqref{eq:reverse-SDE} 
relies only upon knowledge of the (Stein) score function $\nabla\log q_{t}(\cdot)$ of the intermediate steps of the forward diffusion process, 
an intriguing fact that also holds when designing stochastic samplers like DDPM. 
Consequently, a key enabler of diffusion models lies in reliable learning of the score function, 
and hence the name {\em score-based generative modeling}.

\section{Convergence theory for the probability flow ODE sampler}
\label{sec:main-results}

In this section, we analyze the probability flow ODE sampler in discrete time.  
While the proofs for our main theory are all postponed to the appendix, 
it is worth emphasizing upfront that our analysis framework directly tackles the discrete-time processes without 
the need of resorting to any toolbox of SDEs and ODEs 
tailored to the continuous-time limits. 
This elementary approach might potentially be versatile for analyzing a broad class of variations of these samplers. 
%For instance, prior ODE-based theory (e.g., \citet{chen2023probability,chen2023restoration}) encountered certain technical challenges when analyzing the deterministic sampler directly,  and our elementary approach is able to shed new light on the convergence of this important sampler. 

\subsection{Assumptions and learning rates}

Before proceeding, we impose some assumptions on the score estimates and the target data distributions, and specify the hyper-parameters $\{\alpha_t\}$
 that shall be adopted throughout all cases.

\paragraph{Score estimates.}
Given that the score functions are an essential component in score-based generative modeling, 
we assume access to faithful estimates of the score functions $\nabla \log q_t(\cdot)$ across all intermediate steps $t$, 
thus disentangling the score learning phase and the data generation phase. 
Towards this end, let us first formally introduce the true score function as follows. 
\begin{definition}[Score function]
\label{defition:score}
The score function, denoted by $s_t^{\star}: \real^d\rightarrow \real^d$ ($1\leq t\leq T$), is defined as 
\begin{align}
\label{eqn:training-score}
	s_{t}^{\star}(X) \coloneqq 
	\nabla \log q_{t}(X) , 
	%\arg\min_{s:\real^{d}\rightarrow\real^{d}}\mathop{\mathbb{E}}_{X\sim q_{t}}\Big[\big\| s(X)-\nabla \log q_{t}(X)\big\|_{2}^{2}\Big],
	\qquad 1\leq t\leq T. 
\end{align}
\end{definition}
As has been pointed out by previous works concerning score matching (e.g., \citet{hyvarinen2005estimation,vincent2011connection,chen2022sampling}), 
the score function $s_{t}^{\star}$ admits an alternative form as follows (owing to properties of Gaussian distributions): 
\begin{equation}
s_{t}^{\star}\coloneqq\arg\min_{s:\real^{d}\rightarrow\real^{d}}\mathop{\mathbb{E}}_{W\sim\mathcal{N}(0,I_{d}),X_{0}\sim\Pdata}\Bigg[\bigg\| s\big(\sqrt{\overline{\alpha}_{t}}X_{0}+\sqrt{1-\overline{\alpha}_{t}}W\big)+\frac{1}{\sqrt{1-\overline{\alpha}_{t}}}W\bigg\|_{2}^{2}\Bigg],
	\label{eqn:training-score-equiv}
\end{equation}
which takes the form of the minimum mean square error estimator for $-\frac{1}{\sqrt{1-\overline{\alpha}_{t}}}W$ 
given $\sqrt{\overline{\alpha}_{t}}X_{0}+\sqrt{1-\overline{\alpha}_{t}}W$
and is often more amenable to training.

With Definition~\ref{defition:score} in place, 
we can readily introduce the following assumptions that capture the quality of the score estimate $\{s_t\}_{1\leq t\leq T}$ we have available. 
\begin{assumption}
\label{assumption:score-estimate}
Suppose that the score function estimate $\{s_t\}_{1\leq t\leq T}$ obeys
\begin{align}
\label{eqn:score-estimate}
	\frac{1}{T}\sum_{t = 1}^T \mathop{\mathbb{E}}_{X\sim q_t}\Big[ \big\| s_t(X) - s_t^{\star}(X) \big\|_2^2\Big] \le \varepsilon_{\mathsf{score}}^2.
\end{align}
\end{assumption}
\begin{assumption}
\label{assumption:score-estimate-Jacobi}
	For each $1\leq t\leq T$, assume that $s_t(\cdot)$ is continuously differentiable, and 
	denote by $J_{s_t^{\star}} = \frac{\partial s_t^{\star}}{\partial x}$ and $J_{s_t} = \frac{\partial s_t}{\partial x}$ 
	the Jacobian matrices of $s_t^{\star}(\cdot)$ and $s_t(\cdot)$, respectively. 
	Assume that the score function estimate $\{s_t\}_{1\leq t\leq T}$ obeys
\begin{align}
\label{eqn:score-estimate-Jacobi}
	%\frac{1}{T}\sum_{t = 1}^T \mathop{\mathbb{E}}_{X\sim q_t}\Big[ \big\| s_t(X) - s_t^{\star}(X) \big\|_2\Big] \le \varepsilon_{\mathsf{score}},
\qquad\qquad
	\frac{1}{T}\sum_{t = 1}^T \mathop{\mathbb{E}}_{X\sim q_t}\Big[\big\| J_{s_t}(X) - J_{s_t^{\star}} (X)  \big\|\Big] \le \varepsilon_{\mathsf{Jacobi}}.
\end{align}
\end{assumption}
In a nutshell, 
Assumption~\ref{assumption:score-estimate} reflects the $\ell_2$ score estimation error, 
whereas Assumption~\ref{assumption:score-estimate-Jacobi} is concerned with the estimation error in terms of the corresponding Jacobian matrix (so as to ensure certain continuity of the score estimator).  
Both assumptions consider the {\em average} estimation errors over all $T$ steps. 
As we shall see momentarily, our theory for the deterministic sampler relies on both Assumptions~\ref{assumption:score-estimate} and \ref{assumption:score-estimate-Jacobi}, 
while the theory for the stochastic sampler requires only Assumption~\ref{assumption:score-estimate}. 
We shall discuss in Section~\ref{sec:ODE-basic} the insufficiency of Assumption~\ref{assumption:score-estimate} alone for the probability flow ODE sampler.

\paragraph{Target data distributions.}  
Our goal is to uncover the effectiveness of diffusion models in generating a broad family of data distributions. 
Throughout this paper, the only assumptions we need to impose on the target data distribution $\Pdata$ are the following: 
\begin{itemize}
	\item $X_0$ is an absolutely continuous random vector,
		%\footnote{Note that our analysis is in fact applicable to a much broader class of distributions that also accommodate discrete random vectors. We omit such a generalization here to streamline presentation of the proof.} 
		and
\begin{equation}
	\mathbb{P}\big(\|X_0\|_2 \leq R = T^{c_R}   \big)=1, \qquad X_0\sim \Pdata 
	\label{eq:assumption-data-bounded}
\end{equation}
for some arbitrarily large constant $c_R>0$. 
\end{itemize}
This assumption allows the radius of the support of $\Pdata$ to be exceedingly large (given that the exponent $c_R$ can be arbitrarily large).

%This alternative form also uncovers that learning the score function is equivalent  to estimating the noise component using 

\paragraph{Learning rate schedule. }
Let us also take a moment to specify the learning rates to be used for our theory and analyses. 
For some large enough numerical constants $c_0,c_1 > 0$, we set
\begin{subequations}
\label{eqn:alpha-t}
\begin{align}
	\beta_{1} & =1-\alpha_{1} = \frac{1}{T^{c_0}};\\
\beta_{t} & =1-\alpha_{t} = \frac{c_{1}\log T}{T}\min\bigg\{\beta_{1}\Big(1+\frac{c_{1}\log T}{T}\Big)^{t},\,1\bigg\}.
\end{align}
\end{subequations}

In words, our choice of $\{\beta_t\}$ undergoes two phases: at the beginning (when $t$ is small), $\beta_t$ exhibits exponential increase; once it reaches the level of $\frac{c_1\log T}{T}$, it stays flat for the remaining steps. 
This two-phase choice shares similarity with the choice adopted in prior diffusion model theory like \citet{benton2023linear}.

\subsection{Main results} 
\label{sec:ODE-basic}

%To be precise, armed with the score estimates $\{s_t\}_{1\leq t\leq T}$, 
%a discrete-time version of the probability flow ODE approach (cf.~\eqref{eq:prob-flow-ODE}) adopts the following update rule: 
%%
%\begin{subequations}
%\label{eqn:ode-sampling}
%\begin{align}
%\label{eqn:ode-sampling-Y}
%	Y_T\sim \mathcal{N}(0,I_d),\qquad
%	Y_{t-1} = \Phi_t\big(Y_{t}\big)
%	\quad~ \text{for } t = T,\cdots, 1, 
%\end{align}
%%
%where $\Phi_t(\cdot)$ is taken to be
%%
%\begin{align}
%\label{eqn:phi-func}
%	\Phi_t(x) \coloneqq 
%	\frac{1}{\sqrt{\alpha_t}} \bigg( x + \frac{1-\alpha_{t}}{2}s_{t}(x) \bigg). 
%%	\notag\\
%%	&= x - \frac{1-\alpha_{t}}{2(\alpha_t-\overline{\alpha}_{t})}\int_{x_0}   p_{X_0 \mymid X_{t}}(x_0 \mymid x)\big(x - \sqrt{\overline{\alpha}_{t}}x_0\big) \mathrm{d} x_0. 
%\end{align}
%%
%\end{subequations}
%%
%This approach, based on the probability flow ODE \eqref{eq:prob-flow-ODE}, often achieves faster sampling compared to the stochastic counterpart \citep{song2020score}.  
%Despite the empirical advances, however, 
%the theoretical understanding of this type of deterministic samplers remained far from mature. 

%\yxc{check} advantages: DDPM requires a large number of function evaluations;

%We now investigate the convergence properties for the discrete-time version \eqref{eqn:ode-sampling} of the probability flow ODE. 

We are now ready to present our non-asymptotic convergence guarantee --- measured by the total variation distance between the forward and the reverse processes --- for the discrete-time version \eqref{eqn:ode-sampling} of the probability flow ODE. 
The proof of our theory is postponed to Section~\ref{sec:pf-theorem-ode}.

%\begin{theos} 
%\label{thm:main-ODE}
%% 
%Suppose that \eqref{eq:assumption-data-bounded} holds true. 
%Equipped with the score estimates in Assumption~\ref{assumption:score-estimate} and the learning rate schedule \eqref{eqn:alpha-t}, 
%the sampling process \eqref{eqn:ode-sampling} satisfies
%%
%\begin{align} \label{eq:ratio-ODE}
%\mathsf{TV}\big(q_1, p_1\big) \leq C_1 \frac{d^2\log^4 T}{T} + C_1\frac{d^{6}\log^{6} T}{T^2}
%\end{align}
%%
%for some universal constants $C_1>0$, where we recall that $p_1$ (resp.~$q_1$) represents the distribution of $Y_1$ (resp.~$X_1$).  
%%
%\end{theos}

\begin{theos}
\label{thm:main-ODE}
Suppose that \eqref{eq:assumption-data-bounded} holds true. 
Assume that the score estimates $s_t(\cdot)$ $(1 \le t \le T)$ satisfy Assumptions~\ref{assumption:score-estimate} and \ref{assumption:score-estimate-Jacobi}.
Then the sampling process \eqref{eqn:ode-sampling} with the learning rate schedule \eqref{eqn:alpha-t} satisfies
\begin{align}
	\mathsf{TV}\big(q_1, p_1\big) \leq C_1 \frac{d\log^4 T}{T} 
	+C_1\sqrt{d\log^{4}T}\,\varepsilon_{\score}+ C_1d(\log^2 T)\varepsilon_{\Jacobi}
	%C_1\log^2 T\big(\sqrt{d}\varepsilon_{\mathsf{score}} + d\varepsilon_{\mathsf{Jacobi}}\big)
\label{eq:ratio-ODE}
\end{align}
%
%\begin{align}
%	\mathsf{TV}\big(q_1, p_1\big) \leq C_1 \frac{d\log^4 T}{T} 
%	+C_1\sqrt{d\log^{3}T}\varepsilon_{\score}+ C_1d(\log T)\varepsilon_{\Jacobi}
%	%C_1\log^2 T\big(\sqrt{d}\varepsilon_{\mathsf{score}} + d\varepsilon_{\mathsf{Jacobi}}\big)
%\label{eq:ratio-ODE}
%\end{align}
%
for some universal constants $C_1>0$, provided that $T \ge C_2d^2\log^5 T$ for some large enough constant $C_2>0$. Here, we recall that $p_1$ (resp.~$q_1$) represents the distribution of $Y_1$ (resp.~$X_1$). 
\end{theos}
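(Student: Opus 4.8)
The plan is to track the evolution of the density ratio $p_t/q_t$ along the reverse process, working backward from $t=T$ to $t=1$, and bound $\mathsf{TV}(q_1,p_1)$ by accumulating the per-step discrepancies. First I would set up the infinitesimal/discrete comparison: for the forward chain, $q_{t-1}$ is obtained from $q_t$ by a noising step, and by Bayes' rule the true reverse transition $q_{t-1\mid t}$ has an explicit Gaussian-mixture form whose mean involves $s_t^\star = \nabla\log q_t$. The deterministic map $\Phi_t(x) = \frac{1}{\sqrt{\alpha_t}}\bigl(x + \frac{1-\alpha_t}{2}s_t(x)\bigr)$ pushes $p_t$ forward to $p_{t-1}$; since the map is deterministic, the density $p_{t-1}$ is obtained via the change-of-variables formula involving the Jacobian $\det(J_{\Phi_t})$, which is where Assumption~\ref{assumption:score-estimate-Jacobi} enters. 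The core quantity to control is, for a typical point, how much $\log\bigl(p_{t-1}(\Phi_t(x))/q_{t-1}(\Phi_t(x))\bigr)$ differs from $\log\bigl(p_t(x)/q_t(x)\bigr)$; I would Taylor-expand the Gaussian reverse kernel and the deterministic update to second order, matching the first-order terms (this is exactly why the probability flow ODE is the "right" map in the continuum), leaving a second-order discretization residual plus a score-error term plus a Jacobian-mismatch term.

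Next I would isolate and bound each of the three error sources separately. The discretization residual per step should be of order $(\beta_t)^2$ times quantities like $\|x\|^2$, $\|s_t^\star(x)\|^2$, $\|J_{s_t^\star}(x)\|$, and third-derivative-type terms; summing over $t$ and using the learning rate schedule \eqref{eqn:alpha-t} (where $\sum_t \beta_t \asymp \log T$ and $\beta_t \lesssim \frac{\log T}{T}$) should yield the $\frac{d\log^4 T}{T}$ bound, \emph{provided} one has good a priori control on the typical magnitudes of $\|X_t\|$, $\|s_t^\star(X_t)\|$, and $\|J_{s_t^\star}(X_t)\|$ under $X_t\sim q_t$. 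This is precisely where the stochastic-localization idea borrowed from \citet{benton2023linear} is needed: one wants to show that $\mathbb{E}_{X\sim q_t}[\|s_t^\star(X)\|^2]$ and the relevant Jacobian trace are $O(d/(1-\overline\alpha_t))$ (not $O(d^2/\cdots)$), which is what buys the linear-in-$d$ rather than quadratic-in-$d$ dependence. The score-error contribution is handled by writing $s_t = s_t^\star + (\text{error})$, carrying the error linearly through the update, and invoking Cauchy–Schwarz together with Assumption~\ref{assumption:score-estimate} to get the $\sqrt{d\log^4 T}\,\varepsilon_{\score}$ term; the Jacobian-error contribution flows from the change-of-variables determinant comparison and Assumption~\ref{assumption:score-estimate-Jacobi}, giving the $d(\log^2 T)\varepsilon_{\Jacobi}$ term. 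Finally, the initialization error $\mathsf{TV}(q_T, \mathcal{N}(0,I_d))$ is negligible because $\overline\alpha_T$ is super-polynomially small (thanks to the $\beta_1 = T^{-c_0}$ choice and the support bound \eqref{eq:assumption-data-bounded}), and is absorbed into the stated bound.

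The main obstacle — and the technical heart of the argument — is controlling the \emph{second-order discretization residual} uniformly enough to sum it over all $T$ steps while keeping the $d$-dependence linear. Naively, the per-step error involves the operator norm or trace of $J_{s_t^\star}$ and "curvature" terms that could scale like $d^2$; the delicate part is to show, using the structure of $q_t$ as a Gaussian convolution of a compactly supported distribution, that the relevant averaged quantities concentrate at the scale $d/(1-\overline\alpha_t)$ and that cross-terms telescope or cancel appropriately. A secondary difficulty is that $p_t$ is not a nice convolution (it is defined through iterated deterministic maps), so one cannot directly apply the same a priori bounds to $p_t$ as to $q_t$; the resolution is to run the density-ratio recursion so that the "bad" events are measured under $q_t$ (where we have control) and to argue that the accumulated ratio stays bounded on a high-probability set, handling the complement via a crude union bound that the condition $T \ge C_2 d^2\log^5 T$ renders harmless. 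I would organize the proof so that one key lemma establishes the per-step recursion for $\log(p_{t-1}/q_{t-1})$ with explicit error terms, a second lemma supplies the $q_t$-moment bounds for $\|X_t\|$, $\|s_t^\star\|$, $\|J_{s_t^\star}\|$, and a third assembles these into the telescoped TV bound via Pinsker-type or direct total-variation arguments.
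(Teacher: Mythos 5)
Your high-level strategy is the same as the paper's: track the density ratio $p_t/q_t$ backward, compare the deterministic push-forward (via change of variables and Jacobian determinant) against the true reverse kernel, expand to second order so that the first-order terms cancel, and handle the three error sources (discretization, $\ell_2$ score error, Jacobian error) separately, with stochastic localization driving the linear-in-$d$ improvement. However, three of your ``key moves'' are either the wrong quantity or significantly underestimate the technical work involved.

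\emph{First, the wrong quantity for stochastic localization.} You propose showing that $\mathbb{E}_{X\sim q_t}[\|s_t^\star(X)\|^2]$ and ``the relevant Jacobian trace'' are $O(d/(1-\overline\alpha_t))$ per step. But the first-moment bound $\mathsf{Tr}(\mathbb{E}[\Sigma_{\overline\alpha_t}]) \le d$ is an immediate consequence of the law of total covariance and would get you nowhere new. The quantity that actually controls the per-step second-order residual is the squared Frobenius norm $\|\partial\phi_t^\star/\partial x - I\|_{\mathrm F}^2$, which maps to the \emph{second} moment $\mathsf{Tr}(\mathbb{E}[\Sigma_{\overline\alpha_t}^2])$. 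There is no useful pointwise bound on $\Sigma_{\overline\alpha_t}(x)$ (it scales like $\frac{\overline\alpha}{1-\overline\alpha}\mathsf{Cov}(X_0\mid\cdot)$, which is not $\preceq I_d$). What the stochastic-localization derivative identity buys (Lemma~\ref{lem:cond-covariance}) is a telescoping identity for the \emph{weighted sum over $t$} of these second moments, giving $\sum_t \frac{1-\alpha_t}{1-\overline\alpha_t}\mathsf{Tr}(\mathbb{E}[\Sigma_{\overline\alpha_t}^2]) \lesssim d\log T$. Without this you cannot escape a $d^2/T$ scaling.

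\emph{Second, the second-order residual of the raw ratio $p_{\sqrt{\alpha_t}X_{t-1}}(\phi_t(x))/p_{X_t}(x)$ is genuinely quadratic in $d$.} If you simply Taylor-expand each side to second order and bound the residual as ``$(\beta_t)^2$ times moments,'' you get a $d^2(\log T/T)^2$ term per step, summing to $d^2\log^4 T/T$ — the $d^2$ bound of prior work, not the claimed $d$-linear bound. The crucial refinement (Lemma~\ref{lem:refine}) shows that in the \emph{ratio of the two per-step ratios}, the quadratic-in-$d$ leading piece cancels, leaving a non-positive term $\zeta_t(x)\le 0$ (which helps, not hurts, the average) plus $\|\partial\phi_t^\star/\partial x - I\|_{\mathrm F}^2$ plus score-error terms. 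Identifying this cancellation and the fact that what remains is sign-definite is the real technical heart, and your proposal does not surface it.

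\emph{Third, the ``bad'' set is not dispatched by a crude union bound.} The obstruction is that once the pointwise score/Jacobian error along a backward trajectory $y_T \to y_1$ becomes large, the second-order expansion breaks, and you cannot simply exclude the corresponding $y_T$'s by a union bound (their $p_T$-mass is not negligible, and the ratio $q_1/p_1$ can be huge there). The paper introduces a trajectory-dependent stopping time $\tau(y_T)$ marking the first step at which the accumulated weighted error exceeds a small constant, decomposes $\mathbb{R}^d$ into four sets $\mathcal{I}_1,\ldots,\mathcal{I}_4$ according to whether the overshoot is small/moderate/large and whether the density ratio jumps, and bounds each piece separately — with $\mathcal{I}_4$ requiring a second telescoping argument in the $h_t := q_t(Y_t)/p_t(Y_t)$ ratios (which uses $\mathbb{E}_{p_T}[h_t]=1$ and a bucketing into $\mathcal{J}_{1,t},\mathcal{J}_{2,t},\mathcal{J}_{3,t}$). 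This is essential machinery, not a safety net the sample-size condition renders irrelevant, and your ``high-probability set plus crude union bound'' description would not close the argument.

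In short: the skeleton you describe is right, and the $\mathsf{TV}(q_T,\mathcal{N}(0,I_d))$ and learning-rate observations are correctly handled, but the proposal as stated would deliver only the $d^2/T$ rate and does not address the bad-event decomposition with enough precision to compile into a proof.
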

%
%\begin{remark}
%Note that our theory is concerned with convergence to $q_1$ (the first step of the forward process). 
%Given that $X_1 \sim q_1$ and $X_0 \sim q_0$ are exceedingly close due to the choice of $\alpha_1$, 
%focusing on the convergence w.r.t.~$q_1$ instead of $q_0$ remains practically relevant.
%\end{remark}

%\subsection{Implications}
%
Let us remark on the main implications of Theorem~\ref{thm:main-ODE}, as well as several points worth discussing. 
Before proceeding, we shall note that our theory is concerned with convergence to $q_1$. 
Given that $X_1 \sim q_1$ and $X_0 \sim q_0$ are very close due to the choice of $\alpha_1$, 
focusing on the convergence w.r.t.~$q_1$ instead of $q_0$ remains practically relevant.

\paragraph{Iteration complexity.} 
Consider first the scenario that has access to perfect score estimates (i.e., $\varepsilon_{\score}=0$). 
In order to achieve $\varepsilon$-accuracy (in the sense that $\mathsf{TV}(q_1, p_1)\leq \varepsilon$), 
the number of steps $T$ only needs to exceed
%\yxc{TODO}\footnote{ 
%%\red{
%As a technical note, the suboptimal $d$-dependency in our theory for the deterministic sampler comes mainly from Lemma~\ref{lem:main-ODE}; 
%the main difficulty to improve Lemma~\ref{lem:main-ODE} lies in obtaining tighter control of some quantities regarding the conditional distribution of $x_0$ given $x_t$ (e.g., the Jacobian matrix of $s_t$). 
%}
%
\begin{equation}
	\widetilde{O}\bigg( \frac{d}{\varepsilon} \bigg)
	\label{eq:iteration-complexity-ODE}
\end{equation}
for small enough accuracy level $\varepsilon$. 
As far as we know, this is the first result that unveils linear dimension dependency for the probability flow ODE sampler. 
Note that our theory is established without assuming any sort of smoothness or log-concavity on the target data distribution.

\paragraph{Stability.} Turning to the more general case with imperfect score estimates (i.e., $\varepsilon_{\score}>0$), 
		the deterministic sampler \eqref{eqn:ode-sampling} yields a distribution 
		whose distance to the target distribution (measured again by the TV distance) scales proportionally with $\varepsilon_{\score}$ and $\varepsilon_{\Jacobi}$. 
		It is noteworthy that in addition to the $\ell_2$ score estimation errors, 
		we are in need of an assumption on the stability of the associated Jacobian matrices, 
		which plays a pivotal in ensuring that the reverse-time deterministic process does not deviate considerably from the desired process.

\paragraph{Insufficiency of the score estimation error assumption alone.}
The careful reader might wonder why we are in need of additional assumptions beyond the $\ell_2$ score error stated in 
Assumption~\ref{assumption:score-estimate}. 
To answer this question, we find it helpful to look at a simple example below. 
\begin{itemize}
	\item {\bf Example.}   
		Consider the case where $X_{0}\sim\mathcal{N}(0,1)$, and hence $X_{1}\sim\mathcal{N}(0,1)$. Suppose that the reverse process for time $t=2$ can lead to the desired distribution if exact score function is employed, namely, 
\[
Y_{1}^{\star}\coloneqq\frac{1}{\sqrt{\alpha_{2}}}\left(Y_{2}-\frac{1-\alpha_{2}}{2}s_{2}^{\star}(Y_{2})\right)\sim\mathcal{N}(0,1). 
\]
Now, suppose that the score estimate $s_{2}(\cdot)$ we have available obeys
\[
	s_{2}(y_{2})=s_{2}^{\star}(y_{2})+\frac{2\sqrt{\alpha_{2}}}{1-\alpha_{2}}\left\{ y_{1}^{\star}-L\left\lfloor \frac{y_{1}^{\star}}{L}\right\rfloor \right\} 
		\qquad \text{with } y_{1}^{\star}\coloneqq\frac{1}{\sqrt{\alpha_{2}}}\left(y_{2}-\frac{1-\alpha_{2}}{2}s_{2}^{\star}(y_{2})\right)
\]
for some $L>0$, where $\lfloor z \rfloor$ is the greatest  integer not exceeding $z$. It follows that
\[
	Y_{1}=Y_{1}^{\star}+\frac{1-\alpha_{2}}{2\sqrt{\alpha_{2}}}\big[ s_{2}^{\star}(Y_{2})-s_{2}(Y_{2})\big]  =L\left\lfloor \frac{Y_{1}^{\star}}{L}\right\rfloor .
\]
Clearly, the score estimation error $\mathbb{E}_{X_2\sim \mathcal{N}(0,1)}\big[|s_{2}(X_{2})-s_{2}^{\star}(X_{2})|^2\big]$
can be made arbitrarily small by taking $L$ to be sufficiently small. 
However, the discrete nature of $Y_{1}$ forces the TV distance to be
\[
	\mathsf{TV}(Y_{1},X_{1}) = 1. 
\]
%
% which does not improve as the score error $\varepsilon_{\score}$ decreases. 
\end{itemize}
The above example demonstrates that, for the deterministic sampler, the TV distance between $Y_1$ and $X_1$ might not improve as the score error decreases. 
This is in stark contrast to the stochastic sampler like DDPM. 
If we wish to eliminate the need of imposing Assumption~\ref{assumption:score-estimate-Jacobi}, 
one potential way is to resort to other metrics (e.g., the Wasserstein distance) instead of the TV distance between $Y_1$ and $X_1$.

\paragraph{Support size of $\Pdata$.} 
It is noteworthy that our theory holds true even when the support size of the target distribution is polynomially large (see \eqref{eq:assumption-data-bounded}). This implies that careful normalization of the target data is often unnecessary.  
%
%This is in contrast to many prior results whose iteration complexity scales polynomially in the support size.  
%
Furthermore, we note that the assumption~\eqref{eq:assumption-data-bounded} can also be relaxed. 
Supposing that $\mathbb{P}\big(\|X_0\|_2 \leq  B \mid X_0\sim \Pdata \big)=1$ for some quantity $B>0$ (which is allowed to grow faster than a polynomial in $T$), we can readily extend our analysis to obtain 
{
	%\small
\begin{align*}
	\mathsf{TV}\big(q_1, p_1\big) \leq C_1 \left( \frac{d}{T} 
	+\sqrt{d} \,\varepsilon_{\score}+ d\varepsilon_{\Jacobi} \right) \mathrm{polylog}(T,B).
%\label{eq:ratio-ODE-X0}
\end{align*}
}
Importantly, the convergence rate depends only logarithmically in $B$.

%\paragraph{Relaxing the boundedness assumption on $X_0$.}

% We want to remind that it is difficult to bound the KL divergence (as well as the total variation distance) without bounded Jacobian matrix errors, even by assuming uniformly bounded score errors. Let us consider the situation where $X_0 \sim \mathcal{N}(0, 1)$ and $Y_T \sim \mathcal{N}(0, 1)$. Then we have $X_1 \sim \mathcal{N}(0, 1)$, and let $Y_1 = L\big\lfloor\frac{Z_1}{L}\big\rfloor$ for $L$ large enough where $Z_1 = \frac{1}{\sqrt{\alpha_t}}(Y_2 - \frac{1-\alpha_2}{2}s_2(Y_2))$ with perfect score estimation. %In this example, the equivalent $L_{\infty}$-error for $s_2$ is $\frac{2L\sqrt{\alpha_t}}{1-\alpha_2}$, which can be arbitrarily small. However, the KL divergence between $X_1$ and $Y_1$ is infinity and the total variation is $1$, given that $Y_1$ is a discrete variable.

\paragraph{Comparisons to previous works.}
Next, let us compare our results with past works. 
\begin{itemize}
	\item 
The first analysis for the discretized probability flow ODE approach in
prior literature was derived by a recent work \citet{chen2023restoration}, which established  non-asymptotic convergence guarantees that exhibit polynomial dependency in both $d$ and $1/\varepsilon$ (see, e.g.,  \citet[Theorem~4.1]{chen2023restoration}). However, it fell short of providing concrete polynomial dependency in $d$ and $1/\varepsilon$,  suffered from exponential dependency in the Lipschitz constant of the score function, and relied on exact score estimates.  
		In contrast, our result in Theorem~\ref{thm:main-ODE} uncovers a concrete $\widetilde{O}(d/\varepsilon)$ scaling (ignoring lower-order and logarithmic terms) without imposing any smoothness assumption on the target data distribution, 
and makes explicit the effect of $\ell_2$ score estimation errors, both of which were previously unavailable for such discrete-time deterministic samplers. 
	\item 
\citet{benton2023error} studied the convergence of the probability flow ODE approach  
without accounting for the discretization error. The result therein also exhibited exponential dependency on a certain Lipschitz constant w.r.t.~the forward flow and a regularity parameter (denoted by $\lambda$ therein, which might scale with the dimension $d$).

	\item 
	 \citet{chen2023probability} studied two variants of the probability flow ODE.
	By inserting an additional stochastic corrector step --- based on overdamped (resp.~underdamped) Langevin diffusion --- in each iteration of the probability flow ODE (so strictly speaking, these variations are no longer deterministic samplers), 
\citet{chen2023probability} showed that $\widetilde{O}(L^3d/\varepsilon^2)$ (resp.~$\widetilde{O}(L^2\sqrt{d}/\varepsilon)$) steps are sufficient, where $L$ denotes the Lipschitz constant of the score function. 
		In comparison, our result demonstrates for the first time that the plain probability flow ODE already achieves the $\widetilde{O}(d/\varepsilon)$ scaling without requiring either corrector steps or smoothness assumptions. 
		%;  one limitation of our result, however, is the sub-optimal $d$-dependency compared to the variants studied in \citet{chen2023probability}. 

	\item The very recent work \citet{huang2024convergence} developed a novel suite of theory for the probability flow ODE, 
		accounting for $p$-th ($p\geq 1$) order Runge-Kutta integrators (so as to demonstrate the degree of acceleration based on higher-order ODEs). When $p=1$, the algorithm resembles what we analyze herein; let us make comparisons for this case in the following. 
		The iteration complexity derived by \citet{huang2024convergence} scales as $\widetilde{O}(d^2/\varepsilon)$, 
		whereas we obtain a sharper bound $\widetilde{O}(d/\varepsilon)$.  
		In addition, the iteration complexity in \citet{huang2024convergence} scales quadratically in the support size of the target distribution, while our theory allows the support size to be polynomially large without affecting the iteration complexity. 
		Moreover, the TV distance bound in \citet{huang2024convergence} scales proportionally to $d^{3/4}\varepsilon_{\mathsf{score}}$ (in addition to other multiplicative factors like the support size and Lipschitz constants), which is weaker than our result  $\sqrt{d}\varepsilon_{\mathsf{score}}$ in terms of the $d$-dependency.

\end{itemize}
\noindent 
Another recent work \citet{gao2024convergence2} established the first non-asymptotic theory for the probability flow ODE in 2-Wasserstein distance. The results therein require the target data distribution to satisfy strong log-concavity though.

Finally, let us briefly compare our result with the theory for the popular stochastic sampler: DDPM. The state-of-the-art convergence theory \citet{benton2023linear} reveals that the iteration complexity for DDPM scales as $\widetilde{O}(d/\varepsilon^2)$, 
which exhibits worse $\varepsilon$-dependency compared to our theory for the probability flow ODE. 

\section{Other related works}
\label{sec:related-works}

\medskip

\paragraph{Convergence theory for diffusion models.}
Early theoretical efforts in understanding the convergence of score-based stochastic samplers suffered from being either not quantitative \citep{de2021diffusion,liu2022let,pidstrigach2022score}, or the curse of dimensionality (e.g., exponential dependencies in the convergence guarantees) \citep{block2020generative,de2022convergence}.
The recent work \citet{lee2022convergence} provided the first polynomial convergence guarantee in the presence of $\ell_2$-accurate score estimates, for any smooth distribution satisfying the log-Sobelev inequality.
%, effectively only allowing unimodal distributions though. 
\citet{chen2022sampling,lee2023convergence,chen2022improved} subsequently lifted such a stringent data distribution assumption. 
More concretely,  \citet{chen2022sampling} accommodated a broad family of data distributions under the premise that the score functions over the entire trajectory of the forward process are Lipschitz; \citet{lee2023convergence} only required certain smoothness assumptions but came with worse dependence on the problem parameters; and more recent results in \citet{chen2022improved,benton2023linear} applied to literally any data distribution with bounded second-order moment. In addition, \citet{wibisono2022convergence} also established a convergence theory for score-based generative models, assuming that the error of the score estimator has a bounded moment generating function and that the data distribution satisfies the log-Sobelev inequality. 
The recent work \citet{li2024adapting} further showed that DDPM can automatically adapt to intrinsic low dimensionality of the target distribution and converge faster.  
Turning attention to samplers based on the probability flow ODE, \citet{chen2023restoration} derived the first non-asymptotic bounds for this type of samplers.  
Improved convergence guarantees have recently been provided by a concurrent work \citet{chen2023probability}, with the assistance of additional corrector steps inerspersed in each iteration of the probability flow ODE. 
It is worth noting that the corrector steps proposed therein are based on Langevin-type diffusion and inject additive noise, and hence the resulting sampling processes are not deterministic. 
Additionally,  theoretical justifications for DDPM in the context of image in-painting have been developed by \citet{rout2023theoretical}.
Moreover, convergence results based on the Wasserstein distance have recently  been derived as well (e.g., \citet{tang2024contractive,benton2023error}), although these results typically exhibit exponential dependency on the Lipschitz constants of the score functions. 
While the vast majority of past theory has been devoted to accommodating general distributions in $\mathbb{R}^d$, 
acceleration is shown to be possible if we restrict attention to discrete-valued distributions  \citep{chen2024convergence}. 
Another strand of recent works (e.g., \citet{chen2024accelerating,gupta2024faster})
explored how to exploit parallel sampling to achieve considerable speed-up.  
Theoretical guarantees have also recently been extended to cover other popular methods like consistency models \citep{song2023consistency,li2024towards,dou2024provable} and diffusion guidance \citep{ho2022classifier,wu2024theoretical,fu2024unveil}.

%\cite{lee2023convergence}: worse dependence on the problem parameters; Lipschitz condition only on the data distribution 

%\cite{chen2022improved}: $\delta$ perturbation $p_{\delta}$; TV bound has bad dependence of $T$; KL bound, to be checked

%\yutingcomment{include a table}

\paragraph{Score matching.} \citet{hyvarinen2005estimation} showed that the score function can be estimated via integration by parts, 
a result that was further extended in \citet{hyvarinen2007some}. \citet{song2020sliced} proposed sliced score matching to tame the computational complexity in high dimension. The consistency of the score matching estimator was studied in \citet{hyvarinen2005estimation}, with asymptotic normality established in \citet{forbes2015linear}. Optimizing the score matching loss has been shown to be intimately connected to minimizing upper bounds on the Kullback-Leibler divergence \citep{song2021maximum} and Wasserstein distance \citep{kwon2022score} between the generated distribution and the target data distribution. The recent work \citet{koehler2022statistical} studied the statistical efficiency of score matching by connecting it with the isoperimetric properties of the target data distribution. 
Furthermore, \citet{feng2024optimal} showed that statistical procedures based on score matching can achieve minimal asymptotic covariance for convex $M$-estimation.

%\cite{daras2023consistent,lou2023reflected}

%\paragraph{Langevin dynamics}

\paragraph{Other theory for diffusion models.} 
The development of diffusion model theory is certainly beyond the above two strand of works. 
For instance,  
 \citet{oko2023diffusion} studied the approximation and generalization capabilities of diffusion modeling for distribution estimation; 
\citet{kadkhodaie2023generalization,zhang2023emergence,biroli2024dynamical} investigated the phase transition between the memorization regime and the generalization regime in diffusion models; 
\citet{chen2023score,qu2024diffusion} studied how diffusion models can adapt to  
low-dimensional structure. 
%
%Assuming that the data are supported on a low-dimensional linear subspace,  \citet{chen2023score} developed a sample complexity bound for diffusion models that adapt to low-dimension structure. 
Moreover, \citet{ghimire2023geometry} adopted a geometric perspective and showed that the forward and backward processes of diffusion models are essentially Wasserstein gradient flows operating in the space of probability measures.
Recently, the idea of stochastic localization, which is closely related to diffusion models, is adopted to sample from posterior distributions \citep{montanari2023posterior,el2022sampling}, which has been implemented using the approximate message passing algorithm (\cite{donoho2009message,li2022non}); 
some results discovered in the stochastic localization literature (e.g., \citet{eldan2020taming}) have also paved the way to sharpening of dimension dependency \citep{benton2023linear}. 
In addition to the DDPM and DDIM type samplers discussed herein, 
convergence of other flow-based generative modeling has also been established in recent works (e.g., \citet{gao2024convergence,cheng2024convergence,xu2024normalizing}).
 \citet{xu2024provably} developed provably robust methods for  posterior sampling with diffusion priors for general nonlinear inverse problems, whereas \citet{montanari2024provably} exploited the idea of measure decomposition to improve posterior sampling for linear inverse problems. 
There have also been a couple of recent works that delve into various properties of diffusion models for Gaussian mixture models 
\citep{wu2024theoretical,chen2024learning,cui2023analysis,li2024critical}.

%\cite{pedrotti2023improved}

%\cite{li2024critical}

%Gaussian mixture \cite{chen2024learning}

\section{Analysis}
\label{sec:analysis}

In this section, we describe our non-asymptotic proof strategies for establishing Theorem~\ref{thm:main-ODE}. 

%the probability flow ODE sampler \eqref{eqn:ode-sampling}. 
%(i.e., \eqref{eqn:ode-sampling} and \eqref{eqn:sde-sampling}). 
%The analyses for the two accelerated variants follow similar arguments as their non-accelerated counterparts, and are hence postponed to the appendices. 

\subsection{Preliminary facts}
\label{sec:preliminary-facts}

Before proceeding, 
we gather a couple of facts that will be useful for  the proof, with most proofs postponed to Appendix~\ref{sec:proof-preliminary}.

\paragraph{Properties related to the score function.}
First of all, in view of the alternative expression \eqref{eqn:training-score-equiv} for the score function and the property of the minimum mean square error (MMSE) estimator (e.g., \citet[Section~3.3.1]{hajek2015random}), we know that the true score function $s_t^{\star}$ is given by the conditional expectation
\begin{align}
s_{t}^{\star}(x) & =\mathbb{E}\left[-\frac{1}{\sqrt{1-\overline{\alpha_{t}}}}W\,\bigg|\,\sqrt{\overline{\alpha_{t}}}X_{0}+\sqrt{1-\overline{\alpha}_{t}}W=x\right]=\frac{1}{1-\overline{\alpha}_{t}}\mathbb{E}\left[\sqrt{\overline{\alpha_{t}}}X_{0}-x\,\big|\,\sqrt{\overline{\alpha_{t}}}X_{0}+\sqrt{1-\overline{\alpha}_{t}}W=x\right] \notag\\
	& = - \frac{1}{1-\overline{\alpha}_{t}} \underset{\eqqcolon \, g_t(x) }{\underbrace{ {\displaystyle \int}_{x_{0}}\big(x-\sqrt{\overline{\alpha_{t}}}x_{0}\big)p_{X_{0}\mid X_{t}}(x_{0}\,|\, x)\mathrm{d}x_{0} }}.
	%= - \frac{1}{1-\overline{\alpha}_{t}} g_t(x),
	\label{eq:st-MMSE-expression}
\end{align}
%
%where we define 
%
%\begin{equation}
%	g_t(x) \coloneqq {\displaystyle \int}_{x_{0}}\big(x-\sqrt{\overline{\alpha_{t}}}x_{0}\big)p_{X_{0}\mid X_{t}}(x_{0}\,|\, x)\mathrm{d}x_{0}. 
%\end{equation}
%
Let us also introduce the Jacobian matrix associated with $g_t(\cdot)$ as follows:
\begin{subequations}
	\label{eq:Jt-properties-summary}
\begin{equation}
J_{t}(x) \coloneqq 
	\frac{\partial g_t(x)}{\partial x}, \label{eq:Jacobian-Thm4}
\end{equation}
which can be equivalently rewritten as
\begin{align}
J_{t}(x) 
%& =I_{d}+\frac{1}{1-\overline{\alpha}_{t}}\bigg\{\mathbb{E}\big[X_{t}-\sqrt{\overline{\alpha}_{t}}X_{0}\mid X_{t}=x\big]\Big(\mathbb{E}\big[X_{t}-\sqrt{\overline{\alpha}_{t}}X_{0}\mid X_{t}=x\big]\Big)^{\top}\notag\\
% & \qquad\qquad\qquad-\mathbb{E}\Big[\big(X_{t}-\sqrt{\overline{\alpha}_{t}}X_{0}\big)\big(X_{t}-\sqrt{\overline{\alpha}_{t}}X_{0}\big)^{\top}\mid X_{t}=x\Big]\bigg\}. \notag\\
 & =I_{d}-\frac{1}{1-\overline{\alpha}_{t}}\mathsf{Cov}\Big(X_{t}-\sqrt{\overline{\alpha}_{t}}X_{0}\mid X_{t}=x\Big).
	\label{eq:Jt-x-expression-ij-23}
\end{align}
\end{subequations}

\paragraph{Properties about the learning rates.}
Next, we isolate a few useful properties about the learning rates as specified by $\{\alpha_t\}$ in \eqref{eqn:alpha-t}:  
\begin{subequations}
\label{eqn:properties-alpha-proof}
\begin{align}
	\alpha_t &\geq1-\frac{c_{1}\log T}{T}  \ge \frac{1}{2},\qquad\qquad\quad~~ 1\leq t\leq T \label{eqn:properties-alpha-proof-00}\\
	\frac{1}{2}\frac{1-\alpha_{t}}{1-\overline{\alpha}_{t}} \leq \frac{1}{2}\frac{1-\alpha_{t}}{\alpha_t-\overline{\alpha}_{t}}
	&\leq \frac{1-\alpha_{t}}{1-\overline{\alpha}_{t-1}}  \le \frac{4c_1\log T}{T},\qquad\quad~~ 2\leq t\leq T  \label{eqn:properties-alpha-proof-1}\\
	1&\leq\frac{1-\overline{\alpha}_{t}}{1-\overline{\alpha}_{t-1}} \leq1+\frac{4c_{1}\log T}{T} ,\qquad2\leq t\leq T  \label{eqn:properties-alpha-proof-3} \\
	\overline{\alpha}_{T} & \le \frac{1}{T^{c_2}}, \label{eqn:properties-alpha-proof-alphaT} \\
	\frac{\overline{\alpha}_{t+1}}{1-\overline{\alpha}_{t+1}} & \leq\frac{\overline{\alpha}_{t}}{1-\overline{\alpha}_{t}}\leq\frac{4\overline{\alpha}_{t+1}}{1-\overline{\alpha}_{t+1}}, \qquad\qquad~~ 1\leq t< T
	\label{eqn:properties-alpha-proof-alpha-ratio}
\end{align}
provided that $T$ is large enough. 
Here, $c_1$ is defined in \eqref{eqn:alpha-t}, and $c_2\geq 1000$ is some large numerical constant. 
In addition, if $\frac{d(1-\alpha_{t})}{\alpha_{t}-\overline{\alpha}_{t}}\ll 1$, then one has
\begin{align}
\Big(\frac{1-\overline{\alpha}_{t}}{\alpha_{t}-\overline{\alpha}_{t}}\Big)^{d/2} 
	& =1+\frac{d(1-\alpha_{t})}{2(\alpha_{t}-\overline{\alpha}_{t})}+\frac{d(d-2)(1-\alpha_{t})^{2}}{8(\alpha_{t}-\overline{\alpha}_{t})^{2}}+O\bigg(d^{3}\Big(\frac{1-\alpha_{t}}{\alpha_{t}-\overline{\alpha}_{t}}\Big)^{3}\bigg),\label{eq:expansion-ratio-1-alpha} \\
\Big(\frac{1-\overline{\alpha}_{t}}{\alpha_{t}-\overline{\alpha}_{t}}\Big)^{d/2}
	&=\exp\bigg(\frac{1-\alpha_{t}}{\alpha_{t}-\overline{\alpha}_{t}}\cdot\frac{d}{2}\bigg)\cdot\left(1+O\bigg(d\Big(\frac{1-\alpha_{t}}{\alpha_{t}-\overline{\alpha}_{t}}\Big)^{2}\bigg)\right).	
	\label{eq:expansion-ratio-3-alpha}
\end{align}
\end{subequations}
The proof of these properties is postponed to Appendix~\ref{sec:proof-properties-alpha}.

\paragraph{Properties of the forward process.}
Recall that the forward process satisfies $X_t \overset{\mathrm{d}}{=} \sqrt{\overline{\alpha}_t} X_0 + \sqrt{1-\overline{\alpha}_t} W$ with $W\sim \mathcal{N}(0,I_d)$.  
We have the following tail bound concerning the random vector $X_0$ conditional on $X_t$, 
whose proof can be found in Appendix~\ref{sec:proof-lem:x0}. Here and throughout,  we take
\begin{align}
\label{eqn:choice-y}
	\theta_t(x) \coloneqq  \max\bigg\{ -\frac{\log {p_{X_t}(x)}}{d\log T} , c_6 \bigg\}
\end{align}
for any $x \in \real^d$, where $c_6>0$ is  some large enough constant obeying $c_6 \geq 2c_R+c_0$. 
\begin{lems} \label{lem:x0}
Suppose that \eqref{eq:assumption-data-bounded} holds true. 
%Suppose that there exists some numerical constant $c_R>0$ obeying 
%
%\begin{equation}
%	\mathbb{P}\big( \|X_0 \|_2 \leq R\big) = 1
%	\qquad \text{and} \qquad 
%	R = T^{c_R}.  
%	\label{eq:cR-defn-lem}
%\end{equation}
%%
%Consider any $y \in \real$, and let  
%%
%\begin{align}
%\label{eqn:choice-y-prelim}
%	\theta(y) \coloneqq \max\bigg\{ \frac{-\log {p_{X_t}(y)}}{d\log T} , c_6 \bigg\}
%\end{align}
%%
%for some large enough constant $c_6\geq 2c_R+c_0$. 
Then for any quantity $c_5 \ge 2$, conditioned on $X_t=y$ one has
\begin{align}
	\big\|\sqrt{\overline{\alpha}_{t}}X_0 - y \big\|_2 \leq 5c_5\sqrt{\theta_t(y) d(1-\overline{\alpha}_{t})\log T} 
	\label{eq:P-xt-X0-124}
\end{align} 
with probability at least $1 - \exp\big(-c_5^2\theta_t(y) d\log T \big)$.
In addition, it holds that
\begin{subequations}
\begin{align}
	\mathbb{E}\left[\big\| \sqrt{\overline{\alpha}_{t}}X_{0} - y \big\|_{2}\,\big|\,X_{t}=y\right] &\leq 12\sqrt{\theta_t(y) d(1-\overline{\alpha}_{t})\log T},\label{eq:E-xt-X0} \\
	\mathbb{E}\left[\big\| \sqrt{\overline{\alpha}_{t}}X_{0} - y \big\|^2_{2}\,\big|\,X_{t}=y\right] &\leq 120\theta_t(y) d(1-\overline{\alpha}_{t})\log T,\label{eq:E2-xt-X0} \\
	\mathbb{E}\left[\big\| \sqrt{\overline{\alpha}_{t}}X_{0} - y \big\|^3_{2}\,\big|\,X_{t}=y\right] &\leq 1040\big(\theta_t(y) d(1-\overline{\alpha}_{t})\log T\big)^{3/2},\label{eq:E3-xt-X0}\\
	\mathbb{E}\left[\big\| \sqrt{\overline{\alpha}_{t}}X_{0} - y \big\|^4_{2}\,\big|\,X_{t}=y\right] &\leq 10080\big(\theta_t(y) d(1-\overline{\alpha}_{t})\log T\big)^{2}.\label{eq:E4-xt-X0}
\end{align}
\end{subequations}
\end{lems}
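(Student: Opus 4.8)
\textbf{Proof strategy for Lemma~\ref{lem:x0}.}
The plan is to control the conditional distribution of $X_0$ given $X_t = y$ by a change-of-measure (tilting) argument against the reference Gaussian, and then to convert a high-probability tail bound into moment bounds by integration. Recall from \eqref{eqn:Xt-X0} that $X_t = \sqrt{\overline{\alpha}_t} X_0 + \sqrt{1-\overline{\alpha}_t}\,W$, so by Bayes' rule the conditional density of $X_0$ given $X_t = y$ satisfies
\begin{align*}
	p_{X_0 \mid X_t}(x_0 \mid y) \propto p_{X_0}(x_0)\,\exp\!\bigg( - \frac{\|y - \sqrt{\overline{\alpha}_t} x_0\|_2^2}{2(1-\overline{\alpha}_t)} \bigg).
\end{align*}
Writing $u \coloneqq \sqrt{\overline{\alpha}_t} x_0 - y$, I would first establish a lower bound on the normalizing constant $p_{X_t}(y) = \int p_{X_0}(x_0) (2\pi(1-\overline{\alpha}_t))^{-d/2} \exp(-\|u\|_2^2/(2(1-\overline{\alpha}_t)))\,\mathrm{d}x_0$ purely in terms of $\theta_t(y)$: by definition \eqref{eqn:choice-y}, $p_{X_t}(y) \ge T^{-\theta_t(y) d}$. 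Then, for the event $\{\|u\|_2 > r\}$, I would upper bound the numerator integral $\int_{\|u\|_2 > r} p_{X_0}(x_0)(2\pi(1-\overline{\alpha}_t))^{-d/2}\exp(-\|u\|_2^2/(2(1-\overline{\alpha}_t)))\,\mathrm{d}x_0$. The integrand's Gaussian factor is at most $(2\pi(1-\overline{\alpha}_t))^{-d/2}\exp(-r^2/(2(1-\overline{\alpha}_t)))$ on this region, but that alone loses the dimensional factors, so one must be more careful: split off a Gaussian tail in $\mathbb{R}^d$, using that the chi-squared-type concentration gives $\int_{\|u\|_2 > r}(2\pi(1-\overline{\alpha}_t))^{-d/2}\exp(-\|u\|_2^2/(2(1-\overline{\alpha}_t)))\,\mathrm{d}u \le \exp(-c r^2/(1-\overline{\alpha}_t))$ once $r^2 \gtrsim d(1-\overline{\alpha}_t)$, combined with $\int p_{X_0} = 1$ and the a priori bound \eqref{eq:assumption-data-bounded} that $\|x_0\|_2 \le R = T^{c_R}$ (which controls the contribution where the prior $p_{X_0}$ is spread out). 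Dividing the numerator bound by the denominator bound $T^{-\theta_t(y)d}$ and choosing $r = 5c_5\sqrt{\theta_t(y) d (1-\overline{\alpha}_t)\log T}$ makes the ratio at most $\exp(-c_5^2 \theta_t(y) d\log T)$, since $\exp(-c r^2/(1-\overline{\alpha}_t))$ beats $T^{\theta_t(y)d} = \exp(\theta_t(y) d\log T)$ by the quadratic-in-$c_5$ gain; this is exactly \eqref{eq:P-xt-X0-124}. One has to be mildly careful that $r$ is large enough relative to $\sqrt{d(1-\overline{\alpha}_t)}$, which holds since $\theta_t(y) \ge c_6$ and $c_5 \ge 2$.

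With the tail bound \eqref{eq:P-xt-X0-124} in hand, the moment bounds \eqref{eq:E-xt-X0}--\eqref{eq:E4-xt-X0} follow by the standard layer-cake / tail-integration identity $\mathbb{E}[\|\sqrt{\overline{\alpha}_t}X_0 - y\|_2^k \mid X_t = y] = \int_0^\infty k r^{k-1}\,\mathbb{P}(\|\sqrt{\overline{\alpha}_t}X_0 - y\|_2 > r \mid X_t = y)\,\mathrm{d}r$. Substituting $r = 5s\sqrt{\theta_t(y)d(1-\overline{\alpha}_t)\log T}$ and using \eqref{eq:P-xt-X0-124} with $c_5 \leftarrow s$ for $s \ge 2$ (and the trivial bound $\mathbb{P}(\cdot) \le 1$ for $s < 2$) reduces everything to evaluating Gaussian-type integrals $\int s^{k-1}\exp(-s^2\theta_t(y)d\log T)\,\mathrm{d}s$, which are $O(1)$ since $\theta_t(y) d\log T \ge 1$; tracking the constants through $k=1,2,3,4$ yields the stated numerical prefactors $12, 120, 1040, 10080$ (these are generous constants, so no sharpness is required).

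The main obstacle I anticipate is the denominator/normalization step: getting a clean lower bound on $p_{X_t}(y)$ that plugs into the tilting argument without circular dependence on $\theta_t(y)$. The resolution is definitional --- $\theta_t(y)$ in \eqref{eqn:choice-y} is precisely engineered so that $-\log p_{X_t}(y) \le \theta_t(y) d\log T$ by construction, so $p_{X_t}(y) \ge T^{-\theta_t(y)d}$ is automatic; the role of the floor $c_6 \ge 2c_R + c_0$ is to guarantee that even when $p_{X_t}(y)$ is not small (so the first branch of the max is tiny or negative), $\theta_t(y)$ is still bounded below by a constant large enough to absorb the polynomial factors coming from $R = T^{c_R}$ and the smallest learning rate $\beta_1 = T^{-c_0}$ in the crude bounds on the prior-spread term. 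A secondary technical point is ensuring the Gaussian-tail estimate on $\{\|u\|_2 > r\}$ in $\mathbb{R}^d$ is applied with the right normalization (the $u$-integral of the full Gaussian density over $\mathbb{R}^d$ is $1$, not $(\overline{\alpha}_t)^{-d/2}$, because we are integrating in $x_0$ and the Jacobian $\overline{\alpha}_t^{d/2}$ must be handled --- but this factor is harmless since $\overline{\alpha}_t \le 1$, or more precisely one simply does not change variables and bounds the $x_0$-integral directly against $\int p_{X_0}(x_0)\,\mathrm{d}x_0 = 1$ after extracting the Gaussian sup). None of these steps is deep; the lemma is a concentration statement whose only subtlety is bookkeeping the interplay between $\theta_t(y)$, $d$, $\log T$, and the support radius $R$.
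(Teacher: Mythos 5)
Your proposal for the tail bound \eqref{eq:P-xt-X0-124} takes a genuinely different and cleaner route than the paper. The paper proves the auxiliary Claim~\ref{eq:claim-123} by contradiction (constructing a concentration point $x_0$ via an $\epsilon$-net over the support of $X_0$), then sandwiches the conditional tail probability between a Gaussian sup over the bad region and a Gaussian inf over a small ball around $x_0$, with the small-ball mass providing the lower bound on the normalizer. You instead observe that $\theta_t(y)=\max\{-\log p_{X_t}(y)/(d\log T),\,c_6\}$ gives $p_{X_t}(y)\geq T^{-\theta_t(y)d}$ \emph{by definition}, so you can bound the numerator of the Bayes ratio by the Gaussian sup times $\int p_{X_0}=1$ and divide by this definitional lower bound directly. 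With $r=5c_5\sqrt{\theta_t(y)d(1-\overline{\alpha}_t)\log T}$ this gives a ratio at most $(2\pi(1-\overline{\alpha}_t))^{-d/2}\exp\big(-(12.5c_5^2-1)\theta_t(y)d\log T\big)\leq\exp\big((c_0/2-(12.5c_5^2-1)\theta_t(y))\,d\log T\big)\leq\exp(-c_5^2\theta_t(y)d\log T)$, using $1-\overline{\alpha}_t\geq T^{-c_0}$, $\theta_t(y)\geq c_6\geq c_0$, and $c_5\geq 2$, with room to spare. Since Claim~\ref{eq:claim-123} is used nowhere else, nothing is lost by your simplification; in fact the triangle-inequality detour through $x_0$, which is what produces the factor $5$ in $5c_5$ in the paper, becomes unmotivated in your version and is just inherited from the lemma statement.

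One expository fix: the interjected detour through ``chi-squared-type concentration'' is unnecessary and, as written, would not go through --- applying that Gaussian tail integral in the $u$-variable would require changing variables from $x_0$ to $u$ and hence a bound on $\sup_{x_0}p_{X_0}(x_0)$, which is unavailable. Your own ``secondary technical point'' later retracts this and settles on the correct move (extract the Gaussian sup, keep the $x_0$-integral, bound by $\int p_{X_0}=1$), so the earlier worry that this ``loses the dimensional factors'' is simply unfounded; the $(1-\overline{\alpha}_t)^{-d/2}\leq T^{c_0 d/2}$ factor is exactly what the floor $c_6\geq c_0$ on $\theta_t$ is there to absorb. The passage should just state the sup bound plainly. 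Your layer-cake treatment of the moment bounds \eqref{eq:E-xt-X0}--\eqref{eq:E4-xt-X0} matches the paper's.
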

\noindent
In order to interpret Lemma~\ref{lem:x0}, let us look at the case with $\theta_t(y)=c_6$, 
corresponding to the scenario where $p_{X_t}(y)\geq \exp(-c_6d\log T)$ (so that $p_{X_t}(y)$ is not exceedingly small). 
In this case, Lemma~\ref{lem:x0} implies that conditional on $X_t=y$ taking on a ``typical'' value,  
the vector $\sqrt{\overline{\alpha}_{t}}X_{0} - X_t  = \sqrt{1-\overline{\alpha}_t} \,\overline{W}_t$ (see \eqref{eqn:Xt-X0}) might still follow a sub-Gaussian tail, 
whose expected norm remains on the same order of that of an unconditional Gaussian vector $\mathcal{N}(0, (1-\overline{\alpha}_t)I_d)$.

\paragraph{Properties about the conditional covariance matrices.} 
We shall also single out two basic properties about certain conditional covariances as follows. 
To be precise, generate 
\begin{align}
	X_0 \sim \Pdata
	\qquad \text{and} \qquad
	Z\sim \mathcal{N}(0,I_d)
\end{align}
independently. Define, for any $\overline{\alpha}\in (0,1)$ and any $x\in \mathbb{R}^d$,   
the following conditional covariance matrix
\begin{align}
\Sigma_{\overline{\alpha}}(x) 
	%& \coloneqq\frac{\overline{\alpha}}{1-\overline{\alpha}}A_{\frac{\overline{\alpha}}{1-\overline{\alpha}}}\Big(\frac{\sqrt{\overline{\alpha}}}{1-\overline{\alpha}}x\Big)
	\coloneqq \mathsf{Cov}\Big(Z\,|\,\sqrt{\overline{\alpha}}X_{0}+\sqrt{1-\overline{\alpha}}Z=x\Big).\label{eq:defn-Sigma-s}
\end{align}
The lemma below reveals two properties about $\Sigma_{\overline{\alpha}}(\cdot)$ that play a crucial role in our analysis; the proof is postponed to Appendix~\ref{sec:proof-lem-cond-covariance}. 
\begin{lems}
	\label{lem:cond-covariance}
	The conditional covariance matrix defined in \eqref{eq:defn-Sigma-s} satisfies the following properties.
	\begin{itemize}
		\item[(a)] For any $\overline{\alpha},\overline{\alpha}'\in (0,1)$ obeying 
			$\frac{|\overline{\alpha}' - \overline{\alpha}|}{\overline{\alpha}(1-\overline{\alpha})} \lesssim \frac{1}{d\log T}$ and $1-\overline{\alpha}\geq T^{-c_0}$ (with $c_0$ the constant defined in \eqref{eqn:alpha-t}), it holds that
	\begin{align*}
	\mathbb{E}\left[\Big(\Sigma_{\overline{\alpha}'}\big(\sqrt{\overline{\alpha}'}X_{0}+\sqrt{1-\overline{\alpha}'}Z\big)\Big)^{2}\right] & \preceq C_{3}^{2}\mathbb{E}\left[\Big(\Sigma_{\overline{\alpha}}\big(\sqrt{\overline{\alpha}}X_{0}+\sqrt{1-\overline{\alpha}}Z\big)\Big)^{2}\right]+C_{8}\exp\big(-C_{9}d\log T\big)I_{d}
	\end{align*}
	for some universal constants $C_3, C_8,C_9>0$.

		\item[(b)] For the learning rates \eqref{eqn:alpha-t}, one has
			%there exists some universal constant $C_{13}>0$ such that
%
			\begin{align*}
\sum_{t=2}^{T}\frac{1-\alpha_{t}}{1-\overline{\alpha}_{t}}\mathsf{Tr}\bigg(\mathbb{E}\Big[\Big(\Sigma_{\overline{\alpha}_{t}}\big(\sqrt{\overline{\alpha}_{t}}X_{0}+\sqrt{1-\overline{\alpha}_{t}}Z\big)\Big)^{2}\Big]\bigg) & 
				%\leq C_{13}
				\lesssim d\log T. 
\end{align*}
	\end{itemize}
\end{lems}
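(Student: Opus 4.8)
The plan is to treat the two claims separately, since (b) will follow from (a) once a suitable telescoping is set up. For part (a), the key observation is that $\Sigma_{\overline{\alpha}}(x)$ is (up to the factor $1-\overline{\alpha}$) exactly the conditional covariance $\mathsf{Cov}(X_t - \sqrt{\overline{\alpha}_t}X_0 \mid X_t = x)$ that already appears in the Jacobian identity \eqref{eq:Jt-x-expression-ij-23}; thus it is governed by the conditional law of $X_0$ given $X_t$, whose tail behavior is controlled by Lemma~\ref{lem:x0}. First I would express $\Sigma_{\overline{\alpha}'}$ and $\Sigma_{\overline{\alpha}}$ in terms of the same underlying pair $(X_0, Z)$: since $\frac{|\overline{\alpha}'-\overline{\alpha}|}{\overline{\alpha}(1-\overline{\alpha})}\lesssim\frac{1}{d\log T}$, the two noising operators $x\mapsto \sqrt{\overline{\alpha}'}X_0 + \sqrt{1-\overline{\alpha}'}Z$ and $x\mapsto \sqrt{\overline{\alpha}}X_0 + \sqrt{1-\overline{\alpha}}Z$ differ by a near-identity reparametrization, so I would change variables to write the $\overline{\alpha}'$-posterior density as a reweighting of the $\overline{\alpha}$-posterior density by a Radon--Nikodym factor that is close to $1$ on a high-probability set. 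Concretely, $p_{X_0\mid X_{t'}}(\cdot\mid y)$ and $p_{X_0\mid X_t}(\cdot\mid y')$ (for appropriately matched $y,y'$) are proportional to $\Pdata(x_0)$ times Gaussian kernels whose ratio is $\exp\big(O\big(\frac{|\overline{\alpha}'-\overline{\alpha}|}{\overline{\alpha}(1-\overline{\alpha})}\|\sqrt{\overline{\alpha}}x_0 - y'\|^2\big)\big)$ plus lower-order cross terms; on the event of Lemma~\ref{lem:x0} this exponent is $O(\theta_t(y)\log T \cdot \frac{1}{\log T}) = O(\theta_t(y))$, which is a constant when $\theta_t(y) = c_6$. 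The contribution from the tail event $\{\theta_t(y) > c_6\}$, i.e.\ from atypical $y$ with $p_{X_t}(y) < \exp(-c_6 d\log T)$, is where the additive $C_8\exp(-C_9 d\log T)I_d$ slack comes from: there the covariance is crudely bounded (using $\|\Sigma_{\overline{\alpha}}(x)\|\le$ the ambient scale, or the moment bounds \eqref{eq:E2-xt-X0}--\eqref{eq:E4-xt-X0} with the $\theta_t$-dependence made explicit), and the prior probability of that region is exponentially small. Squaring and taking expectations then yields the claimed operator inequality with a universal constant $C_3$.

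For part (b), the plan is to choose $\overline{\alpha} = \overline{\alpha}_{t_0}$ for a single reference index $t_0$ and use part (a) to replace every $\mathbb{E}[(\Sigma_{\overline{\alpha}_t})^2]$ by $C_3^2\,\mathbb{E}[(\Sigma_{\overline{\alpha}_{t_0}})^2]$ plus exponentially small error — but this is too lossy because the weights $\frac{1-\alpha_t}{1-\overline{\alpha}_t}$ sum to $\asymp\log T$ and the reference covariance need not be small. Instead, the sharper route is to recognize a telescoping/martingale structure: $\mathsf{Tr}\,\mathbb{E}[\Sigma_{\overline{\alpha}_t}(X_t)]$ relates to the ``information'' $\mathbb{E}\|s_t^\star(X_t)\|^2$ or to $\frac{\diff}{\diff t}$ of some entropy-type functional along the forward process, and $\sum_t \frac{1-\alpha_t}{1-\overline{\alpha}_t}\mathsf{Tr}\,\mathbb{E}[\Sigma_{\overline{\alpha}_t}^2]$ is, up to the step sizes $1-\alpha_t$, a Riemann sum for $\int_0^T$ of a quantity whose integral is $O(d\log T)$. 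A cleaner implementation: use the second-moment identity $\mathbb{E}[(\sqrt{\overline{\alpha}_t}X_0 - X_t)(\sqrt{\overline{\alpha}_t}X_0 - X_t)^\top \mid X_t] = (1-\overline{\alpha}_t)\Sigma_{\overline{\alpha}_t}(X_t)$ together with $\frac{1-\alpha_t}{1-\overline{\alpha}_t}\le\frac{4c_1\log T}{T}$ from \eqref{eqn:properties-alpha-proof-1}, so that each summand is $\lesssim \frac{\log T}{T}\,\mathsf{Tr}\,\mathbb{E}[\Sigma_{\overline{\alpha}_t}^2] \le \frac{\log T}{T}\,\mathbb{E}\big[\|\Sigma_{\overline{\alpha}_t}\|\cdot\mathsf{Tr}\,\Sigma_{\overline{\alpha}_t}\big]$; bounding $\|\Sigma_{\overline{\alpha}_t}\|\le 1$ (it is a conditional covariance of a sub-unit-variance conditional law once we note $\Sigma_{\overline{\alpha}}(x)\preceq I_d$ up to the tail correction, essentially because $\mathsf{Cov}(Z\mid\cdot)\preceq \mathsf{Cov}(Z) = I_d$ in the Gaussian-channel posterior — this is a standard consequence of the fact that conditioning cannot increase the Gaussian covariance, or can be extracted from \eqref{eq:Jt-x-expression-ij-23} since $J_t\preceq I_d$), one reduces to showing $\sum_{t=2}^T \frac{\log T}{T}\mathsf{Tr}\,\mathbb{E}[\Sigma_{\overline{\alpha}_t}(X_t)] \lesssim d\log T$, i.e.\ $\frac{1}{T}\sum_t \mathsf{Tr}\,\mathbb{E}[\Sigma_{\overline{\alpha}_t}(X_t)] \lesssim d$. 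This last bound follows from the trace moment estimate $\mathsf{Tr}\,\mathbb{E}[\Sigma_{\overline{\alpha}_t}(X_t)] = \frac{1}{1-\overline{\alpha}_t}\mathbb{E}\|\sqrt{\overline{\alpha}_t}X_0 - X_t\|^2 \le \frac{1}{1-\overline{\alpha}_t}\cdot\mathbb{E}[120\,\theta_t(X_t)d(1-\overline{\alpha}_t)\log T]$ from \eqref{eq:E2-xt-X0}, combined with a bound on $\mathbb{E}[\theta_t(X_t)]$ (the $\theta_t$ is $c_6$ except on exponentially unlikely events, whose contribution is controlled because $-\log p_{X_t}(X_t)$ has at most polynomial-in-$T$ expectation thanks to the bounded-support assumption \eqref{eq:assumption-data-bounded}).

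The main obstacle I anticipate is part (a): carefully quantifying the Radon--Nikodym ratio between the two posteriors and controlling the second-moment operator inequality uniformly over $x$, including the delicate handling of the atypical region where $\theta_t(x)$ is large. One must be careful that the exponential factor $\exp(O(\theta_t))$ multiplying the $\overline{\alpha}$-posterior, when integrated against the prior, still yields something comparable to $\mathbb{E}[(\Sigma_{\overline{\alpha}})^2]$ and not a blown-up quantity — this requires pairing the growth of the reweighting factor against the exponential smallness $\exp(-c_5^2\theta_t d\log T)$ of the bad event in Lemma~\ref{lem:x0}, and checking that $c_6$ (hence $\theta_t$'s floor) is chosen large enough relative to the constants appearing in the change of variables. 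The bookkeeping of cross terms in the Gaussian-kernel ratio, and propagating operator (rather than scalar) bounds through the squaring, is the part that will need the most care.
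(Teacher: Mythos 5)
Your plan for part~(a) is essentially the paper's argument: change variables via $x=\sqrt{\overline{\alpha}/\overline{\alpha}'}\,x'$, note that the two Gaussian kernels differ by a near-identity Radon--Nikodym factor, show $p_{X_{\overline{\alpha}'}}(x')\asymp p_{X_{\overline{\alpha}}}(x)$ on the typical region, control the posteriors $p_{X_0\mid\cdot}$ there, and absorb the complement (where $-\log p_{X_{\overline{\alpha}}}(x)>c_6 d\log T$) into the additive $\exp(-\Omega(d\log T))I_d$ slack via Lemma~\ref{lem:x0}. That part is fine.

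Part~(b), however, has a genuine gap, and it is precisely the point where the lemma does its real work. Your ``cleaner implementation'' hinges on the pointwise bound $\Sigma_{\overline{\alpha}}(x)\preceq I_d$, which is \emph{false}. The identity \eqref{eq:Jt-x-expression-ij-23} reads $J_t(x)=I_d-\Sigma_{\overline{\alpha}_t}(x)$; since $\Sigma$ is a covariance and hence $\succeq 0$, this gives $J_t(x)\preceq I_d$ --- the \emph{opposite} direction from what you would need (you would need $J_t(x)\succeq 0$, which is not generally true). Likewise, ``conditioning cannot increase the Gaussian covariance'' is only an average statement (law of total variance): $\mathbb{E}[\mathsf{Cov}(Z\mid\cdot)]\preceq\mathsf{Cov}(Z)=I_d$, which is exactly the paper's \eqref{eq:cov-Z-UB}, but $\mathsf{Cov}(Z\mid X_t=x)$ can have arbitrarily large eigenvalues for atypical $x$ (e.g.\ a symmetric two-point $X_0$ with spread $M$ and $\overline{\alpha}$ near $1$: given $X_t=0$ the conditional variance of $Z$ scales like $M^2\overline{\alpha}/(1-\overline{\alpha})$). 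The only pointwise control available from Lemma~\ref{lem:x0} is $\|\Sigma_{\overline{\alpha}_t}(x)\|\lesssim\theta_t(x)\,d\log T$, which, fed into your chain $\mathsf{Tr}\,\mathbb{E}[\Sigma^2]\le\mathbb{E}[\|\Sigma\|\,\mathsf{Tr}\,\Sigma]$, would produce a bound $\lesssim d^2\log^2 T$ for the sum, a full factor $d\log T$ too large.

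What actually closes the gap --- and what you gesture at but do not pin down in the ``telescoping/martingale'' remark --- is the stochastic-localization identity \eqref{eq:time-differential-As-1}:
\begin{equation*}
\mathrm{d}\,\mathbb{E}\big[A_s(sX_0+\sqrt{s}Z)\big]=-\,\mathbb{E}\big[(A_s(sX_0+\sqrt{s}Z))^2\big]\,\mathrm{d}s ,
\end{equation*}
which expresses the second moment $\mathbb{E}[\Sigma^2]$ as the (negative) time-derivative of the first moment $\mathbb{E}[\Sigma]$ in the reparametrized time $s=\overline{\alpha}/(1-\overline{\alpha})$. Integrating this in $s$ makes the sum $\sum_t\frac{1-\alpha_t}{1-\overline{\alpha}_t}\mathsf{Tr}\,\mathbb{E}[\Sigma_{\overline{\alpha}_t}^2]$ telescope against a boundary term bounded by $\mathsf{Tr}\,\mathbb{E}[\Sigma]\le d$ (plus a small additive drift of order $d\log T/T$ per step), and this is where part~(a) is used: it guarantees that the integrand over $\overline{\alpha}\in[\overline{\alpha}_{t+1},\overline{\alpha}_t]$ is comparable to its value at the endpoint, so the Riemann sum is dominated by the integral. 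Without this identity, there is no way to upgrade the first-moment bound $\mathbb{E}[\Sigma]\preceq I_d$ into the second-moment bound the lemma asserts.
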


\begin{remark}
	This lemma, which plays a pivotal role in achieving linear $d$-dependency, is inspired by the analysis of \citet{benton2023linear} for DDPM,  
	exploiting an intriguing property (see \eqref{eq:time-differential-As-1}) originally discovered in the stochastic localization literature \citep{eldan2020taming}. 
	Note, however, that this property can also be established using elementary analysis without resorting to any sort of SDE toolboxes \citep{el2022information}.  
\end{remark}

\paragraph{Distance between $p_T$ and $q_T$.} 
We now record a simple result that demonstrates the proximity of $p_T$ and $q_T$, whose proof is provided in Appendix~\ref{sec:proof-lem-KL-T}. 
\begin{lems}
	\label{lem:KL-T}
	For any large enough $T$, it holds that
	\begin{align}
		\big( \mathsf{TV}(p_{X_{T}}\parallel p_{Y_{T}}) \big)^2 \leq \frac{1}{2}\mathsf{KL}(p_{X_{T}}\parallel p_{Y_{T}}) \lesssim \frac{1}{T^{200}}. 
	\end{align}
\end{lems}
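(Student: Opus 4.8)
The goal is to show $\mathsf{KL}(p_{X_T}\parallel p_{Y_T}) \lesssim T^{-200}$, where $p_{X_T} = q_T$ is the distribution of the forward process at time $T$ and $p_{Y_T} = \mathcal{N}(0,I_d)$ is the initialization of the reverse process. The TV bound then follows immediately from Pinsker's inequality, which is the first displayed inequality.

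\paragraph{Proof plan.}
The plan is to exploit the explicit representation $X_T \overset{\mathrm{d}}{=} \sqrt{\overline{\alpha}_T}\,X_0 + \sqrt{1-\overline{\alpha}_T}\,\overline{W}_T$ with $\overline{W}_T \sim \mathcal{N}(0,I_d)$ independent of $X_0$, together with the bound $\|X_0\|_2 \le R = T^{c_R}$ from \eqref{eq:assumption-data-bounded} and the fact that $\overline{\alpha}_T \le T^{-c_2}$ from \eqref{eqn:properties-alpha-proof-alphaT} with $c_2 \ge 1000$. First, I would condition on $X_0 = x_0$: then $X_T \mid X_0 = x_0 \sim \mathcal{N}(\sqrt{\overline{\alpha}_T}\,x_0,\ (1-\overline{\alpha}_T) I_d)$, which is a Gaussian very close to $\mathcal{N}(0, I_d)$ since both its mean $\sqrt{\overline{\alpha}_T}\,x_0$ (of norm at most $\sqrt{\overline{\alpha}_T}\,R \le T^{(c_R - c_2)/2}$, tiny) and the deviation of its covariance from the identity ($|1-\overline{\alpha}_T - 1| = \overline{\alpha}_T \le T^{-c_2}$) are negligible. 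Second, I would use the closed-form expression for the KL divergence between two Gaussians:
\begin{align*}
\mathsf{KL}\big(\mathcal{N}(\mu,\sigma^2 I_d)\,\|\,\mathcal{N}(0,I_d)\big)
= \tfrac12\Big( d\sigma^2 - d - d\log\sigma^2 + \|\mu\|_2^2 \Big),
\end{align*}
with $\sigma^2 = 1-\overline{\alpha}_T$ and $\mu = \sqrt{\overline{\alpha}_T}\,x_0$. Using $-\log(1-u) = u + O(u^2)$ for small $u$, the first three terms combine to $\tfrac d2\big(-\overline{\alpha}_T + \overline{\alpha}_T + O(\overline{\alpha}_T^2)\big) = O(d\,\overline{\alpha}_T^2)$, while the last term contributes $\tfrac12 \overline{\alpha}_T \|x_0\|_2^2 \le \tfrac12 \overline{\alpha}_T R^2 \le \tfrac12 T^{2c_R - c_2}$. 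Hence conditionally on $X_0 = x_0$,
\begin{align*}
\mathsf{KL}\big(p_{X_T\mid X_0 = x_0}\,\|\,p_{Y_T}\big) \lesssim d\,\overline{\alpha}_T^2 + \overline{\alpha}_T R^2 \le d\,T^{-2c_2} + T^{2c_R - c_2}.
\end{align*}

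Third, I would remove the conditioning via convexity of KL divergence: $\mathsf{KL}(p_{X_T}\parallel p_{Y_T}) = \mathsf{KL}\big(\mathbb{E}_{X_0}[\,p_{X_T\mid X_0}]\,\big\|\,p_{Y_T}\big) \le \mathbb{E}_{X_0}\big[\mathsf{KL}(p_{X_T\mid X_0}\parallel p_{Y_T})\big]$, so the same bound $d\,T^{-2c_2} + T^{2c_R - c_2}$ holds unconditionally. Since $c_2 \ge 1000$ and $c_R$ is a fixed constant, and since the regime of interest has $T \gtrsim d^2\log^5 T$ (so $d \le \text{poly}(T)$), both terms are $\lesssim T^{-200}$ for $T$ large enough, which yields the claim. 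Finally, combining with Pinsker's inequality $\mathsf{TV}^2 \le \tfrac12 \mathsf{KL}$ gives the stated chain of inequalities.

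\paragraph{Main obstacle.}
There is no serious obstacle here — this is a short estimate. The only mild care needed is in the Taylor expansion of $-\log(1-\overline{\alpha}_T)$ to see the cancellation of the $O(\overline{\alpha}_T)$ terms in the Gaussian KL formula (so that the dimension-dependent part is genuinely $O(d\,\overline{\alpha}_T^2)$ rather than $O(d\,\overline{\alpha}_T)$), and in tracking the exponents $c_R$, $c_0$, $c_2$ to confirm that the right-hand side is comfortably below $T^{-200}$. One should also note that the absolute continuity of $X_0$ (assumed in \eqref{eq:assumption-data-bounded}) guarantees $p_{X_T}$ has a density so the KL divergence is well-defined, though in fact the Gaussian convolution makes this automatic regardless.
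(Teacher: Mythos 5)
Your proposal is correct, and it takes a somewhat different (and arguably cleaner) route than the paper.

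The paper does not condition on $X_0$ and does not invoke the closed-form Gaussian KL formula. Instead, it writes $\mathsf{KL}(p_{X_T}\parallel p_{Y_T}) = \int p_{X_T}\log\frac{p_{X_T}}{p_{Y_T}}$ and then upper-bounds the mixture density $p_{X_T}(x) = \int_{\|y\|_2\le\sqrt{\overline{\alpha}_T}T^{c_R}} p_{\sqrt{\overline{\alpha}_T}X_0}(y)\,p_{\sqrt{1-\overline{\alpha}_T}\overline{W}_t}(x-y)\,\mathrm{d}y$ by the supremum of the Gaussian kernel over the compact set $\{\|y\|_2\le\sqrt{\overline{\alpha}_T}T^{c_R}\}$, reducing the integrand to a pointwise bound on the log-ratio that is then integrated against $p_{X_T}$. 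Your approach instead uses convexity of KL in its first argument, $\mathsf{KL}(\mathbb{E}_{X_0}[p_{X_T\mid X_0}]\parallel p_{Y_T})\le\mathbb{E}_{X_0}[\mathsf{KL}(p_{X_T\mid X_0}\parallel p_{Y_T})]$, and the explicit Gaussian--Gaussian KL formula. The two differ in a small but real way: the paper's sup-bound forgoes the exact cancellation in the log-determinant term, giving a dimension-dependent contribution of order $d\,\overline{\alpha}_T$, whereas your Taylor expansion of $\sigma^2-1-\log\sigma^2$ correctly exhibits the cancellation and yields the sharper $O(d\,\overline{\alpha}_T^2)$. This sharpening is immaterial for the claimed $T^{-200}$ bound (both rely on $\overline{\alpha}_T\le T^{-c_2}$ with $c_2\ge 1000$ and $d$ polynomial in $T$), but your route is tighter and more modular. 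Both finish with Pinsker's inequality. No gaps.
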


\paragraph{Additional notation about score errors.}  
For any vector $x\in \mathbb{R}^d$ and any $1 < t\leq T$, let us define
%$\varepsilon_{\score, t}(x)$ and $\varepsilon_{\Jacobi, t}(x)$ denote the errors in the $t$-th step at $x$, respectively, where
%
\begin{align}
	\varepsilon_{\score, t}(x) \coloneqq \big\|s_t(x) - s_t^{\star}(x) \big\|_2
\qquad\text{and}\qquad
	\varepsilon_{\Jacobi, t}(x) \coloneqq \big\| J_{s_t}(x)  -  J_{s_t^{\star}} (x) \big\|,
	\label{eq:pointwise-epsilon-score-J}
\end{align} 
with $J_{s_t}$ and $J_{s_t^{\star}}$ the Jacobian matrices of $s_t(\cdot)$ and $s_t^{\star}(\cdot)$, respectively.  
Under Assumption~\ref{assumption:score-estimate}, 
we have
\begin{subequations}
	\label{eq:score-assumptions-equiv}
\begin{align}
\frac{1}{T}\sum_{t=1}^{T}\mathbb{E}_{X\sim q_{t}}\big[\varepsilon_{\score,t}(X)\big] & \leq\bigg(\frac{1}{T}\sum_{t=1}^{T}\mathbb{E}_{X\sim q_{t}}\left[\varepsilon_{\score,t}(X)^{2}\right]\bigg)^{1/2}\leq\varepsilon_{\score}.
\end{align}
Also, Assumption~\ref{assumption:score-estimate-Jacobi} says that
\begin{align}
	\frac{1}{T}\sum_{t=1}^{T}\mathbb{E}_{X\sim q_{t}}\big[\varepsilon_{\Jacobi,t}(X)\big] & \leq\varepsilon_{\Jacobi}.
\end{align}
\end{subequations}

\subsection{Main steps for the proof of Theorem~\ref{thm:main-ODE}}
\label{sec:pf-theorem-ode}

%\yxc{need to define $p_Y$ somewhere}

%Towards proving Theorems~\ref{thm:main-ODE} and~\ref{thm:main-ODE-fast}, we find it helpful to write 
%%
%\begin{align}
%\frac{p_{Y_{t-1}}(Y_{t-1})}{p_{X_{t-1}}(Y_{t-1})} &= \frac{p_{\sqrt{\alpha_t}Y_{t-1}}(\sqrt{\alpha_t}Y_{t-1})}{p_{\sqrt{\alpha_t}X_{t-1}}(\sqrt{\alpha_t}Y_{t-1})} \notag \\
%&= \frac{p_{\sqrt{\alpha_t}Y_{t-1}}(\sqrt{\alpha_t}Y_{t-1})}{p_{Y_{t}}(Y_{t})} \cdot
%\Big(\frac{p_{\sqrt{\alpha_t}X_{t-1}}(\sqrt{\alpha_t}Y_{t-1})}{p_{X_{t}}(Y_{t})}\Big)^{-1} \cdot \frac{p_{Y_t}(Y_t)}{p_{X_t}(Y_t)}, \label{eq:recursion-1}
%\end{align}
%%
%which establishes a key recursive relation between the density ratio at $Y_{t-1}$ and at $Y_{t}$. 

We now present the proof for our main result (i.e., Theorem~\ref{thm:main-ODE}) 
for the discrete-time sampler \eqref{eqn:ode-sampling} based on the probability flow ODE. 
Given that the TV distance is always bounded above by 1, it suffices to assume 
%\yxc{inconsistent with theorem} 
%
\begin{subequations}
	\label{eq:assumption-T-score-Jacob}
\begin{align}
%	T &\geq C_1 d^2 \log^4 T + \sqrt{C_1} d^3 \log^3 T \\
	\varepsilon_{\mathsf{score}} & \leq\frac{1}{C_{1}\sqrt{d}\log^{2}T} \label{eq:assumption-T-score-Jacob-score}\\
	\varepsilon_{\mathsf{Jacobi}} & \leq\frac{1}{C_{1}d\log^{2}T} \label{eq:assumption-T-score-Jacob-Jacobi}
\end{align}
\end{subequations}
throughout the proof; otherwise the claimed result \eqref{eq:ratio-ODE} becomes trivial. 

\paragraph{Preparation.}
Before proceeding, we find it convenient to introduce a function 
\begin{subequations}
	\label{defn:phit-x}
\begin{align}
	\phi_t^{\star}(x) &= x + \frac{1-\alpha_{t}}{2}s_t^{\star}(x) 
	= x - \frac{1-\alpha_{t}}{2(1-\overline{\alpha}_{t})} \displaystyle \int_{x_{0}}\big(x-\sqrt{\overline{\alpha}_{t}}x_{0}\big)p_{X_{0}\mid X_{t}}(x_{0}\,|\, x)\mathrm{d}x_{0}, \\
	\phi_t(x) &= x + \frac{1-\alpha_{t}}{2}s_t(x),
\end{align}
\end{subequations}
where the first line follows from \eqref{eq:st-MMSE-expression}. 
The update rule \eqref{eqn:ode-sampling} can then be expressed as follows: 
\begin{equation}
	Y_{t-1} = \Phi_t(Y_t) = \frac{1}{\sqrt{\alpha_t}} \phi_t(Y_t). 
	\label{eq:Yt-phi-ODE}
\end{equation}
%
% Moreover,  for any vector $x\in \mathbb{R}^d$ and any $1 < t\leq T$ we define
% %$\varepsilon_{\score, t}(x)$ and $\varepsilon_{\Jacobi, t}(x)$ denote the errors in the $t$-th step at $x$, respectively, where
% %
% \begin{align}
% 	\varepsilon_{\score, t}(x) \coloneqq \big\|s_t(x) - s_t^{\star}(x) \big\|_2
% \qquad\text{and}\qquad
% 	\varepsilon_{\Jacobi, t}(x) \coloneqq \big\| J_{s_t}(x)  -  J_{s_t^{\star}} (x) \big\|,
% 	\label{eq:pointwise-epsilon-score-J}
% \end{align} 
% %
% with $J_{s_t}$ and $J_{s_t^{\star}}$ denoting the Jacobian matrices of $s_t(\cdot)$ and $s_t^{\star}(\cdot)$, respectively.  
% Under Assumptions~\ref{assumption:score-estimate}-\ref{assumption:score-estimate-Jacobi}, 
% we can easily verify that
% %
% \begin{subequations}
% 	\label{eq:score-assumptions-equiv}
% \begin{align}
% \frac{1}{T}\sum_{t=1}^{T}\mathbb{E}_{X\sim q_{t}}\big[\varepsilon_{\score,t}(X)\big] & \leq\bigg(\frac{1}{T}\sum_{t=1}^{T}\mathbb{E}_{X\sim q_{t}}\left[\varepsilon_{\score,t}(X)^{2}\right]\bigg)^{1/2}\leq\varepsilon_{\score};\\
% 	\frac{1}{T}\sum_{t=1}^{T}\mathbb{E}_{X\sim q_{t}}\big[\varepsilon_{\Jacobi,t}(X)\big] & \leq\varepsilon_{\Jacobi}.
% \end{align}
% \end{subequations}
%

Moreover, for any point $y_T\in \mathbb{R}^d$ (resp.~$y_T'\in \mathbb{R}^d$), let us define the corresponding deterministic sequence
		\begin{equation}
			y_{t-1} = \frac{1}{\sqrt{\alpha_t}} \phi_t(y_t),
			\qquad 
			y_{t-1}' = \frac{1}{\sqrt{\alpha_t}} \phi_t(y_t'),
			\qquad t=T, T-1,\cdots 
			\label{eq:defn-yt-sequence-proof}
		\end{equation}
In other words, $\{y_{T-1},\ldots,y_1\}$ (resp.~$\{y_{T-1}',\ldots,y_1'\}$) is the (reverse-time) sequence generated by the probability flow ODE (cf.~\eqref{eq:Yt-phi-ODE}) when initialized to $Y_T=y_T$ (resp.~$Y_T=y_T'$). 
We also define the following quantities for any point $y_T\in \mathbb{R}^d$ and its associated sequence   $\{y_{T-1},\ldots,y_1\}$: 
%\yc{$y_T$ should be $y_t$ in the first equation below?}
		%
\begin{subequations}
	\label{eq:defn-xik-Stk-proof}
\begin{align}
	\xi_t(y_t) &\coloneqq \frac{\log T}{T}\big(d\varepsilon_{\Jacobi, t}(y_t) + \sqrt{d\log T}\varepsilon_{\score, t}(y_t)\big); \\ 
	S_{t}(y_T) &\coloneqq \sum_{1 < k \le t} \xi_k(y_k), \quad \text{for }t\geq 2,
	\qquad \text{ and } \qquad
	S_{1}(y_T) = 0. 
\end{align}
\end{subequations}
%
%where $\zeta_t$ is defined in Lemma~\ref{lem:refine}.
In words, for any given starting point $y_T$, 
$\xi_t(y_t)$ captures the (properly weighted) score error incurred in the $t$-th iteration, 
whereas $S_{t}(y_T)$ quantifies the aggregate weighted score error up to the $t$-th iteration.

With the above notation in place, we can readily proceed to our proof, which consists of several steps.

\paragraph{Step 1: bounding the density ratios of interest.} 
To begin with, we note that for any vectors $y_{t-1}$ and $y_t$, elementary properties about transformation of probability distributions give
\begin{align}
\frac{p_{Y_{t-1}}(y_{t-1})}{p_{X_{t-1}}(y_{t-1})} & =\frac{p_{\sqrt{\alpha_{t}}Y_{t-1}}(\sqrt{\alpha_{t}}y_{t-1})}{p_{\sqrt{\alpha_{t}}X_{t-1}}(\sqrt{\alpha_{t}}y_{t-1})}\notag\\
 & =\frac{p_{\sqrt{\alpha_{t}}Y_{t-1}}(\sqrt{\alpha_{t}}y_{t-1})}{p_{Y_{t}}(y_{t})}\cdot\bigg(\frac{p_{\sqrt{\alpha_{t}}X_{t-1}}(\sqrt{\alpha_{t}}y_{t-1})}{p_{X_{t}}(y_{t})}\bigg)^{-1}\cdot\frac{p_{Y_{t}}(y_{t})}{p_{X_{t}}(y_{t})},\label{eq:recursion}
\end{align}
thus converting the density ratio of interest into the product of three other density ratios. 
Noteworthily, this observation \eqref{eq:recursion} connects the target density ratio $\frac{p_{Y_{t-1}}}{p_{X_{t-1}}}$ at the $(t-1)$-th step with its counterpart $\frac{p_{Y_{t}}}{p_{X_{t}}}$ at the $t$-th step, 
motivating us to look at the density changes within adjacent steps in both the forward and the reverse processes (i.e., $p_{X_{t-1}}$ vs.~$p_{X_{t}}$ and $p_{Y_{t-1}}$ vs.~$p_{Y_{t}}$). 
In light of this expression, we develop a key lemma related to some of these density ratios.   
% which plays a central role in establishing Theorem~\ref{thm:main-ODE}. 
%Here and throughout,  we take
%%
%\begin{align}
%\label{eqn:choice-y}
%	\theta_t(x) \coloneqq  \max\bigg\{ -\frac{\log {p_{X_t}(x)}}{d\log T} , c_6 \bigg\}
%\end{align}
%%
%for any $x \in \real^d$, where $c_6>0$ is  some large enough constant obeying $c_6 \geq 2c_R+c_0$. 
% 
\begin{lems} 
\label{lem:main-ODE}
Recall the definition of $\theta_t(x)$ in \eqref{eqn:choice-y}. Consider any $x \in \real^d$ obeying
$
	\frac{40c_{1}\varepsilon_{\score,t}(x)\log^{\frac{3}{2}}T}{T}\leq\sqrt{\theta_t(x)d}.
	%\qquad\text{and}\qquad T\geq480c_{1}\log^{2}T.
$
%
%$\frac{\varepsilon_{\score, t}(x)\sqrt{\log T}}{T} \lesssim \sqrt{\theta_t(x) d}$.
Then one has
\begin{align} 
%\frac{p_{\sqrt{\alpha_{t}}X_{t-1}}(\phi_{t}(x))}{p_{X_{t}}(x)}\le \exp\bigg(O\bigg(\big(\theta_t(x) d + \sqrt{\theta_t(x) d\log T}\varepsilon_{\score, t}(x)\big)\Big(\frac{1-\alpha_{t}}{\alpha_{t}-\overline{\alpha}_{t}}\Big)\bigg)\bigg). 
	\frac{p_{\sqrt{\alpha_{t}}X_{t-1}}\big(\phi_{t}(x)\big)}{p_{X_{t}}(x)}&\leq2\exp\bigg(\Big(5\varepsilon_{\score,t}(x)\sqrt{\theta_t(x)d\log T}+60\theta_t(x)d\log T\Big)\frac{1-\alpha_{t}}{\alpha_{t}-\overline{\alpha}_{t}}\bigg).
	\label{eq:xt_up}
\end{align}
If, in addition, we have
$
	%\begin{equation}
C_{10}\frac{\theta_t(x)d\log^{2}T+\varepsilon_{\score,t}(x)\sqrt{\theta_t(x)d\log^{3}T}}{T}\leq1
	%\label{eq:Lemma3-strong-assump-lemma}
%\end{equation}
$
for some large enough constant $C_{10}>0$, then it holds that 
\begin{subequations}
\label{eq:ODE}
\begin{align} 
&\frac{p_{\sqrt{\alpha_{t}}X_{t-1}}(\phi_{t}(x))}{p_{X_{t}}(x)} \notag \\
&=1+\frac{d(1-\alpha_{t})}{2(\alpha_{t}-\overline{\alpha}_{t})}
+\frac{(1-\alpha_{t})\Big(\big\|\int\big(x-\sqrt{\overline{\alpha}_{t}}x_{0}\big)p_{X_{0}\mymid X_{t}}(x_{0}\mymid x)\mathrm{d}x_{0}\big\|_{2}^{2}-\int\big\| x-\sqrt{\overline{\alpha}_{t}}x_{0}\big\|_{2}^{2}p_{X_{0}\mymid X_{t}}(x_{0}\mymid x)\mathrm{d}x_{0}\Big)}{2(\alpha_{t}-\overline{\alpha}_{t})(1-\overline{\alpha}_{t})}
 \notag\\
 & \quad
+O\bigg(\theta_t(x)^2d^{2}\Big(\frac{1-\alpha_{t}}{\alpha_{t}-\overline{\alpha}_{t}}\Big)^{2}\log^{2}T + \varepsilon_{\score, t}(x)\sqrt{\theta_t(x) d\log T}\Big(\frac{1-\alpha_{t}}{\alpha_{t}-\overline{\alpha}_{t}}\Big)\bigg).
	\label{eq:xt}
\end{align}

Moreover, for any random vector $Y$, one has 
\begin{align} 
 & \frac{p_{\phi_{t}(Y)}(\phi_{t}(x))}{p_{Y}(x)} \notag \\
 &=1 + \frac{d(1-\alpha_{t})}{2(\alpha_{t}-\overline{\alpha}_{t})}+\frac{(1-\alpha_{t})\Big(\big\|\int\big(x-\sqrt{\overline{\alpha}_{t}}x_{0}\big)p_{X_{0}\mymid X_{t}}(x_{0}\mymid x)\mathrm{d}x_{0}\big\|_{2}^{2}-\int\big\| x-\sqrt{\overline{\alpha}_{t}}x_{0}\big\|_{2}^{2}p_{X_{0}\mymid X_{t}}(x_{0}\mymid x)\mathrm{d}x_{0}\Big)}{2(\alpha_{t}-\overline{\alpha}_{t})(1-\overline{\alpha}_{t})} \notag \\
&\quad+ O\bigg(\theta_t(x)^2d^{2}\Big(\frac{1-\alpha_{t}}{\alpha_{t}-\overline{\alpha}_{t}}\Big)^{2}\log^{2}T + \frac{d \log T \varepsilon_{\mathsf{Jacobi},t}(x)}{T}\bigg),
\label{eq:yt}
\end{align}
\end{subequations}
provided that 
$
	C_{11}\frac{d^{2}\log^{2}T+d\varepsilon_{\Jacobi,t}(x)\log T}{T}  \leq 1
$
 for some large enough constant $C_{11}>0$. 
\end{lems}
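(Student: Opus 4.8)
\textbf{Proof proposal for Lemma~\ref{lem:main-ODE}.}
The plan is to express each of the three density ratios in \eqref{eq:xt_up}, \eqref{eq:xt}, \eqref{eq:yt} via the change-of-variables formula and then Taylor-expand, using the moment bounds of Lemma~\ref{lem:x0} to control the remainders. For the ratio $\frac{p_{\sqrt{\alpha_t}X_{t-1}}(\phi_t(x))}{p_{X_t}(x)}$, observe that $\sqrt{\alpha_t}X_{t-1} = \sqrt{\overline{\alpha}_{t-1}}\sqrt{\alpha_t}X_0 + \sqrt{\alpha_t(1-\overline{\alpha}_{t-1})}\,\overline{W}$ is a Gaussian-smoothed version of (a rescaling of) $X_0$ with noise level $1-\overline{\alpha}_t + (\overline{\alpha}_t - \alpha_t\overline{\alpha}_{t-1})$; more usefully, $X_t$ and $\sqrt{\alpha_t}X_{t-1}$ share the same ``signal'' $\sqrt{\overline{\alpha}_t}X_0$ but differ in effective noise variance. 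Writing both densities as Gaussian-mixture integrals against $p_{X_0}$, I would form the ratio pointwise inside the integral, factoring out the Gaussian density evaluated at $\phi_t(x) - \sqrt{\overline{\alpha}_t}x_0$ versus $x - \sqrt{\overline{\alpha}_t}x_0$. The displacement $\phi_t(x) - x = \frac{1-\alpha_t}{2}s_t(x)$ is small (of order $\frac{1-\alpha_t}{\alpha_t-\overline{\alpha}_t}$ times a typical value of $\|x - \sqrt{\overline{\alpha}_t}x_0\|$, by \eqref{eq:st-MMSE-expression} and Lemma~\ref{lem:x0}), and the variance change is also of order $\frac{1-\alpha_t}{1-\overline{\alpha}_t}$, so both effects are governed by the same small parameter. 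I would expand the log of the ratio to second order in this parameter; the first-order term produces the $\frac{d(1-\alpha_t)}{2(\alpha_t-\overline{\alpha}_t)}$ contribution (from the normalizing constant, i.e.\ the $(\,\cdot\,)^{d/2}$ factor expanded via \eqref{eq:expansion-ratio-1-alpha}) plus the inner-product/quadratic term in \eqref{eq:xt}, and the remainder is $O\!\big(\theta_t(x)^2 d^2 (\tfrac{1-\alpha_t}{\alpha_t-\overline{\alpha}_t})^2\log^2 T\big)$ after substituting $s_t = s_t^\star + (s_t - s_t^\star)$ and bounding the score-error contribution by $\varepsilon_{\score,t}(x)$ times the appropriate moment. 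The crude bound \eqref{eq:xt_up} comes from the same expansion but keeping only a one-sided (exponential) estimate, which is why it needs only the weaker hypothesis on $\varepsilon_{\score,t}(x)$.

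For \eqref{eq:yt}, the ratio $\frac{p_{\phi_t(Y)}(\phi_t(x))}{p_Y(x)}$ is a pure pushforward under the deterministic map $\phi_t$, so the change-of-variables formula gives $\frac{p_{\phi_t(Y)}(\phi_t(x))}{p_Y(x)} = \frac{p_Y(x)}{p_Y(x)} \cdot \big|\det J_{\phi_t}(x)\big|^{-1} \cdot \frac{p_Y(\phi_t^{-1}(\phi_t(x)))}{\cdots}$ — more precisely $p_{\phi_t(Y)}(\phi_t(x)) = p_Y(x)\,|\det J_{\phi_t}(x)|^{-1}$, so the ratio is exactly $|\det J_{\phi_t}(x)|^{-1}$, where $J_{\phi_t}(x) = I_d + \frac{1-\alpha_t}{2}J_{s_t}(x)$. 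Using $\log\det(I + A) = \operatorname{Tr}(A) - \tfrac12\operatorname{Tr}(A^2) + \cdots$ with $A = \frac{1-\alpha_t}{2}J_{s_t}(x)$, and then substituting $J_{s_t} = J_{s_t^\star} + (J_{s_t} - J_{s_t^\star})$, the leading $\operatorname{Tr}$ term reproduces the same main terms as in \eqref{eq:xt} once one identifies $\frac{1-\alpha_t}{2}\operatorname{Tr} J_{s_t^\star}(x)$ with the corresponding combination via \eqref{eq:Jt-properties-summary} and \eqref{eq:Jt-x-expression-ij-23} (i.e.\ the trace of the conditional covariance), the Jacobian-error term contributes $O\!\big(\frac{d\log T\,\varepsilon_{\Jacobi,t}(x)}{T}\big)$ after bounding $|1-\alpha_t|\lesssim \frac{\log T}{T}$ and $\|J_{s_t} - J_{s_t^\star}\| = \varepsilon_{\Jacobi,t}(x)$, and the higher-order $\operatorname{Tr}(A^2)$ terms feed into the $O(\theta_t(x)^2 d^2(\cdots)^2\log^2 T)$ remainder. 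The hypothesis $C_{11}\frac{d^2\log^2 T + d\varepsilon_{\Jacobi,t}(x)\log T}{T}\le 1$ is exactly what guarantees $\|A\|$ (and $\|A\|_{\mathrm{F}}$ appropriately weighted) is small enough for the log-determinant expansion to converge with a controlled remainder.

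The reconciliation of the two main-term expressions — that the Gaussian-smoothing computation in \eqref{eq:xt} and the Jacobian-determinant computation in \eqref{eq:yt} yield the \emph{same} first- and second-order terms — is essentially Tweedie/Stein-type identity bookkeeping: both equal $1 + \frac{1-\alpha_t}{2}\big(\text{something involving }\operatorname{Tr} J_{s_t^\star}(x)\big)$ up to the stated error, and one checks $\frac{1-\alpha_t}{2}\operatorname{Tr} J_{s_t^\star}(x) = \frac{d(1-\alpha_t)}{2(\alpha_t - \overline{\alpha}_t)} + \frac{(1-\alpha_t)(\|\cdots\|_2^2 - \int\|\cdots\|_2^2)}{2(\alpha_t-\overline{\alpha}_t)(1-\overline{\alpha}_t)}$ using $J_{s_t^\star} = -\frac{1}{1-\overline{\alpha}_t}(I_d - \frac{1}{1-\overline{\alpha}_t}\mathsf{Cov}(X_t - \sqrt{\overline{\alpha}_t}X_0 \mid X_t = x))$ together with the identity $\operatorname{Tr}\mathsf{Cov}(V) = \mathbb{E}\|V\|_2^2 - \|\mathbb{E} V\|_2^2$. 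The main obstacle I anticipate is the careful tracking of which error terms are genuinely lower-order: one must verify that cross-terms between the displacement $\phi_t(x) - x$ and the noise-variance perturbation, and between the score error $\varepsilon_{\score,t}$ and the $O(\sqrt{\theta_t d\log T})$ moment, all collapse into the advertised remainders given the smallness hypotheses, and that the conditioning event of Lemma~\ref{lem:x0} (which fails with probability $\exp(-c_5^2\theta_t(y)d\log T)$) contributes negligibly to each moment integral — this requires choosing $c_5$ of order $\sqrt{\log T}$ and absorbing the exponentially small tail using the boundedness \eqref{eq:assumption-data-bounded} of the support, exactly as in Lemma~\ref{lem:x0}'s proof.
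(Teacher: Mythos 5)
Your proposal follows the paper's own route essentially step by step: for \eqref{eq:xt_up} and \eqref{eq:xt}, write $p_{\sqrt{\alpha_t}X_{t-1}}(\phi_t(x))$ as a Gaussian-mixture integral over $x_0$, factor out $p_{X_0\mid X_t}(\cdot\mid x)$ via Bayes, and Taylor-expand the exponential perturbation in $u=x-\phi_t(x)$ and in the variance shift (the paper controls the tail of the $x_0$-integral by summing over integer shells $\mathcal{E}_c^{\mathsf{typical}}$, which is tighter than a single $c_5\sim\sqrt{\log T}$ cutoff but matches your instinct); for \eqref{eq:yt} it uses $p_{\phi_t(Y)}(\phi_t(x))/p_Y(x)=|\det J_{\phi_t}(x)|^{-1}$ with $J_{\phi_t}=I+\tfrac{1-\alpha_t}{2}J_{s_t}$, a log-determinant expansion, and the covariance form \eqref{eq:Jt-x-expression-ij-23}, exactly as you describe. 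The one small slip is that your stated ``identity'' for $\tfrac{1-\alpha_t}{2}\operatorname{Tr}J_{s_t^\star}(x)$ actually carries $(1-\overline{\alpha}_t)$, not $(\alpha_t-\overline{\alpha}_t)$, in both denominators; the paper reconciles the two via $\tfrac{1}{\alpha_t-\overline{\alpha}_t}=\big(1+O(\tfrac{\log T}{T})\big)\tfrac{1}{1-\overline{\alpha}_t}$ and absorbs the discrepancy into the remainder.
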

\begin{proof} The proof of this lemma is postponed to Appendix~\ref{sec:proof-lem:main-ODE}. \end{proof}

\begin{remark}
Combining Lemma~\ref{lem:main-ODE} with Lemma~\ref{lem:x0} and \eqref{eqn:properties-alpha-proof}   
gives: if $C_{10}\frac{\theta_t(x)d\log^{2}T+\varepsilon_{\score,t}(x)\sqrt{\theta_t(x)d\log^{3}T}}{T}\leq1$ 
and if $\theta_t(x)\lesssim 1$, then \eqref{eq:xt} tells us that
\begin{align}
	\log\frac{p_{\sqrt{\alpha_{t}}X_{t-1}}(\phi_{t}(x))}{p_{X_{t}}(x)}\leq\frac{4c_{1}d\log T}{T}
	+C_{10}\left\{ \frac{d^{2}\log^{4}T}{T^{2}}+\frac{\varepsilon_{\score,t}(x)\sqrt{d\log^{3}T}}{T}\right\} 
	\label{eq:crude-ratio-qt-1-qt}
\end{align}
under our sample size assumption~\eqref{eq:assumption-T-score-Jacob}, 
where $C_{10}>0$ is some large enough constant. Here, we have made use of the fact that the penultimate term in \eqref{eq:xt} is non-positive due to Jensen's inequality. 
\end{remark}

Informally, the result in \eqref{eq:ODE} already tells us that
\[
	\frac{p_{\phi_{t}(Y_{t})}(\phi_{t}(x))}{p_{Y_{t}}(x)} / 
	\frac{p_{\sqrt{\alpha_{t}}X_{t-1}} (\phi_{t}(x))}{p_{X_{t}}(x)} \approx 1
\]
for many points $x$ if we ignore the residual terms, 
which combined with \eqref{eq:recursion} shows that 
\[
	\frac{p_{Y_{t-1}}(y_{t-1})}{p_{X_{t-1}}(y_{t-1})} 
	\approx \frac{p_{Y_{t}}(y_{t})}{p_{X_{t}}(y_{t})}
 \]
for many points $y_t$. 
However, it is worth pointing out that: while Lemma~\ref{lem:main-ODE} already provides useful estimates for the density ratios of interest, these results alone are not sufficient to yield the desired $d$-dependency. 
For instance, the residual term in \eqref{eq:ODE} scales quadratically in $d$, 
thereby precluding one from obtaining linear $d$-dependency.

To further make improvements, 
we develop a more refined bound below when $\theta_t(x)\lesssim 1$, whose proof can be found in Appendix~\ref{sec:lem-refine}. 
\begin{lems} \label{lem:refine}
Recall the definition of $\theta_t(\cdot)$ in \eqref{eqn:choice-y}. 
There exists some function $\zeta_t(\cdot)$ such that: 
for any $x$ obeying $\theta_t(x)\lesssim 1$, $C_{10}\frac{\theta_t(x)d\log^{2}T+\varepsilon_{\score,t}(x)\sqrt{\theta_t(x)d\log^{3}T}}{T}\leq1$ and $C_{11}\frac{d\varepsilon_{\Jacobi,t}(x)\log T}{T}  \leq 1$ (with the constants $C_{10},C_{11}$ defined in Lemma~\ref{lem:main-ODE}), one has
%
%\begin{align}
%\frac{p_{\phi_{t}(X_t)}\big(\phi_{t}(x)\big)}{p_{\sqrt{\alpha_{t}}X_{t-1}}\big(\phi_{t}(x)\big)} 
%&= 1 + \zeta_t(x) + O\bigg(\bigg\|\frac{\partial\phi_{t}^{\star}(x)}{\partial x}-I\bigg\|_{\mathrm{F}}^{2}+\frac{\varepsilon_{\mathsf{score},t}(x)\sqrt{d\log^{3}T}}{T}+d(1-\alpha_{t})\varepsilon_{\mathsf{Jacobi},t}(x)+\frac{d\log^{3}T}{T^{2}}\bigg)
%\end{align}
%
%and 
%
\begin{align}
 & \frac{p_{\phi_{t}(Y_{t})}(\phi_{t}(x))}{p_{Y_{t}}(x)} / 
	\frac{p_{\sqrt{\alpha_{t}}X_{t-1}} (\phi_{t}(x))}{p_{X_{t}}(x)} \notag\\
 & \qquad=1+\zeta_{t}(x)+O\bigg(\bigg\|\frac{\partial\phi_{t}^{\star}(x)}{\partial x}-I\bigg\|_{\mathrm{F}}^{2}+\frac{\varepsilon_{\mathsf{score},t}(x)\sqrt{d\log^{3}T}}{T}+\frac{d \log T \varepsilon_{\mathsf{Jacobi},t}(x)}{T}+\frac{d\log^{3}T}{T^{2}}\bigg)
	\label{eq:ratio-Y-X-complex-refined}
\end{align}
with $\zeta_t(x) \le 0$. In addition, this function $\zeta_t(\cdot)$ satisfies
\begin{align}
\mathop{\mathbb{E}}_{X\sim q_{t}}\big[\big|\zeta_{t}(X)\big|\big]
	\lesssim
	\mathop{\mathbb{E}}_{X\sim q_{t}}\left[\Big\|\frac{\partial\phi_{t}^{\star}}{\partial x}(X)-I\Big\|_{\mathrm{F}}^{2}\right]+
	\frac{d\log^{3}T}{T^{2}},
\end{align}
provided that $T \gtrsim d^2\log^5T$. 
\end{lems}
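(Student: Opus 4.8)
The plan is to revisit the expansions from Lemma~\ref{lem:main-ODE} (equations \eqref{eq:xt} and \eqref{eq:yt}) and take their ratio, but this time tracking the second-order terms more carefully rather than absorbing everything of size $d^2((1-\alpha_t)/(\alpha_t-\overline{\alpha}_t))^2\log^2T$ into the error. First I would write both \eqref{eq:xt} and \eqref{eq:yt} in the schematic form $1 + A_t(x) + B_t(x) + (\text{residual})$, where $A_t(x) = \frac{d(1-\alpha_t)}{2(\alpha_t-\overline{\alpha}_t)}$ is the common leading term and $B_t(x)$ is the common second-order term involving $\|\int(x-\sqrt{\overline{\alpha}_t}x_0)p_{X_0\mid X_t}\mathrm{d}x_0\|_2^2 - \int\|x-\sqrt{\overline{\alpha}_t}x_0\|_2^2 p_{X_0\mid X_t}\mathrm{d}x_0$, divided by $2(\alpha_t-\overline{\alpha}_t)(1-\overline{\alpha}_t)$. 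Since $A_t$ and $B_t$ are \emph{identical} in the two expansions, the ratio is $1 + (\text{difference of residuals}) + O((A_t+B_t)^2 + \dots)$ after a Taylor expansion of $1/(1+u)$. The crucial point is that the squared leading term $A_t^2 \asymp d^2((1-\alpha_t)/(\alpha_t-\overline{\alpha}_t))^2$ must be re-expressed: I would use the higher-order expansion \eqref{eq:expansion-ratio-1-alpha}--\eqref{eq:expansion-ratio-3-alpha} to recognize that $\big(\frac{1-\overline{\alpha}_t}{\alpha_t-\overline{\alpha}_t}\big)^{d/2}$ — which is the ``correct'' normalizing factor hiding inside the density ratios — already accounts for $1 + A_t + \frac{1}{2}A_t^2 + \dots$ up to $O(d^3(\cdot)^3)$, so that when one forms the ratio of the two expansions the $A_t^2$ contributions cancel and what remains at second order is controlled by $\big\|\frac{\partial\phi_t^\star(x)}{\partial x}-I\big\|_{\mathrm F}^2$. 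This identification — that the residual Frobenius norm $\|\frac{\partial\phi_t^\star(x)}{\partial x}-I\|_{\mathrm F}^2$ is precisely the object that survives after the $A_t^2$ cancellation — is where I would define $\zeta_t(x)$: collect the genuinely common negative second-order pieces (the ones coming from $B_t$ and the sub-leading terms of the $d/2$-power expansion, which are non-positive by Jensen since $B_t\le 0$) into $\zeta_t(x)$, and push everything else into the $O(\cdot)$ bracket.

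Concretely, the key steps in order: (i) apply Lemma~\ref{lem:main-ODE} to get \eqref{eq:xt} and \eqref{eq:yt}, valid under the stated smallness hypotheses on $\theta_t(x)d\log^2T/T$ and $d\varepsilon_{\Jacobi,t}(x)\log T/T$; (ii) use Lemma~\ref{lem:x0} (the moment bounds \eqref{eq:E-xt-X0}--\eqref{eq:E4-xt-X0}) together with \eqref{eqn:properties-alpha-proof} to bound all error terms of the form $\theta_t(x)^2 d^2((1-\alpha_t)/(\alpha_t-\overline{\alpha}_t))^2\log^2T$ by $O(d\log^3T/T^2)$ when $\theta_t(x)\lesssim 1$ and $T\gtrsim d^2\log^5T$ — here I use $\frac{1-\alpha_t}{\alpha_t-\overline{\alpha}_t}\lesssim \frac{\log T}{T}$; (iii) connect $B_t(x)$ and the $d(d-2)(1-\alpha_t)^2/8(\alpha_t-\overline{\alpha}_t)^2$-type term to $\|\frac{\partial\phi_t^\star(x)}{\partial x}-I\|_{\mathrm F}^2$ via the explicit formula for $\frac{\partial\phi_t^\star}{\partial x} = I - \frac{1-\alpha_t}{2(1-\overline{\alpha}_t)}J_t(x)$ combined with \eqref{eq:Jt-x-expression-ij-23}, so that $\frac{\partial\phi_t^\star(x)}{\partial x} - I = -\frac{1-\alpha_t}{2(1-\overline{\alpha}_t)}\big(I_d - \frac{1}{1-\overline{\alpha}_t}\mathsf{Cov}(X_t - \sqrt{\overline{\alpha}_t}X_0\mid X_t=x)\big)$; (iv) form the ratio $\frac{p_{\phi_t(Y_t)}(\phi_t(x))}{p_{Y_t}(x)} \big/ \frac{p_{\sqrt{\alpha_t}X_{t-1}}(\phi_t(x))}{p_{X_t}(x)}$ by expanding $(1 + A_t + B_t + r^{(Y)})/(1 + A_t + B_t + r^{(X)}) = 1 + (r^{(Y)} - r^{(X)}) + \zeta_t(x) + O(\dots)$, reading off $\zeta_t(x) \le 0$; and (v) take $\mathbb{E}_{X\sim q_t}$ of $|\zeta_t(X)|$, using that on the complement of the ``typical'' event $\{\theta_t(X) = c_6\}$ the density $p_{X_t}$ is so small (by definition of $\theta_t$, the bad region has mass $\lesssim \exp(-\Theta(d\log T))$) that its contribution is negligible, while on the typical event $|\zeta_t(X)| \lesssim \|\frac{\partial\phi_t^\star}{\partial x}(X) - I\|_{\mathrm F}^2 + d\log^3T/T^2$.

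The main obstacle I anticipate is step (iii)–(iv): getting the $O(d^2(\cdot)^2)$ error in \eqref{eq:ODE} to collapse to $\|\frac{\partial\phi_t^\star(x)}{\partial x}-I\|_{\mathrm F}^2$ requires showing that the \emph{only} non-negligible second-order contribution, after the $A_t^2$ and $A_t B_t$ cross-terms cancel between numerator and denominator, is a Frobenius-norm-squared of the covariance deviation — and crucially that the $\varepsilon_{\score}$- and $\varepsilon_{\Jacobi}$-dependent cross terms don't introduce any $\sqrt{d}$ loss beyond what is claimed. This means one has to be careful that the ``$5\varepsilon_{\score,t}(x)\sqrt{\theta_t(x)d\log T}\cdot\frac{1-\alpha_t}{\alpha_t-\overline{\alpha}_t}$''-type terms, when multiplied against the $O(d(1-\alpha_t)/(\alpha_t-\overline{\alpha}_t))$ leading term in the Taylor expansion of the quotient, stay at order $\varepsilon_{\score,t}(x)\sqrt{d\log^3T}/T$ and do not pick up an extra factor of $\sqrt{d}$ — which works precisely because $\frac{1-\alpha_t}{\alpha_t-\overline{\alpha}_t}\lesssim\frac{\log T}{T}$ eats one factor of $d$ under the assumption $T\gtrsim d^2\log^5T$. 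A secondary subtlety is verifying $\zeta_t(x)\le 0$: this should follow from Jensen's inequality applied to $B_t$ (the variance-type quantity is manifestly non-positive) plus the sign of the $d(d-2)$ coefficient, but one must confirm these negative contributions genuinely dominate the sub-leading terms of the expansion that are collected into $\zeta_t$ rather than into the $O(\cdot)$ remainder.
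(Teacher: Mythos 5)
Your overall instincts are in the right direction: the improvement over Lemma~\ref{lem:main-ODE} does come from getting the leading $\exp\big(\frac{d(1-\alpha_t)}{2(\alpha_t-\overline{\alpha}_t)}\big)$ factors to cancel between the two density ratios rather than Taylor-expanding them, and the Frobenius norm $\big\|\frac{\partial\phi_t^\star(x)}{\partial x}-I\big\|_{\mathrm F}^2$ does enter via the log-determinant of the $Y$-ratio. But the proposed execution has two problems, one of which is fatal to the stated expectation bound.

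The derivation route you sketch is awkward. Citing \eqref{eq:xt} and \eqref{eq:yt} and then ``re-expressing'' the $A_t^2$ term cannot work because that term has already been dumped into an opaque $O(d^2(\cdot)^2)$ error that need not agree between the $X$- and $Y$-expansions. The paper instead retreats to the pre-expansion forms \eqref{eqn:fei-0} and \eqref{eq:det-part-phi-expression} and keeps both quantities in \emph{exponential} form throughout: the $X$-ratio is written as $\int p_{X_0\mid X_t}(x_0\mid x)\exp\big(\frac{d(1-\alpha_t)}{2(\alpha_t-\overline{\alpha}_t)}+Z(x_0,x)\big)\mathrm{d}x_0 + O(\cdots)$ (cf.~\eqref{eq:refined-p-ratio-X0-Xt-135}) and the $Y$-ratio as $\exp\big(\frac{d(1-\alpha_t)}{2(\alpha_t-\overline{\alpha}_t)}+\overline{Z}(x)\big)+O(\|\cdot\|_{\mathrm F}^2+\cdots)$ (cf.~\eqref{eq:p-phit-py-ratio-refined}), where $\overline{Z}$ is the conditional expectation of $Z$. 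The $\exp\big(\frac{d(1-\alpha_t)}{2(\alpha_t-\overline{\alpha}_t)}\big)$ factors then cancel \emph{exactly}, and the ratio is $g_1/g_2 = \exp(\overline{Z})/\mathbb{E}[\exp(Z)]$. Defining $\zeta_t(x) := g_1(x)/g_2(x)-1$ makes the non-positivity an instance of Jensen applied to $\exp$; your plan to ``collect negative second-order pieces'' into $\zeta_t$ is not a definition and, read literally, would attribute the sign to the common $B_t$ term, which cancels to first order and is therefore not the quantity that needs controlling.

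The genuine gap is step~(v). You assume a pointwise bound $|\zeta_t(X)|\lesssim\big\|\frac{\partial\phi_t^\star}{\partial x}(X)-I\big\|_{\mathrm F}^2+\frac{d\log^3T}{T^2}$ on the typical event. No such pointwise bound holds. The Jensen gap $|\zeta_t(x)|$ is controlled by $\mathrm{Var}(Z)$, a \emph{fourth}-moment quantity of $x-\sqrt{\overline{\alpha}_t}X_0$ conditional on $X_t=x$, whereas $\big\|\frac{\partial\phi_t^\star(x)}{\partial x}-I\big\|_{\mathrm F}^2$ is the square of a second moment (the Frobenius norm of the conditional covariance). A two-atom conditional law with a distant low-probability atom makes the former exceed the latter by an arbitrary factor, so no universal pointwise comparison exists. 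The best pointwise estimate available from Lemma~\ref{lem:x0} is $|\zeta_t(x)|\lesssim d^2\log^4T/T^2$, which summed over $t$ is $d^2\log^4T/T$ and therefore \emph{reproduces} the crude $d^2$ dependency that the lemma exists to beat. The paper's Step~4 avoids this entirely: it multiplies the relation \eqref{eq:ratio-of-interest-refined-2} (the $\phi_t^\star$-version of \eqref{eq:ratio-Y-X-complex-refined}) by $p_{\sqrt{\alpha_t}X_{t-1}}(\phi_t^\star(x))$ and integrates over $x$, invoking the exact normalization identities $\int p_{\phi_t^\star(X_t)}(\phi_t^\star(x))\big|\det\frac{\partial\phi_t^\star(x)}{\partial x}\big|\mathrm{d}x=1$ and $\int p_{\sqrt{\alpha_t}X_{t-1}}(\phi_t^\star(x))\big|\det\frac{\partial\phi_t^\star(x)}{\partial x}\big|\mathrm{d}x=1$, together with the determinant bounds \eqref{eq:det-UB-LB-135}, so that the leading constants cancel and what remains is the integrated error. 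Only then does $\mathbb{E}[|\zeta_t|]\lesssim\mathbb{E}\big\|\frac{\partial\phi_t^\star}{\partial x}-I\big\|_{\mathrm F}^2+\frac{d\log^3T}{T^2}$ follow. Without some such global normalization argument the in-expectation bound simply does not follow from what you have written.
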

In words, Lemma~\ref{lem:refine}~makes apparent that a key quantity to control when bounding the density ratios of interest is 
\begin{equation}
	\Big\|\frac{\partial\phi_{t}^{\star}}{\partial x}(X)-I\Big\|_{\mathrm{F}}^{2}
	\label{eq:key-quantity-fro-norm}
\end{equation}
While we are unable to obtain the desired control of \eqref{eq:key-quantity-fro-norm} in a pointwise manner, 
the expected sum of this quantity \eqref{eq:key-quantity-fro-norm} over all $t$ can be bounded in a fairly tight manner (we shall demonstrate this momentarily in \eqref{eq:sum-fro-phit-norm-UB}), 
which forms a crucial step towards sharpening the dimension dependency.

\paragraph{Step 2: decomposing the TV distance based on ``typical'' points.}  

To bound the TV distance of interest, it is helpful to isolate the following sets
%
%\begin{subequations}
\begin{align}
	\mathcal{E} &\coloneqq \Big\{y : q_{1}(y) > \max\big\{ p_{1}(y),\, \exp\big(- c_{6} d\log T \big) \big\} \Big\},
	%\\
	%\mathcal{E}_t &\coloneqq \Big\{y : q_{t}(y) >  \exp\big(- c_{6} d\log T \big)  \Big\}, && 2\leq t\leq T,
	%\\
	%\mathcal{E} &\coloneqq \bigcap_{t: 1\leq t\leq T} \mathcal{E}_t ,
\end{align}
%\end{subequations}
%
where $c_{6}>0$ is some large enough universal constant introduced in Lemma~\ref{lem:main-ODE}. 
In words, this set $\mathcal{E}$ contains all $y$ that can be viewed as ``typical'' values under the distribution~$q_1$ (meaning that
$q_1(y)$ is not exceedingly small), while at the same time obeying $q_1(y)>p_1(y)$.

In view of the basic properties about the TV distance, we can derive
\begin{align}
\mathsf{TV}\big(q_{1},p_{1}\big) &= \int_{y : q_{1}(y) > p_{1}(y)}\big(q_{1}(y) - p_{1}(y)\big)\mathrm{d} y \notag\\
	&= \int_{y \in \mathcal{E}}\big(q_{1}(y) - p_{1}(y)\big)\mathrm{d} y +  \int_{y:p_{1}(y)<q_{1}(y)\le\exp(-c_{6}d\log T)}\big(q_{1}(y)-p_{1}(y)\big)\mathrm{d}y . 
	\label{eqn:ode-tv-123}
\end{align}
In order to bound the second term on the right-hand side of \eqref{eqn:ode-tv-123},  
we make note of a basic fact: 
since $X_{t}\overset{\mathrm{(d)}}{=}\sqrt{\overline{\alpha}_{t}}X_0+\sqrt{1-\overline{\alpha}_{t}}W$
with $W\sim\mathcal{N}(0,I_{d})$ and $\mathbb{P}(\|X_{0}\|_{2}\leq T^{c_{R}})=1$,
it holds that
\begin{equation}
\mathbb{P}\left\{ \|X_{t}\|_{2}\geq T^{c_{R}+2}\right\} \leq\mathbb{P}\left\{ \|W\|_{2}\geq T^{2}\right\} < \exp\left(-c_{6}d\log T\right)
	\label{eq:Xt-2range-ODE}
\end{equation}
under our assumption \eqref{eq:assumption-T-score-Jacob} on $T$, thereby indicating that 
\begin{equation}
\int_{y:\|y\|_{2}\geq T^{c_{R}+2}}q_t(y)\mathrm{d}y < \exp\left(-c_{6}d\log T\right).
	\label{eq:y_norm-qy-UB}
\end{equation}
This basic fact in turn reveals that
\begin{align*}
\int_{y:p_{1}(y)<q_{1}(y)\le\exp(-c_{12}d\log T)}\big(q_{1}(y)-p_{1}(y)\big)\mathrm{d}y & \le
\int_{y:q_{1}(y)\le\exp(-c_{6}d\log T)}q_{1}(y)\mathrm{d}y \\
	& \leq\exp(-c_{6}d\log T)\int_{y:\|y\|_{2}\leq T^{c_{R}+2}}\mathrm{d}y+\exp\left(-c_{6}d\log T\right)\\
 & \leq\exp(-c_{6}d\log T)\big(2T^{c_{R}+2}\big)^{d}+\exp\left(-c_{6}d\log T\right)\\
 & \le\exp\big(-0.5c_{6}d\log T\big), 
\end{align*}
provided that $c_{6}\geq 4(c_R+2)$. Substitution into \eqref{eqn:ode-tv-123} then yields
\begin{align}
\mathsf{TV}\big(q_{1},p_{1}\big) 
	%&= \int_{y : q_{1}(y) > p_{1}(y)}\big(q_{1}(y) - p_{1}(y)\big)\mathrm{d} y \notag\\
	%&= \int_{y \in \mathcal{E}}\big(q_{1}(y) - p_{1}(y)\big)\mathrm{d} y +  \int_{y:p_{1}(y)<q_{1}(y)\le\exp(-c_{12}d\log T)}\big(q_{1}(y)-p_{1}(y)\big)\mathrm{d}y  \notag\\
	&\le \mathbb{E}_{Y_{1}\sim p_{1}}\bigg[\Big(\frac{q_{1}(Y_{1})}{p_{1}(Y_{1})}-1\Big)\ind\left\{ Y_{1}\in\mathcal{E}\right\} \bigg] + \exp\big(-c_{6}d\log T\big),
	\label{eqn:ode-tv-10}
\end{align}
with the proviso that $c_{6}\geq 4(c_R+2)$.

To proceed, let us isolate the following set   
% sub-collection of $\{\Omega_i\}$ with indices coming from the following set:  
%
\begin{align}
	\mathcal{I}_{1}\coloneqq\Big\{ y_T \mid S_{T}\big(y_{T}\big)\leq c_{14}\Big\}
	\label{eq:defn-I1-proof-ode}
\end{align}
for some small enough constant $c_{14}>0$. 
In words, $\mathcal{I}_{1}$ is composed of a set of points whose aggregated score error along the backward trajectory is well-controlled; 
in fact, these are points that exhibit ``typical'' behavior under the assumptions \eqref{eq:assumption-T-score-Jacob-score} and \eqref{eq:assumption-T-score-Jacob-Jacobi}.
As a result, we can decompose the first term of \eqref{eqn:ode-tv-10} into the influence of ``typical'' points and that of the remaining points as follows: 
\begin{align}
 & \mathop{\mathbb{E}}_{Y_{1}\sim p_{1}}\bigg[\Big(\frac{q_{1}(Y_{1})}{p_{1}(Y_{1})}-1\Big)\ind\left\{ Y_{1}\in\mathcal{E}\right\} \bigg]=\mathop{\mathbb{E}}_{Y_{T}\sim p_{T}}\bigg[\Big(\frac{q_{1}(Y_{1})}{p_{1}(Y_{1})}-1\Big)\ind\left\{ Y_{1}\in\mathcal{E}\right\} \bigg]\notag\\
% & \quad=\sum_{i\geq1}\mathop{\mathbb{E}}_{Y_{T}\sim p_{T}}\bigg[\Big(\frac{q_{1}(Y_{1})}{p_{1}(Y_{1})}-1\Big)\ind\left\{ Y_{1}\in\mathcal{E},Y_{T}\in\Omega_{i}\right\} \bigg]\notag\\
 & \quad= \mathop{\mathbb{E}}_{Y_{T}\sim p_{T}}\bigg[\Big(\frac{q_{1}(Y_{1})}{p_{1}(Y_{1})}-1\Big)\ind\left\{ Y_{1}\in\mathcal{E}, Y_{T}\in \mathcal{I}_1\right\} \bigg]
	+ \mathop{\mathbb{E}}_{Y_{T}\sim p_{T}}\bigg[\frac{q_{1}(Y_{1})}{p_{1}(Y_{1})}\ind\left\{ Y_{1}\in\mathcal{E},Y_{T}\notin \mathcal{I}_1\right\} \bigg],
 	\label{eq:decompose-I1-I1c}
\end{align}
where the first identity holds since $Y_{1}$ is determined purely by $Y_{T}$ via deterministic update rules. 
%
%Here, the penultimate line holds since, for any differentiable deterministic mapping $\phi$ and any random variable $X$, the pdf of  $\phi(X)$ is intimately connected with that of $X$ through $p_{\phi(X)}(\phi(x))=\mathsf{det}\big(\frac{\partial\phi(x)}{\partial x}\big)^{-1}p_{X}(x)$.   
The decomposition \eqref{eq:decompose-I1-I1c} leaves us with two terms to control, which we accomplish in the next two steps.

\paragraph{Step 3: controlling the first term on the right-hand side of \eqref{eq:decompose-I1-I1c}.}
This step analyzes the first term on the right-hand side of \eqref{eq:decompose-I1-I1c}. 
We would like to make the analysis in this step slightly more general than needed, given that it will be useful for the subsequent analysis as well. 

To begin with, let us introduce the following quantity: 
\begin{equation}
	\tau(y_T)
	\coloneqq
	\max\Big\{2\le t\le T+1:S_{t-1}\big(y_{T}\big)\leq c_{14}\Big\},\label{eq:defn-tao-i}
\end{equation}
meaning that the score errors exhibit ``typical'' behavior up to the $\big(\tau(y_T)-1\big)$-th iteration.   
As can be clearly seen from the definition \eqref{eq:defn-I1-proof-ode} of $\mathcal{I}_1$,  
\begin{equation}
	\tau(y_T) = T+1, \qquad \forall y_T \in \mathcal{I}_1.
	\label{eq:tau-T-I1}
\end{equation}
In the sequel, we first single out the following lemma, whose proof is deferred to Appendix~\ref{sec:proof-lem:q1-large-qk-large}. 
\begin{lems}
	\label{lem:q1-large-qk-large}
	Consider any $y_{T}$ and its associated sequence $\{y_{T-1},\cdots,y_1\}$ (see \eqref{eq:defn-yt-sequence-proof}).  
	If $-\log q_1(y_1)\leq c_{6}d\log T$,
	then one has
	\begin{align}
		-\log q_k(y_k)\leq 2c_{6}d\log T
		\label{eq:q_k_yk_UB}
	\end{align}
	for any $1\leq k<\tau(y_T)$ (cf.~\eqref{eq:defn-tao-i}), provided that $c_6\geq 3c_1$. 
\end{lems}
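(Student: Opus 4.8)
The plan is to run an induction backward in time, from $k=1$ up to $k = \tau(y_T) - 1$, controlling how fast $-\log q_k(y_k)$ can grow as we move from step $k$ to step $k+1$ along the deterministic ODE trajectory. The starting point of the induction is the hypothesis $-\log q_1(y_1) \le c_6 d\log T$. The key mechanism is the crude density-ratio bound already recorded in \eqref{eq:crude-ratio-qt-1-qt} of the remark following Lemma~\ref{lem:main-ODE}, but applied to $q_t = p_{X_t}$ rather than $p_{Y_t}$: namely, for points $x$ with $\theta_t(x) \lesssim 1$ and under the sample-size condition, one has
\[
	\log \frac{p_{\sqrt{\alpha_t} X_{t-1}}(\phi_t(x))}{p_{X_t}(x)} \le \frac{4c_1 d\log T}{T} + C_{10}\Big\{\frac{d^2\log^4 T}{T^2} + \frac{\varepsilon_{\score,t}(x)\sqrt{d\log^3 T}}{T}\Big\}.
\]
Combining this with the change-of-variables identity $p_{X_{t-1}}(y_{t-1}) = \alpha_t^{d/2} p_{\sqrt{\alpha_t} X_{t-1}}(\sqrt{\alpha_t} y_{t-1})$ and $\sqrt{\alpha_t} y_{t-1} = \phi_t(y_t)$, together with $\alpha_t \ge 1/2$ (cf.~\eqref{eqn:properties-alpha-proof-00}) so that $|\log \alpha_t^{d/2}| \le \tfrac{d}{2}\cdot\tfrac{c_1\log T}{T}\cdot(1+o(1))$, yields a per-step bound of the shape
\[
	-\log q_{t-1}(y_{t-1}) \ge -\log q_t(y_t) - \frac{Cd\log T}{T} - C\,\frac{\sqrt{d\log^3 T}}{T}\,\varepsilon_{\score,t}(y_t)
\]
(one has to be slightly careful about signs and the direction of the chain, but this is the essential estimate). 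Reading this as $k-1 \to k$ and telescoping, $-\log q_k(y_k) \le -\log q_1(y_1) + \sum_{j=2}^{k}\big(\tfrac{Cd\log T}{T} + C\tfrac{\sqrt{d\log^3 T}}{T}\varepsilon_{\score,j}(y_j)\big)$. The first sum over $j \le k \le T$ contributes at most $Cd\log T$; the second is controlled because for $k < \tau(y_T)$ we have $S_{k}(y_T) \le c_{14}$ by definition of $\tau$ (cf.~\eqref{eq:defn-tao-i}), and each $\varepsilon_{\score,j}$ term is dominated by $\xi_j(y_j) \le S_k(y_T) \le c_{14}$ up to the weight $\tfrac{\log T}{T}\sqrt{d\log T}$ absorbed into $\xi_j$, so the aggregate is $O(1) \le d\log T$. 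Hence $-\log q_k(y_k) \le c_6 d\log T + C_1 d\log T$; choosing $c_6 \ge 3c_1$ (and large relative to the absolute constants appearing) makes the right side at most $2c_6 d\log T$, as claimed.

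The one subtlety, and what I expect to be the main obstacle, is the circularity in applying \eqref{eq:crude-ratio-qt-1-qt}: that bound requires $\theta_t(y_t) \lesssim 1$, i.e., $-\log p_{X_t}(y_t) \lesssim d\log T$, which is exactly the kind of statement we are trying to prove. The resolution is to fold this into the induction: assuming $-\log q_j(y_j) \le 2c_6 d\log T$ for all $j \le k$ is more than enough to certify $\theta_{k}(y_k) = \max\{-\log q_k(y_k)/(d\log T), c_6\} \le 2c_6 = O(1)$, so the hypothesis of \eqref{eq:crude-ratio-qt-1-qt} (which only needs $\theta_t(x) \lesssim 1$) holds at every step we invoke it. One must make sure the constant bookkeeping closes — i.e., that the accumulated increment $C_1 d\log T$ from the telescoping is genuinely bounded by $c_6 d\log T$ with the same $c_6$ used to define $\theta_t$ and $\mathcal E$; this just requires taking $c_6$ larger than a fixed multiple of the universal constants $c_1, C_{10}, C_1$, which is consistent with the standing requirement $c_6 \ge 2c_R + c_0$ and the lemma's stated hypothesis $c_6 \ge 3c_1$.

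Finally, I would double-check the edge cases: $k=1$ is the base case and trivially satisfies \eqref{eq:q_k_yk_UB} since $c_6 d\log T \le 2c_6 d\log T$; and the range $1 \le k < \tau(y_T)$ guarantees $S_{k}(y_T) \le c_{14}$ so the score-error contribution never escapes control, which is precisely why the lemma restricts to $k$ below $\tau$. No SDE/ODE machinery enters — everything reduces to the change-of-variables formula for densities under the deterministic map $\Phi_t$, the learning-rate facts \eqref{eqn:properties-alpha-proof}, and the already-proven density-ratio estimate.
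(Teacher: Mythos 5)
Your overall strategy --- telescoping the log-density increments along the deterministic trajectory, controlling each increment via the density-ratio estimate from Lemma~\ref{lem:main-ODE}, and using $S_k(y_T)\le c_{14}$ (for $k<\tau(y_T)$) to tame the score-error contribution --- is closely aligned with the paper's, which runs the same telescoping argument in contrapositive form. You also correctly flag the central obstacle: the bound \eqref{eq:crude-ratio-qt-1-qt} requires $\theta_t(x)\lesssim 1$, i.e.\ $-\log q_t(y_t)\lesssim d\log T$, which is precisely what you are trying to prove.

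However, your proposed resolution of this circularity does not actually close it. You write that the induction hypothesis $-\log q_j(y_j)\le 2c_6 d\log T$ for $j\le k$ certifies $\theta_j(y_j)\lesssim 1$ ``at every step we invoke \eqref{eq:crude-ratio-qt-1-qt}.'' But the telescoping needed to bound $-\log q_{k+1}(y_{k+1})$ from $-\log q_1(y_1)$ passes through the increment connecting $q_{k}(y_{k})$ and $q_{k+1}(y_{k+1})$, and that increment invokes the density-ratio estimate at $x=y_{k+1}$ with time index $t=k+1$ --- exactly the point whose $\theta$-value you have not yet controlled. The induction hypothesis covers $j\le k$, not $j=k+1$, so the last step of the chain is unjustified. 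What is missing is the paper's ``no sudden jump'' estimate: the cruder bound \eqref{eq:xt_up} holds under an assumption on $\varepsilon_{\score,t}(x)$ only (verified here because $\xi_{k+1}(y_T)\le S_{\tau-1}(y_T)\le c_{14}$), \emph{without} any a priori bound on $\theta_t(x)$, and it implies $\theta_{t-1}(y_{t-1}) > \tfrac{1}{2}\theta_t(y_t)$, so that $\theta$ cannot more than double in one step. The paper applies this at the first time $t$ for which $\theta_t(y_t)>2c_6$ to conclude $\theta_t(y_t)\le 4c_6$, after which every invocation of \eqref{eq:crude-ratio-qt-1-qt} in the telescoping is at a point with $\theta\le 4c_6$ and the argument goes through, producing the contradiction with $c_6\ge 3c_1$. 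Your induction can be repaired by inserting the same doubling step (use \eqref{eq:xt_up} and the hypothesis $\theta_k(y_k)\le 2c_6$ to first certify $\theta_{k+1}(y_{k+1})\le 4c_6$, \emph{then} telescope with \eqref{eq:crude-ratio-qt-1-qt}), but as written the inductive step is incomplete.
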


As a consequence of Lemma~\ref{lem:q1-large-qk-large}, 
we are able to control the density ratio $q_t/p_t$ up to the $\big(\tau(y_T)-1\big)$-th iteration, 
as stated in the following lemma. The proof can be found in Appendix~\ref{sec:proof-lem-density-ratio-tau}. 
\begin{lems}
	\label{lem:density-ratio-tau}
	Consider any $y_T$, along with the deterministic sequence $\{y_{T-1},\cdots,y_1\}$ (cf.~\eqref{eq:defn-yt-sequence-proof})), and set $\tau=\tau(y_T)$  (cf.~\eqref{eq:defn-tao-i}). Then one has 
\begin{subequations}
	\label{eq:pt-qt-equiv-ODE-St}
\begin{align}
	\frac{q_{1}(y_{1})}{p_{1}(y_{1})}  = &\left\{ 1+O\Bigg(\frac{d\log^{4}T}{T} + \sum_{t < \tau} \bigg(\zeta_t(y_t) + \Big\|\frac{\partial \phi^{\star}_t(y_t)}{\partial x} - I\Big\|_{\mathrm{F}}^2\bigg)+S_{\tau-1}(y_{\tau-1})\Bigg)\right\} \frac{q_{\tau-1}(y_{\tau-1})}{p_{\tau-1}(y_{\tau-1})},	
	\label{eq:pt-qt-equiv-ODE-St-taui} \\
	&\text{and}
	\qquad \frac{q_{k}(y_{k})}{2p_{k}(y_{k})} \leq \frac{q_{1}(y_{1})}{p_{1}(y_{1})} \leq 2 \frac{q_{k}(y_{k})}{p_{k}(y_{k})}, \qquad \forall k < \tau,
	\label{eq:pt-qt-equiv-ODE-St-k}
\end{align}
\end{subequations}
	where the function $\zeta_t(\cdot)$ is defined in Lemma~\ref{lem:refine}.
\end{lems}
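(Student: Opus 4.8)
\textbf{Proof plan for Lemma~\ref{lem:density-ratio-tau}.}
The plan is to iterate the recursion \eqref{eq:recursion} from step $1$ up to step $\tau-1$, converting the density ratio $q_1(y_1)/p_1(y_1)$ into $q_{\tau-1}(y_{\tau-1})/p_{\tau-1}(y_{\tau-1})$ multiplied by a product of per-step correction factors. Concretely, for each $2 \le t < \tau$, identity \eqref{eq:recursion} with $y_{t-1} = \frac{1}{\sqrt{\alpha_t}}\phi_t(y_t)$ gives
\begin{align*}
	\frac{p_{Y_{t-1}}(y_{t-1})}{p_{X_{t-1}}(y_{t-1})}
	= \frac{p_{\phi_t(Y_t)}(\phi_t(y_t))}{p_{Y_t}(y_t)}\cdot\bigg(\frac{p_{\sqrt{\alpha_t}X_{t-1}}(\phi_t(y_t))}{p_{X_t}(y_t)}\bigg)^{-1}\cdot\frac{p_{Y_t}(y_t)}{p_{X_t}(y_t)}.
\end{align*}
The first obstacle is ensuring we may apply the refined estimate \eqref{eq:ratio-Y-X-complex-refined} of Lemma~\ref{lem:refine} at each relevant point $y_t$: that lemma requires $\theta_t(y_t)\lesssim 1$, i.e.\ $-\log q_t(y_t) \lesssim d\log T$. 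This is exactly what Lemma~\ref{lem:q1-large-qk-large} buys us — given $-\log q_1(y_1)\le c_6 d\log T$, we get $-\log q_k(y_k)\le 2c_6 d\log T$ for all $k<\tau$, hence $\theta_k(y_k)\le 2c_6$, a bounded constant. I would also need to check the side conditions $C_{10}\frac{\theta_t(x)d\log^2 T + \varepsilon_{\score,t}(x)\sqrt{\theta_t(x)d\log^3 T}}{T}\le 1$ and $C_{11}\frac{d\varepsilon_{\Jacobi,t}(x)\log T}{T}\le 1$; these follow on the event $Y_T\in\mathcal{I}_1$ (or more generally up to step $\tau-1$) because the aggregate score error $S_{\tau-1}$ is bounded by $c_{14}$, which forces the individual weighted errors $\xi_t(y_t)$ — and hence $\frac{d\log T\,\varepsilon_{\Jacobi,t}(y_t)}{T}$ and $\frac{\sqrt{d\log^3 T}\,\varepsilon_{\score,t}(y_t)}{T}$ — to be small, together with the sample-size hypothesis $T\gtrsim d^2\log^5 T$.

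Granting the pointwise bound, the ratio in \eqref{eq:ratio-Y-X-complex-refined} equals $1 + \zeta_t(y_t) + O(R_t)$ where $R_t \coloneqq \big\|\frac{\partial\phi_t^\star(y_t)}{\partial x}-I\big\|_{\mathrm F}^2 + \frac{\varepsilon_{\score,t}(y_t)\sqrt{d\log^3 T}}{T} + \frac{d\log T\,\varepsilon_{\Jacobi,t}(y_t)}{T} + \frac{d\log^3 T}{T^2}$. Multiplying these factors over $2\le t<\tau$ and taking logarithms, $\log\frac{q_1(y_1)/p_1(y_1)}{q_{\tau-1}(y_{\tau-1})/p_{\tau-1}(y_{\tau-1})} = \sum_{t<\tau}\log\big(1+\zeta_t(y_t)+O(R_t)\big)$. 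Since $\zeta_t\le 0$ and each $R_t$ is small (by the same smallness argument), I would use $\log(1+u) = u + O(u^2)$ and absorb the quadratic corrections; the $\frac{d\log^3 T}{T^2}$ term summed over at most $T$ steps contributes $O(\frac{d\log^3 T}{T}) \subseteq O(\frac{d\log^4 T}{T})$, the two score-error terms sum to $O(S_{\tau-1}(y_{\tau-1}))$ up to a constant (recalling the definitions \eqref{eq:defn-xik-Stk-proof} of $\xi_t$ and $S_t$), and the Jacobian-of-$\phi_t^\star$ terms are carried along explicitly as $\sum_{t<\tau}\big\|\frac{\partial\phi_t^\star(y_t)}{\partial x}-I\big\|_{\mathrm F}^2$. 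Exponentiating (and using that the total exponent is bounded, so $e^x = 1+O(x)$) yields \eqref{eq:pt-qt-equiv-ODE-St-taui}.

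For the two-sided bound \eqref{eq:pt-qt-equiv-ODE-St-k}, I would run the same telescoping from step $k$ to step $1$ (rather than to $\tau-1$) for each $k<\tau$, and observe that the accumulated exponent $\sum_{t\le k}\big(\zeta_t(y_t)+O(R_t)\big)$ — being a partial sum of the same nonpositive-plus-small terms — has absolute value bounded by a small constant (say $\le\log 2$), because $\zeta_t\le 0$ contributes nothing to the upper side and the $O(R_t)$ terms sum to something controlled by $c_{14} + O(\frac{d\log^4 T}{T}) + \sum_{t<\tau}\|\cdots\|_{\mathrm F}^2$; here I would additionally need the aggregate Frobenius term $\sum_{t<\tau}\big\|\frac{\partial\phi_t^\star(y_t)}{\partial x}-I\big\|_{\mathrm F}^2$ to be small, which is \emph{not} pointwise true but is the content of the forthcoming bound \eqref{eq:sum-fro-phit-norm-UB} on its expectation — so strictly I would only claim \eqref{eq:pt-qt-equiv-ODE-St-k} on a further restricted ``good'' event, or note that the statement as used downstream is applied under a conditioning that makes this sum small. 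The main technical obstacle, then, is not any single estimate but the bookkeeping: verifying that all side conditions of Lemmas~\ref{lem:main-ODE} and~\ref{lem:refine} propagate along the trajectory for every $t<\tau$ simultaneously, and that the telescoped error terms regroup cleanly into the three advertised buckets ($\frac{d\log^4 T}{T}$ discretization, $\sum_t(\zeta_t + \|\cdot\|_{\mathrm F}^2)$, and $S_{\tau-1}$) without hidden dimension factors creeping in.
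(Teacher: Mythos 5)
Your proposal for \eqref{eq:pt-qt-equiv-ODE-St-taui} matches the paper: verify $\theta_t(y_t)\lesssim 1$ for $t<\tau$ via Lemma~\ref{lem:q1-large-qk-large}, check the side conditions of Lemma~\ref{lem:refine} (which do hold since each per-step $\xi_t\le S_{\tau-1}\le c_{14}$ bounds the weighted score and Jacobian errors, and $\theta_t d\log^2T/T\ll 1$ under the sample-size assumption), feed the per-step factor $1+\zeta_t+O(R_t)$ from \eqref{eq:ratio-Y-X-complex-refined} into the recursion \eqref{eq:recursion}, and collect the telescoped errors into the three buckets. That is the paper's argument verbatim.

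Your treatment of the two-sided bound \eqref{eq:pt-qt-equiv-ODE-St-k}, however, has a genuine gap, and you have put your finger on a symptom but not the disease. You propose to iterate the \emph{refined} estimate from Lemma~\ref{lem:refine} and then claim $\bigl|\sum_{t\le k}(\zeta_t+O(R_t))\bigr|\le\log 2$, noting on the side that $\sum_t\|\partial\phi_t^\star/\partial x-I\|_{\mathrm F}^2$ might not be pointwise small. In fact that sum \emph{is} pointwise small — the pointwise bound \eqref{eq:phix-minus-I-fro-UB} gives $\|\cdot\|_{\mathrm F}^2\lesssim d^2\log^4 T/T^2$ on each step, so the sum is $\lesssim d^2\log^4 T/T\lesssim 1/\log T$ under $T\gtrsim d^2\log^5 T$; the expectation bound \eqref{eq:sum-fro-phit-norm-UB} is only needed later to upgrade $d^2$ to $d$. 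The real obstruction to your route is $\sum_t\zeta_t$: Lemma~\ref{lem:refine} controls $|\zeta_t|$ only in expectation against $q_t$, and pointwise each $|\zeta_t|$ can be as large as $O(d\log^2 T/T)$, so the sum can reach $O(d\log^2 T)$ — far from bounded. Asserting the accumulated exponent is $\le\log 2$ would require exactly the two-sided control you are trying to establish, which is circular. The paper sidesteps this entirely by proving \eqref{eq:pt-qt-equiv-ODE-St-k} \emph{from the cruder Lemma~\ref{lem:main-ODE}}: dividing \eqref{eq:yt} by \eqref{eq:xt}, the identical leading terms $\tfrac{d(1-\alpha_t)}{2(\alpha_t-\overline\alpha_t)}$ and the (negative) conditional-variance term cancel, leaving a per-step factor $1+O\bigl(d^2\log^4T/T^2 + \varepsilon_{\score,t}\sqrt{d\log^3 T}/T + d\varepsilon_{\Jacobi,t}\log T/T\bigr)$ with no $\zeta_t$ at all; iterating, the aggregated error is $O(d^2\log^4 T/T+S_{\tau-1})$, which is pointwise small under $T\gtrsim d^2\log^5 T$ and $S_{\tau-1}\le c_{14}$. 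The lesson you should extract: the two lemmas serve different roles — Lemma~\ref{lem:main-ODE} gives a pointwise $O(1)$-accurate a priori bound, and Lemma~\ref{lem:refine} then sharpens the $d$-dependency in expectation — and you need both, not just the refined one.
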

Moreover,  according to the definition in~\eqref{defn:phit-x}, we can invoke
the properties~\eqref{eq:Jt-properties-summary} to obtain
\[
\frac{\partial\phi_{t}^{\star}}{\partial x}(x)-I_{d}=-\frac{1-\alpha_{t}}{2(1-\overline{\alpha}_{t})}J_{t}(x)=\frac{1-\alpha_{t}}{2(1-\overline{\alpha}_{t})}\mathsf{Cov}\bigg(\frac{X_{t}-\sqrt{\overline{\alpha}_{t}}X_{0}}{\sqrt{1-\overline{\alpha}_{t}}}\mymid X_{t}=x\bigg)-\frac{1-\alpha_{t}}{2(1-\overline{\alpha}_{t})}I_{d},
\]
which combined with Lemma~\ref{lem:cond-covariance}(b) and the property~\eqref{eqn:properties-alpha-proof-1} leads to 
\begin{align}
 & \sum_{t=2}^{T}\mathop{\mathbb{E}}_{X_{t}\sim q_{t}}\left[\Big\|\frac{\partial\phi_{t}^{\star}}{\partial x}(X_{t})-I\Big\|_{\mathrm{F}}^{2}\right]\leq\sum_{t=2}^{T}\mathop{\mathbb{E}}_{X_{t}\sim q_{t}}\left[\Big\|\frac{1-\alpha_{t}}{2(1-\overline{\alpha}_{t})}\mathsf{Cov}\bigg(\frac{X_{t}-\sqrt{\overline{\alpha}_{t}}X_{0}}{\sqrt{1-\overline{\alpha}_{t}}}\mymid X_{t}\bigg)\Big\|_{\mathrm{F}}^{2}\right]+\sum_{t=2}^{T}\Big\|\frac{1-\alpha_{t}}{2(1-\overline{\alpha}_{t})}I_{d}\Big\|_{\mathrm{F}}^{2}\notag\\
 & \quad=\sum_{t=2}^{T}\bigg(\frac{1-\alpha_{t}}{2(1-\overline{\alpha}_{t})}\bigg)^{2}\mathop{\mathbb{E}}_{X_{t}\sim q_{t}}\left[\mathsf{Tr}\Bigg(\bigg(\mathsf{Cov}\Big(\frac{X_{t}-\sqrt{\overline{\alpha}_{t}}X_{0}}{\sqrt{1-\overline{\alpha}_{t}}}\mymid X_{t}\Big)\bigg)^{2}\Bigg)\right]+\sum_{t=2}^{T}\Big\|\frac{1-\alpha_{t}}{2(1-\overline{\alpha}_{t})}I_{d}\Big\|_{\mathrm{F}}^{2}\notag\\
 & \quad\lesssim\frac{\log T}{T}\sum_{t=2}^{T}\frac{1-\alpha_{t}}{1-\overline{\alpha}_{t}}\mathsf{Tr}\bigg(\mathop{\mathbb{E}}_{X_{0}\sim\Pdata,Z\sim\mathcal{N}(0,I_{d})}\Big[\Big(\Sigma_{\overline{\alpha}_{t}}\big(\sqrt{\overline{\alpha}_{t}}X_{0}+\sqrt{1-\overline{\alpha}_{t}}Z\big)\Big)^{2}\Big]\bigg)+\sum_{t=2}^{T}\frac{d\log^{2}T}{T^{2}}\notag\\
 & \quad\asymp\frac{d\log^{2}T}{T}.	 
	\label{eq:sum-fro-phit-norm-UB}
\end{align}

Now let us look at the set $\mathcal{I}_1$. Taking $\tau(y_T)=T+1$ (cf.~\eqref{eq:tau-T-I1}) in Lemma~\ref{lem:density-ratio-tau} yields
\begin{align}
 &  \mathop{\mathbb{E}}_{Y_{T}\sim p_{T}}\bigg[\Big(\frac{q_{1}(Y_{1})}{p_{1}(Y_{1})}-1\Big)\ind\left\{ Y_{1}\in\mathcal{E},Y_{T}\in\mathcal{I}_1 \right\} \bigg]\nonumber\\
 & = \mathop{\mathbb{E}}_{Y_{T}\sim p_{T}}\left[\left(\left\{ 1+O\Bigg(\frac{d\log^{4}T}{T} + \sum_{t} \bigg(\zeta_t(y_t) + \Big\|\frac{\partial \phi^{\star}_t(y_t)}{\partial x} - I\Big\|_{\mathrm{F}}^2\bigg)+S_{T}(y_{T})\Bigg)\right\} \frac{q_{T}(Y_{T})}{p_{T}(Y_{T})}-1\right)\ind\left\{ Y_{1}\in\mathcal{E},Y_{T}\in\mathcal{I}_1 \right\} \right]\nonumber\\
 & = {\displaystyle \int}\left\{ \left(1+O\Bigg(\frac{d\log^{4}T}{T} + \sum_{t} \bigg(\zeta_t(y_t) + \Big\|\frac{\partial \phi^{\star}_t(y_t)}{\partial x} - I\Big\|_{\mathrm{F}}^2\bigg)+S_{T}(y_{T})\Bigg)\right)q_{T}(y_{T})-p_{T}(y_{T})\right\} \ind\left\{ y_{1}\in\mathcal{E},y_{T}\in\mathcal{I}_1 \right\} \mathrm{d}y_{T}\nonumber\\
 & \leq 
	{\displaystyle \int}\big|q_{T}(y_{T})-p_{T}(y_{T})\big|\mathrm{d}y_{T}
	+ O\left(\frac{d\log^{4}T}{T}+\sqrt{d\log^{3}T}\varepsilon_{\score}+(d\log T)\varepsilon_{\Jacobi}\right)\nonumber\\
 & \lesssim\frac{d\log^{4}T}{T}+\sqrt{d\log^{3}T}\varepsilon_{\score}+(d\log T)\varepsilon_{\Jacobi}. 
	\label{eq:I1-expectation-UB-ode}
\end{align}
Here, the last line holds since $\mathsf{TV}(p_T,q_T)\lesssim T^{-100}$ (according to Lemma~\ref{lem:main-ODE}), and the penultimate line follows from the observations below: 
\begin{align*}
 & {\displaystyle \int}\left(S_{T}(y_{T}) + \sum_{t} \bigg( |\zeta_t(y_t)| + \Big\|\frac{\partial \phi^{\star}_t(y_t)}{\partial x} - I\Big\|_{\mathrm{F}}^2\bigg)\right)q_{T}(y_{T})\ind\left\{ y_{1}\in\mathcal{E},y_{T}\in\mathcal{I}_1\right\} \mathrm{d}y_{T}\\
 & \quad=\sum_{t=1}^{T}{\displaystyle \int}\left(\frac{\log T}{T}\big(d\varepsilon_{\Jacobi,t}(y_{t})+\sqrt{d\log T}\varepsilon_{\score,t}(y_{t})\big) + |\zeta_t(y_t)| + \Big\|\frac{\partial \phi^{\star}_t(y_t)}{\partial x} - I\Big\|_{\mathrm{F}}^2\right)q_{T}(y_{T})\ind\left\{ y_{1}\in\mathcal{E},y_{T}\in\mathcal{I}_1\right\} \mathrm{d}y_{T}\\
	& \quad\leq 4\sum_{t=1}^{T} \mathop{\mathbb{E}}_{Y_{T}\sim p_{T}}\left[\left(\frac{\log T}{T}\big(d\varepsilon_{\Jacobi,t}(Y_{t})+\sqrt{d\log T}\varepsilon_{\score,t}(Y_{t})\big) + |\zeta_t(Y_t)| + \Big\|\frac{\partial \phi^{\star}_t(Y_t)}{\partial x} - I\Big\|_{\mathrm{F}}^2\right)\frac{q_{t}(Y_{t})}{p_{t}(Y_{t})}\right]\\
	& \quad=4\sum_{t=1}^{T} \mathop{\mathbb{E}}_{Y_{t}\sim q_{t}}\left[\frac{\log T}{T}\big(d\varepsilon_{\Jacobi,t}(Y_{t})+\sqrt{d\log T}\varepsilon_{\score,t}(Y_{t})\big) + |\zeta_t(Y_t)| + \Big\|\frac{\partial \phi^{\star}_t(Y_t)}{\partial x} - I\Big\|_{\mathrm{F}}^2\right]\\
 & \quad\lesssim \frac{d\log^{4}T}{T}+(d\log T)\varepsilon_{\Jacobi} + \sqrt{d\log^{3}T}\varepsilon_{\score},
\end{align*}
where the first inequality is due to \eqref{eq:pt-qt-equiv-ODE-St}, 
and the last relation comes from \eqref{eq:score-assumptions-equiv} and \eqref{eq:sum-fro-phit-norm-UB}.

%
%In conclusion, we have proven that
%%
%%In addition, we let $\mathcal{E}_t \coloneqq \{y_t : y_1 \in \mathcal{E}\}$ for $y_{t-1} = \Phi_t(y_t)$, and it is clear that $\mathcal{E}_1 = \mathcal{E}$.
%
%
%
%
%%
%From the definition of $\theta_t(\cdot)$ defined in \eqref{eqn:choice-y}, it is easily seen that
%%
%\begin{align}
%	\theta_1(x) = c_6, \qquad \forall x\in \mathcal{E}. 
%\end{align}
%%
%
%Then according to the above lemma, we can deduce that
%\begin{align}
%&\mathbb{E}_{Y_{1}\sim p_{1}}\bigg[\Big(\frac{q_{1}(Y_{1})}{p_{1}(Y_{1})}-1\Big)\cdot\ind\{ Y_{1}\in\mathcal{E}\} \bigg] 
%\le \frac{d^2\log^4 T}{T} + \frac{d^6\log^6 T}{T^2} + \sum_{i \notin \mathcal{I}_1} \mathbb{P}_{Y_{1}\sim q_{1}}\big(Y_1\in\mathcal{E} \cap \Omega_i^1\}\big) \notag\\
%&\qquad+ \frac{\log T}{T}\sum_{k > 1} \mathbb{E}_{Y_{k}\sim p_{k}}\Big[\big(d\varepsilon_{\Jacobi, k}(Y_k) + \sqrt{d\log T}\varepsilon_{\score, k}(Y_k)\big)\cdot\ind\{ Y_{k}\in\mathcal{E}_k \cap \Omega_i^k, i \in \mathcal{I}_1\}\Big],
%\end{align}

\paragraph{Step 4: controlling the second term on the right-hand side of \eqref{eq:decompose-I1-I1c}.}
In this step,  we find it helpful to introduce the following sets (in addition to $\mathcal{I}_1$ defined in \eqref{eq:defn-I1-proof-ode}), 
where we again abbreviate $\tau=\tau(y_T)$ as long as it is clear from the context: 
\begin{subequations}
	\label{eq:defn-I2-I3-I4-ode}
\begin{align}
%\mathcal{I}_1 &\coloneqq \Big\{i : S_T(\{y_k\}) \lesssim 1,\text{ for all } y_k \in \Omega_i^k\Big\}; \\
\mathcal{I}_{2} & \coloneqq\Big\{ y_T : c_{14}\leq S_{\tau}\big(y_{T}\big)\leq2c_{14}\Big\},
	\label{eq:defn-I2-I3-I4-ode-I2}\\
\mathcal{I}_{3} & \coloneqq\bigg\{ y_T : S_{\tau-1}\big(y_{T}\big)\leq c_{14},\xi_{\tau}\big(y_T\big)\geq c_{14},\frac{q_{\tau-1}(y_{\tau-1})}{p_{\tau-1}(y_{\tau-1})}\leq\frac{8q_{\tau}(y_{\tau})}{p_{\tau}(y_{\tau})}\bigg\},
	\label{eq:defn-I2-I3-I4-ode-I3}\\
	\mathcal{I}_{4} & \coloneqq\bigg\{ y_T : S_{\tau-1}\big(y_{T}\big)\leq c_{14},\xi_{\tau}\big(y_T\big)\geq c_{14},\frac{q_{\tau-1}(y_{\tau-1})}{p_{\tau-1}(y_{\tau-1})}>\frac{8q_{\tau}(y_{\tau})}{p_{\tau}(y_{\tau})}\bigg\}.
\label{eq:defn-I2-I3-I4-ode-I4}
\end{align}
\end{subequations}
%
% where we recall that $y_T^{\Omega_i}\in \Omega_i$ is a representative point of $\Omega_i$.
It follows immediately from the definition that $\mathcal{I}_{1} \cup \mathcal{I}_{2} \cup \mathcal{I}_{3} \cup \mathcal{I}_{4} = \mathbb{R}^d.$
% Given the construction of $\Omega_i$ (cf.~\eqref{eq:discretize}), one has
%
% \begin{subequations}
% 	\label{eq:properties-I2-I3-I4-ode}
% \begin{align}
% 	S_{\tau_{i}}\big(y_{T}\big) & \in[0.5c_{14},4c_{14}]\qquad &&\text{for all }i\in\mathcal{I}_{2}\text{ and all }y_{T}\in\Omega_{i}; 
% 	\label{eq:properties-I2-I3-I4-ode-I2}\\
% 	S_{\tau_{i}-1}\big(y_{T}\big) & \leq2c_{14},\xi_{\tau_{i}}\big(y_{\tau_{i}}\big)\geq0.5c_{14},\frac{2q_{\tau_{i}-1}(y_{\tau_{i}-1})}{p_{\tau_{i}-1}(y_{\tau_{i}-1})}\leq\frac{q_{\tau_{i}}(y_{\tau_{i}})}{p_{\tau_{i}}(y_{\tau_{i}})}\qquad&&\text{for all }i\in\mathcal{I}_{3}\text{ and all }y_{T}\in\Omega_{i};
% 	\label{eq:properties-I2-I3-I4-ode-I3}\\
% 	S_{\tau_{i}-1}\big(y_{T}\big) & \leq2c_{14},\xi_{\tau_{i}}\big(y_{\tau_{i}}\big)\geq0.5c_{14},\frac{8q_{\tau_{i}-1}(y_{\tau_{i}-1})}{p_{\tau_{i}-1}(y_{\tau_{i}-1})}>\frac{q_{\tau_{i}}(y_{\tau_{i}})}{p_{\tau_{i}}(y_{\tau_{i}})}\qquad&&\text{for all }i\in\mathcal{I}_{4}\text{ and all }y_{T}\in\Omega_{i}.
% 	\label{eq:properties-I2-I3-I4-ode-I4}
% \end{align}
% \end{subequations}
% 
In words, 
for any point $y_T$ in $\mathcal{I}_2$, the resulting score error remains well-controlled in the $\tau$-th iteration; 
in comparison, the points in $\mathcal{I}_3$ and $\mathcal{I}_4$ might incur large score errors in the $\tau$-th iteration. 
The difference between $\mathcal{I}_3$ and $\mathcal{I}_4$ then lies in the comparison between the density ratios $q_t/p_t$ 
in the $(\tau-1)$-th and the $\tau$-th iteration. 

We shall tackle each of these sets separately, with the combined result summarized in the lemma below. 
\begin{lems}
	\label{lem:I2-I3-I4-bound}
	It holds that
	\begin{align}
		% \sum_{i\in \mathcal{I}_2\cup \mathcal{I}_3 \cup \mathcal{I}_4}
		\mathop{\mathbb{E}}_{Y_{T}\sim p_{T}}\bigg[\frac{q_{1}(Y_{1})}{p_{1}(Y_{1})}\ind\left\{ Y_{1}\in\mathcal{E},Y_{T}\in\mathcal{I}_2\cup \mathcal{I}_3 \cup \mathcal{I}_4\right\} \bigg]
		&\lesssim \frac{d\log^{4}T}{T}+\sqrt{d\log^{3}T}\varepsilon_{\score}+(d\log T)\varepsilon_{\Jacobi}. 
		\label{eq:I2-I3-I4-bound}
	\end{align}
\end{lems}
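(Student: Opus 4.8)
The plan is to bound separately the three contributions coming from $\mathcal{I}_2$, $\mathcal{I}_3$, $\mathcal{I}_4$, in each case converting the expectation over $Y_T\sim p_T$ into an expectation over $Y_{\tau-1}\sim q_{\tau-1}$ (or $Y_\tau\sim q_\tau$) that can be controlled by a Markov-type argument against the aggregated score error $S_T(y_T)$ and the auxiliary quantities $\zeta_t$ and $\|\partial\phi_t^\star/\partial x - I\|_{\mathrm F}^2$.

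For $\mathcal{I}_2$: by definition $S_{\tau-1}(y_T)\le c_{14}$, so Lemma~\ref{lem:density-ratio-tau} (applied up to step $\tau-1$) gives $\frac{q_1(y_1)}{p_1(y_1)}\lesssim \frac{q_{\tau-1}(y_{\tau-1})}{p_{\tau-1}(y_{\tau-1})}$ up to the multiplicative $(1+O(\cdot))$ factor; then the change-of-variables $\mathbb{E}_{Y_T\sim p_T}[\,\cdot\,\ind\{Y_T\in\mathcal{I}_2\}]$ becomes $\int_{\mathcal{I}_2} q_{\tau-1}(y_{\tau-1})\,\mathrm{d}y_{\tau-1}$-type integral (pushing forward through the deterministic map). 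On $\mathcal{I}_2$ we have $S_\tau(y_T)\ge c_{14}$, i.e.\ $S_\tau\geq c_{14}$, which lets us write $\ind\{Y_T\in\mathcal{I}_2\}\le S_\tau(Y_T)/c_{14}$ and bound the resulting integral by $\frac{1}{c_{14}}\mathbb{E}_{Y_T\sim p_T}\big[S_T(Y_T)\frac{q_T(Y_T)}{p_T(Y_T)}\big]$, which via the same change-of-variables argument used in Step~3 (around \eqref{eq:I1-expectation-UB-ode}) equals $\frac{4}{c_{14}}\sum_t \mathbb{E}_{Y_t\sim q_t}[\xi_t(Y_t)] \lesssim \frac{d\log^4 T}{T}+\sqrt{d\log^3 T}\,\varepsilon_{\score}+(d\log T)\,\varepsilon_{\Jacobi}$ by \eqref{eq:score-assumptions-equiv}. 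The residual $\frac{q_1}{p_1}$ on $\mathcal E$ where $Y_1$ satisfies $-\log q_1(Y_1)\le c_6 d\log T$ is at most a constant times $\frac{q_{\tau-1}}{p_{\tau-1}}$, which keeps the argument intact.

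For $\mathcal{I}_3$: here the single step $\tau$ may incur a large error $\xi_\tau\ge c_{14}$, but by assumption $\frac{q_{\tau-1}(y_{\tau-1})}{p_{\tau-1}(y_{\tau-1})}\le 8\frac{q_\tau(y_\tau)}{p_\tau(y_\tau)}$, so we can first reduce $\frac{q_1}{p_1}\lesssim\frac{q_{\tau-1}}{p_{\tau-1}}\lesssim\frac{q_\tau}{p_\tau}$ and then the expectation becomes (up to constants) $\mathbb{E}_{Y_T\sim p_T}[\frac{q_\tau(Y_\tau)}{p_\tau(Y_\tau)}\ind\{Y_T\in\mathcal{I}_3\}]=\int_{\mathcal{I}_3}q_\tau(y_\tau)\,\mathrm{d}(\text{push-forward})$, and again $\ind\{Y_T\in\mathcal{I}_3\}\le\ind\{\xi_\tau(Y_\tau)\ge c_{14}\}\le\xi_\tau(Y_\tau)/c_{14}$, reducing to $\frac{1}{c_{14}}\sum_t\mathbb{E}_{Y_t\sim q_t}[\xi_t(Y_t)]$, which has the claimed order. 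For $\mathcal{I}_4$: this is the delicate case where $\frac{q_{\tau-1}}{p_{\tau-1}}>8\frac{q_\tau}{p_\tau}$, so the one-step ratio $\frac{q_{\tau-1}}{q_\tau}\cdot\frac{p_\tau}{p_{\tau-1}}$ jumped by a large factor — which by Lemma~\ref{lem:main-ODE}, specifically the crude bound \eqref{eq:xt_up} combined with \eqref{eq:yt}, can only happen when the pointwise score error $\varepsilon_{\score,\tau}(y_\tau)$ (equivalently $\xi_\tau(y_\tau)$) is large, and moreover the crude exponential bound \eqref{eq:xt_up} gives $\frac{q_{\tau-1}(y_{\tau-1})}{p_{\tau-1}(y_{\tau-1})}\le 2\frac{q_\tau(y_\tau)}{p_\tau(y_\tau)}\exp\big(O(\xi_\tau(y_\tau))\big)$ (after dividing out the $\phi_t$-analog of \eqref{eq:xt_up}), so one still controls $\frac{q_1}{p_1}\lesssim \exp(O(\xi_\tau(y_\tau)))\frac{q_\tau}{p_\tau}$; since on $\mathcal{I}_4$ we have $\xi_\tau\ge c_{14}$ we can use $\exp(O(\xi_\tau))\le \exp(O(1))\cdot\frac{\xi_\tau}{c_{14}}\cdot(\text{something})$ — more carefully, one writes $\exp(O(\xi_\tau))\ind\{\xi_\tau\ge c_{14}\}\lesssim \xi_\tau^2$ is false, so instead one uses that $\xi_\tau(y_\tau)$ itself is bounded: from its definition $\xi_\tau\le \frac{\log T}{T}(d\varepsilon_{\Jacobi,\tau}+\sqrt{d\log T}\varepsilon_{\score,\tau})$ and the pointwise preliminary facts (Lemma~\ref{lem:x0}) the exponent is $O(1)$ on the relevant event, so $\exp(O(\xi_\tau))=O(1)$, and then the indicator bound $\ind\{Y_T\in\mathcal{I}_4\}\le\xi_\tau(Y_\tau)/c_{14}$ plus the push-forward to $q_\tau$ again delivers $\frac{1}{c_{14}}\sum_t\mathbb{E}_{Y_t\sim q_t}[\xi_t(Y_t)]$, of the desired order.

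The main obstacle is the $\mathcal{I}_4$ case: one must argue that a point whose density ratio $q_{\tau-1}/p_{\tau-1}$ balloons relative to $q_\tau/p_\tau$ in a single step is genuinely rare and "charged" to a large pointwise score error, which requires carefully combining the one-sided crude bound \eqref{eq:xt_up} for $\frac{p_{\sqrt{\alpha_t}X_{t-1}}(\phi_t(x))}{p_{X_t}(x)}$ with a matching lower bound on the $Y$-side ratio $\frac{p_{\phi_t(Y_t)}(\phi_t(x))}{p_{Y_t}(x)}$ from \eqref{eq:yt}, so that the quotient is pinned to within $\exp(O(\xi_\tau))$; the bookkeeping to ensure $\xi_\tau=O(1)$ on the event where Lemma~\ref{lem:main-ODE} applies (so the exponential is harmless) and to then linearize the indicator against $\xi_\tau$ is where the proof must be most careful. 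Once all four pieces are assembled, summing them and invoking $\mathsf{TV}(p_T,q_T)\lesssim T^{-100}$ (Lemma~\ref{lem:KL-T}) together with \eqref{eq:score-assumptions-equiv} and \eqref{eq:sum-fro-phit-norm-UB} yields \eqref{eq:I2-I3-I4-bound}. The detailed computations are carried out in Appendix~\ref{sec:proof-lem-I2-I3-I4}.
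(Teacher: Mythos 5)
Your treatment of $\mathcal{I}_2$ and $\mathcal{I}_3$ matches the paper's: bound the indicator via $\ind\{Y_T\in\mathcal{I}_2\}\le S_\tau(Y_T)/c_{14}$ (resp.\ $\ind\{Y_T\in\mathcal{I}_3\}\le \xi_\tau(Y_\tau)/c_{14}$), use Lemma~\ref{lem:density-ratio-tau} and the $\mathcal{I}_3$ ratio condition to slide $q_1/p_1$ up to $q_\tau/p_\tau$, change measure from $p_\tau$ to $q_\tau$, and invoke~\eqref{eq:score-assumptions-equiv}. Those two pieces are fine.

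Your handling of $\mathcal{I}_4$, however, has a genuine gap. You try to show $q_{\tau-1}/p_{\tau-1}\lesssim\exp(O(\xi_\tau))\,q_\tau/p_\tau$ and then claim ``the exponent is $O(1)$ on the relevant event, so $\exp(O(\xi_\tau))=O(1)$.'' This is not true: $\xi_\tau(y_\tau)=\frac{\log T}{T}(d\varepsilon_{\Jacobi,\tau}(y_\tau)+\sqrt{d\log T}\,\varepsilon_{\score,\tau}(y_\tau))$ involves the \emph{pointwise} score and Jacobian errors, which are controlled only on average by Assumptions~\ref{assumption:score-estimate}--\ref{assumption:score-estimate-Jacobi}; at a given point they can be arbitrarily large, and indeed $\mathcal{I}_4$ is precisely the event $\xi_\tau\ge c_{14}$ together with a large jump in the density ratio. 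Lemma~\ref{lem:x0} gives no pointwise bound on $\varepsilon_{\score,\tau}(y_\tau)$ or $\varepsilon_{\Jacobi,\tau}(y_\tau)$. So the exponential factor is uncontrolled, and once $\xi_\tau$ can be of order, say, $\log T$, you cannot linearize $\exp(O(\xi_\tau))\ind\{\xi_\tau\ge c_{14}\}$ against $\xi_\tau$ in a useful way (as you yourself note).

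The paper avoids this entirely by \emph{not} trying to propagate $q_1/p_1$ across the bad step $\tau$. It introduces for each $t$ the partition $\mathcal{J}_{1,t}\cup\mathcal{J}_{2,t}\cup\mathcal{J}_{3,t}=\mathbb{R}^d$ (with $\mathcal{I}_4\cap\{\tau=t\}\subset\mathcal{J}_{3,t}$) and uses the defining inequality $h_{t-1}>8h_t$ on $\mathcal{J}_{3,t}$, where $h_t=q_t(Y_t)/p_t(Y_t)$, to write
\begin{align*}
\sum_{t=2}^{T}h_{t-1}\ind\{Y_T\in\mathcal{J}_{3,t}\}
< \frac{8}{7}\sum_{t=2}^{T}\big(h_{t}-h_{t-1}\big)\ind\{Y_T\in\mathcal{J}_{1,t}\cup\mathcal{J}_{2,t}\}
	+\frac{8}{7}\sum_{t=2}^{T}\big(h_{t-1}-h_{t}\big).
\end{align*}
The crucial observation is that $\mathbb{E}_{Y_T\sim p_T}[h_t]=1$ for every $t$, so the last telescoping sum has zero expectation; the remaining sum is over ``good'' events ($\mathcal{J}_{1,t}$: small $\xi_t$; $\mathcal{J}_{2,t}$: controlled jump) that are handled exactly as for $\mathcal{I}_1$ and $\mathcal{I}_3$. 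This telescoping-plus-normalization step is the essential idea your proposal is missing; without it the $\mathcal{I}_4$ contribution cannot be bounded.
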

\noindent See Appendix~\ref{sec:proof-lem:I2-I3-I4-bound} for the proof of this lemma.  
%Combining \eqref{eq:I1-expectation-UB-ode} and \eqref{eq:I2-I3-I4-bound}

\paragraph{Step 5: putting all pieces together.}
Recall that $\mathcal{I}_1\cup \mathcal{I}_{2} \cup \mathcal{I}_{3} \cup \mathcal{I}_{4} = \mathbb{R}^d.$ 
Taking \eqref{eqn:ode-tv-10}, \eqref{eq:decompose-I1-I1c}, \eqref{eq:I1-expectation-UB-ode} and \eqref{eq:I2-I3-I4-bound} collectively, we conclude that
\begin{align*}
\mathsf{TV}(p_{1},q_{1}) & \leq \mathop{\mathbb{E}}_{Y_{T}\sim p_{T}}\bigg[\Big(\frac{q_{1}(Y_{1})}{p_{1}(Y_{1})}-1\Big)\ind\big\{ Y_{1}\in\mathcal{E},Y_{T}\in \mathcal{I}_1\big\} \bigg] \\
	&\qquad 
	+
	%\sum_{i\in \mathcal{I}_2\cup \mathcal{I}_3 \cup \mathcal{I}_4} 
	\mathop{\mathbb{E}}_{Y_{T}\sim p_{T}}\bigg[\frac{q_{1}(Y_{1})}{p_{1}(Y_{1})}\ind\left\{ Y_{1}\in\mathcal{E},Y_{T}\in\mathcal{I}_2\cup \mathcal{I}_3 \cup \mathcal{I}_4\right\} \bigg]+\exp(-c_{6}d\log T)\\
 & \lesssim\frac{d\log^{4}T}{T}+\sqrt{d\log^{3}T}\varepsilon_{\score}+d\varepsilon_{\Jacobi}\log T
\end{align*}
as claimed.

\section{Discussion}
\label{sec:discussion}

In this paper, we have developed a new suite of non-asymptotic theory for establishing the convergence and faithfulness of the probability flow ODE based sampler, 
assuming access to reliable estimates of the (Stein) score functions. 
Our analysis framework seeks to track the dynamics of the reverse process directly using elementary tools, 
which eliminates the need to look at the continuous-time limit and invoke the SDE and ODE toolboxes. 
Our result is the first to establish nearly linear dimension dependency for the iteration complexity of this sampler, 
where only very minimal assumptions on the target data distribution are imposed. 
%In addition to demonstrating the non-asymptotic iteration complexities of two mainstream discrete-time samplers --- a deterministic sampler based on the probability flow ODE, and a DDPM-type stochastic sampler --- we have discovered potential strategies to further accelerate the sampling processes, taking advantage of estimates of a small number of additional objects.  
The analysis framework laid out in the current paper might shed light on how to analyze other variants of score-based generative models as well.

Moving forward, there are plenty of questions that require in-depth theoretical understanding. 
For instance, 
%the dimension dependency in our convergence results remains sub-optimal; can we further refine our theory in order to reveal tight dependency in this regard? 
can we establish sharp convergence results in terms of the Wasserstein distance for general non-strongly-log-concave data distributions, which 
could sometimes be ``closer'' to how humans differentiate pictures and might potentially help relax Assumption~\ref{assumption:score-estimate-Jacobi} in the case of deterministic samplers?  
To what extent can we further accelerate the sampling process, without requiring much more information than the score functions? 
Ideally, one would hope to achieve acceleration with the aid of the score functions only.    
It would also be of paramount interest to establish end-to-end performance guarantees that take into account both the score learning phase and the sampling phase.

\section*{Acknowledgements}

G.~Li is supported in part by the Chinese University of Hong Kong Direct Grant for Research. 
Y.~Wei is supported in part by the the NSF grants CAREER award DMS-2143215, CCF-2106778, CCF-2418156, and the Google Research Scholar Award. 
Y.~Chi  is supported in part by the grants ONR N00014-19-1-2404, NSF CCF-2106778, DMS-2134080 and ECCS-2126634. 
Y.~Chen is supported in part by the Alfred P.~Sloan Research Fellowship, the Google Research Scholar Award, the AFOSR grant FA9550-22-1-0198, the ONR grant N00014-22-1-2354,  and the NSF grants CCF-2221009 and CCF-1907661. 

\appendix

%%%%%%%%%%%%%%%%%%%%%%%%%%%%%%%%%%%%%%%%%%%%%%%%%%%%%%%%%%%%%%%%%%%%%%%%%%%%%%%%%%%%%%%%%%%%

\section{Proof for several preliminary facts}
\label{sec:proof-preliminary}

\subsection{Proof of~properties \eqref{eq:Jt-x-expression-ij-23}}
Elementary calculations reveal that: the $(i,j)$-th entry of $J_{t}(x)$ is given by
\begin{align}
\big[J_{t}(x)\big]_{i,j} & =\ind\{i=j\} + \frac{1}{1-\overline{\alpha}_{t}}\bigg\{\Big(\int_{x_{0}}p_{X_{0}\mymid X_{t}}(x_{0}\mymid x)\big(x_{i}-\sqrt{\overline{\alpha}_{t}}x_{0,i}\big)\mathrm{d}x_{0}\Big)\Big(\int_{x_{0}}p_{X_{0}\mymid X_{t}}(x_{0}\mymid x)\big(x_{j}-\sqrt{\overline{\alpha}_{t}}x_{0,j}\big)\mathrm{d}x_{0}\Big)\notag\nonumber \\
 & \qquad\qquad-\int_{x_{0}}p_{X_{0}\mymid X_{t}}(x_{0}\mymid x)\big(x_{i}-\sqrt{\overline{\alpha}_{t}}x_{0,i}\big)\big(x_{j}-\sqrt{\overline{\alpha}_{t}}x_{0,j}\big)\mathrm{d}x_{0}\bigg\}.
 \label{eqn:derivative-2}
\end{align}
This immediately establishes the matrix expression \eqref{eq:Jt-x-expression-ij-23}. 

\subsection{Proof of~properties \eqref{eqn:properties-alpha-proof} regarding the learning rates}

\label{sec:proof-properties-alpha}

\paragraph{Proof of property~\eqref{eqn:properties-alpha-proof-00}.} 
From the choice of $\beta_t$ in \eqref{eqn:alpha-t}, we have
\[
\alpha_{t}=1-\beta_{t}\geq1-\frac{c_{1}\log T}{T}\geq\frac{1}{2},\qquad t\geq2.
\]
The case with $t=1$ holds trivially since $\beta_1=1/T^{c_0}$ for some large enough constant $c_0>0$.

\paragraph{Proof of properties~\eqref{eqn:properties-alpha-proof-1} and \eqref{eqn:properties-alpha-proof-3}.}
We start by proving \eqref{eqn:properties-alpha-proof-1}. 
Let $\tau$ be an integer obeying 
\begin{equation}
	\beta_{1}\bigg(1+\frac{c_{1}\log T}{T}\bigg)^{\tau} \le 1 < \beta_{1}\bigg(1+\frac{c_{1}\log T}{T}\bigg)^{\tau+1},
	\label{eq:defn-tau-proof-alpha}
\end{equation}
and we divide into two cases based on $\tau$. 
\begin{itemize}
	\item Consider any $t$ satisfying $t\leq \tau$.  
		In this case, it suffices to prove that
		\begin{align}
			1-\overline{\alpha}_{t-1} \ge \frac{1}{3}\beta_{1}\bigg(1+\frac{c_{1}\log T}{T}\bigg)^{t}.
			\label{eq:induction-proof-alpha}
		\end{align}
		Clearly, if \eqref{eq:induction-proof-alpha} is valid, then any $t\leq \tau$ obeys
		\[
			\frac{1-\alpha_{t}}{1-\overline{\alpha}_{t-1}}
			= \frac{\beta_{t}}{1-\overline{\alpha}_{t-1}}
			\leq\frac{\frac{c_{1}\log T}{T}\beta_{1}\big(1+\frac{c_{1}\log T}{T}\big)^{t}}{\frac{1}{3}\beta_{1}\big(1+\frac{c_{1}\log T}{T}\big)^{t}}=\frac{3c_{1}\log T}{T}
		\]
as claimed. Towards proving \eqref{eq:induction-proof-alpha}, 
first note that the base case with $t=2$ holds true trivially since $1-\overline{\alpha}_{1} =  1-\alpha_1 = \beta_{1} \geq \beta_1 \big(1+\frac{c_{1}\log T}{T}\big)^{2} /3$. 
		Next, let $t_0>2$ be {\em the first time} that Condition \eqref{eq:induction-proof-alpha} fails to hold and suppose that $t_0\leq \tau$.  
		It then follows that
		\begin{align}
			1-\overline{\alpha}_{t_{0}-2}=1-\frac{\overline{\alpha}_{t_{0}-1}}{\alpha_{t_{0}-1}}\le1-\overline{\alpha}_{t_{0}-1}
			< \frac{1}{3}\beta_{1}\bigg(1+\frac{c_{1}\log T}{T}\bigg)^{t_{0}}\leq\frac{1}{2}\beta_{1}\bigg(1+\frac{c_{1}\log T}{T}\bigg)^{t_{0}-1} 
			< \frac{1}{2}, 			
		\end{align}
		where the last inequality result from \eqref{eq:defn-tau-proof-alpha} and the assumption $t_0\leq \tau$. 
		This taken together with the assumptions \eqref{eq:induction-proof-alpha} and $t_0\leq \tau$ implies that 
\[
\frac{(1-\alpha_{t_{0}-1})\overline{\alpha}_{t_{0}-1}}{1-\overline{\alpha}_{t_{0}-2}}\geq\frac{\frac{c_{1}\log T}{T}\beta_{1}\min\Big\{\big(1+\frac{c_{1}\log T}{T}\big)^{t_{0}-1},1\Big\}\cdot\big(1-\frac{1}{2}\big)}{\frac{1}{2}\beta_{1}\big(1+\frac{c_{1}\log T}{T}\big)^{t_{0}-1}}=\frac{\frac{c_{1}\log T}{T}\beta_{1}\big(1+\frac{c_{1}\log T}{T}\big)^{t_{0}-1}}{\beta_{1}\big(1+\frac{c_{1}\log T}{T}\big)^{t_{0}-1}}=\frac{c_{1}\log T}{T}.
\]
		As a result, we can further derive	
		\begin{align*}
1-\overline{\alpha}_{t_{0}-1} & =1-\alpha_{t_{0}-1}\overline{\alpha}_{t_{0}-2}=1-\overline{\alpha}_{t_{0}-2}+(1-\alpha_{t_{0}-1})\overline{\alpha}_{t_{0}-2}\\
 & =\bigg(1+\frac{(1-\alpha_{t_{0}-1})\overline{\alpha}_{t_{0}-2}}{1-\overline{\alpha}_{t_{0}-2}}\bigg)(1-\overline{\alpha}_{t_{0}-2})\\
 & \ge\bigg(1+\frac{c_{1}\log T}{T}\bigg)(1-\overline{\alpha}_{t_{0}-2})\ge\bigg(1+\frac{c_{1}\log T}{T}\bigg)\cdot\bigg\{\frac{1}{3}\beta_{1}\bigg(1+\frac{c_{1}\log T}{T}\bigg)^{t_{0}-1}\bigg\}\\
 & = \frac{1}{3}\beta_{1}\bigg(1+\frac{c_{1}\log T}{T}\bigg)^{t_{0}},			
		\end{align*}
		where the penultimate line holds since \eqref{eq:induction-proof-alpha} is first violated at $t=t_0$;  
		this, however, contradicts with the definition of $t_0$. 
		Consequently, one must have $t_0>\tau$, meaning that \eqref{eq:induction-proof-alpha} holds for all $t \le \tau$. 

	\item We then turn attention to those $t$ obeying $t>\tau$. 
		In this case, it suffices to make the observation that
		\begin{equation}
			1-\overline{\alpha}_{t-1} \ge 1 - \overline{\alpha}_{\tau-1} \geq  \frac{1}{3}\beta_{1}\bigg(1+\frac{c_{1}\log T}{T}\bigg)^{\tau} 
			= \frac{ \frac{1}{3}\beta_{1}  \big(1+\frac{c_{1}\log T}{T}\big)^{\tau+1}  }{ 1+ \frac{c_{1}\log T}{T} } 
			\ge \frac{1}{4},
		\end{equation}
		where the second and the third inequalities come from \eqref{eq:induction-proof-alpha}. 
		Therefore, one obtains
		\[
			\frac{1-\alpha_{t}}{1-\overline{\alpha}_{t-1}}\leq\frac{\frac{c_{1}\log T}{T}}{1/4}\leq\frac{4c_{1}\log T}{T}.
		\]

\end{itemize}
The above arguments taken together establish property \eqref{eqn:properties-alpha-proof-1}.

In addition, it comes immediately from \eqref{eqn:properties-alpha-proof-1} that
\[
1\leq\frac{1-\overline{\alpha}_{t}}{1-\overline{\alpha}_{t-1}}=1+\frac{\overline{\alpha}_{t-1}-\overline{\alpha}_{t}}{1-\overline{\alpha}_{t-1}}=1+\frac{\overline{\alpha}_{t-1}(1-\alpha_{t})}{1-\overline{\alpha}_{t-1}}\leq1+\frac{4c_{1}\log T}{T},
\]
thereby justifying property \eqref{eqn:properties-alpha-proof-3}.

%First, let us consider $t \le \tau$ obeying $\beta_{1}\big(1+\frac{c_{1}\log T}{T}\big)^{\tau} \le 1 < \beta_{1}\big(1+\frac{c_{1}\log T}{T}\big)^{\tau+1}$.

%Assuming the validity of the claim~\eqref{eq:induction-proof-alpha} for $t-1$, we can verify that
%Then one has that
%\begin{align*}
%1-\overline{\alpha}_{t_0-2} = 1-\overline{\alpha}_{t_0-1}/\alpha_{t_0-1} \le 1-\overline{\alpha}_{t_0-1} \le \frac{1}{2}\beta_{1}\Big(1+\frac{c_{1}\log T}{T}\Big)^{t_0-1},
%\end{align*}
%which implies that
%\begin{align*}
%1-\overline{\alpha}_{t_0-1} &= 1-\alpha_{t_0-1}\overline{\alpha}_{t_0-2} = 1-\overline{\alpha}_{t_0-2}+(1-\alpha_{t_0-1})\overline{\alpha}_{t_0-2} \\
%&= \Big(1+\frac{(1-\alpha_{t_0-1})\overline{\alpha}_{t_0-2}}{1-\overline{\alpha}_{t_0-2}}\Big)(1-\overline{\alpha}_{t_0-2}) \\
%&\ge \Big(1+\frac{c_{1}\log T}{T}\Big)(1-\overline{\alpha}_{t_0-2}) \ge \frac{1}{3}\beta_{1}\Big(1+\frac{c_{1}\log T}{T}\Big)^{t_0}.
%\end{align*}

%This contradicts with the assumption, and hence \eqref{eq:induction-proof-alpha} holds for all $t \le \tau$.
%Finally, the first claim is immediate by noticing that $1-\overline{\alpha}_{t-1} \ge \frac{1}{3}\beta_{1}\Big(1+\frac{c_{1}\log T}{T}\Big)^{\tau} \ge \frac{1}{4}$ for $t > \tau$.
%
%\begin{align*}
%1-\overline{\alpha}_{t} &= 1-\alpha_{t}\overline{\alpha}_{t-1} = 1-\overline{\alpha}_{t-1}-(1-\alpha_{t})\overline{\alpha}_{t-1} \\
%&\le \Big(1 + \frac{1-\alpha_{t}}{1-\overline{\alpha}_{t-1}}\Big)(1-\overline{\alpha}_{t-1}) \le \beta_{1}\Big(1+\frac{c_{1}\log T}{T}\Big)^{t+1}.
%\end{align*}
%

\paragraph{Proof of property~\eqref{eqn:properties-alpha-proof-alphaT}.}
Turning attention to the second claim \eqref{eqn:properties-alpha-proof-alphaT}, 
we note that for any $t$ obeying $t \ge \frac{T}{2} \gtrsim \frac{T}{\log T}$, 
one has
\[
1-\alpha_{t}=\frac{c_{1}\log T}{T}\min\bigg\{\beta_{1}\Big(1+\frac{c_{1}\log T}{T}\Big)^{t},\,1\bigg\}=\frac{c_{1}\log T}{T}.
\]
This in turn allows one to deduce that
\begin{align*}
\overline{\alpha}_{T} \le \prod_{t: t \ge T/2} \alpha_t \le \Big(1-\frac{c_{1}\log T}{T}\Big)^{T/2} \le \frac{1}{T^{c_2}}
\end{align*}
for an arbitrarily large constant $c_2>0$.

\paragraph{Proof of property~\eqref{eqn:properties-alpha-proof-alpha-ratio}.} 
It follows that
\[
\frac{\frac{\overline{\alpha}_{t}}{1-\overline{\alpha}_{t}}}{\frac{\overline{\alpha}_{t+1}}{1-\overline{\alpha}_{t+1}}}=\frac{\,\frac{1-\overline{\alpha}_{t+1}}{1-\overline{\alpha}_{t}}\,}{\alpha_{t+1}}\in[1,4],
\]
where the last inequality makes use of \eqref{eqn:properties-alpha-proof-00} and \eqref{eqn:properties-alpha-proof-3}.

\paragraph{Proof of property~\eqref{eq:expansion-ratio-1-alpha}.}
It is easily seen from the Taylor expansion that the learning rates $\{\alpha_t\}$ satisfy
\begin{align*}
\Big(\frac{1-\overline{\alpha}_{t}}{\alpha_{t}-\overline{\alpha}_{t}}\Big)^{d/2} & =\bigg(1+\frac{1-\alpha_{t}}{\alpha_{t}-\overline{\alpha}_{t}}\bigg)^{d/2}\nonumber \\
 & =1+\frac{d(1-\alpha_{t})}{2(\alpha_{t}-\overline{\alpha}_{t})}+\frac{d(d-2)(1-\alpha_{t})^{2}}{8(\alpha_{t}-\overline{\alpha}_{t})^{2}}+O\bigg(d^{3}\Big(\frac{1-\alpha_{t}}{\alpha_{t}-\overline{\alpha}_{t}}\Big)^{3}\bigg),
	%\label{eq:expansion-ratio-1-alpha}
\end{align*}
provided that $\frac{d(1-\alpha_{t})}{\alpha_{t}-\overline{\alpha}_{t}}\lesssim 1$.

\paragraph{Proof of property~\eqref{eq:expansion-ratio-3-alpha}.} 
Finally, recognizing that
$$
	\frac{\exp(dx)-(1+x)^{d}}{\exp(dx)}=1-\bigg(\frac{1+x}{\exp(x)}\bigg)^{d}=1-\left(1-O(x^{2})\right)^{d}=O(dx^{2})
$$
for any $x$ obeying $|dx|<1/4$, one can deduce that
\[
\Big(\frac{1-\overline{\alpha}_{t}}{\alpha_{t}-\overline{\alpha}_{t}}\Big)^{d/2}=\bigg(1+\frac{1-\alpha_{t}}{\alpha_{t}-\overline{\alpha}_{t}}\bigg)^{d/2}=\exp\bigg(\frac{1-\alpha_{t}}{\alpha_{t}-\overline{\alpha}_{t}}\cdot\frac{d}{2}\bigg)\cdot\left(1+O\bigg(d\Big(\frac{1-\alpha_{t}}{\alpha_{t}-\overline{\alpha}_{t}}\Big)^{2}\bigg)\right),
\]
given the fact that $\frac{d(1-\alpha_{t})}{\alpha_{t}-\overline{\alpha}_{t}}\ll 1$.

\subsection{Proof of Lemma~\ref{lem:x0}}
\label{sec:proof-lem:x0}

%\begin{proof}[Proof of Lemma~\ref{lem:x0}]
% Throughout this proof, we focus attention on the case where $M_t = \log \frac{1}{p_{X_t}(y)}$, and the remaining case follows similarly from this proof. 

For notational simplicity, we drop the subscript $t$ and denote $\theta(y):=\theta_t(y)$ throughout this subsection. To establish this lemma, we first make the following claim, whose proof is deferred to the end of this subsection.  
\begin{claim}
\label{eq:claim-123}
Consider any $c_5 \geq 2$ and suppose that $c_6\geq 2c_R$. There exists some $x_0 \in \real^d$ such that
\begin{subequations}
\label{eqn:lemma-x0-claim}
\begin{align}
\label{eqn:lemma-x0-claim-1}
	\|\sqrt{\overline{\alpha}_{t}}x_0 - y\|_2 &\leq c_5 \sqrt{\theta(y) d(1-\overline{\alpha}_{t})\log T} 
	\qquad\quad \text{and} \\
	\mathbb{P}\big(\|X_0 - x_0\|_2 \leq \epsilon\big) &\ge \Big( \frac{\epsilon}{T^{2\theta(y)}} \Big)^d 
	\qquad\quad \text{with} \quad \epsilon = \frac{1}{T^{c_0/2}}
	\label{eqn:lemma-x0-claim-2}
\end{align}
\end{subequations}
hold simultaneously, where $c_0$ is defined in \eqref{eqn:alpha-t}.
% and $c_R$ is defined in \eqref{eq:cR-defn-lem}.   
%\eqref{eq:pX-properties}
\end{claim}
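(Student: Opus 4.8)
The plan is to exhibit an explicit candidate for $x_0$ by localizing the posterior $p_{X_0\mid X_t}(\cdot\mid y)$ and then checking that the support of $\Pdata$ cannot be too thin near that localized region. Concretely, I would proceed as follows. First, recall that $X_t \overset{\mathrm{d}}{=} \sqrt{\overline{\alpha}_t}X_0+\sqrt{1-\overline{\alpha}_t}W$ with $W\sim\mathcal{N}(0,I_d)$, so that the marginal density of $X_t$ at $y$ is the Gaussian convolution
\[
	p_{X_t}(y) = \frac{1}{\big(2\pi(1-\overline{\alpha}_t)\big)^{d/2}}\,\mathbb{E}_{X_0\sim\Pdata}\!\left[\exp\!\Big(-\frac{\|y-\sqrt{\overline{\alpha}_t}X_0\|_2^2}{2(1-\overline{\alpha}_t)}\Big)\right].
\]
By definition of $\theta(y)=\theta_t(y)$ in~\eqref{eqn:choice-y}, we have $p_{X_t}(y)\geq \exp(-\theta(y)d\log T)$ (this is exactly the regime in which $\theta(y)$ is not pinned at the floor $c_6$; when it is at the floor the same bound holds because $c_6\geq 2c_R+c_0$ makes the Gaussian prefactor harmless, which I would spell out). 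Plugging this lower bound into the convolution identity above and bounding the Gaussian prefactor crudely, I get that the average of $\exp(-\|y-\sqrt{\overline{\alpha}_t}X_0\|_2^2/(2(1-\overline{\alpha}_t)))$ under $X_0\sim\Pdata$ is at least something like $T^{-O(\theta(y)d)}$ (here I will use that $1-\overline{\alpha}_t\geq T^{-c_0}$ so $\log(1-\overline{\alpha}_t)^{-1}\lesssim\log T$, hence the prefactor only costs a $T^{O(d)}$ factor, absorbed into the $\theta(y)d$ exponent since $\theta(y)\geq c_6$ is a large constant). Since the integrand is at most $1$, the set $\{x_0 : \|y-\sqrt{\overline{\alpha}_t}x_0\|_2 \le c_5\sqrt{\theta(y)d(1-\overline{\alpha}_t)\log T}\}$ must carry at least an $T^{-O(\theta(y)d)}$ fraction of the mass of $\Pdata$: outside this ball the integrand is at most $\exp(-c_5^2\theta(y)d\log T/2)$, which for $c_5\geq 2$ and large enough constants is much smaller than the lower bound we just derived. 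This furnishes a point $x_0$ in the support of $\Pdata$ satisfying~\eqref{eqn:lemma-x0-claim-1}, and moreover we learn that $\mathbb{P}(\sqrt{\overline{\alpha}_t}X_0 \in B(y, r))\geq T^{-O(\theta(y)d)}$ for $r=c_5\sqrt{\theta(y)d(1-\overline{\alpha}_t)\log T}$.

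Next I would upgrade this ``some ball has nontrivial mass'' statement into the pointwise lower bound~\eqref{eqn:lemma-x0-claim-2} on $\mathbb{P}(\|X_0-x_0\|_2\le\epsilon)$ for a specific $x_0$. The idea is a pigeonhole/averaging argument: cover the ball $B(y,r)$ (in the $\sqrt{\overline{\alpha}_t}x_0$ coordinates, equivalently a ball of radius $r/\sqrt{\overline{\alpha}_t}$ in $x_0$-space) by a grid of small cubes of side $\asymp \epsilon = T^{-c_0/2}$; the number of such cubes is at most $(Cr/(\epsilon\sqrt{\overline{\alpha}_t}))^d \le T^{O(d)}$ since $r\le \mathrm{poly}(T)$ and, by~\eqref{eqn:properties-alpha-proof-alphaT} together with the fact that we only care about $t$ for which $1-\overline{\alpha}_t$ is not tiny, $\overline{\alpha}_t$ is bounded below away from $0$ in the relevant regime (or one simply absorbs $\overline{\alpha}_t^{-1/2}\le T^{O(1)}$ into the exponent). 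Since the mass $T^{-O(\theta(y)d)}$ is spread over at most $T^{O(d)}$ cubes, at least one cube, centered at some $x_0'$, carries mass at least $T^{-O(\theta(y)d)}/T^{O(d)} = T^{-O(\theta(y)d)}$ (again using $\theta(y)\ge c_6$ large to absorb the extra $T^{O(d)}$). That cube is contained in $B(x_0',\epsilon)$ for an appropriate constant in front of $\epsilon$, and since the chosen cube lies inside $B(y,r)$ in the rescaled coordinates we automatically have $\|\sqrt{\overline{\alpha}_t}x_0' - y\|_2 \le c_5\sqrt{\theta(y)d(1-\overline{\alpha}_t)\log T}$ (shrinking $c_5$ in the covering slightly, or noting the cube diameter $\epsilon\sqrt d$ is negligible compared to $r$). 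Renaming $x_0'$ as $x_0$ gives both~\eqref{eqn:lemma-x0-claim-1} and~\eqref{eqn:lemma-x0-claim-2} with the stated $\epsilon = T^{-c_0/2}$ and exponent $T^{2\theta(y)}$ in the denominator — I would track constants carefully to land exactly on the factor $T^{2\theta(y)}$ (the ``$2$'' has room to spare since we can always increase $c_6$).

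The main obstacle I anticipate is bookkeeping the interplay between the Gaussian normalizing constant $(2\pi(1-\overline{\alpha}_t))^{-d/2}$, the grid count $T^{O(d)}$, and the target exponent $T^{2\theta(y)d}$: each of these contributes a $T^{\Theta(d)}$ factor, and the argument only closes because $\theta(y)\ge c_6$ with $c_6$ chosen as a sufficiently large constant (at least $2c_R+c_0$, per~\eqref{eqn:choice-y}), so that every ``$O(d)$'' loss is dominated by the ``$\theta(y)d$'' budget with the precise constant $2$ to spare. I would therefore state explicitly which lower bounds on $c_6$ are needed, use $1-\overline{\alpha}_t\ge T^{-c_0}$ and $\|X_0\|_2\le T^{c_R}$ throughout, and keep all the $T^{\Theta(d)}$ terms on the ``loss'' side so the final exponent is transparently of the form $T^{O(\theta(y)d)}$, which I then arrange to be $\le (\epsilon/T^{2\theta(y)})^{-d}$ as required.
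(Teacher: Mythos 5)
Your pigeonhole-and-covering strategy is essentially the direct version of the paper's argument: the paper assumes the negation of \eqref{eqn:lemma-x0-claim-2} for contradiction, upper bounds $p_{X_t}(y)$ by splitting the integral into a Gaussian tail and an $\epsilon$-covering sum, and concludes $p_{X_t}(y)<\exp(-\theta(y)d\log T)$, contradicting the definition of $\theta(y)$. The substance---a Gaussian tail bound outside the ball plus an $\epsilon$-covering of the ball---is identical; you have run it forward instead of by contradiction, which is an acceptable orientation.

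There is, however, a concrete gap in how you control the number of cubes in the covering. You bound this by $(Cr/(\epsilon\sqrt{\overline{\alpha}_t}))^d$ and offer two ways to dispose of the $\overline{\alpha}_t^{-1/2}$ factor: (i) ``$\overline{\alpha}_t$ is bounded below away from $0$ in the relevant regime'', citing \eqref{eqn:properties-alpha-proof-alphaT}, or (ii) absorb $\overline{\alpha}_t^{-1/2}\leq T^{O(1)}$ into the $\theta(y)d$ budget. Workaround (i) is false and misreads the cited property: \eqref{eqn:properties-alpha-proof-alphaT} says $\overline{\alpha}_T\leq T^{-c_2}$ with $c_2$ large, so for large $t$ one has $1-\overline{\alpha}_t$ close to $1$ \emph{and} $\overline{\alpha}_t$ polynomially small at once; the lemma must cover exactly this regime, so ``$1-\overline{\alpha}_t$ not tiny'' does not rescue you. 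Workaround (ii) does not close either, because the exponent hidden in $T^{O(1)}$ can be as large as a constant tied to $c_1$ or $c_2$, which are not controlled by $c_6\geq 2c_R+c_0$, so you cannot guarantee the loss is dominated by $T^{2\theta(y)d}$. The fix used by the paper is to intersect the set being covered with the data support $\{\|x_0\|_2\leq R\}$ (equivalently, to count only cubes that can carry $\Pdata$-mass); this replaces the cube count by $(2R/\epsilon)^d=(2T^{c_R+c_0/2})^d$, which is dominated using $\theta(y)\geq c_6\geq 2c_R+c_0$ with no reference to $\overline{\alpha}_t$ whatsoever. As written, your proposal does not invoke the boundedness of the support at the covering step, and neither of your stated fallbacks repairs that omission.
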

With the above claim in place, we are ready to prove Lemma~\ref{lem:x0}. 
For notational simplicity, we let $X$ represent a random vector whose distribution $p_X(\cdot)$ obeys 
\begin{align}
	p_X(x) = p_{X_0\mid X_t}( x \mymid  y). 
	\label{eq:pX-properties}
\end{align}

Consider the point $x_0$ in Claim~\ref{eq:claim-123}, and let us look at a set:
\begin{align*}
\mathcal{E} \coloneqq \Big\{x : \sqrt{\overline{\alpha}_{t}}\|x - x_0\|_2 > 4c_5 \sqrt{\theta(y) d(1-\overline{\alpha}_{t})\log T}\Big\},
\end{align*}
where $c_5\geq 2$ (see Claim~\ref{eq:claim-123}). 
Combining this with property \eqref{eqn:lemma-x0-claim-1} about $x_0$ results in
\begin{align}
	\mathbb{P}\Big(\ltwo{\sqrt{\overline{\alpha}_{t}}X-y}>5 c_5\sqrt{\theta(y) d(1-\overline{\alpha}_{t})\log T}\Big) & \leq\mathbb{P}(X\in\mathcal{E}). \label{eq:UB-P-set-E}
	%\leq\exp(-\Omega(d\log T)),
\end{align}
Consequently, everything boils down to bounding $\mathbb{P}(X\in\mathcal{E})$. 
Towards this, we first invoke the Bayes rule $p_{X_0 \mymid X_t}(x \mymid y) \propto p_{X_0}(x)p_{X_t \mymid X_0}(y \mymid x) $ to derive
%
%\begin{align*}
%	p_{X_0 \mymid X_t}(x \mymid y) \propto p_{X_0}(x)p_{X_t \mymid X_0}(y \mymid x) \propto p_{X_0}(x)\exp\Big(-\frac{\|y - \sqrt{\overline{\alpha}_t}x\|_2^2}{2(1-\overline{\alpha}_t)}\Big).
%\end{align*}
%%
%As a consequence, some direct calculations yield 
%
\begin{align}
\mathbb{P}(X_{0}\in\mathcal{E}\mymid X_{t}=y) & =\frac{\int_{x\in\mathcal{E}}p_{X_{0}}(x)p_{X_{t}\mymid X_{0}}(y\mymid x)\diff x}{\int_{x}p_{X_{0}}(x)p_{X_{t}\mymid X_{0}}(y\mymid x)\diff x} \notag\\
 & \le\frac{\int_{x\in\mathcal{E}}p_{X_{0}}(x)p_{X_{t}\mymid X_{0}}(y\mymid x)\diff x}{\int_{x:\|x-x_{0}\|_{2}\leq\epsilon}p_{X_{0}}(x)p_{X_{t}\mymid X_{0}}(y\mymid x)\diff x} \notag\\
 & \le\frac{\sup_{x\in\mathcal{E}}p_{X_{t}\mymid X_{0}}(y\mymid x)}{\inf_{x:\|x-x_{0}\|_{2}\leq\epsilon}p_{X_{t}\mymid X_{0}}(y\mymid x)}\cdot\frac{\mathbb{P}(X_{0}\in\mathcal{E})}{\mathbb{P}(\|X_{0}-x_{0}\|_{2}\leq\epsilon)}. 
	\label{eq:UB1-P-X0-Xt}
\end{align}
To further bound this quantity, 
note that: in view of the definition of $\mathcal{E}$ and 
expression~\eqref{eqn:lemma-x0-claim-1}, one has
\begin{align*}
	\sup_{x\in\mathcal{E}}p_{X_{t}\mymid X_{0}}(y\mymid x) & =\sup_{x:\|\sqrt{\overline{\alpha}_{t}}x-\sqrt{\overline{\alpha}_{t}}x_{0}\|_{2}>4c_5\sqrt{\theta(y) d(1-\overline{\alpha}_{t})\log T}}p_{X_{t}\mymid X_{0}}(y\mymid x)\\
 & \leq\sup_{x:\|\sqrt{\overline{\alpha}_{t}}x-y\|_{2}>3c_5\sqrt{\theta(y) d(1-\overline{\alpha}_{t})\log T}}p_{X_{t}\mymid X_{0}}(y\mymid x)\\
 & \leq\frac{1}{\big(2\pi(1-\overline{\alpha}_{t})\big)^{d/2}}\exp\bigg(-\frac{9c_5^2\theta(y) d\log T}{2}\bigg)
\end{align*}
and
\begin{align*}
\inf_{x:\|x-x_{0}\|_{2}\leq\epsilon}p_{X_{t}\mymid X_{0}}(y\mymid x) & \geq\frac{1}{\big(2\pi(1-\overline{\alpha}_{t})\big)^{d/2}}\inf_{x:\|x-x_{0}\|_{2}\leq\epsilon}\exp\bigg(-\frac{\ltwo{y-\sqrt{\overline{\alpha}_{t}}x}^{2}}{2(1-\overline{\alpha}_{t})}\bigg)\\
 & \geq\frac{1}{\big(2\pi(1-\overline{\alpha}_{t})\big)^{d/2}}\inf_{x:\|x-x_{0}\|_{2}\leq\epsilon}\exp\bigg(-\frac{\ltwo{y-\sqrt{\overline{\alpha}_{t}}x_{0}}^{2}}{1-\overline{\alpha}_{t}}-\frac{\ltwo{\sqrt{\overline{\alpha}_{t}}x-\sqrt{\overline{\alpha}_{t}}x_{0}}^{2}}{1-\overline{\alpha}_{t}}\bigg)\\
 & \geq\frac{1}{\big(2\pi(1-\overline{\alpha}_{t})\big)^{d/2}}\exp\bigg(-\frac{\ltwo{y-\sqrt{\overline{\alpha}_{t}}x_{0}}^{2}}{1-\overline{\alpha}_{t}}-\frac{\epsilon^{2}}{1-\overline{\alpha}_{t}}\bigg)\\
 & \geq\frac{1}{\big(2\pi(1-\overline{\alpha}_{t})\big)^{d/2}}\exp\bigg(-c_5^2\theta(y) d\log T-\frac{1}{T^{c_{0}}}\frac{1}{1-\overline{\alpha}_{t}}\bigg)\\
 & \geq\frac{1}{\big(2\pi(1-\overline{\alpha}_{t})\big)^{d/2}}\exp\big(-2c_5^2\theta(y) d\log T\big),
\end{align*}
where the second line is due to the elementary inequality $\|a+b\|_2^2 \leq 2\|a\|_2^2+2\|b\|_2^2$,  
the penultimate line relies on \eqref{eqn:lemma-x0-claim}, 
and the last line holds true since $1-\overline{\alpha}_{t}\geq1-\alpha_{1}=1/T^{c_{0}}$ (see \eqref{eqn:alpha-t}).  
Substitution of the above two displays into \eqref{eq:UB1-P-X0-Xt}, we arrive at
\begin{align}
	\mathbb{P}(X_{0}\in\mathcal{E}\mymid X_{t}=y) & \le \exp\big(-2.5c_5^2\theta(y) d\log T \big)\cdot\frac{1}{\mathbb{P}(\|X_{0}-x_{0}\|_{2}\leq\epsilon)} \notag\\
 & \le\exp\big(-2.5c_5^2\theta d\log T \big)\cdot\left(T^{2\theta(y) +c_{0}/2}\right)^{d} \notag\\
 & \le\exp\big(-(2.5c_5^2\theta(y) -2\theta(y)-c_{0}/2)d\log T\big), 
\end{align}
where the second inequality invokes \eqref{eqn:lemma-x0-claim-2}. 
Substituting this into \eqref{eq:UB-P-set-E} and recalling the distribution \eqref{eq:pX-properties} of $X$, 
we arrive at
\begin{align*}
	\mathbb{P}\Big(\ltwo{\sqrt{\overline{\alpha}_{t}}X-y}>5 c_5\sqrt{\theta(y)d(1-\overline{\alpha}_{t})\log T}\Big) 
	& \leq \exp\big(-(2.5c_5^2\theta(y)-2\theta(y)-c_{0}/2)d\log T\big)\\
	& \leq \exp\big(- c_5^2\theta(y)d\log T\big),
	%\leq\exp(-\Omega(d\log T)),
\end{align*}
with the proviso that $c_5\geq 2$ and $c_6\geq c_0$ (so that $\theta(y)\geq c_6\geq c_0$). 
This concludes the proof of the advertised result \eqref{eq:P-xt-X0-124} when $c_5 \ge 2$ and $c_6\geq 2c_R+c_0$, 
as long as Claim~\ref{eq:claim-123} can be justified.

With the above result in place, it then follows that
\begin{align*}
&\mathbb{E}\left[\big\| x_{t}-\sqrt{\overline{\alpha}_{t}}X_{0}\big\|_{2}\,\big|\,X_{t}=x_{t}\right] \\
	&\quad \leq5c_5\sqrt{\theta(y) d(1-\overline{\alpha}_{t})\log T}+\mathbb{E}\left[\big\| x_{t}-\sqrt{\overline{\alpha}_{t}}X_{0}\big\|_{2}\ind\big\{\|x_{t}-\sqrt{\overline{\alpha}_{t}}X_{0}\|_{2}\geq5c_5\sqrt{\theta(y) d(1-\overline{\alpha}_{t})\log T}\big\}\,\Big|\,X_{t}=x_{t}\right]\\
 &\quad \leq5c_5\sqrt{\theta(y) d(1-\overline{\alpha}_{t})\log T}+\int_{5c_5\sqrt{\theta(y) d(1-\overline{\alpha}_{t})\log T}}^{\infty}\mathbb{P}\big(\|x_{t}-\sqrt{\overline{\alpha}_{t}}x_{0}\|_{2}\geq\tau\mymid X_{t}=x_{t}\big)\mathrm{d}\tau\\
 &\quad \leq5c_5\sqrt{\theta(y) d(1-\overline{\alpha}_{t})\log T}+\int_{5c_5\sqrt{\theta(y) d(1-\overline{\alpha}_{t})\log T}}^{\infty}\exp\bigg(-\frac{\tau^{2}}{25(1-\overline{\alpha}_{t})}\bigg)\mathrm{d}\tau\\
 &\quad \leq5c_5\sqrt{\theta(y) d(1-\overline{\alpha}_{t})\log T}+\exp\big(-c_5^{2} \theta(y) d\log T\big)\\
 &\quad \leq6c_5\sqrt{\theta(y) d(1-\overline{\alpha}_{t})\log T},
\end{align*}
as claimed in \eqref{eq:E-xt-X0} by taking $c_5 = 2$. 
The proofs for \eqref{eq:E2-xt-X0}, \eqref{eq:E3-xt-X0} and \eqref{eq:E4-xt-X0} follow from similar aguments and are hence omitted for the sake of brevity.

%\end{proof}

\paragraph{Proof of Claim~\ref{eq:claim-123}.}
We prove this claim by contradiction. 
Specifically, suppose instead that: 
for every $x$ obeying $\|\sqrt{\overline{\alpha}_{t}}x - y\|_2 \leq c_5 \sqrt{\theta(y) d(1-\overline{\alpha}_{t})\log T },$ we have
\begin{equation}
	\mathbb{P}(\|X_0 - x\|_2 \le \epsilon) \leq \bigg( \frac{\epsilon}{2T^{\theta(y)}R} \bigg)^d
	\qquad \text{with } 
	\epsilon = \frac{1}{T^{c_0/2}}.
	\label{eq:contradition-x-y}
\end{equation}
Clearly, the choice of $\epsilon$ ensures that $\epsilon< \frac{1}{2}\sqrt{d(1-\overline{\alpha}_t)\log T}$.  
In the following, we would like to show that this assumption leads to contradiction.

First of all, let us look at $p_{X_{t}}$, which obeys
\begin{align}
\notag p_{X_{t}}(y) & =\int_{x}p_{X_{0}}(x)p_{X_{t}\mymid X_{0}}(y\mymid x)\mathrm{d}x\\
\notag & =\int_{x:\,\|\sqrt{\overline{\alpha}_{t}}x-y\|_{2}\geq c_5\sqrt{\theta(y) d(1-\overline{\alpha}_{t})\log T}}p_{X_{0}}(x)p_{X_{t}\mymid X_{0}}(y\mymid x)\mathrm{d}x\\
 & \qquad+\int_{x:\,\|\sqrt{\overline{\alpha}_{t}}x-y\|_{2}< c_5\sqrt{\theta(y) d(1-\overline{\alpha}_{t})\log T}}p_{X_{0}}(x)p_{X_{t}\mymid X_{0}}(y\mymid x)\mathrm{d}x.
%
% & \leq\int_{x:\|\sqrt{\overline{\alpha}_{t}}x-y\|_{2}\geq c_5\sqrt{\theta d(1-\overline{\alpha}_{t})\log T}}p_{X_{t}\mymid X_{0}}(y\mymid x)\mathrm{d}x
%	+\int_{x:\|\sqrt{\overline{\alpha}_{t}}x-y\|_{2}<c_5\sqrt{\theta d(1-\overline{\alpha}_{t})\log T}}p_{X_{0}}(x)\mathrm{d}x. 
	\label{eqn:contradiction-12}
\end{align} 
To further control \eqref{eqn:contradiction-12}, 
we make two observations:
\begin{itemize}
	\item[1)] The first term on the right-hand side of \eqref{eqn:contradiction-12} can be bounded by
\begin{align}
	& \notag\int_{x:\,\|\sqrt{\overline{\alpha}_{t}}x-y\|_{2}\geq c_5\sqrt{\theta(y)d(1-\overline{\alpha}_{t})\log T}}p_{X_{0}}(x)p_{X_{t}\mymid X_{0}}(y\mymid x)\mathrm{d}x\\
 & \qquad\le \sup_{z:\,\|z\|_{2}\geq c_5\sqrt{\theta(y) d(1-\overline{\alpha}_{t})\log T}}\frac{1}{\big(2\pi(1-\overline{\alpha}_{t})\big)^{d/2}}\exp\bigg(-\frac{\ltwo{z}^{2}}{2(1-\overline{\alpha}_{t})}\bigg) \notag\\
	& \qquad < \frac{1}{2} \exp\big(-\theta(y) d\log T\big),
\end{align}
provided that $c_5 \ge 2$ and $c_6 > 0$ is large enough (note that $\theta(y) \ge c_6$). 
Here, we have used $X_t\overset{\mathrm{(i)}}{=}\sqrt{\overline{\alpha}_t}X_0+\sqrt{1-\overline{\alpha}_t}W$ with $W\sim \mathcal{N}(0,I_d)$
as well as standard properties about Gaussian distributions.

	\item[2)] Regarding the second term on the right-hand side of \eqref{eqn:contradiction-12}, 
		let us construct an epsilon-net 
		$\mathcal{N}_{\epsilon} = \{z_i\}$ for the following set
		$$
			\big\{x : \|\sqrt{\overline{\alpha}_{t}}x - y\|_2 \leq c_5 \sqrt{\theta(y) d(1-\overline{\alpha}_{t})\log T} \text{ and } \ltwo{x} \leq R \big\},
		$$
		so that for each $x$ in this set, one can find a vector $z_i\in  \mathcal{N}_{\epsilon}$ such that $\|x - z_i\|_2\leq \epsilon$. 
		Clearly, we can choose $\mathcal{N}_{\epsilon}$ so that its cardinality obeys $|\mathcal{N}_{\epsilon}|\leq (2R/\epsilon)^d$.  
		Define $\mathcal{B}_i \defn \{x \mid \ltwo{x - z_i}\leq \epsilon\}$ for each $z_i\in \mathcal{N}_{\epsilon}$.  
		Armed with these sets, we can derive
\begin{align*}
	\int_{x:\|\sqrt{\overline{\alpha}_{t}}x-y\|_{2}<c_5\sqrt{\theta(y)d(1-\overline{\alpha}_{t})\log T}}p_{X_{0}}(x)p_{X_{t}\mymid X_{0}}(y\mymid x) \mathrm{d}x
	&\le \big(2\pi(1-\overline{\alpha}_{t})\big)^{-d/2}\sum_{i=1}^{|\mathcal{N}_{\epsilon}|}\mathbb{P}(X_{0}\in\mathcal{B}_{i}) \\
	& \le \big(2\pi(1-\overline{\alpha}_{t})\big)^{-d/2}\bigg(\frac{\epsilon}{2T^{2\theta(y)}R}\bigg)^{d}\bigg(\frac{ 2R}{\epsilon}\bigg)^{d} \\
	& < \frac{1}{2} \exp\big(-\theta(y) d\log T \big), 
\end{align*}
where the penultimate step comes from the assumption \eqref{eq:contradition-x-y}. 

\end{itemize}
The above results taken collectively lead to 
\begin{align}
	 p_{X_{t}}(y)  & < \exp\big(-\theta(y) d\log T\big),\label{eqn:contradiction}
\end{align} 
thus contradicting the definition 
%\eqref{eqn:choice-y-prelim} 
of $\theta(y)$.

Consequently, we have proven the existence of $x$ obeying 
$\|\sqrt{\overline{\alpha}_{t}}x - y\|_2 \leq c_5 \sqrt{\theta(y) d(1-\overline{\alpha}_{t})\log T }$ and 
\begin{equation*}
	\mathbb{P}(\|X_0 - x\|_2 \le \epsilon) > \bigg( \frac{\epsilon}{2T^{\theta(y)}R} \bigg)^d 
	\geq \bigg( \frac{\epsilon}{T^{2\theta(y)}} \bigg)^d,
\end{equation*}
provided that $\theta(y)\geq c_6 \geq 2c_R$. 
This completes the proof of Claim~\ref{eq:claim-123}. 

\subsection{Proof of Lemma~\ref{lem:cond-covariance}}
\label{sec:proof-lem-cond-covariance}

\paragraph{Part (a).}

%\paragraph{Proof of claim \eqref{eq:monotonicity-Sigma-alpha}.} 

Before proceeding, we abuse the notation by introducing the following convenient notation: 
%: for any $\overline{\alpha},\overline{\alpha}'\in (0,1)$, let 
%
\[
X_{\overline{\alpha}}=\sqrt{\overline{\alpha}}X_{0}+\sqrt{1-\overline{\alpha}}Z\qquad\text{and}\qquad X_{\overline{\alpha}'}=\sqrt{\overline{\alpha}'}X_{0}+\sqrt{1-\overline{\alpha}'}Z,
\]
where we recall that $X_0\sim \Pdata$ and $Z\sim \mathcal{N}(0,I_d)$ are independently generated. 
Also, when $\frac{|\overline{\alpha}' - \overline{\alpha}|}{\overline{\alpha}(1-\overline{\alpha})} \lesssim \frac{1}{d\log T}$, 
we make note of several properties that can be easily verified:
\begin{subequations}
	\label{eq:alpha-alphaprime-properties}
\begin{align} 
	\alpha \asymp \alpha' \qquad \text{and} \qquad 1-\alpha \asymp 1-\alpha'; \\ 
\left(\frac{1-\overline{\alpha}'}{1-\overline{\alpha}}\right)^{d/2}=\left(1+\frac{\overline{\alpha}-\overline{\alpha}'}{1-\overline{\alpha}}\right)^{d/2}\lesssim \left(1+O\Big(\frac{1}{d\log T}\Big)\right)^{d/2} \lesssim 1,
	\quad\text{and}\quad \left(\frac{1-\overline{\alpha}}{1-\overline{\alpha}'}\right)^{d/2}\lesssim1; \\
	\frac{|\overline{\alpha}'-\overline{\alpha}|}{\overline{\alpha}(1-\overline{\alpha})(1-\overline{\alpha}')}=\frac{1}{1-\overline{\alpha}}O\left(\frac{1}{d\log T}\right). 
\end{align}
\end{subequations}

Consider any $x'$ and let $$x = \sqrt{\overline{\alpha}/\overline{\alpha}'}x'.$$ 
Our first step is to demonstrate a certain equivalence result between $p_{X_{\overline{\alpha}'}}(x')$ and $p_{X_{\overline{\alpha}}}(x)$. Towards this end, a little algebra reveals that
\begin{align}
\frac{\|x'-\sqrt{\overline{\alpha}'}x_{0}\|_{2}^{2}}{2(1-\overline{\alpha}')} & =\frac{\|x-\sqrt{\overline{\alpha}}x_{0}\|_{2}^{2}}{2(1-\overline{\alpha})}+\frac{\overline{\alpha}'-\overline{\alpha}}{\overline{\alpha}(1-\overline{\alpha}')(1-\overline{\alpha})}\frac{\|x-\sqrt{\overline{\alpha}}x_{0}\|_{2}^{2}}{2}, 
	\label{eq:x-xprime-connection}
\end{align}
and as a result,
\begin{align}
	&p_{X_{\overline{\alpha}'}}(x')  ={\displaystyle \int} \Pdata(x_{0})\frac{1}{\big(2\pi(1-\overline{\alpha}')\big)^{d/2}}\exp\bigg(-\frac{\|x'-\sqrt{\overline{\alpha}'}x_{0}\|_{2}^{2}}{2(1-\overline{\alpha}')}\bigg)\mathrm{d}x_{0} \notag\\
 &~~ =\left(\frac{1-\overline{\alpha}}{1-\overline{\alpha}'}\right)^{d/2}{\displaystyle \int}\Pdata(x_{0})\frac{1}{\big(2\pi(1-\overline{\alpha})\big)^{d/2}}\exp\bigg(-\frac{\|x-\sqrt{\overline{\alpha}}x_{0}\|_{2}^{2}}{2(1-\overline{\alpha})}-\frac{(\overline{\alpha}'-\overline{\alpha})\|x-\sqrt{\overline{\alpha}}x_{0}\|_{2}^{2}}{2\overline{\alpha}(1-\overline{\alpha})(1-\overline{\alpha}')}\bigg)\mathrm{d}x_{0}.
	\label{eq:pX-alpha-prime-display}
\end{align}
Combine this with the assumption $\frac{|\overline{\alpha}' - \overline{\alpha}|}{\overline{\alpha}(1-\overline{\alpha})} \lesssim \frac{1}{d\log T}$ and the properties \eqref{eq:alpha-alphaprime-properties} to yield 
\begin{align}
	& p_{X_{\overline{\alpha}'}}(x') \asymp{\displaystyle \int}\Pdata(x_{0})\frac{1}{\big(2\pi(1-\overline{\alpha})\big)^{d/2}}\exp\bigg(-\left(1+O\Big(\frac{1}{d\log T}\Big)\right)\frac{\|x-\sqrt{\overline{\alpha}}x_{0}\|_{2}^{2}}{2(1-\overline{\alpha})}\bigg)\mathrm{d}x_{0} 
	\notag\\
 &\quad =\left({\displaystyle \int}_{x_{0}\in\mathcal{E}}+{\displaystyle \int}_{x_{0}\notin\mathcal{E}}\right)\Pdata(x_{0})\frac{1}{\big(2\pi(1-\overline{\alpha})\big)^{d/2}}\exp\bigg(-\left(1+O\Big(\frac{1}{d\log T}\Big)\right)\frac{\|x-\sqrt{\overline{\alpha}}x_{0}\|_{2}^{2}}{2(1-\overline{\alpha})}\bigg)\mathrm{d}x_{0},
	\label{eq:pX-alpha-decompose-E}
\end{align}
where 
\[
\mathcal{E}\coloneqq\left\{ x_{0}\mid\frac{1}{\big(2\pi(1-\overline{\alpha})\big)^{d/2}}\exp\bigg(-\left(1+O\Big(\frac{1}{d\log T}\Big)\right)\frac{\|x-\sqrt{\overline{\alpha}}x_{0}\|_{2}^{2}}{2(1-\overline{\alpha})}\bigg)\geq\exp\left(-4c_6d\log T\right)\right\} 
\]
with the constant $c_6>0$ defined in Lemma~\ref{lem:x0}.
Given our assumption that $1-\overline{\alpha}\geq \frac{1}{T^{c_0}}$ and the fact $c_0\leq c_6$, 
a little algebra leads to
\begin{align*}
\frac{\|x-\sqrt{\overline{\alpha}}x_{0}\|_{2}^{2}}{2(1-\overline{\alpha})}&\leq 12c_{6}d\log T,
\qquad \forall x_0\in \mathcal{E}, 
\end{align*}
and as a consequence, 
\begin{align*}
 & {\displaystyle \int}_{x_{0}\in\mathcal{E}}\Pdata(x_{0})\frac{1}{\big(2\pi(1-\overline{\alpha})\big)^{d/2}}\exp\bigg(-\left(1+O\Big(\frac{1}{d\log T}\Big)\right)\frac{\|x-\sqrt{\overline{\alpha}}x_{0}\|_{2}^{2}}{2(1-\overline{\alpha})}\bigg)\mathrm{d}x_{0}\\
 & \qquad\asymp{\displaystyle \int}_{x_{0}\in\mathcal{E}}\Pdata(x_{0})\frac{1}{\big(2\pi(1-\overline{\alpha})\big)^{d/2}}\exp\bigg(-\frac{\|x-\sqrt{\overline{\alpha}}x_{0}\|_{2}^{2}}{2(1-\overline{\alpha})}\bigg)\mathrm{d}x_{0} \\
	& \qquad= p_{X_{\overline{\alpha}}}(x) - {\displaystyle \int}_{x_{0}\notin\mathcal{E}}\Pdata(x_{0})\frac{1}{\big(2\pi(1-\overline{\alpha})\big)^{d/2}}\exp\bigg(-\frac{\|x-\sqrt{\overline{\alpha}}x_{0}\|_{2}^{2}}{2(1-\overline{\alpha})}\bigg)\mathrm{d}x_{0}.
	%=p_{X_{\overline{\alpha}}}(x).
\end{align*}
Regarding those $x_{0}\notin\mathcal{E}$, one can easily derive
\begin{align*}
 & {\displaystyle \int}_{x_{0}\notin\mathcal{E}}\Pdata(x_{0})\frac{1}{\big(2\pi(1-\overline{\alpha})\big)^{d/2}}\exp\bigg(-\left(1+O\Big(\frac{1}{d\log T}\Big)\right)\frac{\|x-\sqrt{\overline{\alpha}}x_{0}\|_{2}^{2}}{2(1-\overline{\alpha})}\bigg)\mathrm{d}x_{0}\\
 & \qquad\leq\exp\left(-4c_{6}d\log T\right){\displaystyle \int}\Pdata(x_{0})\mathrm{d}x_{0}=\exp\left(-4c_{6}d\log T\right);
\end{align*}
similarly, it can also be easily verified that (which we omit here for conciseness)
\begin{align*}
 & {\displaystyle \int}_{x_{0}\notin\mathcal{E}}\Pdata(x_{0})\frac{1}{\big(2\pi(1-\overline{\alpha})\big)^{d/2}}\exp\bigg(-\frac{\|x-\sqrt{\overline{\alpha}}x_{0}\|_{2}^{2}}{2(1-\overline{\alpha})}\bigg)\mathrm{d}x_{0}\leq\exp\left(-1.5c_{6}d\log T\right).
\end{align*}
Combine the above results with \eqref{eq:pX-alpha-decompose-E} to deduce that
\begin{align}
p_{X_{\overline{\alpha}'}}(x') & \asymp{\displaystyle \int}\Pdata(x_{0})\frac{1}{\big(2\pi(1-\overline{\alpha})\big)^{d/2}}\exp\bigg(-\frac{\|x-\sqrt{\overline{\alpha}}x_{0}\|_{2}^{2}}{2(1-\overline{\alpha})}\bigg)\mathrm{d}x_{0}+O\big(\exp\left(-c_{6}d\log T\right)\big) \notag\\
 & =p_{X_{\overline{\alpha}}}(x)+O\big(\exp\left(-1.5c_{6}d\log T\right)\big) \label{eq:pXalpha-pXalphaprime-equiv-0} \\
	&\asymp p_{X_{\overline{\alpha}}}(x),
	\label{eq:pXalpha-pXalphaprime-equiv}
\end{align}
where the last line is valid provided that 
%$\frac{|\overline{\alpha}' - \overline{\alpha}|}{\overline{\alpha}(1-\overline{\alpha})} \lesssim \frac{1}{d\log T}$, $1-\overline{\alpha}\geq \frac{1}{T^{c_0}}$, and 
$-\log p_{X_{\overline{\alpha}}}(x)\leq c_{6}d\log T$.

Based on the above results, we can further demonstrate another equivalence result 
concerning $p_{X_{0}|X_{\overline{\alpha}'}}$ and $p_{X_{0}|X_{\overline{\alpha}}}$: 
if $-\log p_{X_{\overline{\alpha}}}(x)\leq c_{6}d\log T$ holds and $$\|x' - \sqrt{\overline{\alpha}'}x_0\|_2^2 \asymp \|x - \sqrt{\overline{\alpha}}x_0\|_2^2 \lesssim d(1-\overline{\alpha})\log T,$$ 
then one has
\begin{align}
p_{X_{0}|X_{\overline{\alpha}'}}(x_{0}\mymid x') & =\frac{\Pdata(x_{0})\frac{1}{(2\pi(1-\overline{\alpha}'))^{d/2}}\exp\big(-\frac{\|x'-\sqrt{\overline{\alpha}'}x_{0}\|_{2}^{2}}{2(1-\overline{\alpha}')}\big)}{p_{X_{\overline{\alpha}'}}(x')} \notag\\
 & \asymp\frac{\Pdata(x_{0})\frac{1}{(2\pi(1-\overline{\alpha}))^{d/2}}\exp\big(-\big(1+O\big(\frac{1}{d\log T}\big)\big)\frac{\|x-\sqrt{\overline{\alpha}}x_{0}\|_{2}^{2}}{2(1-\overline{\alpha})}\big)}{p_{X_{\overline{\alpha}}}(x)} \notag\\
 & \asymp\frac{\Pdata(x_{0})\frac{1}{(2\pi(1-\overline{\alpha}))^{d/2}}\exp\big(-\frac{\|x-\sqrt{\overline{\alpha}}x_{0}\|_{2}^{2}}{2(1-\overline{\alpha})}\big)}{p_{X_{\overline{\alpha}}}(x)}=p_{X_{0}|X_{\overline{\alpha}}}(x_{0}\mymid x).
	\label{eq:cond-prob-x0-xalpha-equiv}
\end{align}
%

%we would like to show that 
%$\Sigma_{\overline{\alpha}'}(x') \precsim \Sigma_{\overline{\alpha}}(x)$ for $-\log p_{X_{\overline{\alpha}'}}(x') \lesssim d\log T$ and $x = \sqrt{\overline{\alpha}/\overline{\alpha}'}x'$.

%
%which can be shown through a similar argument as above, and is omitted here for simplicity.

Now, we are ready to analyze the conditional covariance matrices of interest. 
Recalling that 
\[
\big(x'-\sqrt{\overline{\alpha}'}x_{0}\big)\big(x'-\sqrt{\overline{\alpha}'}x_{0}\big)^{\top}=\frac{\overline{\alpha}'(1-\overline{\alpha})}{\overline{\alpha}(1-\overline{\alpha}')}\big(x-\sqrt{\overline{\alpha}}x_{0}\big)\big(x-\sqrt{\overline{\alpha}}x_{0}\big)^{\top},
\]
we can deduce that
\begin{align*}
\Sigma_{\overline{\alpha}'}(x') & =\mathsf{Cov}\left(Z\mymid\sqrt{\overline{\alpha}'}X_{0}+\sqrt{1-\overline{\alpha}'}Z=x'\right)=\mathsf{Cov}\Bigg(\frac{x'-\sqrt{\overline{\alpha}'}X_{0}}{\sqrt{1-\overline{\alpha}'}}\mymid\sqrt{\overline{\alpha}'}X_{0}+\sqrt{1-\overline{\alpha}'}Z=x'\Bigg)\\
 & =\frac{\overline{\alpha}'(1-\overline{\alpha})}{\overline{\alpha}(1-\overline{\alpha}')}\mathsf{Cov}\Bigg(\frac{x-\sqrt{\overline{\alpha}}X_{0}}{\sqrt{1-\overline{\alpha}}}\mymid\sqrt{\overline{\alpha}'}X_{0}+\sqrt{1-\overline{\alpha}'}Z=x'\Bigg)\\
 & \overset{\mathrm{(i)}}{\preceq}C_{0}\mathsf{Cov}\Bigg(\frac{x-\sqrt{\overline{\alpha}}X_{0}}{\sqrt{1-\overline{\alpha}}}\mymid\sqrt{\overline{\alpha}'}X_{0}+\sqrt{1-\overline{\alpha}'}Z=x'\Bigg)\\
 & \overset{(\mathrm{ii})}{=}C_{0}\inf_{\mu}\mathbb{E}\Bigg[\bigg(\frac{x-\sqrt{\overline{\alpha}}X_{0}}{\sqrt{1-\overline{\alpha}}}-\mu(x')\bigg)\bigg(\frac{x-\sqrt{\overline{\alpha}}X_{0}}{\sqrt{1-\overline{\alpha}}}-\mu(x')\bigg)^{\top}\bigg\}\mymid\sqrt{\overline{\alpha}'}X_{0}+\sqrt{1-\overline{\alpha}'}Z=x'\Bigg]\\
 & \overset{\mathrm{(iii)}}{\preceq}C_{0}\mathbb{E}\Bigg[\bigg(\frac{x-\sqrt{\overline{\alpha}}X_{0}}{\sqrt{1-\overline{\alpha}}}-\mu'(x)\bigg)\bigg(\frac{x-\sqrt{\overline{\alpha}}X_{0}}{\sqrt{1-\overline{\alpha}}}-\mu'(x)\bigg)^{\top}\ind\bigg\{\frac{\big\| x-\sqrt{\overline{\alpha}}X_{0}\big\|_{2}^{2}}{d\big(1-\overline{\alpha}\big)\log T}\lesssim1\bigg\}\mymid\sqrt{\overline{\alpha}'}X_{0}+\sqrt{1-\overline{\alpha}'}Z=x'\Bigg]\\
 & \qquad+C_{2}\exp\big(-C_{1}d\log T\big)I_{d}\\
 & \overset{\mathrm{(iv)}}{\preceq}C_{3}\underset{\eqqcolon\,\widetilde{\Sigma}_{\overline{\alpha}}(x)}{\underbrace{\mathbb{E}\Bigg[\bigg(\frac{x-\sqrt{\overline{\alpha}}X_{0}}{\sqrt{1-\overline{\alpha}}}-\mu'(x)\bigg)\bigg(\frac{x-\sqrt{\overline{\alpha}}X_{0}}{\sqrt{1-\overline{\alpha}}}-\mu'(x)\bigg)^{\top}\ind\bigg\{\frac{\big\| x-\sqrt{\overline{\alpha}}X_{0}\big\|_{2}^{2}}{d\big(1-\overline{\alpha}\big)\log T}\lesssim1\bigg\}\mymid\sqrt{\overline{\alpha}}X_{0}+\sqrt{1-\overline{\alpha}}Z=x\Bigg]}}\\
 & \qquad+C_{2}\exp\big(-C_{1}d\log T\big)I_{d}
%\\
% & \overset{\mathrm{(v)}}{\preceq}C_{2}\mathbb{E}\Bigg[\bigg(\frac{x-\sqrt{\overline{\alpha}}X_{0}}{\sqrt{1-\overline{\alpha}}}-\mu'(x)\bigg)\bigg(\frac{x-\sqrt{\overline{\alpha}}X_{0}}{\sqrt{1-\overline{\alpha}}}-\mu'(x)\bigg)^{\top}\mymid\sqrt{\overline{\alpha}}X_{0}+\sqrt{1-\overline{\alpha}}Z=x\Bigg]+C_{2}\exp\big(-C_{1}d\log T\big)I_{d}\\
% & =C_{3}\mathsf{Cov}\Bigg(\frac{x-\sqrt{\overline{\alpha}}X_{0}}{\sqrt{1-\overline{\alpha}}}\mymid\sqrt{\overline{\alpha}}X_{0}+\sqrt{1-\overline{\alpha}}Z=x\Bigg)+C_{2}\exp\big(-C_{1}d\log T\big)I_{d}\\
%
% & =C_{3}\Sigma_{\overline{\alpha}}(x)+C_{2}\exp\big(-C_{1}d\log T\big)I_{d}
\end{align*}
for some universal constants $C_0,C_1,C_2,C_3>0$, 
where $$\mu'(x) \coloneqq \mathbb{E}\left[Z\mid\sqrt{\overline{\alpha}}X_{0}+\sqrt{1-\overline{\alpha}}Z=x\right].$$ 
Here, (i) follows since $\alpha\asymp \alpha'$ and $1-\alpha\asymp 1-\alpha'$ (cf.~\eqref{eq:alpha-alphaprime-properties}); 
(ii) holds since 
$\mathsf{Cov}\big(\frac{x-\sqrt{\overline{\alpha}}X_{0}}{\sqrt{1-\overline{\alpha}}}\mymid\sqrt{\overline{\alpha}'}X_{0}+\sqrt{1-\overline{\alpha}'}Z=x'\big)$ represents the error covariance associated with the minimum mean square error (MMSE) estimator for $Z$ 
given $\sqrt{\overline{\alpha}'}X_{0}+\sqrt{1-\overline{\alpha}'}Z=x'$; 
(iii) arises from Lemma~\ref{lem:x0} (particularly the high-probability bound \eqref{eq:P-xt-X0-124} stating that 
the probability of the event $\frac{\| x-\sqrt{\overline{\alpha}}X_{0}\|_{2}^{2}}{d(1-\overline{\alpha})\log T}\gg 1$ is exponentially small); 
and (iv) is an immediate consequence of \eqref{eq:cond-prob-x0-xalpha-equiv}. 
In particular, the matrix $\widetilde{\Sigma}_{\overline{\alpha}}(x)$ defined in the step (iv) obeys
\begin{align*}
\widetilde{\Sigma}_{\overline{\alpha}}(x) & \preceq\mathsf{Cov}\Bigg(\frac{x-\sqrt{\overline{\alpha}}X_{0}}{\sqrt{1-\overline{\alpha}}}\mymid\sqrt{\overline{\alpha}}X_{0}+\sqrt{1-\overline{\alpha}}Z=x\Bigg)=\Sigma_{\overline{\alpha}}(x),\\
\big\|\widetilde{\Sigma}_{\overline{\alpha}}(x)\big\| & \leq\mathbb{E}\Bigg(\bigg\|\frac{x-\sqrt{\overline{\alpha}}X_{0}}{\sqrt{1-\overline{\alpha}}}\bigg\|_{2}^{2}\ind\bigg\{\frac{\big\| x-\sqrt{\overline{\alpha}}X_{0}\big\|_{2}^{2}}{d\big(1-\overline{\alpha}\big)\log T}\lesssim1\bigg\}\mymid\sqrt{\overline{\alpha}}X_{0}+\sqrt{1-\overline{\alpha}}Z=x\Bigg)\lesssim d\log T, 
\end{align*}
provided that $-\log p_{X_{\overline{\alpha}}}(x)\leq c_{6}d\log T$. 
These results in turn imply that
\begin{align}
\big(\Sigma_{\overline{\alpha}'}(x')\big)^{2} & \preceq\Big(C_{3}\widetilde{\Sigma}_{\overline{\alpha}}(x)+C_{2}\exp\big(-C_{1}d\log T\big)I_{d}\Big)^{2}\preceq C_{3}^{2}\big(\Sigma_{\overline{\alpha}}(x)\big)^{2}+C_{4}\exp\big(-C_{5}d\log T\big)I_{d}
	\label{eq:pointwise-Sigma-order}
\end{align}
for some universal constants $C_4,C_5>0$, 
as long as $-\log p_{X_{\overline{\alpha}}}(x)\leq c_{6}d\log T$.

Treating $x'$ as a random vector with the same distribution as $\sqrt{\overline{\alpha}'}X_{0}+\sqrt{1-\overline{\alpha}'}Z$ --- 
so that $x$ is a random vector with the same distribution as $\sqrt{\overline{\alpha}}X_{0}+\sqrt{1-\overline{\alpha}}Z$ --- 
and taking expectation over $x'$ (and hence $x$) on both sides of \eqref{eq:pointwise-Sigma-order}, 
we arrive at
\begin{align}
 & \mathbb{E}\left[\Big(\Sigma_{\overline{\alpha}'}\big(\sqrt{\overline{\alpha}'}X_{0}+\sqrt{1-\overline{\alpha}'}Z\big)\Big)^{2}\right]=\mathop{\mathbb{E}}_{x'\sim p_{X_{\overline{\alpha}'}}}\left[\big(\Sigma_{\overline{\alpha}'}(x')\big)^{2}\right]\notag\\
 & =\mathop{\mathbb{E}}_{x'\sim p_{X_{\overline{\alpha}'}}}\left[\big(\Sigma_{\overline{\alpha}'}(x')\big)^{2}\ind\big\{-\log p_{X_{\overline{\alpha}}}(x)\leq c_{6}d\log T\big\}\right]+\mathop{\mathbb{E}}_{x'\sim p_{X_{\overline{\alpha}'}}}\left[\big(\Sigma_{\overline{\alpha}'}(x')\big)^{2}\ind\big\{-\log p_{X_{\overline{\alpha}}}(x)>c_{6}d\log T\big\}\right]\notag\\
 & \preceq C_{3}^{2}\mathop{\mathbb{E}}_{x\sim p_{X_{\overline{\alpha}}}}\left[\big(\Sigma_{\overline{\alpha}}(x)\big)^{2}\right]+C_{4}\exp\big(-C_{5}d\log T\big)I_{d}+\mathop{\mathbb{E}}_{x'\sim p_{X_{\overline{\alpha}'}}}\left[\big(\Sigma_{\overline{\alpha}'}(x')\big)^{2}\ind\big\{-\log p_{X_{\overline{\alpha}}}(x)>c_{6}d\log T\big\}\right],
	\label{eq:exp-cond-cov-UB1}
\end{align}
where we use $p_{X_{\overline{\alpha}'}}$ and $p_{X_{\overline{\alpha}}}$ to denote the distribution of $X_{\overline{\alpha}'}$ and $X_{\overline{\alpha}}$, respectively. 
To bound the last term in the last line of \eqref{eq:exp-cond-cov-UB1}, 
note that for any $x$ obeying $-\log p_{X_{\overline{\alpha}}}(x)>c_{6}d\log T$,
it follows from (\ref{eq:pXalpha-pXalphaprime-equiv-0}) that
\begin{align*}
p_{X_{\overline{\alpha}'}}(x') & \asymp p_{X_{\overline{\alpha}}}(x)+O\big(\exp(-1.5c_{6}d\log T)\big)=o\big(\exp(-c_{6}d\log T)\big),
\end{align*}
and hence
\begin{align*}
 & \mathop{\mathbb{E}}_{x'\sim p_{X_{\overline{\alpha}'}}}\left[\big(\Sigma_{\overline{\alpha}'}(x')\big)^{2}\ind\bigg\{\frac{-\log p_{X_{\overline{\alpha}}}(x)}{d\log T}>c_{6}\bigg\}\right]\preceq\mathop{\mathbb{E}}_{x'\sim p_{X_{\overline{\alpha}'}}}\left[\big\|\Sigma_{\overline{\alpha}'}(x')\big\|^{2}\ind\bigg\{\frac{-\log p_{X_{\overline{\alpha}'}}(x')}{d\log T}\geq c_{6}\bigg\}\right]I_{d}.
\end{align*}
Defining
\[
\theta(x')=\max\bigg\{\frac{-\log p_{X_{\overline{\alpha}'}}(x')}{d\log T},c_{6}\bigg\},
\]
we can invoke Lemma~\ref{lem:x0} with a little algebra to derive (details are omitted for brevity)
\begin{align*}
\mathop{\mathbb{E}}_{x'\sim p_{X_{\overline{\alpha}'}}}\left[\big\|\Sigma_{\overline{\alpha}'}(x')\big\|^{2}\ind\bigg\{\frac{-\log p_{X_{\overline{\alpha}'}}(x')}{d\log T}>c_{0}\bigg\}\right] & \leq\sum_{k=1}^{\infty}\mathop{\mathbb{E}}_{x'\sim p_{X_{\overline{\alpha}'}}}\left[\big\|\Sigma_{\overline{\alpha}'}(x')\big\|^{2}\ind\bigg\{2^{k-1}c_{6}\leq\theta(x')\leq2^{k}c_{6}\bigg\}\right]\\
 & \leq C_{6}\exp\left(-C_{7}d\log T\right)
\end{align*}
for some universal constants $C_6,C_7>0$, 
where we have made use of the basic fact that
\[
\big\|\Sigma_{\overline{\alpha}'}(x')\big\|\leq\mathbb{E}\left[\|ZZ^{\top}\|\mymid\sqrt{\overline{\alpha}'}X_{0}+\sqrt{1-\overline{\alpha}'}Z=x'\right]=\mathbb{E}\left[\bigg\|\frac{x'-\sqrt{\overline{\alpha}'}X_{0}}{\sqrt{1-\overline{\alpha}'}}\bigg\|_{2}^{2}\mymid\sqrt{\overline{\alpha}'}X_{0}+\sqrt{1-\overline{\alpha}'}Z=x'\right].
\]
%

% Here, the last inequality in \eqref{eq:exp-cond-cov-UB1} is valid since
%

Putting the preceding resutls together, we can conclude that
\begin{align*}
\mathbb{E}\left[\Big(\Sigma_{\overline{\alpha}'}\big(\sqrt{\overline{\alpha}'}X_{0}+\sqrt{1-\overline{\alpha}'}Z\big)\Big)^{2}\right] & \preceq C_{3}^{2}\mathop{\mathbb{E}}_{x\sim p_{X_{\overline{\alpha}}}}\left[\big(\Sigma_{\overline{\alpha}}(x)\big)^{2}\right]+\left\{ C_{4}\exp\big(-C_{5}d\log T\big)+C_{6}\exp\left(-C_{7}d\log T\right)\right\} I_{d}\\
 & \preceq C_{3}^{2}\mathbb{E}\left[\Big(\Sigma_{\overline{\alpha}}\big(\sqrt{\overline{\alpha}}X_{0}+\sqrt{1-\overline{\alpha}}Z\big)\Big)^{2}\right]+C_{8}\exp\big(-C_{9}d\log T\big)I_{d}
\end{align*}
for some universal constants $C_8,C_9>0$, as claimed.

%
%\begin{align*}
%\mathbb{E}_{x'\sim p_{X_{\overline{\alpha}'}}, -\log p_{X_{\overline{\alpha}'}}(x') \gtrsim d\log T}[\Sigma_{\overline{\alpha}'}(x')^2]
%\precsim \exp(-\Omega(d\log T)).
%\end{align*}
%%
%reveals that
%%
%\begin{align*}
%\mathbb{E}_{x'\sim p_{X_{\overline{\alpha}'}}}[\Sigma_{\overline{\alpha}'}(x')^2]
%\precsim \mathbb{E}_{x\sim p_{X_{\overline{\alpha}}}}[\Sigma_{\overline{\alpha}}(x)^2] + \exp(-C_4d\log T) I_d
%\end{align*}
%%
%for some universal constant $C_4>0$. 
%
%
%
%Here, we make use of the relation that
%The reverse direction can be shown in the same way, and is omitted here.

\paragraph{Part (b).}

First, we find it convenient to introduce another conditional covariance, defined as follows: 
%Define the conditional covariance
%of $X_{0}$ given $sX_{0}+\sqrt{s}Z$ as follows
%
%\begin{subequations}
%
\begin{align}
A_{s}(x) & \coloneqq\mathsf{Cov}\left(X_{0}\,|\,sX_{0}+\sqrt{s}Z=x\right),
	%\nonumber \\
%&=\mathbb{E}\big[X_{0}X_{0}^{\top}\,|\,sX_{0}+\sqrt{s}Z=x\big]-\mathbb{E}\big[X_{0}\,|\,sX_{0}+\sqrt{s}Z=x\big]\mathbb{E}\big[X_{0}\,|\,sX_{0}+\sqrt{s}Z=x\big]^{\top}.
	%&=\mathbb{E}\left[\left(X_{0}-\mathbb{E}\big[X_{0}\,|\,sX_{0}+\sqrt{s}Z\big]\right)\left(X_{0}-\mathbb{E}\big[X_{0}\,|\,sX_{0}+\sqrt{s}Z\big]\right)^{\top}\right],
\label{eq:defn-As-x}
\end{align}
which clearly satisfies
\begin{align}
\mathsf{Cov}\left(Z\,|\,sX_{0}+\sqrt{s}Z=x\right) & =\mathsf{Cov}\left(\frac{1}{\sqrt{s}}x-\sqrt{s}X_{0}\,|\,sX_{0}+\sqrt{s}Z=x\right)=sA_{s}(x).\label{eq:consequence-As-x-z}
\end{align}
It is easily seen that (by taking $s=\frac{\overline{\alpha}}{1-\overline{\alpha}}$)
\begin{align}
\Sigma_{\overline{\alpha}}(x) & =\mathsf{Cov}\Big(Z\,|\,\sqrt{\overline{\alpha}}X_{0}+\sqrt{1-\overline{\alpha}}Z=x\Big)
	= \frac{\overline{\alpha}}{1-\overline{\alpha}}A_{\frac{\overline{\alpha}}{1-\overline{\alpha}}}\Big(\frac{\sqrt{\overline{\alpha}}}{1-\overline{\alpha}}x\Big).\label{eq:defn-Sigma-s-2}
\end{align}

Let us single out a basic property about $A_s$ and $\Sigma_{\overline{\alpha}}$ that plays 
 an important role in the subsequent proof.  
First of all, it has been shown in previous work (see, e.g., \cite{eldan2020taming,el2022information}) that the time-differential of (\ref{eq:defn-As-x}) admits a simple form\footnote{While this result was originally established by \cite{eldan2020taming}
using stochastic localization, it can also be derived using an elementary
estimation-theoretic approach without introducing any SDEs (see  \cite{el2022information}).}
\begin{equation}
\mathrm{d}\mathbb{E}\big[A_{s}(sX_{0}+\sqrt{s}Z)\big]=-\mathbb{E}\big[\big(A_{s}(sX_{0}+\sqrt{s}Z)\big)^{2}\big]\mathrm{d}s.\label{eq:time-differential-As-1}
\end{equation}
Replacing $s$ with $\frac{\overline{\alpha}}{1-\overline{\alpha}}$
and using (\ref{eq:defn-Sigma-s}), we have
\[
A_{s}\big(sX_{0}+\sqrt{s}Z\big)=\frac{1-\overline{\alpha}}{\overline{\alpha}}\Sigma_{\overline{\alpha}}\big(\sqrt{\overline{\alpha}}X_{0}+\sqrt{1-\overline{\alpha}}Z\big),
\]
and hence (\ref{eq:time-differential-As-1}) immediately tells us that
\begin{equation}
\mathrm{d}\bigg(\frac{1-\overline{\alpha}}{\overline{\alpha}}\mathbb{E}\Big[\Sigma_{\overline{\alpha}}\big(\sqrt{\overline{\alpha}}X_{0}+\sqrt{1-\overline{\alpha}}Z\big)\Big]\bigg)=-\frac{(1-\overline{\alpha})^{2}}{\overline{\alpha}^{2}}\mathbb{E}\Big[\Big(\Sigma_{\overline{\alpha}}\big(\sqrt{\overline{\alpha}}X_{0}+\sqrt{1-\overline{\alpha}}Z\big)\Big)^{2}\Big]\mathrm{d}\Big(\frac{\overline{\alpha}}{1-\overline{\alpha}}\Big).\label{eq:SL-cov}
\end{equation}
%
%Second of all, for any $1>\overline{\alpha}>\overline{\alpha}'>0$ obeying 
%$\frac{|\overline{\alpha}' - \overline{\alpha}|}{\overline{\alpha}(1-\overline{\alpha})} \lesssim \frac{1}{d\log T}$, we claim that \yxc{TODO}
%%
%\begin{equation}
%%\left(\frac{1-\overline{\alpha}}{\overline{\alpha}}\right)^{2}
%	\mathbb{E}\left[\Big(\Sigma_{\overline{\alpha}}\big(\sqrt{\overline{\alpha}}X_{0}+\sqrt{1-\overline{\alpha}}Z\big)\Big)^{2}\right] + \exp(-\Omega(d\log T)) \asymp 
%	%\left(\frac{1-\overline{\alpha}'}{\overline{\alpha}'}\right)^{2}
%	\mathbb{E}\left[\Big(\Sigma_{\overline{\alpha}'}\big(\sqrt{\overline{\alpha}'}X_{0}+\sqrt{1-\overline{\alpha}'}Z\big)\Big)^{2}\right] + \exp(-\Omega(d\log T)).\label{eq:monotonicity-Sigma-alpha}
%\end{equation}
%%
%We shall justify this claim towards the end of this subsection. 

From now on, let us consider any $0<\overline{\alpha}_{\mathsf{l}}<\overline{\alpha}_{\mathsf{u}}<1$
obeying $\frac{\overline{\alpha}_{\mathsf{l}}}{1-\overline{\alpha}_{\mathsf{l}}}\leq\frac{\overline{\alpha}_{\mathsf{u}}}{1-\overline{\alpha}_{\mathsf{u}}}\leq\frac{4\overline{\alpha}_{\mathsf{l}}}{1-\overline{\alpha}_{\mathsf{l}}}$, 
and the monotonicity of $f(x)=\frac{x}{1-x}$ in $x$ gives
\[
\frac{\overline{\alpha}_{\mathsf{l}}}{1-\overline{\alpha}_{\mathsf{l}}}\leq\frac{\overline{\alpha}}{1-\overline{\alpha}}\leq\frac{4\overline{\alpha}_{\mathsf{l}}}{1-\overline{\alpha}_{\mathsf{l}}}\qquad\text{for any }\overline{\alpha}\in[\overline{\alpha}_{\mathsf{l}},\overline{\alpha}_{\mathsf{u}}].
\]
Use the positive semidefiniteness of the covariance matrix and the
fact $\mathrm{d}\big(\frac{\overline{\alpha}}{1-\overline{\alpha}}\big)=\frac{\mathrm{d}\overline{\alpha}}{(1-\overline{\alpha})^{2}}$
to derive
\begin{align}
 %& \frac{(1-\overline{\alpha}_{\mathsf{l}})^{2}}{16\overline{\alpha}_{\mathsf{l}}^{2}}{\displaystyle \int}_{\overline{\alpha}_{\mathsf{l}}}^{\overline{\alpha}_{\mathsf{u}}}\mathsf{Tr}\bigg(\mathbb{E}\Big[\Big(\Sigma_{\overline{\alpha}}\big(\sqrt{\overline{\alpha}}X_{0}+\sqrt{1-\overline{\alpha}}Z\big)\Big)^{2}\Big]\bigg)\frac{1}{(1-\overline{\alpha})^{2}}\mathrm{d}\overline{\alpha}\notag\nonumber \\
 & 
	%\qquad\leq
	{\displaystyle \int}_{\overline{\alpha}_{\mathsf{l}}}^{\overline{\alpha}_{\mathsf{u}}}\frac{(1-\overline{\alpha})^{2}}{\overline{\alpha}^{2}}\mathsf{Tr}\bigg(\mathbb{E}\Big[\Big(\Sigma_{\overline{\alpha}}\big(\sqrt{\overline{\alpha}}X_{0}+\sqrt{1-\overline{\alpha}}Z\big)\Big)^{2}\Big]\bigg)\frac{1}{(1-\overline{\alpha})^{2}}\mathrm{d}\overline{\alpha}\notag\nonumber \\
 & \qquad=-\int_{\overline{\alpha}_{\mathsf{l}}}^{\overline{\alpha}_{\mathsf{u}}}\frac{\mathrm{d}\Big(\frac{1-\overline{\alpha}}{\overline{\alpha}}\mathsf{Tr}\big(\mathbb{E}\big[\Sigma_{\overline{\alpha}}\big(\sqrt{\overline{\alpha}}X_{0}+\sqrt{1-\overline{\alpha}}Z\big)\big]\big)\Big)}{\mathrm{d}\overline{\alpha}}\mathrm{d}\overline{\alpha}\notag\nonumber \\
 & \qquad=\frac{1-\overline{\alpha}_{\mathsf{l}}}{\overline{\alpha}_{\mathsf{l}}}\mathsf{Tr}\Big(\mathbb{E}\big[\Sigma_{\overline{\alpha}_{\mathsf{l}}}\big(\sqrt{\overline{\alpha}_{\mathsf{l}}}X_{0}+\sqrt{1-\overline{\alpha}_{\mathsf{l}}}Z\big)\big]\Big)-\frac{1-\overline{\alpha}_{\mathsf{u}}}{\overline{\alpha}_{\mathsf{u}}}\mathsf{Tr}\Big(\mathbb{E}\big[\Sigma_{\overline{\alpha}_{\mathsf{u}}}\big(\sqrt{\overline{\alpha}_{\mathsf{u}}}X_{0}+\sqrt{1-\overline{\alpha}_{\mathsf{u}}}Z\big)\big]\Big)\geq0,\label{eq:int-Tr-cov-inequality-1}
\end{align}
where the penultimate line arises from (\ref{eq:SL-cov}). 
%The last line of (\ref{eq:int-Tr-cov-inequality-1}) follows since the left-hand side of the above display is non-negative; in other words,
%%
%\begin{equation}
%\frac{1-\overline{\alpha}}{\overline{\alpha}}\mathsf{Tr}\Big(\mathbb{E}\big[\Sigma_{\overline{\alpha}}\big(\sqrt{\overline{\alpha}}X_{0}+\sqrt{1-\overline{\alpha}}Z\big)\big]\Big)\text{ is non-increasing in }\overline{\alpha}.\label{eq:monotonicity-MMSE-Sigma}
%\end{equation}
%% 

Moreover, recalling that $\overline{\alpha}_{t}=\prod_{i=1}^{t}\alpha_{i}\geq\overline{\alpha}_{t+1}$, we have 
\begin{align}
 \notag & \left(\frac{1-\overline{\alpha}_{t}}{\overline{\alpha}_{t}}\right)^{2}\mathsf{Tr}\bigg(\mathbb{E}\Big[\Big(\Sigma_{\overline{\alpha}_{t}}\big(\sqrt{\overline{\alpha}_{t}}X_{0}+\sqrt{1-\overline{\alpha}_{t}}Z\big)\Big)^{2}\Big]\bigg)\cdot{\displaystyle \int}_{\overline{\alpha}_{t+1}}^{\overline{\alpha}_{t}}\frac{1}{(1-\overline{\alpha})^{2}}\mathrm{d}\overline{\alpha}\\
 \notag & \qquad\overset{\mathrm{(i)}}{\geq}\frac{1-\overline{\alpha}_{t+1}}{4\overline{\alpha}_{t+1}}\cdot\frac{1-\overline{\alpha}_{t}}{\overline{\alpha}_{t}}\cdot\mathsf{Tr}\bigg(\mathbb{E}\Big[\Big(\Sigma_{\overline{\alpha}_{t}}\big(\sqrt{\overline{\alpha}_{t}}X_{0}+\sqrt{1-\overline{\alpha}_{t}}Z\big)\Big)^{2}\Big]\bigg)\cdot\frac{\overline{\alpha}_{t}-\overline{\alpha}_{t+1}}{(1-\overline{\alpha}_{t})(1-\overline{\alpha}_{t+1})}\\
 \notag & \qquad\overset{\mathrm{(ii)}}{=}\frac{1-\alpha_{t+1}}{4\overline{\alpha}_{t+1}}\mathsf{Tr}\bigg(\mathbb{E}\Big[\Big(\Sigma_{\overline{\alpha}_{t}}\big(\sqrt{\overline{\alpha}_{t}}X_{0}+\sqrt{1-\overline{\alpha}_{t}}Z\big)\Big)^{2}\Big]\bigg)\\
 & \qquad\overset{\mathrm{(iii)}}{\geq}\frac{1-\alpha_{t}}{4\overline{\alpha}_{t}}\mathsf{Tr}\bigg(\mathbb{E}\Big[\Big(\Sigma_{\overline{\alpha}_{t}}\big(\sqrt{\overline{\alpha}_{t}}X_{0}+\sqrt{1-\overline{\alpha}_{t}}Z\big)\Big)^{2}\Big]\bigg) \label{eqn:brahms2}
\end{align}
for any $t\geq2$, where 
%(i) is a consequence of \eqref{eq:monotonicity-Sigma-alpha},
(i) results from \eqref{eqn:properties-alpha-proof}, 
(ii) is valid since $\overline{\alpha}_{t}-\overline{\alpha}_{t+1}=\overline{\alpha}_{t}(1-\alpha_{t+1})$, 
and (iii) uses the property $\alpha_{t}\leq1$ (cf.~\eqref{eqn:properties-alpha-proof-00}) 
and the fact that $1-\alpha_{t}\leq1-\alpha_{t+1}$ for $t\geq2$. 
Recall that (cf.~\eqref{eqn:properties-alpha-proof-1})
\[
0\leq \frac{\overline{\alpha}_{t}-\overline{\alpha}_{t+1}}{\overline{\alpha}_{t}(1-\overline{\alpha}_{t})}=\frac{1-\alpha_{t+1}}{1-\overline{\alpha}_{t}}\lesssim\frac{\log T}{T}\lesssim\frac{1}{d\log T}
\]
and $1-\overline{\alpha}_{t+1} \geq 1-\overline{\alpha}_{t} \geq 1-\overline{\alpha}_{1}= T^{-c_0}$. 
Taking inequality~\eqref{eqn:brahms2} together with Lemma~\ref{lem:cond-covariance}(a) yields 
\begin{align}
 \notag & {\displaystyle \int}_{\overline{\alpha}_{t+1}}^{\overline{\alpha}_{t}}\left(\frac{1-\overline{\alpha}}{\overline{\alpha}}\right)^{2}\mathsf{Tr}\bigg(\mathbb{E}\Big[\Big(\Sigma_{\overline{\alpha}}\big(\sqrt{\overline{\alpha}}X_{0}+\sqrt{1-\overline{\alpha}}Z\big)\Big)^{2}\Big]\bigg)\frac{1}{(1-\overline{\alpha})^{2}}\mathrm{d}\overline{\alpha}\\
 \notag & \qquad\geq C_{8}\left(\frac{1-\overline{\alpha}_{t}}{\overline{\alpha}_{t}}\right)^{2}\mathsf{Tr}\bigg(\mathbb{E}\Big[\Big(\Sigma_{\overline{\alpha}_{t}}\big(\sqrt{\overline{\alpha}_{t}}X_{0}+\sqrt{1-\overline{\alpha}_{t}}Z\big)\Big)^{2}\Big]\bigg)\cdot{\displaystyle \int}_{\overline{\alpha}_{t+1}}^{\overline{\alpha}_{t}}\frac{1}{(1-\overline{\alpha})^{2}}\mathrm{d}\overline{\alpha} -C_{10}\exp(-C_{9}d\log T){\displaystyle \int}_{\overline{\alpha}_{t+1}}^{\overline{\alpha}_{t}}\frac{1}{\overline{\alpha}^{2}}\mathrm{d}\overline{\alpha}\\
 \notag & \qquad \geq C_{8}\frac{1-\alpha_{t}}{4\overline{\alpha}_{t}}\mathsf{Tr}\bigg(\mathbb{E}\Big[\Big(\Sigma_{\overline{\alpha}_{t}}\big(\sqrt{\overline{\alpha}_{t}}X_{0}+\sqrt{1-\overline{\alpha}_{t}}Z\big)\Big)^{2}\Big]\bigg)-C_{10}\exp(-C_{9}d\log T)\frac{\overline{\alpha}_{t}-\overline{\alpha}_{t+1}}{\overline{\alpha}_{t+1}\overline{\alpha}_{t}}\\
 & \qquad\geq C_{8}\frac{1-\alpha_{t}}{4\overline{\alpha}_{t}}\left\{ \mathsf{Tr}\bigg(\mathbb{E}\Big[\Big(\Sigma_{\overline{\alpha}_{t}}\big(\sqrt{\overline{\alpha}_{t}}X_{0}+\sqrt{1-\overline{\alpha}_{t}}Z\big)\Big)^{2}\Big]\bigg)-C_{12}\exp(-C_{11}d\log T)\right\} \label{eqn:vive}
\end{align}
for some universal constants $C_{8},C_{9},C_{10},C_{11},C_{12}>0$,
where the first inequality invokes Lemma~\ref{lem:cond-covariance}(a),
and the last inequality is valid since $\frac{\overline{\alpha}_{t}-\overline{\alpha}_{t+1}}{\overline{\alpha}_{t+1}\overline{\alpha}_{t}}=\frac{1-\alpha_{t+1}}{\overline{\alpha}_{t+1}}\asymp\frac{1-\alpha_{t}}{\overline{\alpha}_{t}}$.

Combine inequality~\eqref{eqn:vive} with (\ref{eq:int-Tr-cov-inequality-1}) (with $\overline{\alpha}_{\mathsf{l}}=\overline{\alpha}_{t+1}$
and $\overline{\alpha}_{\mathsf{u}}=\overline{\alpha}_{t}$) to reach
\begin{align*}
 %& \frac{1-\alpha_{t}}{4\overline{\alpha}_{t}}\mathsf{Tr}\bigg(\mathbb{E}\Big[\Big(\Sigma_{\overline{\alpha}_{t}}\big(\sqrt{\overline{\alpha}_{t}}X_{0}+\sqrt{1-\overline{\alpha}_{t}}Z\big)\Big)^{2}\Big]\bigg)\\
	&	C_{8}\frac{1-\alpha_{t}}{4\overline{\alpha}_{t}}\left\{ \mathsf{Tr}\bigg(\mathbb{E}\Big[\Big(\Sigma_{\overline{\alpha}_{t}}\big(\sqrt{\overline{\alpha}_{t}}X_{0}+\sqrt{1-\overline{\alpha}_{t}}Z\big)\Big)^{2}\Big]\bigg)-C_{12}\exp(-C_{11}d\log T)\right\} \\
 & \leq\frac{1-\overline{\alpha}_{t+1}}{\overline{\alpha}_{t+1}}\mathsf{Tr}\Big(\mathbb{E}\big[\Sigma_{\overline{\alpha}_{t+1}}\big(\sqrt{\overline{\alpha}_{t+1}}X_{0}+\sqrt{1-\overline{\alpha}_{t+1}}Z\big)\big]\Big)-\frac{1-\overline{\alpha}_{t}}{\overline{\alpha}_{t}}\mathsf{Tr}\Big(\mathbb{E}\big[\Sigma_{\overline{\alpha}_{t}}\big(\sqrt{\overline{\alpha}_{t}}X_{0}+\sqrt{1-\overline{\alpha}_{t}}Z\big)\big]\Big).
\end{align*}
Multiplying both sides by $\frac{\overline{\alpha}_{t}}{1-\overline{\alpha}_{t}}$,
we are left with
\begin{align}
 %& \frac{1-\alpha_{t}}{4(1-\overline{\alpha}_{t})}\mathsf{Tr}\bigg(\mathbb{E}\Big[\Big(\Sigma_{\overline{\alpha}_{t}}\big(\sqrt{\overline{\alpha}_{t}}X_{0}+\sqrt{1-\overline{\alpha}_{t}}Z\big)\Big)^{2}\Big]\bigg)\nonumber\\
 & C_{8}\frac{1-\alpha_{t}}{4(1-\overline{\alpha}_{t})}\left\{ \mathsf{Tr}\bigg(\mathbb{E}\Big[\Big(\Sigma_{\overline{\alpha}_{t}}\big(\sqrt{\overline{\alpha}_{t}}X_{0}+\sqrt{1-\overline{\alpha}_{t}}Z\big)\Big)^{2}\Big]\bigg)-C_{12}\exp(-C_{11}d\log T)\right\} 
	\nonumber\\	
 & \leq\frac{1-\overline{\alpha}_{t+1}}{\alpha_{t+1}(1-\overline{\alpha}_{t})}\mathsf{Tr}\Big(\mathbb{E}\big[\Sigma_{\overline{\alpha}_{t+1}}\big(\sqrt{\overline{\alpha}_{t+1}}X_{0}+\sqrt{1-\overline{\alpha}_{t+1}}Z\big)\big]\Big)-\mathsf{Tr}\Big(\mathbb{E}\big[\Sigma_{\overline{\alpha}_{t}}\big(\sqrt{\overline{\alpha}_{t}}X_{0}+\sqrt{1-\overline{\alpha}_{t}}Z\big)\big]\Big)\nonumber\\
 & \leq\mathsf{Tr}\Big(\mathbb{E}\big[\Sigma_{\overline{\alpha}_{t+1}}\big(\sqrt{\overline{\alpha}_{t+1}}X_{0}+\sqrt{1-\overline{\alpha}_{t+1}}Z\big)\big]\Big)-\mathsf{Tr}\Big(\mathbb{E}\big[\Sigma_{\overline{\alpha}_{t}}\big(\sqrt{\overline{\alpha}_{t}}X_{0}+\sqrt{1-\overline{\alpha}_{t}}Z\big)\big]\Big)\nonumber\\
 & \qquad+\frac{32c_{1}\log T}{T}\mathsf{Tr}\Big(\mathbb{E}\big[\Sigma_{\overline{\alpha}_{t+1}}\big(\sqrt{\overline{\alpha}_{t+1}}X_{0}+\sqrt{1-\overline{\alpha}_{t+1}}Z\big)\big]\Big)\nonumber\\
 & \leq\mathsf{Tr}\Big(\mathbb{E}\big[\Sigma_{\overline{\alpha}_{t+1}}\big(\sqrt{\overline{\alpha}_{t+1}}X_{0}+\sqrt{1-\overline{\alpha}_{t+1}}Z\big)\big]\Big)-\mathsf{Tr}\Big(\mathbb{E}\big[\Sigma_{\overline{\alpha}_{t}}\big(\sqrt{\overline{\alpha}_{t}}X_{0}+\sqrt{1-\overline{\alpha}_{t}}Z\big)\big]\Big)+\frac{32c_{1}d\log T}{T}
	\label{eq:Tr-Z-cov-UB-telescope}
\end{align}
for any $t\geq2$. Here, the penultimate inequality in (\ref{eq:Tr-Z-cov-UB-telescope})
holds since (according to \eqref{eqn:properties-alpha-proof-00},
\eqref{eqn:properties-alpha-proof-1} and \eqref{eqn:properties-alpha-proof-3})
\[
\frac{1-\overline{\alpha}_{t+1}}{\alpha_{t+1}(1-\overline{\alpha}_{t})}-1=\frac{1-\alpha_{t+1}}{\alpha_{t+1}(1-\overline{\alpha}_{t})}\leq\frac{4(1-\alpha_{t+1})}{1-\overline{\alpha}_{t+1}}\leq\frac{32c_{1}\log T}{T};
\]
and the last inequality in (\ref{eq:Tr-Z-cov-UB-telescope}) follows
since, for any $\overline{\alpha}\in(0,1)$, 
\begin{equation}
\mathbb{E}\big[\Sigma_{\overline{\alpha}}\big(\sqrt{\overline{\alpha}}X_{0}+\sqrt{1-\overline{\alpha}}Z\big)\big]=\mathbb{E}\left[\mathsf{Cov}\big(Z\mid\sqrt{\overline{\alpha}}X_{0}+\sqrt{1-\overline{\alpha}}Z\big)\right]\preceq\mathsf{Cov}(Z)=I_{d}.\label{eq:cov-Z-UB}
\end{equation}

Consequently, sum over $t=2,\dots,T$ to form a telescopic
sum and derive
\begin{align*}
 & C_{8}\sum_{t=2}^{T}\frac{1-\alpha_{t}}{4(1-\overline{\alpha}_{t})}\mathsf{Tr}\bigg(\mathbb{E}\Big[\Big(\Sigma_{\overline{\alpha}_{t}}\big(\sqrt{\overline{\alpha}_{t}}X_{0}+\sqrt{1-\overline{\alpha}_{t}}Z\big)\Big)^{2}\Big]\bigg)\nonumber\\
 & \leq\sum_{t=2}^{T}\left\{ \mathsf{Tr}\Big(\mathbb{E}\big[\Sigma_{\overline{\alpha}_{t+1}}\big(\sqrt{\overline{\alpha}_{t+1}}X_{0}+\sqrt{1-\overline{\alpha}_{t+1}}Z\big)\big]\Big)-\mathsf{Tr}\Big(\mathbb{E}\big[\Sigma_{\overline{\alpha}_{t}}\big(\sqrt{\overline{\alpha}_{t}}X_{0}+\sqrt{1-\overline{\alpha}_{t}}Z\big)\big]\Big)\right\} +32c_{1}d\log T\\
 & \qquad+C_{8}\sum_{t=2}^{T}\frac{1-\alpha_{t}}{4(1-\overline{\alpha}_{t})}C_{12}\exp(-C_{11}d\log T)\\
 & \leq\mathsf{Tr}\Big(\mathbb{E}\big[\Sigma_{\overline{\alpha}_{T+1}}\big(\sqrt{\overline{\alpha}_{T+1}}X_{0}+\sqrt{1-\overline{\alpha}_{T+1}}Z\big)\big]\Big)+32c_{1}d\log T+2c_{1}C_{8}C_{12}\exp(-C_{11}d\log T)\log T\\
 & \leq34c_{1}d\log T,
\end{align*}
where the last inequality uses \eqref{eq:cov-Z-UB} and property~\eqref{eqn:properties-alpha-proof-1}.
This concludes the proof.

\subsection{Proof of Lemma~\ref{lem:KL-T}}
\label{sec:proof-lem-KL-T}

Recognizing that $Y_T\sim \mathcal{N}(0, I_d)$ and that
$X_T \overset{\mathrm{d}}{=} \sqrt{\overline{\alpha}_T} X_0 + \sqrt{1 - \overline{\alpha}_T}\,\overline{W}_t$ with $\overline{W}_t\sim \mathcal{N}(0, I_d)$ (independent from $X_0$),  
one has 
\begin{align}
\notag\mathsf{KL}(p_{X_{T}}\parallel p_{Y_{T}}) & =\int p_{X_{T}}(x)\log\frac{p_{X_{T}}(x)}{p_{Y_{T}}(x)}\diff x\\
\notag & \stackrel{(\text{i})}{=}\int p_{X_{T}}(x)\log\frac{\int_{y:\|y\|_{2}\leq\sqrt{\overline{\alpha}_{T}}T^{c_{R}}}p_{\sqrt{\overline{\alpha}_{T}}X_{0}}(y)p_{\sqrt{1-\overline{\alpha}_{T}}\,\overline{W}_{t}}(x-y)\diff y}{p_{Y_{T}}(x)}\diff x\\
\notag & \leq\int p_{X_{T}}(x)\log\frac{\sup_{y:\|y\|_{2}\leq\sqrt{\overline{\alpha}_{T}}T^{c_{R}}}p_{\sqrt{1-\overline{\alpha}_{T}}\,\overline{W}_{t}}(x-y)}{p_{Y_{T}}(x)}\diff x\\
\notag & =\int p_{X_{T}}(x)\Bigg(-d/2\log(1-\overline{\alpha}_{T})+\sup_{y:\|y\|_{2}\leq\sqrt{\overline{\alpha}_{T}}T^{c_{R}}}\bigg(-\frac{\|x-y\|_{2}^{2}}{2(1-\overline{\alpha}_{T})}+\frac{\|x\|_{2}^{2}}{2}\Bigg)\diff x\\
\notag & \stackrel{(\text{ii})}{\leq}\int p_{X_{T}}(x)\bigg(-d/2\log(1-\overline{\alpha}_{T})+\|x\|_{2}\sup_{y:\|y\|_{2}\leq\sqrt{\overline{\alpha}_{T}}T^{c_{R}}}\frac{\|y\|_{2}}{1-\overline{\alpha}_{T}}\bigg)\diff x\\
\notag & \leq-d/2\log(1-\overline{\alpha}_{T})+\frac{\sqrt{\overline{\alpha}_{T}}T^{c_{R}}}{2(1-\overline{\alpha}_{T})}\mathbb{E}\left[\|X_{T}\|_{2}\right]\\
 & \stackrel{(\text{iii})}{\lesssim}\overline{\alpha}_{T}d+\frac{\sqrt{\overline{\alpha}_{T}}T^{c_{R}}}{2(1-\overline{\alpha}_{T})}\left(\sqrt{\overline{\alpha}_{T}}T^{c_{R}}+\sqrt{d}\right)\stackrel{(\text{iv})}{\lesssim}\frac{1}{T^{200}},\label{eqn:KL-T-123}
\end{align}
where $(\text{i})$ arises from the assumption that $\ltwo{X_0} \leq T^{c_R}$, 
(ii) applies the Cauchy-Schwarz inequality,  (iii) holds true since
\[
\mathbb{E}\left[\|X_{T}\|_{2}\right]\leq\sqrt{\overline{\alpha}_{T}}\|X_{0}\|_{2}+\mathbb{E}\left[\|\overline{W}_{t}\|_{2}\right]\leq\sqrt{\overline{\alpha}_{T}}T^{c_{R}}+\sqrt{\mathbb{E}\left[\|\overline{W}_{t}\|_{2}^{2}\right]}\leq\sqrt{\overline{\alpha}_{T}}T^{c_{R}}+\sqrt{d},
\]
and (iv) makes use of \eqref{eqn:properties-alpha-proof-alphaT} given that $c_2\geq 1000$.
The proof is thus completed by invoking the Pinsker inequality \citep[Lemma~2.5]{tsybakov2009introduction}.

\section{Proof of auxiliary lemmas}
\label{sec:proof-lem-auxiliary-ode}

%%%%%%%%%%%%%%%%%%%%%%%%%%%%%%%%%%%%%%%%%%%%%%%%%%%%%%%%%%%%%%%%%%%%%%

\subsection{Proof of Lemma~\ref{lem:main-ODE}}
\label{sec:proof-lem:main-ODE}

%The proof of relation~\eqref{eq:xt_up} follows immediately from the proof of relation~\eqref{eq:xt}.
%So we will mainly focus on the latter two claims.

\subsubsection{Proof of relations~\eqref{eq:xt_up} and \eqref{eq:xt}}

Recall the definition of $\phi_t$ and $\phi_t^{\star}$ in \eqref{defn:phit-x}, 
and introduce the following vector: 
\begin{align}
	u &\coloneqq x - \phi_t(x) = x - \phi_t^{\star}(x) + \phi_t^{\star}(x) - \phi_t(x)  \notag\\
	&= \frac{1-\alpha_{t}}{2(1-\overline{\alpha}_{t})} \int_{x_0} \big(x - \sqrt{\overline{\alpha}_{t}}x_0\big) p_{X_0 \mymid X_{t}}(x_0 \,|\, x) \mathrm{d} x_0 - 
	\frac{1-\alpha_{t}}{2}\big(s_t(x) - s_t^{\star}(x)\big).
	\label{eq:defn-u-Lemma-main-ODE}
\end{align}
The proof is composed of the following steps.

\paragraph{Step 1: decomposing $p_{\sqrt{\alpha_{t}}X_{t-1}}\big(\phi_{t}(x)\big)/p_{X_{t}}(x)$.} 
Recognizing that 
\begin{equation}
	X_t \overset{\mathrm{d}}{=} \sqrt{\overline{\alpha}_t}X_0+ \sqrt{1-\overline{\alpha}_t}\,W 
	\qquad \text{with }W\sim \mathcal{N}(0,I_d)
	\label{eq:Xt-dist-proof1}
\end{equation}
and making use of the Bayes rule, we can express the conditional distribution  as
\begin{align}
\label{eqn:bayes}
p_{X_{0}\mymid X_{t}}(x_{0}\,|\,x)=\frac{p_{X_{0}}(x_{0})}{p_{X_{t}}(x)}p_{X_{t}\mymid X_{0}}(x\,|\,x_{0})=\frac{p_{X_{0}}(x_{0})}{p_{X_{t}}(x)}\cdot\frac{1}{\big(2\pi(1-\overline{\alpha}_{t})\big)^{d/2}}\exp\bigg(-\frac{\big\| x-\sqrt{\overline{\alpha}_{t}}x_{0}\big\|_{2}^{2}}{2(1-\overline{\alpha}_{t})}\bigg).	
\end{align}
Moreover, it follows from \eqref{eq:Xt-dist-proof1} that
\begin{equation}
	\sqrt{\alpha_t} X_{t-1} \overset{\mathrm{d}}{=} \sqrt{\alpha_t} \big(\sqrt{\overline{\alpha}_{t-1}}X_0+ \sqrt{1-\overline{\alpha}_{t-1}}\,W\big) 
	= \sqrt{\overline{\alpha}_{t}}X_0+ \sqrt{\alpha_t-\overline{\alpha}_{t}}\,W .
	\label{eq:Xt-1-rescale-expression}
\end{equation}
These taken together allow one to rewrite $p_{\sqrt{\alpha_t}X_{t-1}}$ such that: 
%for any set $\mathcal{E}$,  
%
\begin{align}
\frac{p_{\sqrt{\alpha_{t}}X_{t-1}}\big(\phi_{t}(x)\big)}{p_{X_{t}}(x)} & \overset{\mathrm{(i)}}{=}\frac{1}{p_{X_{t}}(x)}\int_{x_{0}}p_{X_{0}}(x_{0})\frac{1}{\big(2\pi(\alpha_{t}-\overline{\alpha}_{t})\big)^{d/2}}\exp\bigg(-\frac{\big\|\phi_{t}(x)-\sqrt{\overline{\alpha}_{t}}x_{0}\big\|_{2}^{2}}{2(\alpha_{t}-\overline{\alpha}_{t})}\bigg)\mathrm{d}x_{0}\notag\\
 & \overset{\mathrm{(ii)}}{=}\frac{1}{p_{X_{t}}(x)}\int_{x_{0}}p_{X_{0}}(x_{0})\frac{1}{\big(2\pi(\alpha_{t}-\overline{\alpha}_{t})\big)^{d/2}}\exp\bigg(-\frac{\big\| x-\sqrt{\overline{\alpha}_{t}}x_{0}\big\|_{2}^{2}}{2(1-\overline{\alpha}_{t})}\bigg)\notag\\
 & \qquad\cdot\exp\bigg(-\frac{(1-\alpha_{t})\big\| x-\sqrt{\overline{\alpha}_{t}}x_{0}\big\|_{2}^{2}}{2(\alpha_{t}-\overline{\alpha}_{t})(1-\overline{\alpha}_{t})}-\frac{\|u\|_{2}^{2}-2u^{\top}\big(x-\sqrt{\overline{\alpha}_{t}}x_{0}\big)}{2(\alpha_{t}-\overline{\alpha}_{t})}\bigg)\mathrm{d}x_{0}\notag\\
 & \overset{\mathrm{(iii)}}{=}\Big(\frac{1-\overline{\alpha}_{t}}{\alpha_{t}-\overline{\alpha}_{t}}\Big)^{d/2}\cdot\int_{x_{0}}p_{X_{0}\mymid X_{t}}(x_{0}\mymid x)\cdot\notag\\
 & \qquad\qquad\qquad\exp\bigg(-\frac{(1-\alpha_{t})\big\| x-\sqrt{\overline{\alpha}_{t}}x_{0}\big\|_{2}^{2}}{2(\alpha_{t}-\overline{\alpha}_{t})(1-\overline{\alpha}_{t})}-\frac{\|u\|_{2}^{2}-2u^{\top}\big(x-\sqrt{\overline{\alpha}_{t}}x_{0}\big)}{2(\alpha_{t}-\overline{\alpha}_{t})}\bigg)\mathrm{d}x_{0}\label{eqn:fei-0}\\
 & \overset{\mathrm{(iv)}}{=}\left\{ 1+\frac{d(1-\alpha_{t})}{2(\alpha_{t}-\overline{\alpha}_{t})}+O\bigg(d^{2}\Big(\frac{1-\alpha_{t}}{\alpha_{t}-\overline{\alpha}_{t}}\Big)^{2}\bigg)\right\} \cdot \notag\\
 & \quad\int_{x_{0}}p_{X_{0}\mymid X_{t}}(x_{0}\mymid x)\exp\bigg(-\frac{(1-\alpha_{t})\big\| x-\sqrt{\overline{\alpha}_{t}}x_{0}\big\|_{2}^{2}}{2(\alpha_{t}-\overline{\alpha}_{t})(1-\overline{\alpha}_{t})}-\frac{\|u\|_{2}^{2}-2u^{\top}\big(x-\sqrt{\overline{\alpha}_{t}}x_{0}\big)}{2(\alpha_{t}-\overline{\alpha}_{t})}\bigg)\mathrm{d}x_{0}.
	\label{eqn:fei}
\end{align}
Here,  identity (i) holds due to \eqref{eq:Xt-1-rescale-expression} 
and hence
\[
p_{\sqrt{\alpha_{t}}X_{t-1}}(x) =\int_{x_{0}}p_{X_{0}}(x_{0})p_{\sqrt{\alpha_{t}-\overline{\alpha}_{t}}W}\big(x-\sqrt{\overline{\alpha}_{t}}x_{0}\big)\mathrm{d}x_{0}; 
\]
identity (ii) follows from \eqref{eq:defn-u-Lemma-main-ODE} and elementary algebra; 
relation (iii) is a consequence of the Bayes rule \eqref{eqn:bayes}; 
and relation (iv) results from \eqref{eq:expansion-ratio-1-alpha}.

\paragraph{Step 2: controlling the integral in the decomposition~\eqref{eqn:fei}.}
In order to further control the right-hand side of expression~\eqref{eqn:fei}, we need to evaluate the integral in~\eqref{eqn:fei}. 
To this end, we make a few observations. 
\begin{itemize}
	\item To begin with,  Lemma~\ref{lem:x0} tells us that 
\begin{subequations}
\label{eqn:BBB}
\begin{align}
\label{eqn:brahms}
	\mathbb{P}\Big(\big\|\sqrt{\overline{\alpha}_{t}}X_0 - x\big\|_2 > 5c_5 \sqrt{\theta_t(x) d(1 - \overline{\alpha}_{t})\log T}\mymid X_t = x\Big) \leq \exp\big(-c_5^2\theta_t(x) d\log T\big) 
\end{align}
for any quantity $c_5 \ge 2$, provided that $c_6\geq 2c_R+c_0$. 

	\item A little algebra based on this relation allows one to bound $u$ (cf.~\eqref{eq:defn-u-Lemma-main-ODE}) as follows:  
\begin{align}
\|u\|_{2} & \le \frac{1-\alpha_{t}}{2}\varepsilon_{\score, t}(x) + \frac{1-\alpha_{t}}{2(1-\overline{\alpha}_{t})}\mathbb{E}\left[\big\| \sqrt{\overline{\alpha}_{t}}X_{0} - x \big\|_{2}\,\big|\,X_{t}=x\right] \notag\\
	& \leq \frac{1-\alpha_{t}}{2}\varepsilon_{\score, t}(x) + \frac{6(1-\alpha_{t})}{1-\overline{\alpha}_{t}} \sqrt{\theta_t(x) d(1 - \overline{\alpha}_{t})\log T},
	\label{eqn:johannes}
\end{align}
\end{subequations}
where the last inequality arises from Lemma~\ref{lem:x0}. 
\end{itemize}
Next, let us define
\begin{equation}
	\mathcal{E}_c^{\mathsf{typical}}
	\coloneqq\Big\{ x_0:\big\| x-\sqrt{\overline{\alpha}_{t}}x_{0}\big\|_{2}\leq 5c\sqrt{\theta_t(x) d(1-\overline{\alpha}_{t})\log T}\Big\}	 
	\label{eq:defn-Ec-typical-set}
\end{equation}
for any quantity $c>0$. 
Then for any $x_0 \in \mathcal{E}_c^{\mathsf{typical}}$, it is clearly seen from \eqref{eqn:BBB} and \eqref{eqn:properties-alpha-proof} that
\begin{subequations}
	\label{eq:long-UB-123}
\begin{align}
\frac{(1-\alpha_{t})\big\| x-\sqrt{\overline{\alpha}_{t}}x_{0}\big\|_{2}^{2}}{2(\alpha_{t}-\overline{\alpha}_{t})(1-\overline{\alpha}_{t})} & \leq\frac{25c^{2}}{2}\frac{(1-\alpha_{t})\theta_t(x)d\log T}{\alpha_{t}-\overline{\alpha}_{t}}\leq\frac{100c_{1}c^{2}\theta_t(x)d\log^{2}T}{T};\label{eq:long-UB-1}\\
\frac{\|u\|_{2}^{2}}{2(\alpha_{t}-\overline{\alpha}_{t})} & \leq\frac{(1-\alpha_{t})^{2}}{4(\alpha_{t}-\overline{\alpha}_{t})}\varepsilon_{\score,t}(x)^{2}+\frac{36(1-\alpha_{t})^{2}}{(1-\overline{\alpha}_{t})(\alpha_{t}-\overline{\alpha}_{t})}\theta_t(x)d\log T\label{eq:long-UB-2}\\
 & \leq\frac{2c_{1}^{2}\log^{2}T}{T^{2}}\varepsilon_{\score,t}(x)^{2}+\frac{2304c_{1}^{2}}{T^{2}}\theta_t(x)d\log^{3}T,\notag\\
\left|\frac{u^{\top}\big(x-\sqrt{\overline{\alpha}_{t}}x_{0}\big)}{\alpha_{t}-\overline{\alpha}_{t}}\right| & \leq\frac{\|u\|_{2}\big\| x-\sqrt{\overline{\alpha}_{t}}x_{0}\big\|_{2}}{\alpha_{t}-\overline{\alpha}_{t}}\nonumber\\
 & \leq\frac{5c(1-\alpha_{t})}{2(\alpha_{t}-\overline{\alpha}_{t})}\varepsilon_{\score,t}(x)\sqrt{\theta_t(x)d(1-\overline{\alpha}_{t})\log T}+\frac{30c(1-\alpha_{t})\theta_t(x)d\log T}{\alpha_{t}-\overline{\alpha}_{t}}\label{eq:long-UB-3}\\
 & \leq\frac{20cc_{1}}{T}\varepsilon_{\score,t}(x)\sqrt{\theta_t(x)d(1-\overline{\alpha}_{t})\log^{3}T}+\frac{240cc_{1}\theta_t(x)d\log^{2}T}{T}. \label{eq:long-UB-4}
\end{align}
\end{subequations}
As a consequence, for any $x_0\in \mathcal{E}_c^{\mathsf{typical}}$ for $c \ge 2$, we have seen from \eqref{eq:long-UB-4} and \eqref{eqn:properties-alpha-proof} that
\begin{align}
 & -\frac{(1-\alpha_{t})\big\| x-\sqrt{\overline{\alpha}_{t}}x_{0}\big\|_{2}^{2}}{2(\alpha_{t}-\overline{\alpha}_{t})(1-\overline{\alpha}_{t})}-\frac{\|u\|_{2}^{2}}{2(\alpha_{t}-\overline{\alpha}_{t})}+\frac{u^{\top}\big(x-\sqrt{\overline{\alpha}_{t}}x_{0}\big)}{\alpha_{t}-\overline{\alpha}_{t}} 
	\leq\frac{u^{\top}\big(x-\sqrt{\overline{\alpha}_{t}}x_{0}\big)}{\alpha_{t}-\overline{\alpha}_{t}}
	\notag\\
 &\qquad \leq \frac{5c(1-\alpha_{t})}{2(\alpha_{t}-\overline{\alpha}_{t})}\varepsilon_{\score,t}(x)\sqrt{\theta_t(x)d(1-\overline{\alpha}_{t})\log T}+\frac{30c(1-\alpha_{t})\theta_t(x)d\log T}{\alpha_{t}-\overline{\alpha}_{t}} \label{eq:multi-term-UB-45678}\\	
 & \qquad
	%\le\frac{5c(1-\alpha_{t})}{2(\alpha_{t}-\overline{\alpha}_{t})}\varepsilon_{\score,t}(x)\sqrt{\theta_t(x)d\log T}+\frac{30c(1-\alpha_{t})}{\alpha_{t}-\overline{\alpha}_{t}}\theta_t(x)d\log T \notag\\
 \le\frac{20cc_{1}}{T}\varepsilon_{\score,t}(x)\sqrt{\theta_t(x)d\log^3 T}+\frac{240cc_{1}}{T}\theta_t(x)d\log^{2}T 
	\le c\theta_t(x)d, \label{eq:multi-term-UB-456}
\end{align}
provided that 
\[
\frac{40c_{1}\varepsilon_{\score,t}(x)\log^{\frac{3}{2}}T}{T}\leq\sqrt{\theta_t(x)d}\qquad\text{and}\qquad T\geq480c_{1}\log^{2}T.
\]

\paragraph{Step 2(a): proof of relation \eqref{eq:xt_up}.}

Substituting \eqref{eq:multi-term-UB-45678} into \eqref{eqn:fei} and making use of \eqref{eqn:properties-alpha-proof} under our assumption on $T$ yield
\begin{align*}
\frac{p_{\sqrt{\alpha_{t}}X_{t-1}}\big(\phi_{t}(x)\big)}{p_{X_{t}}(x)} & \leq2\exp\bigg(\frac{5c(1-\alpha_{t})}{2(\alpha_{t}-\overline{\alpha}_{t})}\varepsilon_{\score,t}(x)\sqrt{\theta_t(x)d\log T}+\frac{30c(1-\alpha_{t})}{\alpha_{t}-\overline{\alpha}_{t}}\theta_t(x)d\log T\bigg)\int_{x_{0}}p_{X_{0}\mymid X_{t}}(x_{0}\mymid x)\mathrm{d}x_{0}\\
 & \leq2\exp\bigg(\frac{5c(1-\alpha_{t})}{2(\alpha_{t}-\overline{\alpha}_{t})}\varepsilon_{\score,t}(x)\sqrt{\theta_t(x)d\log T}+\frac{30c(1-\alpha_{t})}{\alpha_{t}-\overline{\alpha}_{t}}\theta_t(x)d\log T\bigg),
\end{align*}
thus establishing \eqref{eq:xt_up} by taking $c=2$.

\paragraph{Step 2(b): proof of relation \eqref{eq:xt}.}
Suppose now that
\begin{equation}
C_{10}\frac{\theta_t(x)d\log^{2}T+\varepsilon_{\score,t}(x)\sqrt{\theta_t(x)d\log^{3}T}}{T}\leq1
	\label{eq:Lemma3-strong-assump}
\end{equation}
holds for some large enough constant $C_{10}>0$. 
Under this additional condition, it can be easily verified that
\begin{align}
 & \left|-\frac{(1-\alpha_{t})\big\| x-\sqrt{\overline{\alpha}_{t}}x_{0}\big\|_{2}^{2}}{2(\alpha_{t}-\overline{\alpha}_{t})(1-\overline{\alpha}_{t})}-\frac{\|u\|_{2}^{2}}{2(\alpha_{t}-\overline{\alpha}_{t})}+\frac{u^{\top}\big(x-\sqrt{\overline{\alpha}_{t}}x_{0}\big)}{\alpha_{t}-\overline{\alpha}_{t}}\right|\notag\\
 & \qquad \leq c_{10}\left(\theta_t(x)d\log T+\varepsilon_{\score,t}(x)\sqrt{\theta_t(x)d\log T}\right)\frac{1-\alpha_{t}}{\alpha_{t}-\overline{\alpha}_{t}}
	\label{eq:exponent-UB-1357}
\end{align}
for any $x_0 \in \mathcal{E}_2^{\mathsf{typical}}$ (with $c=2$), where  $c_{10}>0$ is some sufficiently small constant. 
Therefore,   
the Taylor expansion  $e^{-z} = 1 - z + O(z^2)$ (for all $|z| < 1$) gives 
\begin{align}
 & \exp\bigg(-\frac{(1-\alpha_{t})\big\| x-\sqrt{\overline{\alpha}_{t}}x_{0}\big\|_{2}^{2}}{2(\alpha_{t}-\overline{\alpha}_{t})(1-\overline{\alpha}_{t})}-\frac{\|u\|_{2}^{2}-2u^{\top}\big(x-\sqrt{\overline{\alpha}_{t}}x_{0}\big)}{2(\alpha_{t}-\overline{\alpha}_{t})}\bigg)\notag\\
%\notag & =1-\frac{(1-\alpha_{t})\big\| x-\sqrt{\overline{\alpha}_{t}}x_{0}\big\|_{2}^{2}}{2(\alpha_{t}-\overline{\alpha}_{t})(1-\overline{\alpha}_{t})}-\frac{\|u\|_{2}^{2}}{2(\alpha_{t}-\overline{\alpha}_{t})}+\frac{u^{\top}\big(x-\sqrt{\overline{\alpha}_{t}}x_{0}\big)}{\alpha_{t}-\overline{\alpha}_{t}}+O\bigg( d^{2}\Big(\frac{1-\alpha_{t}}{\alpha_{t}-\overline{\alpha}_{t}}\Big)^{2}\log^{2}T\bigg)\\
%
 & =1-\frac{(1-\alpha_{t})\big\| x-\sqrt{\overline{\alpha}_{t}}x_{0}\big\|_{2}^{2}}{2(\alpha_{t}-\overline{\alpha}_{t})(1-\overline{\alpha}_{t})}+\frac{u^{\top}\big(x-\sqrt{\overline{\alpha}_{t}}x_{0}\big)}{\alpha_{t}-\overline{\alpha}_{t}}+O\bigg(\big(\theta_t(x)^2d^{2}\log^{2}T + \varepsilon_{\score, t}(x)^2\theta_t(x) d\log T\big)\Big(\frac{1-\alpha_{t}}{\alpha_{t}-\overline{\alpha}_{t}}\Big)^{2}\bigg)
	\label{eq:exp-UB-13579}
\end{align}
for any $x_0 \in \mathcal{E}_2^{\mathsf{typical}}$, 
which invokes \eqref{eq:exponent-UB-1357} and \eqref{eq:long-UB-2} (under the assumption~\eqref{eq:Lemma3-strong-assump}).  
%and the last line holds true since, according to \eqref{eqn:johannes}, 
%%
%\[
%\frac{\|u\|_{2}^{2}}{|\alpha_{t}-\overline{\alpha}_{t}|}\leq\frac{1}{\alpha_{t}-\overline{\alpha}_{t}}\cdot\frac{9c_{5}^{2}(1-\alpha_{t})^{2}d\log T}{(1-\overline{\alpha}_{t})}\leq\frac{9c_{5}^{2}(1-\alpha_{t})^{2}d\log T}{(\alpha_{t}-\overline{\alpha}_{t})^{2}}.
%\]
%
%Moreover, it follows from \eqref{eqn:johannes} and the elementary inequality $\frac{(a+b)^2}{2}\leq a^2+b^2$ that 
%%
%\begin{align}
% & \exp\bigg(-\frac{(1-\alpha_{t})\big\| x-\sqrt{\overline{\alpha}_{t}}x_{0}\big\|_{2}^{2}}{2(\alpha_{t}-\overline{\alpha}_{t})(1-\overline{\alpha}_{t})}-\frac{\|u\|_{2}^{2}-2u^{\top}\big(x-\sqrt{\overline{\alpha}_{t}}x_{0}\big)}{2(\alpha_{t}-\overline{\alpha}_{t})}\bigg)\notag\\
% & \quad\leq\exp\bigg(-\frac{(1-\alpha_{t})\big\| x-\sqrt{\overline{\alpha}_{t}}x_{0}\big\|_{2}^{2}}{2(\alpha_{t}-\overline{\alpha}_{t})(1-\overline{\alpha}_{t})}-\frac{\|u\|_{2}^{2}-\frac{1-\overline{\alpha}_{t}}{1-\alpha_{t}}\|u\|_{2}^{2}-\frac{1-\alpha_{t}}{1-\overline{\alpha}_{t}}\big\| x-\sqrt{\overline{\alpha}_{t}}x_{0}\big\|_{2}^{2}}{2(\alpha_{t}-\overline{\alpha}_{t})}\bigg)\notag\\
% & \quad=\exp\left(\frac{\|u\|_{2}^{2}}{2(1-\alpha_{t})}\right)\leq\exp\left(\frac{1-\alpha_{t}}{4}\varepsilon_{\score,t}(x)^{2}+\frac{36(1-\alpha_{t})}{1-\overline{\alpha}_{t}}c_{5}^{2}\theta_t(x)d\log T\right).
%\label{eq:exp-UB-13570}
%\end{align}
%
Combine \eqref{eq:exp-UB-13579} and \eqref{eq:multi-term-UB-456} to show that
%:  if $\frac{d(1-\alpha_{t})\log T}{\alpha_{t}-\overline{\alpha}_{t}} \lesssim 1$, then one has 
%we can conclude by some direct calculations that  
%
\begin{align}
 & \int_{x_{0}}p_{X_{0}\mymid X_{t}}(x_{0}\mymid x)\exp\bigg(-\frac{(1-\alpha_{t})\big\| x-\sqrt{\overline{\alpha}_{t}}x_{0}\big\|_{2}^{2}}{2(\alpha_{t}-\overline{\alpha}_{t})(1-\overline{\alpha}_{t})}-\frac{\|u\|_{2}^{2}-2u^{\top}\big(x-\sqrt{\overline{\alpha}_{t}}x_{0}\big)}{2(\alpha_{t}-\overline{\alpha}_{t})}\bigg)\mathrm{d}x_{0}\notag\\
 & =\left(\int_{x_{0}\in\mathcal{E}_{2}^{\mathsf{typical}}}+\int_{x_{0}\notin\mathcal{E}_{2}^{\mathsf{typical}}}\right)p_{X_{0}\mymid X_{t}}(x_{0}\mymid x)\exp\bigg(-\frac{(1-\alpha_{t})\big\| x-\sqrt{\overline{\alpha}_{t}}x_{0}\big\|_{2}^{2}}{2(\alpha_{t}-\overline{\alpha}_{t})(1-\overline{\alpha}_{t})}-\frac{\|u\|_{2}^{2}-2u^{\top}\big(x-\sqrt{\overline{\alpha}_{t}}x_{0}\big)}{2(\alpha_{t}-\overline{\alpha}_{t})}\bigg)\mathrm{d}x_{0}\notag\\
 & =\int_{x_{0}\in\mathcal{E}_{2}^{\mathsf{typical}}}p_{X_{0}\mymid X_{t}}(x_{0}\mymid x)\bigg(1-\frac{(1-\alpha_{t})\big\| x-\sqrt{\overline{\alpha}_{t}}x_{0}\big\|_{2}^{2}}{2(\alpha_{t}-\overline{\alpha}_{t})(1-\overline{\alpha}_{t})}+\frac{u^{\top}\big(x-\sqrt{\overline{\alpha}_{t}}x_{0}\big)}{\alpha_{t}-\overline{\alpha}_{t}}\bigg)\mathrm{d}x_{0}\notag\\
 & ~~+O\bigg(\big(\theta_{t}(x)^{2}d^{2}\log^{2}T+\varepsilon_{\score,t}(x)^{2}\theta_{t}(x)d\log T\big)\Big(\frac{1-\alpha_{t}}{\alpha_{t}-\overline{\alpha}_{t}}\Big)^{2}\bigg)+O\left(\sum_{c=3}^{\infty}\int_{x_{0}\in\mathcal{E}_{c}^{\mathsf{typical}}\backslash\mathcal{E}_{c-1}^{\mathsf{typical}}}p_{X_{0}\mymid X_{t}}(x_{0}\mymid x)\exp\left(c\theta_{t}(x)d\right)\mathrm{d}x_{0}\right)\notag\\
 & =1-\frac{(1-\alpha_{t})\big(\int_{x_{0}}p_{X_{0}\mymid X_{t}}(x_{0}\mymid x)\big\| x-\sqrt{\overline{\alpha}_{t}}x_{0}\big\|_{2}^{2}\mathrm{d}x_{0}-\big\|\int_{x_{0}}p_{X_{0}\mymid X_{t}}(x_{0}\mymid x)\big(x-\sqrt{\overline{\alpha}_{t}}x_{0}\big)\mathrm{d}x_{0}\big\|_{2}^{2}\big)}{2(\alpha_{t}-\overline{\alpha}_{t})(1-\overline{\alpha}_{t})}\notag\\
 & \qquad+O\bigg(\theta_{t}(x)^{2}d^{2}\Big(\frac{1-\alpha_{t}}{\alpha_{t}-\overline{\alpha}_{t}}\Big)^{2}\log^{2}T+\varepsilon_{\score,t}(x)\sqrt{\theta_{t}(x)d\log T}\Big(\frac{1-\alpha_{t}}{\alpha_{t}-\overline{\alpha}_{t}}\Big)\bigg)+O\Big(\exp\big(-\theta_{t}(x)d\log T\big)\Big)\notag\\
 & %
=1-\frac{(1-\alpha_{t})\big(\int_{x_{0}}p_{X_{0}\mymid X_{t}}(x_{0}\mymid x)\big\| x-\sqrt{\overline{\alpha}_{t}}x_{0}\big\|_{2}^{2}\mathrm{d}x_{0}-\big\|\int_{x_{0}}p_{X_{0}\mymid X_{t}}(x_{0}\mymid x)\big(x-\sqrt{\overline{\alpha}_{t}}x_{0}\big)\mathrm{d}x_{0}\big\|_{2}^{2}\big)}{2(\alpha_{t}-\overline{\alpha}_{t})(1-\overline{\alpha}_{t})}\notag\\
 & %
\qquad+O\bigg(\theta_{t}(x)^{2}d^{2}\Big(\frac{1-\alpha_{t}}{\alpha_{t}-\overline{\alpha}_{t}}\Big)^{2}\log^{2}T+\varepsilon_{\score,t}(x)\sqrt{\theta_{t}(x)d\log T}\Big(\frac{1-\alpha_{t}}{\alpha_{t}-\overline{\alpha}_{t}}\Big)\bigg), 
	\label{eq:exp-UB-135702}
\end{align}
where the penultimate relation holds since, according to \eqref{eqn:brahms},
\begin{align*}
\sum_{c=3}^{\infty}\int_{x_{0}\in\mathcal{E}_{c}^{\mathsf{typical}}\backslash\mathcal{E}_{c-1}^{\mathsf{typical}}}p_{X_{0}\mymid X_{t}}(x_{0}\mymid x)\exp\left(c\theta_t(x)d\right)\mathrm{d}x_{0} & \leq\sum_{c=3}^{\infty}\exp\left(-c^{2}\theta_t(x)d\log T\right)\exp\left(c\theta_t(x)d\right)\\
 & \leq\sum_{c=3}^{\infty}\exp\left(-\frac{1}{2}c^{2}\theta_t(x)d\log T\right)\leq\exp\big(-\theta_t(x)d\log T\big),
\end{align*}
and the last line in \eqref{eq:exp-UB-135702} again utilizes \eqref{eqn:properties-alpha-proof} and the fact that $\theta_t(x)\geq c_6$ for some large enough constant $c_6>0$.

%\paragraph{Step 3: putting everything together.} 
%
Putting \eqref{eq:exp-UB-135702} and \eqref{eqn:fei} together yields
\begin{align*}
\frac{p_{\sqrt{\alpha_{t}}X_{t-1}}\big(\phi_{t}(x)\big)}{p_{X_{t}}(x)} & =1+\frac{d(1-\alpha_{t})}{2(\alpha_{t}-\overline{\alpha}_{t})}+O\bigg(\theta_t(x)^2d^{2}\Big(\frac{1-\alpha_{t}}{\alpha_{t}-\overline{\alpha}_{t}}\Big)^{2}\log^{2}T + \varepsilon_{\score, t}(x)\sqrt{\theta_t(x) d\log T}\Big(\frac{1-\alpha_{t}}{\alpha_{t}-\overline{\alpha}_{t}}\Big)\bigg)-\notag\\
 & \quad\frac{(1-\alpha_{t})\big(\int_{x_{0}}p_{X_{0}\mymid X_{t}}(x_{0}\mymid x)\big\| x-\sqrt{\overline{\alpha}_{t}}x_{0}\big\|_{2}^{2}\mathrm{d}x_{0}-\big\|\int_{x_{0}}p_{X_{0}\mymid X_{t}}(x_{0}\mymid x)\big(x-\sqrt{\overline{\alpha}_{t}}x_{0}\big)\mathrm{d}x_{0}\big\|_{2}^{2}\big)}{2(\alpha_{t}-\overline{\alpha}_{t})(1-\overline{\alpha}_{t})}
\end{align*}
as claimed.

\subsubsection{Proof of relation~\eqref{eq:yt}} 

Consider any random vector $Y$. 
To understand the density ratio $p_{\phi_t(Y)}(\phi_t(x))/p_{Y}(x)$, we make note of the transformation
\begin{subequations}
\begin{align} \label{eq:dist_tranform}
	p_{\phi_t(Y)}\big(\phi_t(x)\big) &= \mathsf{det}\Big(\frac{\partial \phi_t(x)}{\partial x}\Big)^{-1}p_{Y}(x), \\
	p_{\phi_t^{\star}(Y)}\big(\phi^{\star}_t(x)\big) &= \mathsf{det}\Big(\frac{\partial \phi^{\star}_t(x)}{\partial x}\Big)^{-1}p_{Y}(x),
\end{align}
\end{subequations}
where $\frac{\partial \phi_t(x)}{\partial x}$ and $\frac{\partial \phi^{\star}_t(x)}{\partial x}$  denote the Jacobian matrices. 
It thus suffices to control the quantity $\mathsf{det}\Big(\frac{\partial \phi_t(x)}{\partial x}\Big)^{-1}$.

To begin with, recall from \eqref{defn:phit-x} and \eqref{eq:st-MMSE-expression} that
\[
	\phi_{t}^{\star}(x)=x-\frac{1-\alpha_{t}}{2(1-\overline{\alpha}_{t})}g_{t}(x).
\]
As a result, one can use \eqref{eq:Jacobian-Thm4} and \eqref{eq:Jt-x-expression-ij-23} to derive
\begin{align}
%I-\frac{\partial\phi^{\star}_{t}(x)}{\partial x}=\frac{1-\alpha_{t}}{2(1-\overline{\alpha}_{t})}J_{t}(x) & =\frac{1-\alpha_{t}}{2(1-\overline{\alpha}_{t})}\Bigg\{ I+\frac{1}{1-\overline{\alpha}_{t}}\bigg\{\mathbb{E}\big[X_{t}-\sqrt{\overline{\alpha}_{t}}X_{0}\mid X_{t}=x\big]\Big(\mathbb{E}\big[X_{t}-\sqrt{\overline{\alpha}_{t}}X_{0}\mid X_{t}=x\big]\Big)^{\top}\nonumber \\
% & \quad\quad-\mathbb{E}\Big[\big(X_{t}-\sqrt{\overline{\alpha}_{t}}X_{0}\big)\big(X_{t}-\sqrt{\overline{\alpha}_{t}}X_{0}\big)^{\top}\mid X_{t}=x\Big]\bigg\}\Bigg\}\nonumber \\
% & \eqqcolon\frac{1-\alpha_{t}}{2(1-\overline{\alpha}_{t})}\bigg\{ I+\frac{1}{1-\overline{\alpha}_{t}}B\bigg\}.
I-\frac{\partial\phi_{t}^{\star}(x)}{\partial x}=\frac{1-\alpha_{t}}{2(1-\overline{\alpha}_{t})}J_{t}(x) & =\frac{1-\alpha_{t}}{2(1-\overline{\alpha}_{t})}\Bigg\{ I-\frac{1}{1-\overline{\alpha}_{t}}\underset{\eqqcolon\,B}{\underbrace{\mathsf{Cov}\big(X_{t}-\sqrt{\overline{\alpha}_{t}}X_{0}\mid X_{t}=x\big)}}\Bigg\}.	
	\label{eq:defn-B-intermediate}
\end{align}
This allows one to show that
\begin{subequations}
\begin{align}
 & \mathsf{Tr}\Big(I-\frac{\partial\phi^{\star}_{t}(x)}{\partial x}\Big)=\frac{d(1-\alpha_{t})}{2(1-\overline{\alpha}_{t})}+ \notag\\
 & \quad\quad\frac{(1-\alpha_{t})\big(\big\|\int_{x_{0}}p_{X_{0}\mymid X_{t}}(x_{0}\mymid x)\big(x-\sqrt{\overline{\alpha}_{t}}x_{0}\big)\mathrm{d}x_{0}\big\|_{2}^{2}-\int_{x_{0}}p_{X_{0}\mymid X_{t}}(x_{0}\mymid x)\big\| x-\sqrt{\overline{\alpha}_{t}}x_{0}\big\|_{2}^{2}\mathrm{d}x_{0}\big)}{2(1-\overline{\alpha}_{t})^{2}}. 
	\label{eq:defn-B-intermediate-2}
\end{align}
Moreover, the matrix $B$ defined in \eqref{eq:defn-B-intermediate} satisfies
\begin{align*}
\|B\|_{\mathrm{F}} 
	\leq \Big\| \mathbb{E}\Big[\big(X_{t}-\sqrt{\overline{\alpha}_{t}}X_{0}\big)\big(X_{t}-\sqrt{\overline{\alpha}_{t}}X_{0}\big)^{\top}\mid X_{t}=x\Big] \Big\|_{\mathrm{F}}
	\leq \int_{x_{0}}p_{X_{0}\mymid X_{t}}(x_{0}\mymid x)\big\| x-\sqrt{\overline{\alpha}_{t}}x_{0}\big\|_{2}^{2}\mathrm{d}x_{0}
\end{align*}
due to Jensen's inequality. 
Taking this together with \eqref{eq:defn-B-intermediate} and Lemma~\ref{lem:x0} reveals that
\begin{align}
	\Big\|\frac{\partial \phi^{\star}_t(x)}{\partial x} - I\Big\| \leq 
\Big\|\frac{\partial \phi^{\star}_t(x)}{\partial x} - I\Big\|_{\mathrm{F}} 
	%&\lesssim\frac{1-\alpha_{t}}{1-\overline{\alpha}_{t}}\bigg(\sqrt{d}+\frac{1}{1-\overline{\alpha}_{t}}\|\bm{B}\|_{\mathrm{F}}\bigg)\notag\\
	&\lesssim \frac{1-\alpha_{t}}{1-\overline{\alpha}_{t}}
\bigg(\sqrt{d} + \frac{\int_{x_0}   p_{X_0 \mymid X_{t}}(x_0 \mymid x)\big\|x - \sqrt{\overline{\alpha}_{t}}x_0\big\|_2^2\mathrm{d} x_0}{1-\overline{\alpha}_{t}}\bigg) \notag \\
&\lesssim \frac{\theta_t(x) d(1-\alpha_{t})\log T}{1-\overline{\alpha}_{t}}.
	\label{eq:phix-minus-I-fro-UB}
\end{align}
\end{subequations}
Additionally, the Taylor expansion guarantees that for any $A$ and $\Delta$, 
\begin{subequations}
\begin{align}
\label{eqn:matrix-det}
	\mathsf{det}\big(I+A+\Delta\big) &= 1 + \mathsf{Tr}(A) + O\big((\mathsf{Tr}(A))^2 + \|A\|_{\mathrm{F}}^2 + d\|\Delta\|\big) \\
	\mathsf{det}\big(I+A+\Delta\big)^{-1} &= 1 - \mathsf{Tr}(A) + O\big((\mathsf{Tr}(A))^2 + \|A\|_{\mathrm{F}}^2 + d\|\Delta\|\big)
\end{align}
\end{subequations}
hold as long as  $d\|A\| + d\|\Delta\| \leq c_{11}$ for some small enough constant $c_{11}>0$. 
The above properties taken collectively with \eqref{defn:phit-x} and \eqref{eq:pointwise-epsilon-score-J} allow us to demonstrate that 
\begin{align}
\notag & \frac{p_{\phi_{t}(Y)}(\phi_{t}(x))}{p_{Y}(x)}=\mathsf{det}\Big(\frac{\partial\phi_{t}(x)}{\partial x}\Big)^{-1}
=\left(\mathsf{det}\bigg(\frac{\partial\phi_{t}^{\star}(x)}{\partial x}+\frac{1-\alpha_{t}}{2}\Big[J_{s_{t}}(x)-J_{s_{t}^{\star}}(x)\Big]\bigg)\right)^{-1}\notag\\
	&\quad =\left(\mathsf{det}\bigg(I + \frac{ \partial\phi_{t}^{\star}(x)}{\partial x} - I +\frac{1-\alpha_{t}}{2}\Big[J_{s_{t}}(x)-J_{s_{t}^{\star}}(x)\Big]\bigg)\right)^{-1}\label{eq:det-part-phi-expression}\\
 & \quad=1-\mathsf{Tr}\Big(\frac{\partial\phi_{t}^{\star}(x)}{\partial x}-I\Big)+O\bigg(\theta_t(x)^{2}d^{2}\Big(\frac{1-\alpha_{t}}{\alpha_{t}-\overline{\alpha}_{t}}\Big)^{2}\log^{2}T+\theta^{3}d^{6}\log^{3}T\Big(\frac{1-\alpha_{t}}{\alpha_{t}-\overline{\alpha}_{t}}\Big)^{3}+(1-\alpha_{t})d\varepsilon_{\Jacobi,t}(x)\bigg)\notag\\
 & \quad=1+\frac{d(1-\alpha_{t})}{2(\alpha_{t}-\overline{\alpha}_{t})}+\frac{(1-\alpha_{t})\Big(\big\|\int_{x_{0}}p_{X_{0}\mymid X_{t}}(x_{0}\mymid x)\big(x-\sqrt{\overline{\alpha}_{t}}x_{0}\big)\mathrm{d}x_{0}\big\|_{2}^{2}-\int_{x_{0}}p_{X_{0}\mymid X_{t}}(x_{0}\mymid x)\big\| x-\sqrt{\overline{\alpha}_{t}}x_{0}\big\|_{2}^{2}\mathrm{d}x_{0}\Big)}{2(\alpha_{t}-\overline{\alpha}_{t})(1-\overline{\alpha}_{t})}\notag\\
 & \quad\qquad+O\bigg(\theta_t(x)^{2}d^{2}\Big(\frac{1-\alpha_{t}}{\alpha_{t}-\overline{\alpha}_{t}}\Big)^{2}\log^{2}T+(1-\alpha_{t})d\varepsilon_{\Jacobi,t}(x)\bigg),	 
\end{align}
with the proviso that 
\begin{align*}
\frac{d^{2}(1-\alpha_{t})\log T}{\alpha_{t}-\overline{\alpha}_{t}} & \leq\frac{8c_{1}d^{2}\log^{2}T}{T}\leq c_{12}\qquad\text{and}\qquad(1-\alpha_{t})d\varepsilon_{\Jacobi,t}(x)\leq\frac{c_{1}d\varepsilon_{\Jacobi,t}(x)\log T}{T}\leq c_{12}
\end{align*}
for some sufficiently small constant $c_{12}>0$ (see \eqref{eqn:properties-alpha-proof}).   
%
%Here, we have made use of the fact that $\big|\frac{1-\alpha_{t}}{1-\overline{\alpha}_{t}} - \frac{1-\alpha_{t}}{\alpha_{t}-\overline{\alpha}_{t}}\big| \lesssim \big(\frac{1-\alpha_{t}}{\alpha_{t}-\overline{\alpha}_{t}}\big)^2$

\subsection{Proof of Lemma~\ref{lem:refine}}
\label{sec:lem-refine}

Before proceeding, let us make note of several basic facts: for any $x$ with $\theta_t(x)\lesssim 1$, 
Lemma~\ref{lem:x0} and \eqref{eqn:properties-alpha-proof}  taken together reveal that: 
%
%\begin{align*}
% & \theta_{t}(y_{t})^{2}d^{2}\Big(\frac{1-\alpha_{t}}{\alpha_{t}-\overline{\alpha}_{t}}\Big)^{2}\log^{2}T+\varepsilon_{\score,t}(y_{t})\sqrt{\theta_{t}(y_{t})d\log T}\Big(\frac{1-\alpha_{t}}{\alpha_{t}-\overline{\alpha}_{t}}\Big)+(1-\alpha_{t})d\varepsilon_{\Jacobi,t}(y_{t})\\
% & \qquad\lesssim\frac{d^{2}\log^{4}T}{T^{2}}+\frac{\varepsilon_{\score,t}(y_{t})\sqrt{d\log^{3}T}}{T}+\frac{d\varepsilon_{\Jacobi,t}(y_{t})\log T}{T}=o(1),
%\end{align*}
%
\begin{subequations}
\label{eq:bound-covariance-norm-UB0123-sum}
\begin{align}
 & \left|\frac{1-\alpha_{t}}{(\alpha_{t}-\overline{\alpha}_{t})(1-\overline{\alpha}_{t})}
 \Big(\big\|\mathbb{E}\big[X_{t}-\sqrt{\overline{\alpha}_{t}}X_{0}\mymid X_{t}=y_t\big]\big\|_{2}^{2}-\mathbb{E}\big[\big\| X_{t}-\sqrt{\overline{\alpha}_{t}}X_{0}\big\|_{2}^{2}\mymid X_{t}=y_t\big]\Big)\right| \notag\\
 & \qquad\leq\left|\frac{(1-\alpha_{t})\mathbb{E}\big[\big\| X_{t}-\sqrt{\overline{\alpha}_{t}}X_{0}\big\|_{2}^{2}\mymid X_{t}=y_t\big]}{(\alpha_{t}-\overline{\alpha}_{t})(1-\overline{\alpha}_{t})}\right|\lesssim\frac{(1-\alpha_{t})d\log T}{\alpha_{t}-\overline{\alpha}_{t}}\lesssim\frac{d\log^2 T}{T}=o(1) 
	\label{eq:bound-covariance-norm-UB0123}
\end{align}
\begin{align}
	\text{and}\qquad\frac{d(1-\alpha_{t})}{\alpha_{t}-\overline{\alpha}_{t}}\lesssim\frac{d\log T}{T}=o(1) .
\end{align}
\end{subequations}
Our proof consists of several steps below. 

\paragraph{Step 1: obtaining a refined approximation of $\frac{p_{\sqrt{\alpha_{t}}X_{t-1}}(\phi_{t}(x))}{p_{X_{t}}(x)}$.} 

To begin with, 
recalling the definition of $\mathcal{E}_{c}^{\mathsf{typical}}$ in \eqref{eq:defn-Ec-typical-set}, 
we can repeat the arguments in \eqref{eqn:fei-0} and \eqref{eq:exp-UB-135702} to reach 
\begin{align}
 & \frac{p_{\sqrt{\alpha_{t}}X_{t-1}}\big(\phi_{t}(x)\big)}{p_{X_{t}}(x)}\notag\\
 & =\Big(\frac{1-\overline{\alpha}_{t}}{\alpha_{t}-\overline{\alpha}_{t}}\Big)^{d/2}\int_{x_{0}}p_{X_{0}\mymid X_{t}}(x_{0}\mymid x)\exp\bigg(-\frac{(1-\alpha_{t})\big\| x-\sqrt{\overline{\alpha}_{t}}x_{0}\big\|_{2}^{2}}{2(\alpha_{t}-\overline{\alpha}_{t})(1-\overline{\alpha}_{t})}+\frac{2u^{\top}\big(x-\sqrt{\overline{\alpha}_{t}}x_{0}\big)-\|u\|_{2}^{2}}{2(\alpha_{t}-\overline{\alpha}_{t})}\bigg)\mathrm{d}x_{0}\notag\\
 & =\Big(\frac{1-\overline{\alpha}_{t}}{\alpha_{t}-\overline{\alpha}_{t}}\Big)^{d/2}\Bigg\{\int_{x_{0}\in\mathcal{E}_{2}^{\mathsf{typical}}}p_{X_{0}\mymid X_{t}}(x_{0}\mymid x)\exp\bigg(-\frac{(1-\alpha_{t})\big\| x-\sqrt{\overline{\alpha}_{t}}x_{0}\big\|_{2}^{2}}{2(\alpha_{t}-\overline{\alpha}_{t})(1-\overline{\alpha}_{t})}+\frac{2u^{\top}\big(x-\sqrt{\overline{\alpha}_{t}}x_{0}\big)-\|u\|_{2}^{2}}{2(\alpha_{t}-\overline{\alpha}_{t})}\bigg)\mathrm{d}x_{0}\notag\\
 & \qquad+O\Big(\exp\left(-\theta_{t}(x)d\log T\right)\Big)\Bigg\}\notag\\
 & =\Big(\frac{1-\overline{\alpha}_{t}}{\alpha_{t}-\overline{\alpha}_{t}}\Big)^{d/2}\Bigg\{\int_{x_{0}\in\mathcal{E}_{2}^{\mathsf{typical}}}p_{X_{0}\mymid X_{t}}(x_{0}\mymid x)\exp\bigg(\frac{(1-\alpha_{t})\big[\big(x-\sqrt{\overline{\alpha}_{t}}x_{0}\big)^{\top}\mathbb{E}\big[x-\sqrt{\overline{\alpha}_{t}}X_{0}\mid X_{t}=x\big]-\big\| x-\sqrt{\overline{\alpha}_{t}}x_{0}\big\|_{2}^{2}\big]}{2(\alpha_{t}-\overline{\alpha}_{t})(1-\overline{\alpha}_{t})}\bigg)\notag\\
 & \qquad\qquad\cdot\exp\bigg(-\frac{(1-\alpha_{t})\big(s_{t}(x)-s_{t}^{\star}(x)\big)^{\top}\big(x-\sqrt{\overline{\alpha}_{t}}x_{0}\big)}{2(\alpha_{t}-\overline{\alpha}_{t})}-\frac{\|u\|_{2}^{2}}{2(\alpha_{t}-\overline{\alpha}_{t})}\bigg)\mathrm{d}x_{0}+O\Big(\exp\left(-\theta_{t}(x)d\log T\right)\Big)\Bigg\}\notag\\
 & =O\Big(\exp\left(-\theta_{t}(x)d\log T\right)\Big)+\Bigg(1+O\bigg(\frac{d\log^{2}T}{T^{2}}\bigg)\Bigg)\cdot\notag\\
 & \ \ \int_{x_{0}\in\mathcal{E}_{2}^{\mathsf{typical}}}p_{X_{0}\mymid X_{t}}(x_{0}\mymid x)\exp\bigg(\frac{d(1-\alpha_{t})}{2(\alpha_{t}-\overline{\alpha}_{t})}+\frac{(1-\alpha_{t})\big[\big(x-\sqrt{\overline{\alpha}_{t}}x_{0}\big)^{\top}\mathbb{E}\big[x-\sqrt{\overline{\alpha}_{t}}X_{0}\mid X_{t}=x\big]-\big\| x-\sqrt{\overline{\alpha}_{t}}x_{0}\big\|_{2}^{2}\big]}{2(\alpha_{t}-\overline{\alpha}_{t})(1-\overline{\alpha}_{t})}\bigg)\notag\\
 & \qquad\qquad\cdot\exp\bigg(-\frac{(1-\alpha_{t})\big(s_{t}(x)-s_{t}^{\star}(x)\big)^{\top}\big(x-\sqrt{\overline{\alpha}_{t}}x_{0}\big)}{2(\alpha_{t}-\overline{\alpha}_{t})}-\frac{\|u\|_{2}^{2}}{2(\alpha_{t}-\overline{\alpha}_{t})}\bigg)\mathrm{d}x_{0},
	\label{eq:ratio-pX-phit-equiv-refined-1}
\end{align}
where we remind the reader of the definition of $u$ in \eqref{eq:defn-u-Lemma-main-ODE}.  
Here, the last line in \eqref{eq:ratio-pX-phit-equiv-refined-1} follows since 
\[
\Big(\frac{1-\overline{\alpha}_{t}}{\alpha_{t}-\overline{\alpha}_{t}}\Big)^{d/2}
%=\Big(1+\frac{1-\alpha_{t}}{\alpha_{t}-\overline{\alpha}_{t}}\Big)^{d/2}
=\Bigg(1+O\bigg(\frac{d\log^{2}T}{T^{2}}\bigg)\Bigg)\exp\Big(\frac{d(1-\alpha_{t})}{2(\alpha_{t}-\overline{\alpha}_{t})}\Big) \asymp 1,
\]
a consequence of the property \eqref{eq:expansion-ratio-3-alpha} and the fact 
 $\frac{1-\alpha_{t}}{\alpha_{t}-\overline{\alpha}_{t}}\lesssim \frac{\log T}{T}$ (cf.~\eqref{eqn:properties-alpha-proof-1}). 
 %
 %and the elementary result that, for any $w$ obeying $d|w|\ll 1$,  
%
%\[
%1\geq\frac{\left(1+w\right)^{d}}{\exp(wd)}=\left(\frac{1+w}{\exp(w)}\right)^{d}=\left(\frac{1+w}{1+w+O(w^{2})}\right)^{d}=\left(1-\frac{O(w^{2})}{1+w+O(w^{2})}\right)^{d}=1-O(dw^{2}).
%\]
%

Moreover, following the arguments in \eqref{eq:long-UB-123}, we can easily
derive that: for any $x_{0}\in\mathcal{E}_{2}^{\mathsf{typical}}$, 
\begin{align}
 & \exp\bigg(-\frac{(1-\alpha_{t})\big(s_{t}(x)-s_{t}^{\star}(x)\big)^{\top}\big(x-\sqrt{\overline{\alpha}_{t}}x_{0}\big)}{2(\alpha_{t}-\overline{\alpha}_{t})}-\frac{\|u\|_{2}^{2}}{2(\alpha_{t}-\overline{\alpha}_{t})}\bigg) \notag\\
 & =1+O\left(\frac{(1-\alpha_{t})\|s_{t}(x)-s_{t}^{\star}(x)\|_{2}\|x-\sqrt{\overline{\alpha}_{t}}x_{0}\|_{2}}{\alpha_{t}-\overline{\alpha}_{t}}+\frac{\|u\|_{2}^{2}}{\alpha_{t}-\overline{\alpha}_{t}}\right) \notag\\
 & =1+O\Bigg(\frac{\varepsilon_{\mathsf{score},t}(x)\sqrt{\theta_{t}(x)d\log^{3}T}}{T}+\frac{\varepsilon_{\mathsf{score},t}(x)^{2}\log^{2}T}{T^{2}}+\frac{\theta_{t}(x)d\log^{3}T}{T^{2}}\Bigg) \notag\\
 & =1+O\left(\frac{\varepsilon_{\mathsf{score},t}(x)\sqrt{d\log^{3}T}}{T}+\frac{d\log^{3}T}{T^{2}}\right),
	\label{eq:exp-usquare-approx}
\end{align}
where the last line invokes the assumptions $\theta_{t}(x)\lesssim1$
and $\frac{\varepsilon_{\mathsf{score},t}(x)\log^{3/2}T}{T}\lesssim\sqrt{\theta_{t}(x)d}\lesssim \sqrt{d}$.  
With \eqref{eq:ratio-pX-phit-equiv-refined-1} and \eqref{eq:exp-usquare-approx} in place, we obtain
%repeating the arguments in \eqref{eq:exp-UB-135702} (mainly the ones used to cope with those $x_0\notin \mathcal{E}_2^{\mathsf{typical}}$) gives
%
\begin{align}
 & \frac{p_{\sqrt{\alpha_{t}}X_{t-1}}\big(\phi_{t}(x)\big)}{p_{X_{t}}(x)}=O\Big(\exp\big(-\theta_{t}(x)d\log T\big)\Big)+\Bigg(1+O\bigg(\frac{d\log^{3}T}{T^{2}}+\frac{\varepsilon_{\mathsf{score},t}(x)\sqrt{d\log^{3}T}}{T}\bigg)\Bigg)\cdot\notag\\
 & \ \ \int_{x_{0}\in\mathcal{E}_{2}^{\mathsf{typical}}}p_{X_{0}\mymid X_{t}}(x_{0}\mymid x)\exp\bigg(\frac{d(1-\alpha_{t})}{2(\alpha_{t}-\overline{\alpha}_{t})}+\frac{(1-\alpha_{t})\big[\big(x-\sqrt{\overline{\alpha}_{t}}x_{0}\big)^{\top}\mathbb{E}\big[x-\sqrt{\overline{\alpha}_{t}}X_{0}\mid X_{t}=x\big]-\big\| x-\sqrt{\overline{\alpha}_{t}}x_{0}\big\|_{2}^{2}\big]}{2(\alpha_{t}-\overline{\alpha}_{t})(1-\overline{\alpha}_{t})}\bigg)\mathrm{d}x_{0}\notag\\
% & \qquad\qquad+O\Big(\exp\big(-\theta_{t}(x)d\log T\big)\Big)\notag\\
 & =O\bigg(\frac{d\log^{3}T}{T^{2}}+\frac{\varepsilon_{\mathsf{score},t}(x)\sqrt{d\log^{3}T}}{T}\bigg)+\notag\\
 & \ \ \int_{x_{0}\in\mathcal{E}_{2}^{\mathsf{typical}}}p_{X_{0}\mymid X_{t}}(x_{0}\mymid x)\exp\bigg(\frac{d(1-\alpha_{t})}{2(\alpha_{t}-\overline{\alpha}_{t})}+\frac{(1-\alpha_{t})\big[\big(x-\sqrt{\overline{\alpha}_{t}}x_{0}\big)^{\top}\mathbb{E}\big[x-\sqrt{\overline{\alpha}_{t}}X_{0}\mid X_{t}=x\big]-\big\| x-\sqrt{\overline{\alpha}_{t}}x_{0}\big\|_{2}^{2}\big]}{2(\alpha_{t}-\overline{\alpha}_{t})(1-\overline{\alpha}_{t})}\bigg)\mathrm{d}x_{0}\notag\\
 & =O\bigg(\frac{d\log^{3}T}{T^{2}}+\frac{\varepsilon_{\mathsf{score},t}(x)\sqrt{d\log^{3}T}}{T}\bigg)+\notag\\
 & \ \ \int p_{X_{0}\mymid X_{t}}(x_{0}\mymid x)\exp\bigg(\frac{d(1-\alpha_{t})}{2(\alpha_{t}-\overline{\alpha}_{t})}+\frac{(1-\alpha_{t})\big[\big(x-\sqrt{\overline{\alpha}_{t}}x_{0}\big)^{\top}\mathbb{E}\big[x-\sqrt{\overline{\alpha}_{t}}X_{0}\mid X_{t}=x\big]-\big\| x-\sqrt{\overline{\alpha}_{t}}x_{0}\big\|_{2}^{2}\big]}{2(\alpha_{t}-\overline{\alpha}_{t})(1-\overline{\alpha}_{t})}\bigg)\mathrm{d}x_{0},
	\label{eq:refined-p-ratio-X0-Xt-135}
\end{align}
where the penultimate inequality invokes \eqref{eq:bound-covariance-norm-UB0123-sum} (so that the integral above is at most $O(1)$), 
and the last inequality repeats the arguments in \eqref{eq:exp-UB-135702} once again to demonstrate that
\begin{align*}
 & \int_{x_{0}\notin\mathcal{E}_{2}^{\mathsf{typical}}}p_{X_{0}\mymid X_{t}}(x_{0}\mymid x)\exp\bigg(\frac{d(1-\alpha_{t})}{2(\alpha_{t}-\overline{\alpha}_{t})}+\frac{(1-\alpha_{t})\big[\big(x-\sqrt{\overline{\alpha}_{t}}x_{0}\big)^{\top}\mathbb{E}\big[x-\sqrt{\overline{\alpha}_{t}}X_{0}\mid X_{t}=x\big]-\big\| x-\sqrt{\overline{\alpha}_{t}}x_{0}\big\|_{2}^{2}\big]}{2(\alpha_{t}-\overline{\alpha}_{t})(1-\overline{\alpha}_{t})}\bigg)\mathrm{d}x_{0}\\
 & \qquad\qquad\lesssim\exp\big(-\theta_{t}(x)d\log T\big)\lesssim\frac{d\log^{3}T}{T^{2}}.
\end{align*}
%

%%
%\begin{align*}
%&\frac{p_{\sqrt{\alpha_{t}}X_{t-1}}\big(\phi_{t}(x)\big)}{p_{X_{t}}(x)}  =O\bigg(d\Big(\frac{1-\alpha_{t}}{\alpha_{t}-\overline{\alpha}_{t}}\Big)^{2}\log^{2}T + \varepsilon_{\score, t}(x)\sqrt{d\log T}\Big(\frac{1-\alpha_{t}}{\alpha_{t}-\overline{\alpha}_{t}}\Big)\bigg)+\notag\\
% & \int_{x_{0}}p_{X_{0}\mymid X_{t}}(x_{0}\mymid x)\exp\bigg(\frac{d(1-\alpha_{t})}{2(\alpha_{t}-\overline{\alpha}_{t})}-\frac{(1-\alpha_{t})\big(\big\| x-\sqrt{\overline{\alpha}_{t}}x_{0}\big\|_{2}^{2}-(x-\sqrt{\overline{\alpha}_{t}}x_{0})^{\top}\int_{x_{0}}p_{X_{0}\mymid X_{t}}(x_{0}\mymid x)\big(x-\sqrt{\overline{\alpha}_{t}}x_{0}\big)\mathrm{d}x_{0}\big)}{2(\alpha_{t}-\overline{\alpha}_{t})(1-\overline{\alpha}_{t})}\bigg)\mathrm{d}x_{0},
%\end{align*}
%

\paragraph{Step 2: obtaining a refined approximation on $\frac{p_{\phi_{t}(Y)}(\phi_{t}(x))}{p_{Y}(x)}$.}

For any matrix $\Delta\in\mathbb{R}^{d\times d}$ and any symmetric
matrix $A\in\mathbb{R}^{d\times d}$ obeying $\|A\|<1/2$ and $\|\Delta\|<1/2$,
elementary linear algebra (e.g., Weyl's inequality) tells us that
\begin{align*}
\sum_{i=1}^{d}\sigma_{i}(I+A+\Delta) & =\sum_{i=1}^{d}\big(\sigma_{i}(I+A)+O(\|\Delta\|)\big)=\sum_{i=1}^{d}\lambda_{i}(I+A)+O(d\|\Delta\|)=d+\mathsf{Tr}(A)+O(d\|\Delta\|),\\
\sum_{i=1}^{d}\big(\sigma_{i}(I+A)-1\big)^{2} & =\sum_{i=1}^{d}\big(\lambda_{i}(I+A)-1\big)^{2}=\sum_{i=1}^{d}\big(\lambda_{i}(A)\big)^{2}=\|A\|_{\mathrm{F}}^{2},\\
\sum_{i=1}^{d}\big(\sigma_{i}(I+A+\Delta)-1\big)^{2} & \leq2\sum_{i=1}^{d}\big(\sigma_{i}(I+A)-1\big)^{2}+2\sum_{i=1}^{d}\big(\sigma_{i}(I+A+\Delta)-\sigma_{i}(I+A)\big)^{2}\\
 & \leq2\|A\|_{\mathrm{F}}^{2}+2d\|\Delta\|^{2},
\end{align*}
with $\sigma_{i}(Z)$ (resp.~$\lambda_{i}(Z)$) representing the
$i$-th largest singular value (resp.~eigenvalue) of a matrix $Z$.
These properties in turn allow one to derive
\begin{align*}
\log\big|\det(I+A+\Delta)\big| & =\sum_{i=1}^{d}\log\big(\sigma_{i}(I+A+\Delta)\big)=\sum_{i=1}^{d}\big(\sigma_{i}(I+A+\Delta)-1\big)+O\bigg(\sum_{i=1}^{d}\big(\sigma_{i}(I+A+\Delta)-1\big)^{2}\bigg)\\
 & =\mathsf{Tr}(A)+O\big(d\|\Delta\|+\|A\|_{\mathrm{F}}^{2}+d\|\Delta\|^{2}\big)=\mathsf{Tr}(A)+O\big(\|A\|_{\mathrm{F}}^{2}+d\|\Delta\|\big).
\end{align*}

With this approximation for the log-determinant function in mind, we can invoke \eqref{eq:det-part-phi-expression} to obtain  
\begin{align*}
\notag & \log\frac{p_{\phi_{t}(Y)}(\phi_{t}(x))}{p_{Y}(x)}=-\log\bigg|\mathsf{det}\bigg(I_{d}+\frac{\partial\phi_{t}^{\star}(x)}{\partial x}-I+\frac{1-\alpha_{t}}{2}\Big[J_{s_{t}}(x)-J_{s_{t}^{\star}}(x)\Big]\bigg)\bigg|\notag\\
 & =-\mathsf{Tr}\bigg(\frac{\partial\phi_{t}^{\star}(x)}{\partial x}-I\bigg)+O\bigg(\bigg\|\frac{\partial\phi_{t}^{\star}(x)}{\partial x}-I\bigg\|_{\mathrm{F}}^{2}+d(1-\alpha_{t})\varepsilon_{\mathsf{Jacobi},t}(x)\bigg)\\
 & =O\bigg(\bigg\|\frac{\partial\phi_{t}^{\star}(x)}{\partial x}-I\bigg\|_{\mathrm{F}}^{2}+d(1-\alpha_{t})\varepsilon_{\mathsf{Jacobi},t}(x)\bigg)+\left(1+O\bigg(\frac{\log T}{T}\bigg)\right)\\
 & \quad\cdot\Bigg\{\frac{d(1-\alpha_{t})}{2(\alpha_{t}-\overline{\alpha}_{t})}+\frac{(1-\alpha_{t})\Big(\big\|\int_{x_{0}}p_{X_{0}\mymid X_{t}}(x_{0}\mymid x)\big(x-\sqrt{\overline{\alpha}_{t}}x_{0}\big)\mathrm{d}x_{0}\big\|_{2}^{2}-\int_{x_{0}}p_{X_{0}\mymid X_{t}}(x_{0}\mymid x)\big\| x-\sqrt{\overline{\alpha}_{t}}x_{0}\big\|_{2}^{2}\mathrm{d}x_{0}\Big)}{2(\alpha_{t}-\overline{\alpha}_{t})(1-\overline{\alpha}_{t})}\Bigg\}\notag\\
 & =\frac{d(1-\alpha_{t})}{2(\alpha_{t}-\overline{\alpha}_{t})}+\frac{(1-\alpha_{t})\Big(\big\|\int_{x_{0}}p_{X_{0}\mymid X_{t}}(x_{0}\mymid x)\big(x-\sqrt{\overline{\alpha}_{t}}x_{0}\big)\mathrm{d}x_{0}\big\|_{2}^{2}-\int_{x_{0}}p_{X_{0}\mymid X_{t}}(x_{0}\mymid x)\big\| x-\sqrt{\overline{\alpha}_{t}}x_{0}\big\|_{2}^{2}\mathrm{d}x_{0}\Big)}{2(\alpha_{t}-\overline{\alpha}_{t})(1-\overline{\alpha}_{t})}\\
 & \quad\quad+O\bigg(\bigg\|\frac{\partial\phi_{t}^{\star}(x)}{\partial x}-I\bigg\|_{\mathrm{F}}^{2}+d(1-\alpha_{t})\varepsilon_{\mathsf{Jacobi},t}(x)+\frac{d\log^{2}T}{T^{2}}\bigg),\notag
\end{align*}
where the penultimate relation arises from \eqref{eq:defn-B-intermediate-2} and the following fact (which uses \eqref{eqn:properties-alpha-proof-1})
\[
\Bigg|\frac{\frac{1}{\alpha_{t}-\overline{\alpha}_{t}}-\frac{1}{1-\overline{\alpha}_{t}}}{\frac{1}{1-\overline{\alpha}_{t}}}\Bigg|=\frac{\,\frac{1-\alpha_{t}}{(\alpha_{t}-\overline{\alpha}_{t})(1-\overline{\alpha}_{t})}\,}{\frac{1}{1-\overline{\alpha}_{t}}}=\frac{1-\alpha_{t}}{\alpha_{t}-\overline{\alpha}_{t}}=O\bigg(\frac{\log T}{T}\bigg),
\]
and the last relation applies Lemma~\ref{lem:x0} (under the assumption $\theta_t(x)\lesssim 1$) and \eqref{eqn:properties-alpha-proof}.  It is then easily seen that
\begin{align}
 & \frac{p_{\phi_{t}(Y)}(\phi_{t}(x))}{p_{Y}(x)}=\exp\Bigg(O\bigg(\bigg\|\frac{\partial\phi_{t}^{\star}(x)}{\partial x}-I\bigg\|_{\mathrm{F}}^{2}+d(1-\alpha_{t})\varepsilon_{\mathsf{Jacobi},t}(x)+\frac{d\log^{2}T}{T^{2}}\bigg)\Bigg)\nonumber\\
 & \ \cdot\exp\Bigg(\frac{d(1-\alpha_{t})}{2(\alpha_{t}-\overline{\alpha}_{t})}+\frac{(1-\alpha_{t})\Big(\big\|\int_{x_{0}}p_{X_{0}\mymid X_{t}}(x_{0}\mymid x)\big(x-\sqrt{\overline{\alpha}_{t}}x_{0}\big)\mathrm{d}x_{0}\big\|_{2}^{2}-\int_{x_{0}}p_{X_{0}\mymid X_{t}}(x_{0}\mymid x)\big\| x-\sqrt{\overline{\alpha}_{t}}x_{0}\big\|_{2}^{2}\mathrm{d}x_{0}\Big)}{2(\alpha_{t}-\overline{\alpha}_{t})(1-\overline{\alpha}_{t})}\Bigg) \notag\\
 & =O\bigg(\bigg\|\frac{\partial\phi_{t}^{\star}(x)}{\partial x}-I\bigg\|_{\mathrm{F}}^{2}+d(1-\alpha_{t})\varepsilon_{\mathsf{Jacobi},t}(x)+\frac{d\log^{2}T}{T^{2}}\bigg) \notag\\
 & +\exp\Bigg(\frac{d(1-\alpha_{t})}{2(\alpha_{t}-\overline{\alpha}_{t})}+\frac{(1-\alpha_{t})\Big(\big\|\int_{x_{0}}p_{X_{0}\mymid X_{t}}(x_{0}\mymid x)\big(x-\sqrt{\overline{\alpha}_{t}}x_{0}\big)\mathrm{d}x_{0}\big\|_{2}^{2}-\int_{x_{0}}p_{X_{0}\mymid X_{t}}(x_{0}\mymid x)\big\| x-\sqrt{\overline{\alpha}_{t}}x_{0}\big\|_{2}^{2}\mathrm{d}x_{0}\Big)}{2(\alpha_{t}-\overline{\alpha}_{t})(1-\overline{\alpha}_{t})}\Bigg). 
	\label{eq:p-phit-py-ratio-refined}
\end{align}
%

%%
%\begin{align*}
%\notag & \frac{p_{\phi_{t}(Y)}(\phi_{t}(x))}{p_{Y}(x)}=\mathsf{det}\Big(\frac{\partial\phi_{t}(x)}{\partial x}\Big)^{-1}
%=\left(\mathsf{det}\bigg(\frac{\partial\phi_{t}^{\star}(x)}{\partial x}+\frac{1-\alpha_{t}}{2}\Big[J_{s_{t}}(x)-J_{s_{t}^{\star}}(x)\Big]\bigg)\right)^{-1}\notag\\
%%
% & \quad=\exp\bigg(\frac{d(1-\alpha_{t})}{2(\alpha_{t}-\overline{\alpha}_{t})}+\frac{(1-\alpha_{t})\Big(\big\|\int_{x_{0}}p_{X_{0}\mymid X_{t}}(x_{0}\mymid x)\big(x-\sqrt{\overline{\alpha}_{t}}x_{0}\big)\mathrm{d}x_{0}\big\|_{2}^{2}-\int_{x_{0}}p_{X_{0}\mymid X_{t}}(x_{0}\mymid x)\big\| x-\sqrt{\overline{\alpha}_{t}}x_{0}\big\|_{2}^{2}\mathrm{d}x_{0}\Big)}{2(\alpha_{t}-\overline{\alpha}_{t})(1-\overline{\alpha}_{t})}\bigg)\notag\\
%%
% & \quad\qquad+O\bigg(\Big\|\frac{\partial \phi^{\star}_t(x)}{\partial x} - I\Big\|_{\mathrm{F}}^2+(1-\alpha_{t})d\varepsilon_{\Jacobi,t}(x)\bigg).
%\end{align*}

\paragraph{Step 3: computing the density ratio of interest.} 
From relations \eqref{eq:xt} and \eqref{eq:yt} in Lemma~\ref{lem:main-ODE} as well as \eqref{eq:bound-covariance-norm-UB0123-sum}, we see that
\[
	\frac{p_{\phi_{t}(Y_{t})}(\phi_{t}(x))}{p_{Y_{t}}(x)}=1+o(1) \qquad \text{and} \qquad
	\frac{p_{\sqrt{\alpha_{t}}X_{t-1}}\big(\phi_{t}(x)\big)}{p_{X_{t}}(x)}=1+o(1).
\]
Then, compare the preceding two results \eqref{eq:refined-p-ratio-X0-Xt-135} and \eqref{eq:p-phit-py-ratio-refined} 
(with $Y$ chosen to be $Y_t$) to arrive at
\begin{align*}
 & \frac{p_{\phi_{t}(Y_{t})}(\phi_{t}(x))}{p_{Y_{t}}(x)}/\frac{p_{\sqrt{\alpha_{t}}X_{t-1}}\big(\phi_{t}(x)\big)}{p_{X_{t}}(x)}\\
 & \qquad =\frac{g_{1}(x)}{g_{2}(x)}+O\bigg(\bigg\|\frac{\partial\phi_{t}^{\star}(x)}{\partial x}-I\bigg\|_{\mathrm{F}}^{2}+\frac{\varepsilon_{\mathsf{score},t}(x)\sqrt{d\log^{3}T}}{T}+d(1-\alpha_{t})\varepsilon_{\mathsf{Jacobi},t}(x)+\frac{d\log^{3}T}{T^{2}}\bigg),
\end{align*}
where the two functions $g_{1}(\cdot)$ and $g_2(\cdot)$ are defined as
\begin{align*}
g_{1}(x) & \coloneqq\exp\Bigg(\frac{(1-\alpha_{t})\Big(\big\|\int_{x_{0}}p_{X_{0}\mymid X_{t}}(x_{0}\mymid x)\big(x-\sqrt{\overline{\alpha}_{t}}x_{0}\big)\mathrm{d}x_{0}\big\|_{2}^{2}-\int_{x_{0}}p_{X_{0}\mymid X_{t}}(x_{0}\mymid x)\big\| x-\sqrt{\overline{\alpha}_{t}}x_{0}\big\|_{2}^{2}\mathrm{d}x_{0}\Big)}{2(\alpha_{t}-\overline{\alpha}_{t})(1-\overline{\alpha}_{t})}\Bigg),\\
g_{2}(x) & \coloneqq\int p_{X_{0}\mymid X_{t}}(x_{0}\mymid x)\exp\bigg(\frac{(1-\alpha_{t})\big[\big(x-\sqrt{\overline{\alpha}_{t}}x_{0}\big)^{\top}\mathbb{E}\big[x-\sqrt{\overline{\alpha}_{t}}X_{0}\mid X_{t}=x\big]-\big\| x-\sqrt{\overline{\alpha}_{t}}x_{0}\big\|_{2}^{2}\big]}{2(\alpha_{t}-\overline{\alpha}_{t})(1-\overline{\alpha}_{t})}\bigg)\mathrm{d}x_{0}.
\end{align*}
Jensen's inequality tells us that $g_1(x)\leq g_2(x)$, and hence we can write 
\begin{align*}
 & \frac{p_{\phi_{t}(Y_{t})}(\phi_{t}(x))}{p_{Y_{t}}(x)}/\frac{p_{\sqrt{\alpha_{t}}X_{t-1}}\big(\phi_{t}(x)\big)}{p_{X_{t}}(x)}\\
 & \qquad=1+\zeta_{t}(x)+O\bigg(\bigg\|\frac{\partial\phi_{t}^{\star}(x)}{\partial x}-I\bigg\|_{\mathrm{F}}^{2}+\frac{\varepsilon_{\mathsf{score},t}(x)\sqrt{d\log^{3}T}}{T}+\frac{d \log T \varepsilon_{\mathsf{Jacobi},t}(x)}{T}+\frac{d\log^{3}T}{T^{2}}\bigg)
\end{align*}
for those $x$ obeying the assumptions of this lemma, where $\zeta_{t}(\cdot) = g_{1}(\cdot)/g_{2}(\cdot) - 1$ is some function obeying $\zeta_{t}(x)\leq0$.

Similarly, replacing $\phi_t$ (resp.~$Y_t$) with $\phi_t^{\star}$ (resp.~$X_t$) in the above display and repeating the same arguments, we arrive at
\begin{align}
\frac{p_{\phi_{t}^{\star}(X_{t})}(\phi_{t}^{\star}(x))}{p_{\sqrt{\alpha_{t}}X_{t-1}}\big(\phi_{t}^{\star}(x)\big)} & =\frac{p_{\phi_{t}^{\star}(X_{t})}(\phi_{t}^{\star}(x))}{p_{X_{t}}(x)}/\frac{p_{\sqrt{\alpha_{t}}X_{t-1}}\big(\phi_{t}^{\star}(x)\big)}{p_{X_{t}}(x)}=1+\zeta_{t}(x)+O\bigg(\bigg\|\frac{\partial\phi_{t}^{\star}(x)}{\partial x}-I\bigg\|_{\mathrm{F}}^{2}+\frac{d\log^{3}T}{T^{2}}\bigg).
\label{eq:ratio-of-interest-refined-2}
\end{align}
%
%
%%
%\begin{align*}
%\frac{p_{\phi_{t}(X_{t})}(\phi_{t}(x))}{p_{\sqrt{\alpha_{t}}X_{t-1}}\big(\phi_{t}(x)\big)} & =\frac{p_{\phi_{t}(X_{t})}(\phi_{t}(x))}{p_{X_{t}}(x)}/\frac{p_{\sqrt{\alpha_{t}}X_{t-1}}\big(\phi_{t}(x)\big)}{p_{X_{t}}(x)}\\
% & =\frac{g_{1}(x)}{g_{2}(x)}+O\bigg(\bigg\|\frac{\partial\phi_{t}^{\star}(x)}{\partial x}-I\bigg\|_{\mathrm{F}}^{2}+\frac{\varepsilon_{\mathsf{score},t}(x)\sqrt{d\log^{3}T}}{T}+d(1-\alpha_{t})\varepsilon_{\mathsf{Jacobi},t}(x)+\frac{d\log^{3}T}{T^{2}}\bigg),
%\end{align*}
%%
%
%
%\begin{align}
%\frac{p_{\phi_{t}(X_{t})}(\phi_{t}(x))}{p_{\sqrt{\alpha_{t}}X_{t-1}}\big(\phi_{t}(x)\big)} & =1+\zeta_{t}(x)+O\bigg(\bigg\|\frac{\partial\phi_{t}^{\star}(x)}{\partial x}-I\bigg\|_{\mathrm{F}}^{2}+\frac{\varepsilon_{\mathsf{score},t}(x)\sqrt{d\log^{3}T}}{T}+d(1-\alpha_{t})\varepsilon_{\mathsf{Jacobi},t}(x)+\frac{d\log^{3}T}{T^{2}}\bigg)
%	\label{eq:ratio-of-interest-refined}
%\end{align}
%
%
%
%In particular, replacing $\phi_t$ with $\phi_t^{\star}$ in the above display leads to
%%
%\begin{align}
%\frac{p_{\phi_{t}^{\star}(X_{t})}(\phi_{t}^{\star}(x))}{p_{\sqrt{\alpha_{t}}X_{t-1}}\big(\phi_{t}^{\star}(x)\big)}=1+\zeta_{t}(x)+O\bigg(\bigg\|\frac{\partial\phi_{t}^{\star}(x)}{\partial x}-I\bigg\|_{\mathrm{F}}^{2}+\frac{d\log^{3}T}{T^{2}}\bigg),
%	\label{eq:ratio-of-interest-refined-2}
%\end{align}
%
%given that the function $\zeta_{t}(\cdot)$ is independent of the choice of $\phi_t$ for those $x$ obeying the assumptions of this lemma. 
%

The careful reader would immediately note that we have not yet defined $\zeta_{t}(\cdot)$ for  all $x$. 
To ease presentation, 
we shall simply take 
$\zeta_{t}(x) =0$ for any $x$ that does not satisfy the assumptions of this lemma. 
%

%Repeating the same arguments also leads to
%
%

%
%
%%
%\begin{align}
%\frac{p_{\phi_{t}^{\star}(X_t)}(\phi_{t}^{\star}(x))}{p_{\sqrt{\alpha_{t}}X_{t-1}}\big(\phi_{t}^{\star}(x)\big)} 
%= 1 + \zeta_t(x) + O\bigg(\Big\|\frac{\partial \phi^{\star}_t(x)}{\partial x} - I\Big\|_{\mathrm{F}}^2+d\log^{2}T\Big(\frac{1-\alpha_{t}}{\alpha_{t}-\overline{\alpha}_{t}}\Big)^{2}\bigg)
%	\label{eq:ratio-p-phi-p-star-123}
%\end{align}
%%
%for some $\zeta_t(x) \le 0$ due to the convexity of the exponential function.

\paragraph{Step 4: bounding the expectation of $\zeta_t(\cdot)$.}  
Define the set 
\[
	\mathcal{E}_{\zeta}\coloneqq\left\{ x\mid\theta_{t}(x)\leq2C_{12},\frac{C_{10}\theta_{t}(x)d\log^{2}T}{T}\leq1\right\} 
\]
for some large enough constant $C_{12}>0$. 
With \eqref{eq:ratio-of-interest-refined-2} in place, we have 
\begin{align*}
	&p_{\sqrt{\alpha_{t}}X_{t-1}}\big(\phi^{\star}_{t}(x)\big)\zeta_{t}(x)
	= p_{\phi_{t}(X_{t})}\big(\phi^{\star}_{t}(x)\big)- p_{\sqrt{\alpha_{t}}X_{t-1}}\big(\phi^{\star}_{t}(x)\big) 
	- p_{\sqrt{\alpha_{t}}X_{t-1}}\big(\phi^{\star}_{t}(x)\big)O\bigg(\bigg\|\frac{\partial\phi_{t}^{\star}(x)}{\partial x}-I\bigg\|_{\mathrm{F}}^{2}+\frac{d\log^{3}T}{T^{2}}\bigg)
\end{align*}
for any $x\in \mathcal{E}_{\zeta}$.  
In addition, according to the properties \eqref{eq:det-part-phi-expression}, \eqref{eq:bound-covariance-norm-UB0123}, Lemma~\ref{lem:x0}, and the assumption that $T \gtrsim d^2\log^5T$, one can easily derive
\begin{subequations}
	\label{eq:det-UB-LB-135}
\begin{align}
\bigg|\mathsf{det}\bigg(\frac{\partial\phi_{t}^{\star}(x)}{\partial x}\bigg)\bigg| & =\mathsf{det}\Bigg(\bigg(1-\frac{1-\alpha_{t}}{2(1-\overline{\alpha}_{t})}\bigg)I_{d}+\frac{1-\alpha_{t}}{2(1-\overline{\alpha}_{t})}\mathsf{Cov}\bigg(\frac{X_{t}-\sqrt{\overline{\alpha}_{t}}X_{0}}{\sqrt{1-\overline{\alpha}_{t}}}\mid X_{t}=x\bigg)\Bigg)\notag\\
 & \leq\bigg(1+O\Big(\frac{d\log^{2}T}{T}\Big)\bigg)^{d}=1+O\Big(\frac{d^{2}\log^{2}T}{T}\Big)\leq2 \\
%\end{align}
%
%\begin{align}
	\text{and}\qquad 
	\bigg|\mathsf{det}\bigg(\frac{\partial\phi_{t}^{\star}(x)}{\partial x}\bigg)\bigg| 
	%& = 
	%\mathsf{det}\Bigg(\bigg(1-\frac{1-\alpha_{t}}{2(1-\overline{\alpha}_{t})}\bigg)I_{d}+\frac{1-\alpha_{t}}{2(1-\overline{\alpha}_{t}%)}\mathsf{Cov}\bigg(\frac{X_{t}-\sqrt{\overline{\alpha}_{t}}X_{0}}{\sqrt{1-\overline{\alpha}_{t}}}\mid X_{t}=x\bigg)\Bigg) \notag\\
 & \geq\bigg(1-\frac{1-\alpha_{t}}{2(1-\overline{\alpha}_{t})}\bigg)^{d}=\bigg(1-O\Big(\frac{\log T}{T}\Big)\bigg)^{d}\geq\frac{1}{2} 
\end{align}
\end{subequations}
for any $x\in \mathcal{E}_{\zeta}$. 
These properties in turn allow one to derive 
\begin{align}
0 & \leq 
-{\displaystyle \int}p_{\sqrt{\alpha_{t}}X_{t-1}}\big(\phi_{t}^{\star}(x)\big)\zeta_{t}(x)\mathrm{d}x =
	-{\displaystyle \int}_{x\in\mathcal{E}_{\zeta}}p_{\sqrt{\alpha_{t}}X_{t-1}}\big(\phi_{t}^{\star}(x)\big)\zeta_{t}(x)\mathrm{d}x\nonumber\\
 & \asymp-{\displaystyle \int}_{x\in\mathcal{E}_{\zeta}}p_{\sqrt{\alpha_{t}}X_{t-1}}\big(\phi_{t}^{\star}(x)\big)\bigg|\det\Big(\frac{\partial\phi_{t}^{\star}(x)}{\partial x}\Big)\bigg|\zeta_{t}(x)\mathrm{d}x\nonumber\\
 & =-{\displaystyle \int}_{x\in\mathcal{E}_{\zeta}}p_{\phi_{t}^{\star}(X_{t})}\big(\phi_{t}^{\star}(x)\big)\bigg|\det\Big(\frac{\partial\phi_{t}^{\star}(x)}{\partial x}\Big)\bigg|\mathrm{d}x+{\displaystyle \int}_{x\in\mathcal{E}_{\zeta}}p_{\sqrt{\alpha_{t}}X_{t-1}}\big(\phi_{t}^{\star}(x)\big)\bigg|\det\Big(\frac{\partial\phi_{t}^{\star}(x)}{\partial x}\Big)\bigg|\mathrm{d}x\notag\\
 & \qquad+{\displaystyle \int}_{x\in\mathcal{E}_{\zeta}}p_{\sqrt{\alpha_{t}}X_{t-1}}\big(\phi_{t}^{\star}(x)\big)\bigg|\det\Big(\frac{\partial\phi_{t}^{\star}(x)}{\partial x}\Big)\bigg|O\bigg(\bigg\|\frac{\partial\phi_{t}^{\star}(x)}{\partial x}-I\bigg\|_{\mathrm{F}}^{2}+\frac{d\log^{3}T}{T^{2}}\bigg)\mathrm{d}x\nonumber\\
 & \leq-{\displaystyle \int}_{x\in\mathcal{E}_{\zeta}}p_{\phi_{t}^{\star}(X_{t})}\big(\phi_{t}^{\star}(x)\big)\bigg|\det\Big(\frac{\partial\phi_{t}^{\star}(x)}{\partial x}\Big)\bigg|\mathrm{d}x+1+{\displaystyle \int}p_{\sqrt{\alpha_{t}}X_{t-1}}\big(\phi_{t}^{\star}(x)\big)O\bigg(\bigg\|\frac{\partial\phi_{t}^{\star}(x)}{\partial x}-I\bigg\|_{\mathrm{F}}^{2}\bigg)\mathrm{d}x+
	O\bigg( \frac{d\log^{3}T}{T^{2}} \bigg),
	\label{eq:int-p-Xt-1-135}
\end{align}
where the last line is valid since
\begin{align*}
{\displaystyle \int}_{x\in\mathcal{E}_{\zeta}}p_{\sqrt{\alpha_{t}}X_{t-1}}\big(\phi_{t}^{\star}(x)\big)\bigg|\det\Big(\frac{\partial\phi_{t}^{\star}(x)}{\partial x}\Big)\bigg|\mathrm{d}x & \leq{\displaystyle \int}p_{\sqrt{\alpha_{t}}X_{t-1}}\big(\phi_{t}^{\star}(x)\big)\bigg|\det\Big(\frac{\partial\phi_{t}^{\star}(x)}{\partial x}\Big)\bigg|\mathrm{d}x={\displaystyle \int}p_{\sqrt{\alpha_{t}}X_{t-1}}(x)\mathrm{d}x=1.
\end{align*}
%

%\yxc{TODO}
%
%\begin{align}
%0 & \leq-{\displaystyle \int}p_{\sqrt{\alpha_{t}}X_{t-1}}\big(\phi_{t}^{\star}(x)\big)\zeta_{t}(x)\mathrm{d}x=-{\displaystyle \int}_{x\in\mathcal{E}_{\zeta}}p_{\sqrt{\alpha_{t}}X_{t-1}}\big(\phi_{t}^{\star}(x)\big)\zeta_{t}(x)\mathrm{d}x\nonumber \\
% & =-{\displaystyle \int}_{x\in\mathcal{E}_{\zeta}}p_{\phi_{t}^{\star}(X_{t})}\big(\phi_{t}^{\star}(x)\big)\mathrm{d}x+{\displaystyle \int}_{x\in\mathcal{E}_{\zeta}}p_{\sqrt{\alpha_{t}}X_{t-1}}\big(\phi_{t}^{\star}(x)\big)\mathrm{d}x+{\displaystyle \int}_{x\in\mathcal{E}_{\zeta}}p_{\sqrt{\alpha_{t}}X_{t-1}}\big(\phi_{t}^{\star}(x)\big)O\bigg(\bigg\|\frac{\partial\phi_{t}^{\star}(x)}{\partial x}-I\bigg\|_{\mathrm{F}}^{2}+\frac{d\log^{3}T}{T^{2}}\bigg)\mathrm{d}x\nonumber \\
% & \leq-{\displaystyle \int}_{x\in\mathcal{E}_{\zeta}}p_{\phi_{t}^{\star}(X_{t})}\big(\phi_{t}^{\star}(x)\big)\mathrm{d}x+1+{\displaystyle \int}p_{\sqrt{\alpha_{t}}X_{t-1}}\big(\phi_{t}^{\star}(x)\big)O\bigg(\bigg\|\frac{\partial\phi_{t}^{\star}(x)}{\partial x}-I\bigg\|_{\mathrm{F}}^{2}\bigg)\mathrm{d}x+\frac{d\log^{3}T}{T^{2}}.
%	\label{eq:int-p-Xt-1-135}
%\end{align}
%

It then boils down to evaluating ${\int}_{x\in\mathcal{E}_{\zeta}}p_{\phi_{t}^{\star}(X_{t})}\big(\phi_{t}^{\star}(x)\big)\big|\det\frac{\partial\phi_{t}^{\star}(x)}{\partial x}\big|\mathrm{d}x$. 
Towards this end, we make the observation that
\begin{align}
{\displaystyle \int}_{x\in\mathcal{E}_{\zeta}}p_{\phi_{t}^{\star}(X_{t})}\big(\phi_{t}^{\star}(x)\big)\bigg|\det\Big(\frac{\partial\phi_{t}^{\star}(x)}{\partial x}\Big)\bigg|\mathrm{d}x & ={\displaystyle \int}_{x\in\mathcal{E}_{\zeta}}p_{X_{t}}(x)\bigg|\mathsf{det}\bigg(\frac{\partial\phi_{t}^{\star}(x)}{\partial x}\bigg)^{-1}\bigg|\,\bigg|\det\Big(\frac{\partial\phi_{t}^{\star}(x)}{\partial x}\Big)\bigg|\mathrm{d}x\notag\\
 & ={\displaystyle \int}_{x\in\mathcal{E}_{\zeta}}p_{X_{t}}(x)\mathrm{d}x=1-{\displaystyle \int}_{x\notin\mathcal{E}_{\zeta}}p_{X_{t}}(x)\mathrm{d}x. 
	\label{eq:int-p-phit-Ezeta-out}
\end{align}
%
%Recalling the property \eqref{eq:det-UB-LB-135}, we can derive
%%which combined with \eqref{eq:defn-B-intermediate} results in 
%%
%\begin{align}
%{\displaystyle \int}_{x\notin\mathcal{E}_{\zeta}}p_{\phi_{t}^{\star}(X_{t})}\big(\phi_{t}^{\star}(x)\big)\mathrm{d}x={\displaystyle \int}_{x\notin\mathcal{E}_{\zeta}}p_{X_{t}}(x)\bigg|\mathsf{det}\bigg(\frac{\partial\phi_{t}^{\star}(x)}{\partial x}\bigg)^{-1}\bigg|\mathrm{d}x
%\leq 2{\displaystyle \int}_{x\notin\mathcal{E}_{\zeta}}p_{X_{t}}(x)\mathrm{d}x.
%	\label{eq:int-p-phit-Ezeta-out}
%\end{align}
%
Moreover, it is seen from \eqref{eq:Xt-2range-ODE} that
\begin{align*}
\mathbb{P}\left(\|X_{t}\|_{2}>T^{c_{R}+2}\right)\leq\exp(-c_6d\log T),
\end{align*}
thereby allowing us to derive that
\begin{align*}
{\displaystyle \int}_{x\notin\mathcal{E}_{\zeta}}p_{X_{t}}(x)\mathrm{d}x & \leq{\displaystyle \int}_{x:\,\theta_{t}(x)\leq C_{12},\|x\|_{2}\leq T^{c_{R}+2}}p_{X_{t}}(x)\mathrm{d}x+{\displaystyle \int}_{\|x\|_{2}>T^{c_{R}+2}}p_{X_{t}}(x)\mathrm{d}x\\
 & \leq(T^{c_{R}+2})^{d}\exp\left(-2C_{12}d\log T\right)+\exp(-c_6 d\log T)\\
	& \leq 2\exp\big(- \min\{C_{12},c_6\} d\log T\big).
\end{align*}
Combine this with \eqref{eq:int-p-phit-Ezeta-out} to reach
\begin{align*}
{\displaystyle \int}_{x\in\mathcal{E}_{\zeta}}p_{\phi_{t}^{\star}(X_{t})}\big(\phi_{t}^{\star}(x)\big)\bigg|\det\Big(\frac{\partial\phi_{t}^{\star}(x)}{\partial x}\Big)\bigg|\mathrm{d}x	
%{\displaystyle \int}_{x\in\mathcal{E}_{\zeta}}p_{\phi_{t}^{\star}(X_{t})}\big(\phi_{t}^{\star}(x)\big)\bigg|\det\frac{\partial\phi_{t}^{\star}}{\partial x}(x)\bigg|\mathrm{d}x
	%{\displaystyle \int}_{x\in\mathcal{E}_{\zeta}}p_{\phi_{t}^{\star}(X_{t})}\big(\phi_{t}^{\star}(x)\big)\mathrm{d}x 
	%& =1-{\displaystyle \int}_{x\notin\mathcal{E}_{\zeta}}p_{\phi_{t}^{\star}(X_{t})}\big(\phi_{t}^{\star}(x)\big)\mathrm{d}x=1-O\Big({\displaystyle \int}_{x\notin\mathcal{E}_{\zeta}}p_{X_{t}}(x)\mathrm{d}x\Big)\\
	& =1-O\big(\exp\left(- \min\{C_{12},c_6\} d\log T\right)\big). 
\end{align*}
Substitution into \eqref{eq:int-p-Xt-1-135} then gives
\begin{align}
0 & \leq-{\displaystyle \int}p_{\sqrt{\alpha_{t}}X_{t-1}}\big(\phi_{t}^{\star}(x)\big)\zeta_{t}(x)\mathrm{d}x \notag\\
	%& \leq-{\displaystyle \int}_{x\in\mathcal{E}_{\zeta}}p_{\phi_{t}^{\star}(X_{t})}\big(\phi_{t}^{\star}(x)\big)\mathrm{d}x+1+{\displaystyle \int}p_{\sqrt{\alpha_{t}}X_{t-1}}\big(\phi_{t}^{\star}(x)\big)O\bigg(\bigg\|\frac{\partial\phi_{t}^{\star}(x)}{\partial x}-I\bigg\|_{\mathrm{F}}^{2}\bigg)\mathrm{d}x+ O\bigg( \frac{d\log^{3}T}{T^{2}} \bigg) \notag\\
 & \leq-1+O\big(\exp\left(-\min\{C_{12},c_6\}d\log T\right)\big)+1+{\displaystyle \int}p_{\sqrt{\alpha_{t}}X_{t-1}}\big(\phi_{t}^{\star}(x)\big)O\bigg(\bigg\|\frac{\partial\phi_{t}^{\star}(x)}{\partial x}-I\bigg\|_{\mathrm{F}}^{2}\bigg)\mathrm{d}x+ 
	O\bigg(\frac{d\log^{3}T}{T^{2}} \bigg) \notag\\
 & \asymp{\displaystyle \int}p_{\sqrt{\alpha_{t}}X_{t-1}}\big(\phi_{t}^{\star}(x)\big)\bigg\|\frac{\partial\phi_{t}^{\star}(x)}{\partial x}-I\bigg\|_{\mathrm{F}}^{2}\mathrm{d}x+\frac{d\log^{3}T}{T^{2}}. 
	\label{eq:int-p-Xt-1-zeta-UB}
\end{align}
%
% where we have used the fact that ${\displaystyle \int}p_{\phi^{\star}_{t}(X_{t})}\big(\phi_{t}^{\star}(x)\big)\mathrm{d}x=1$. 

To finish up, note that Lemma~\ref{lem:main-ODE} together with Lemma~\ref{lem:x0} and properties~\eqref{eqn:properties-alpha-proof} tells us that, for any $x\in \mathcal{E}_{\zeta}$, 
\[
	p_{X_{t}}(x) \asymp p_{\sqrt{\alpha_{t}}X_{t-1}}\big(\phi_{t}^{\star}(x)\big) .
\]
This taken collectively with \eqref{eq:int-p-Xt-1-zeta-UB} leads to the advertised result
\begin{align*}
0 &\le -\int p_{X_{t}}(x)\zeta_t(x)\mathrm{d}x 
	= -\int_{x\in \mathcal{E}_{\zeta}} p_{X_{t}}(x)\zeta_t(x)\mathrm{d}x
	\asymp-\int_{x\in \mathcal{E}_{\zeta}} p_{\sqrt{\alpha_{t}}X_{t-1}}\big(\phi_{t}^{\star}(x)\big)\zeta_t(x)\mathrm{d}x
\\
	&\lesssim {\displaystyle \int}p_{\sqrt{\alpha_{t}}X_{t-1}}\big(\phi_{t}^{\star}(x)\big)\bigg\|\frac{\partial\phi_{t}^{\star}(x)}{\partial x}-I\bigg\|_{\mathrm{F}}^{2}\mathrm{d}x+\frac{d\log^{3}T}{T^{2}}
	\asymp \int p_{X_{t}}(x)\Big\|\frac{\partial \phi^{\star}_t(x)}{\partial x} - I\Big\|_{\mathrm{F}}^2\mathrm{d}x
+\frac{d\log^{3}T}{T^{2}}.
\end{align*}

\subsection{Proof of Lemma~\ref{lem:q1-large-qk-large}}
\label{sec:proof-lem:q1-large-qk-large}

In view of the definition \eqref{eq:defn-tao-i}, 
%as well as the construction of $\Omega_i$, 
one has 
\begin{align}
	S_k(y_T) \leq c_{14}, \qquad \text{for any }k < \tau(y_T). 
	%2  S_k\big(y_{T}^{\Omega_{i}}\big ) \leq 2c_{14}
	\label{eq:Sk-yT-all-Omegai}
\end{align}
%
%for any $k<\tau$ and $y_T\in \Omega_i$. 
Suppose instead that \eqref{eq:q_k_yk_UB} does not hold true, namely, 
$-\log q_k(y_k)> 2c_{6}d\log T$ for some $k<\tau(y_T)$, and we would like to show that this leads to contradiction. 

Towards this, let $1 < t \le k$ be the smallest time step obeying
\begin{align}
	\theta_t(y_t) = \max\bigg\{ -\frac{\log q_t(y_t)}{d\log T}, c_6 \bigg\} > 2 c_6 = 2 \theta_1(y_1),
\end{align}
where the last identity holds since $-\log q_1(y_1)\leq c_{6}d\log T$ and hence $\theta_1(y_1)=\max\big\{ -\frac{\log q_1(y_1)}{d\log T}, c_6 \big\}=c_6$. 
We claim that $t$ necessarily obeys 
\begin{align}
	2 c_6 < \theta_t(y_t) \leq 4 c_6.   
	\label{eq:thetat-c6-2-4}
\end{align}
Assuming the validity of Claim~\eqref{eq:thetat-c6-2-4} for the moment, 
it necessarily satisfies 
\[
	\theta_1(y_1),\theta_2(y_2),\cdots,\theta_t(y_t)\in [c_6,4c_6].
\]
According to the relations~\eqref{eq:crude-ratio-qt-1-qt} and \eqref{eq:Sk-yT-all-Omegai}, we derive
\begin{align*}
	c_{6}=\theta_{1}(y_{1}) & \le\theta_{t}(y_{t})-\theta_{1}(y_{1})=-\frac{\log q_{t}(y_{t})}{d\log T}-\theta_{1}(y_{1})\leq\frac{-\log q_{t}(y_{t})+\log q_{1}(y_{1})}{d\log T}\\
 & =\frac{1}{d\log T}\sum_{j=1}^{t-1}\big(\log q_{j}(y_{j})-\log q_{j+1}(y_{j+1})\big)\\
	& \leq2c_{1}+C_{10}\left\{ \frac{d\log^{3}T}{T}+\frac{S_{\tau(y_T)-1}(y_{T})}{d\log T}\right\} < 3c_{1}  
\end{align*}
under our sample size condition. This, however, cannot possibly hold if $c_6 \geq 3c_1$ as assumed for Lemma~\ref{lem:q1-large-qk-large}.

To finish up, it suffices to justify Claim~\eqref{eq:thetat-c6-2-4}.  
In order to see this, suppose instead that $\theta_t(y_t) > 4 c_6$. 
Given relation~\eqref{eq:Sk-yT-all-Omegai} that $S_k(y_T)\leq c_{14}$,  it can be readily seen from 
\eqref{eq:xt_up}, \eqref{eq:Sk-yT-all-Omegai} as well as the learning rate properties \eqref{eqn:properties-alpha-proof} that 
\begin{align*}
\theta_{t-1}(y_{t-1}) & =\theta_{t}(y_{t})+\theta_{t-1}(y_{t-1})-\theta_{t}(y_{t})\\
 & =\theta_{t}(y_{t})+\theta_{t-1}(y_{t-1})+\frac{\log q_{t}(y_{t})}{d\log T}\geq\theta_{t}(y_{t})-\frac{\log q_{t-1}(y_{t-1})-\log q_{t}(y_{t})}{d\log T}\\
 & \geq\theta_{t}(y_{t})-\frac{4c_{1}\Big(5\varepsilon_{\score,t}(y_{t})\sqrt{\theta_{t}(y_{t})d\log T}+60\theta_{t}(y_{t})d\log T\Big)}{dT}-\frac{\log2}{d\log T}\\
 & \geq\theta_{t}(y_{t})-\frac{4c_{1}\Big(5\varepsilon_{\score,t}(y_{t})\sqrt{d\log T}+60d\log T\Big)}{dT}\theta_{t}(y_{t})-\frac{\log2}{d\log T}\\
 & >\frac{1}{2}\theta_{t}(y_{t})>2c_{6},
\end{align*}
which is contradictory with the assumption that $t$ is the smallest step obeying $\theta_t(y_t) > 2 c_6$. Thus, we complete the proof of relation~\eqref{eq:q_k_yk_UB} as required. 
%
%
%\begin{align*}
%\theta_{t}(y_{t})-\theta_{t-1}(y_{t-1}) & =-\frac{\log p_{t}(y_{t})}{d\log T}-\theta_{t-1}(y_{t-1})\leq-\frac{\log p_{t}(y_{t})+\log p_{t-1}(y_{t-1})}{d\log T}\\
% & \leq\frac{4c_{1}\Big(5\varepsilon_{\score,t}(y_{t})\sqrt{\theta_{t}(y_{t})d\log T}+60\theta_{t}(y_{t})d\log T\Big)}{dT}+\frac{\log2}{d\log T}.
%\end{align*}
%%
%In turn, this implies the existence of some  $1 < t \le k$ such that: for some large constant $C > 0$
%%
%\begin{align*}
%C\theta_1(y_1) \le \theta_t(y_t) \le 2C\theta_1(y_1)
%\qquad\text{and}\qquad
%\theta_j(y_j) \lesssim 1,~\forall j \le t.
%\end{align*}
%To see why this claim holds, let's take $t$ to be the smallest step such that $\theta_t(y_t) \ge C\theta_1(y_1)$.
%Then if $\theta_t(y_t) \le 2C\theta_1(y_1)$, this claim is ready; otherwise, we have
%\begin{align*}
%\theta_{t-1}(y_{t-1}) \ge \theta_t(y_t) - O\bigg(\frac{1}{dT}\Big(\varepsilon_{\score,t}(y_t)\sqrt{\theta_t(y_t)d\log T}+\theta_t(y_t)d\log T\Big)\bigg) > \frac{1}{2}\theta_t(y_t) \ge C\theta_1(y_1),
%\end{align*}
%which is contradicted with our choice of $t$.
%Then with this claim in hand, according to~\eqref{eq:xt}, we have 
%\begin{align*}
%(C-1)\theta_1(y_1) &\le \theta_t(y_t) - \theta_1(y_1) \lesssim 1, 
%\end{align*}
%which is impossible for $C > 0$ large enough.
%
%
%
%
%

\subsection{Proof of Lemma~\ref{lem:density-ratio-tau}}
\label{sec:proof-lem-density-ratio-tau}

Next, consider any $y_T$, with $\{y_{T-1},\cdots,y_1\}$ being the associated deterministic sequence (cf.~\eqref{eq:defn-yt-sequence-proof})).  
As an immediate consequence of Lemma~\ref{lem:q1-large-qk-large} and the definition \eqref{eqn:choice-y} of $\theta_t(\cdot)$, one has 
\begin{equation}
	\theta_t(y_t)\leq 2c_6, \qquad \forall  t < \tau(y_T) 
	\label{eq:theta-t-all-small-ST}
\end{equation}
We then intend to invoke Lemma~\ref{lem:refine} to control the term of interest. 
To do so, note that Lemma~\ref{lem:x0}, \eqref{eqn:properties-alpha-proof} and the definition \eqref{eq:defn-tao-i} of $\tau(y_T)$ taken together reveal that: 
for all $t<\tau(y_T)$ one has 
\[
	\frac{d(1-\alpha_{t})}{2(\alpha_{t}-\overline{\alpha}_{t})}\lesssim\frac{d\log T}{T}=o(1) ,
\]
\begin{align*}
 & \theta_{t}(y_{t})^{2}d^{2}\Big(\frac{1-\alpha_{t}}{\alpha_{t}-\overline{\alpha}_{t}}\Big)^{2}\log^{2}T+\varepsilon_{\score,t}(y_{t})\sqrt{\theta_{t}(y_{t})d\log T}\Big(\frac{1-\alpha_{t}}{\alpha_{t}-\overline{\alpha}_{t}}\Big)+(1-\alpha_{t})d\varepsilon_{\Jacobi,t}(y_{t})\\
 & \qquad\lesssim\frac{d^{2}\log^{4}T}{T^{2}}+\frac{\varepsilon_{\score,t}(y_{t})\sqrt{d\log^{3}T}}{T}+\frac{d\varepsilon_{\Jacobi,t}(y_{t})\log T}{T}=o(1),
\end{align*}
and
\begin{align}
 & \left|\frac{(1-\alpha_{t})\Big(\big\|\mathbb{E}\big[X_{t}-\sqrt{\overline{\alpha}_{t}}X_{0}\mymid X_{t}=y_t\big]\big\|_{2}^{2}-\mathbb{E}\big[\big\| X_{t}-\sqrt{\overline{\alpha}_{t}}X_{0}\big\|_{2}^{2}\mymid X_{t}=y_t\big]\Big)}{(\alpha_{t}-\overline{\alpha}_{t})(1-\overline{\alpha}_{t})}\right| \notag\\
 & \qquad\leq\left|\frac{(1-\alpha_{t})\mathbb{E}\big[\big\| X_{t}-\sqrt{\overline{\alpha}_{t}}X_{0}\big\|_{2}^{2}\mymid X_{t}=y_t\big]}{(\alpha_{t}-\overline{\alpha}_{t})(1-\overline{\alpha}_{t})}\right|\lesssim\frac{(1-\alpha_{t})d\log T}{\alpha_{t}-\overline{\alpha}_{t}}\lesssim\frac{d\log^2 T}{T}=o(1) .
	\label{eq:bound-covariance-norm-UB}
\end{align}
With these bounds in mind, applying relations~\eqref{eq:xt} and~\eqref{eq:yt} in Lemma~\ref{lem:main-ODE} leads to
\begin{align*}
 & \frac{p_{\sqrt{\alpha_{t}}Y_{t-1}}\big(\phi_{t}(y_{t})\big)}{p_{Y_{t}}(y_{t})}\bigg(\frac{p_{\sqrt{\alpha_{t}}X_{t-1}}\big(\phi_{t}(y_{t})\big)}{p_{X_{t}}(y_{t})}\bigg)^{-1}=\frac{p_{\phi_{t}(Y_{t})}\big(\phi_{t}(y_{t})\big)}{p_{Y_{t}}(y_{t})}\bigg(\frac{p_{\sqrt{\alpha_{t}}X_{t-1}}\big(\phi_{t}(y_{t})\big)}{p_{X_{t}}(y_{t})}\bigg)^{-1}\\
 & \qquad=1+O\Bigg(\frac{d^{2}\log^{4}T}{T^{2}}
	%+\frac{d^{6}\log^{6}T}{T^{3}}
	+\frac{\varepsilon_{\score,t}(y_{t})\sqrt{d\log^{3}T}}{T}+\frac{d\varepsilon_{\Jacobi,t}(y_{t})\log T}{T}\Bigg)
\end{align*}
for all $t<\tau(y_T)$. 
Using the fact that $y_{t-1}=\frac{1}{\sqrt{\alpha_t}} \phi_t(y_t)$ and invoking the relation \eqref{eq:recursion},  
we arrive at
\begin{align*}
\frac{p_{t-1}(y_{t-1})}{q_{t-1}(y_{t-1})} & =\left\{ 1+O\Bigg(\frac{d^{2}\log^{4}T}{T^{2}}
	%+\frac{d^{6}\log^{6}T}{T^{3}}
	+\frac{\varepsilon_{\score,t}(y_{t})\sqrt{d\log^{3}T}}{T}+\frac{d\varepsilon_{\Jacobi,t}(y_{t})\log T}{T}\Bigg) \right\} \frac{p_{t}(y_{t})}{q_{t}(y_{t})}
\end{align*}
for any $t<\tau(y_T)$. 
By abbreviating $\tau=\tau(y_{T})$ for notational simplicity, 
we reach
\begin{subequations}
	\label{eq:pt-qt-equiv-ODE-St-temp}
\begin{align}
\frac{p_{1}(y_{1})}{q_{1}(y_{1})} & =\left\{ 1+O\Bigg(\frac{d^{2}\log^{4}T}{T}
	%+\frac{d^{6}\log^{6}T}{T^{2}}
	+S_{\tau-1}(y_{\tau-1})\Bigg)\right\} \frac{p_{\tau-1}(y_{\tau-1})}{q_{\tau-1}(y_{\tau-1})}\notag\\
 & \in\left[\frac{p_{\tau-1}(y_{\tau-1})}{2q_{\tau-1}(y_{\tau-1})},\frac{2p_{\tau-1}(y_{\tau-1})}{q_{\tau-1}(y_{\tau-1})}\right],	
	\label{eq:pt-qt-equiv-ODE-St-taui-temp}
\end{align}
and similarly, 
\begin{align}
	\frac{q_{k}(y_{k})}{2p_{k}(y_{k})} \leq \frac{q_{1}(y_{1})}{p_{1}(y_{1})} \leq 2 \frac{q_{k}(y_{k})}{p_{k}(y_{k})}, \qquad \forall k < \tau. 
	\label{eq:pt-qt-equiv-ODE-St-k-temp}
\end{align}
\end{subequations}
This finishes the proof of the claim~\eqref{eq:pt-qt-equiv-ODE-St-k}.

Regarding the other claim~\eqref{eq:pt-qt-equiv-ODE-St-taui}, 
we first observe from \eqref{eq:phix-minus-I-fro-UB} that 
\[
	\bigg\|\frac{\partial\phi_{t}^{\star}(y_{t})}{\partial x}-I\bigg\|_{\mathrm{F}}^{2}\lesssim\bigg(\frac{\theta_{t}(y_{t})d(1-\alpha_{t})\log T}{1-\overline{\alpha}_{t}}\bigg)^{2}\lesssim\frac{d^{2}\log^{3}T}{T^{2}}\lesssim \frac{1}{T\log T} ,
\]
given our assumption that $T\gtrsim d^2 \log^4 T$. 
%
%We also recall from relations~\eqref{eq:xt} and~\eqref{eq:yt} in Lemma~\ref{lem:main-ODE} that
%%
%\[
%	\frac{p_{\sqrt{\alpha_{t}}Y_{t-1}}\big(\phi_{t}(y_{t})\big)}{p_{Y_{t}}(y_{t})} = 1+o(1)
%	\qquad \text{and} \qquad
%	\frac{p_{\sqrt{\alpha_{t}}X_{t-1}}\big(\phi_{t}(y_{t})\big)}{p_{X_{t}}(y_{t})} = 1+o(1). 
%\]
%%
%Therefore, 
%
Applying Lemma~\ref{lem:main-ODE} leads to 
%
%applying Lemma~\ref{lem:refine} leads to
%relations~\eqref{eq:xt} and~\eqref{eq:yt} in Lemma~\ref{lem:main-ODE} leads to
%
\begin{align*}
 & \frac{p_{\sqrt{\alpha_{t}}Y_{t-1}}\big(\phi_{t}(y_{t})\big)}{p_{Y_{t}}(y_{t})}\bigg(\frac{p_{\sqrt{\alpha_{t}}X_{t-1}}\big(\phi_{t}(y_{t})\big)}{p_{X_{t}}(y_{t})}\bigg)^{-1}=\frac{p_{\phi_{t}(Y_{t})}\big(\phi_{t}(y_{t})\big)}{p_{Y_{t}}(y_{t})}\bigg(\frac{p_{\sqrt{\alpha_{t}}X_{t-1}}\big(\phi_{t}(y_{t})\big)}{p_{X_{t}}(y_{t})}\bigg)^{-1}\\
 & \quad=1+\zeta_{t}(y_{t})+O\bigg(\bigg\|\frac{\partial\phi_{t}^{\star}}{\partial x}(y_{t})-I\bigg\|_{\mathrm{F}}^{2}+\frac{\varepsilon_{\mathsf{score},t}(y_{t})\sqrt{d\log^{3}T}}{T}+\frac{d \log T \varepsilon_{\mathsf{Jacobi},t}(x)}{T}+\frac{d\log^{3}T}{T^{2}}\bigg) 
\end{align*}
for all $t<\tau(y_T)$, where $\zeta_t(\cdot)$ is the function defined in Lemma~\ref{lem:refine}. 
Recall the fact that $y_{t-1}=\frac{1}{\sqrt{\alpha_t}} \phi_t(y_t)$ and invoke the relation \eqref{eq:recursion} to arrive at
\begin{align*}
\frac{p_{t-1}(y_{t-1})}{q_{t-1}(y_{t-1})}=\left\{ 1+\zeta_{t}(y_{t})+O\Bigg(\bigg\|\frac{\partial\phi_{t}^{\star}}{\partial x}(y_{t})-I\bigg\|_{\mathrm{F}}^{2}+\frac{d\log^{3}T}{T^{2}}+\frac{\varepsilon_{\score,t}(y_{t})\sqrt{d\log^{3}T}}{T}+\frac{d\varepsilon_{\Jacobi,t}(y_{t})\log T}{T}\Bigg)\right\} \frac{p_{t}(y_{t})}{q_{t}(y_{t})}
\end{align*}
for any $t<\tau(y_T)$. 
Apply this relation recursively over $1<t<\tau$ to conclude the proof of the claim~\eqref{eq:pt-qt-equiv-ODE-St-taui}.

%By abbreviating $\tau=\tau(y_{T})$ for notational simplicity, 
%we reach
%%
%\begin{subequations}
%	\label{eq:pt-qt-equiv-ODE-St-temp}
%\begin{align}
%\frac{q_{1}(y_{1})}{p_{1}(y_{1})} & =\left\{ 1-\sum_{t:t\leq\tau}\zeta_{t}(y_{t})+O\Bigg(\frac{d\log^{3}T}{T}+S_{\tau-1}(y_{\tau-1})+\sum_{t:t\leq\tau}\bigg\|\frac{\partial\phi_{t}^{\star}}{\partial x}(y_{t})-I\bigg\|_{\mathrm{F}}^{2}\Bigg)\right\} \frac{q_{\tau-1}(y_{\tau-1})}{p_{\tau-1}(y_{\tau-1})}\notag\\
% & \in\left[\frac{q_{\tau-1}(y_{\tau-1})}{2p_{\tau-1}(y_{\tau-1})},\frac{2q_{\tau-1}(y_{\tau-1})}{p_{\tau-1}(y_{\tau-1})}\right],	
%	\label{eq:pt-qt-equiv-ODE-St-taui-temp}
%\end{align}
%%
%and similarly, 
%%
%\begin{align}
%	\frac{q_{k}(y_{k})}{2p_{k}(y_{k})} \leq \frac{q_{1}(y_{1})}{p_{1}(y_{1})} \leq 2 \frac{q_{k}(y_{k})}{p_{k}(y_{k})}, \qquad \forall k < \tau. 
%	\label{eq:pt-qt-equiv-ODE-St-k-temp}
%\end{align}
%\end{subequations}

%In addition, we can apply Lemma~\ref{lem:refine} similarly to conclude the proof.

\subsection{Proof of Lemma~\ref{lem:I2-I3-I4-bound}}
\label{sec:proof-lem:I2-I3-I4-bound}

In the following, we shall tackle $\mathcal{I}_{2}$, $\mathcal{I}_{3}$ and $\mathcal{I}_{4}$ separately. 
Throughout this proof, we shall abbreviate $\tau=\tau(Y_T)$ (cf.~\eqref{eq:defn-tao-i}) whenever it is clear from the context.

\paragraph{The sub-collection in $\mathcal{I}_{2}$.}
By virtue of the definition~\eqref{eq:defn-I2-I3-I4-ode-I2} of $\mathcal{I}_{2}$, we make the observation that
\begin{align}
 &  \mathop{\mathbb{E}}_{Y_{T}\sim p_{T}}\bigg[\frac{q_{1}(Y_{1})}{p_{1}(Y_{1})}\ind\left\{ Y_{1}\in\mathcal{E},Y_{T}\in\mathcal{I}_2\right\} \bigg]
	\overset{\mathrm{(i)}}{\leq}  \mathop{\mathbb{E}}_{Y_{T}\sim p_{T}}\bigg[\frac{q_{1}(Y_{1})}{p_{1}(Y_{1})}\ind\left\{ Y_{1}\in\mathcal{E},Y_{T}\in\mathcal{I}_2\right\} \frac{S_{\tau}(Y_{T})}{c_{14}}\bigg]\nonumber \\
	& \quad \overset{\mathrm{(ii)}}{=}\frac{\log T}{c_{14}T} \sum_{t=2}^{\tau}\mathop{\mathbb{E}}_{Y_{T}\sim p_{T}}\bigg[\frac{q_{1}(Y_{1})}{p_{1}(Y_{1})}\ind\left\{ Y_{1}\in\mathcal{E},Y_{T}\in\mathcal{I}_2\right\} \left(d\varepsilon_{\Jacobi,t}(Y_{t})+\sqrt{d\log T}\varepsilon_{\score,t}(Y_{t})\right)\bigg]\nonumber \\
 & \quad\overset{\mathrm{(iii)}}{\leq} \frac{2\log T}{c_{14}T} \sum_{t=2}^{\tau}\mathop{\mathbb{E}}_{Y_{T}\sim p_{T}}\bigg[\frac{q_{t}(Y_{t})}{p_{t}(Y_{t})}\ind\left\{ Y_{1}\in\mathcal{E},Y_{T}\in\mathcal{I}_2\right\} \left(d\varepsilon_{\Jacobi,t}(Y_{t})+\sqrt{d\log T}\varepsilon_{\score,t}(Y_{t})\right)\bigg]\nonumber \\
 %
% & \quad=\frac{2\log T}{c_{14}T}\sum_{t=2}^{T}\sum_{i\in\mathcal{I}_{2},\tau\geq t}\mathop{\mathbb{E}}_{Y_{T}\sim p_{T}}\bigg[\frac{q_{t}(Y_{t})}{p_{t}(Y_{t})}\ind\left\{ Y_{1}\in\mathcal{E},Y_{T}\in\mathcal{I}_2\right\} \left(d\varepsilon_{\Jacobi,t}(Y_{t})+\sqrt{d\log T}\varepsilon_{\score,t}(Y_{t})\right)\bigg]\nonumber \\
 & \quad\leq\frac{2\log T}{c_{14}T}\sum_{t=2}^{T}\mathop{\mathbb{E}}_{Y_{T}\sim p_{T}}\bigg[\frac{q_{t}(Y_{t})}{p_{t}(Y_{t})}\left(d\varepsilon_{\Jacobi,t}(Y_{t})+\sqrt{d\log T}\varepsilon_{\score,t}(Y_{t})\right)\bigg]\nonumber \\
 & \quad=\frac{2\log T}{c_{14}T}\sum_{t=2}^{T}\mathop{\mathbb{E}}_{Y_{t}\sim p_{t}}\bigg[\frac{q_{t}(Y_{t})}{p_{t}(Y_{t})}\left(d\varepsilon_{\Jacobi,t}(Y_{t})+\sqrt{d\log T}\varepsilon_{\score,t}(Y_{t})\right)\bigg]\nonumber \\
 & \quad=\frac{2\log T}{c_{14}T}\sum_{t=2}^{T}\mathop{\mathbb{E}}_{Y_{t}\sim q_{t}}\Big[d\varepsilon_{\Jacobi,t}(Y_{t})+\sqrt{d\log T}\varepsilon_{\score,t}(Y_{t})\Big]\nonumber \\
	& \quad \overset{\mathrm{(iv)}}{\lesssim} \big(d\varepsilon_{\Jacobi}+\sqrt{d\log T}\varepsilon_{\score}\big)\log T.\label{eq:UB-I2-ode}
\end{align}
Here, (i) follows since $S_{\tau}\big(y_{T}\big)\geq c_{14}$ in $\mathcal{I}_2$ (see \eqref{eq:defn-I2-I3-I4-ode-I2}); 
(ii) comes from the definition of $S_{t}(\cdot)$ (see \eqref{eq:defn-xik-Stk-proof}); 
(iii) holds since (by repeating the same proof arguments as for \eqref{eq:pt-qt-equiv-ODE-St} as long as $2c_{14}$ is small enough)
\[
	\frac{p_{1}(y_{1})}{q_{1}(y_{1})}\leq  \frac{2p_{t}(y_{t})}{q_{t}(y_{t})}, \qquad \forall t\le \tau ;
\]
%
% holds for any $i\in \mathcal{I}_2$ and any $y_T\in \Omega_i$; 
and (iv) arises from \eqref{eq:score-assumptions-equiv}.

\paragraph{The sub-collection in $\mathcal{I}_{3}$.}
With regards to  $\mathcal{I}_{3}$ (cf.~\eqref{eq:defn-I2-I3-I4-ode-I3}), we can derive the following bound in a way similar to \eqref{eq:UB-I2-ode}: 
\begin{align}
 & \mathop{\mathbb{E}}_{Y_{T}\sim p_{T}}\bigg[\frac{q_{1}(Y_{1})}{p_{1}(Y_{1})}\ind\left\{ Y_{1}\in\mathcal{E},Y_{T}\in \mathcal{I}_3\right\} \bigg]\overset{\mathrm{(i)}}{\leq} \mathop{\mathbb{E}}_{Y_{T}\sim p_{T}}\bigg[\frac{q_{1}(Y_{1})}{p_{1}(Y_{1})}\ind\left\{ Y_{1}\in\mathcal{E},Y_{T}\in \mathcal{I}_3\right\} \frac{\xi_{\tau}(Y_{T})}{c_{14}}\bigg]\nonumber\\
 & =\frac{\log T}{c_{14}T}\mathop{\mathbb{E}}_{Y_{T}\sim p_{T}}\bigg[\frac{q_{1}(Y_{1})}{p_{1}(Y_{1})}\ind\left\{ Y_{1}\in\mathcal{E},Y_{T}\in \mathcal{I}_3\right\} \left(d\varepsilon_{\Jacobi,\tau}(Y_{\tau})+\sqrt{d\log T}\varepsilon_{\score,\tau}(Y_{\tau})\right)\bigg]\nonumber\\
 & \overset{\mathrm{(ii)}}{\leq}\frac{2\log T}{c_{14}T}\mathop{\mathbb{E}}_{Y_{T}\sim p_{T}}\bigg[\frac{q_{\tau-1}(Y_{\tau-1})}{p_{\tau-1}(Y_{\tau-1})}\ind\left\{ Y_{1}\in\mathcal{E},Y_{T}\in \mathcal{I}_3\right\} \left(d\varepsilon_{\Jacobi,\tau}(Y_{\tau})+\sqrt{d\log T}\varepsilon_{\score,\tau}(Y_{\tau})\right)\bigg]\notag\\
 & =\frac{2\log T}{c_{14}T}\sum_{t=2}^{T}
	\mathop{\mathbb{E}}_{Y_{T}\sim p_{T}}\bigg[\frac{q_{t-1}(Y_{t-1})}{p_{t-1}(Y_{t-1})}\ind\left\{ Y_{1}\in\mathcal{E},Y_{T}\in \mathcal{I}_3\right\} \left(d\varepsilon_{\Jacobi,t}(Y_{t})+\sqrt{d\log T}\varepsilon_{\score,t}(Y_{t})\right) \ind\{\tau=t\} \bigg]
	\label{eq:E-I3-UB-579}\\
 & \overset{\mathrm{(iii)}}{\leq}\frac{16\log T}{c_{14}T}\mathop{\mathbb{E}}_{Y_{T}\sim p_{T}}\bigg[\frac{q_{\tau}(Y_{\tau})}{p_{\tau}(Y_{\tau})}\ind\left\{ Y_{1}\in\mathcal{E},Y_{T}\in\mathcal{I}_3\right\} \left(d\varepsilon_{\Jacobi,\tau}(Y_{\tau})+\sqrt{d\log T}\varepsilon_{\score,\tau}(Y_{\tau})\right)\bigg]\notag\\
 & \leq\frac{16\log T}{c_{14}T}\sum_{t=2}^{T}\mathop{\mathbb{E}}_{Y_{T}\sim p_{T}}\bigg[\frac{q_{t}(Y_{t})}{p_{t}(Y_{t})}\left(d\varepsilon_{\Jacobi,t}(Y_{t})+\sqrt{d\log T}\varepsilon_{\score,t}(Y_{t})\right)\bigg]\notag\\
 & =\frac{16\log T}{c_{14}T}\sum_{t=2}^{T}\mathop{\mathbb{E}}_{Y_{t}\sim p_{t}}\bigg[\frac{q_{t}(Y_{t})}{p_{t}(Y_{t})}\left(d\varepsilon_{\Jacobi,t}(Y_{t})+\sqrt{d\log T}\varepsilon_{\score,t}(Y_{t})\right)\bigg]\notag\\
 & =\frac{16\log T}{c_{14}T}\sum_{t=2}^{T}\mathop{\mathbb{E}}_{Y_{t}\sim q_{t}}\Big[d\varepsilon_{\Jacobi,t}(Y_{t})+\sqrt{d\log T}\varepsilon_{\score,t}(Y_{t})\Big]\notag\\
 & \lesssim\left(d\varepsilon_{\Jacobi}+\sqrt{d\log T}\varepsilon_{\score}\right)\log T.
	\label{eq:UB-I3-ode}
\end{align}
Here, (i) comes from \eqref{eq:defn-I2-I3-I4-ode-I3}, 
(ii) arises from \eqref{eq:pt-qt-equiv-ODE-St-k}, 
whereas (iii) is a consequence of \eqref{eq:defn-I2-I3-I4-ode-I3}.

\paragraph{The sub-collection in $\mathcal{I}_{4}$.}

We now turn attention to $\mathcal{I}_4$ (cf.~\eqref{eq:defn-I2-I3-I4-ode-I4}), towards which we find it helpful to define
\begin{subequations}
	\label{eq:defn-J1t-J2t-J3t}
\begin{align}
\mathcal{J}_{1,t} & \coloneqq \Big\{ y_T :\xi_{t}\big(y_T\big)<c_{14}\Big\} \label{eq:defn-J1t-J2t-J3t-J1t}\\
\mathcal{J}_{2,t} & \coloneqq \bigg\{ y_T :\xi_{t}\big(y_T\big)\geq c_{14},\frac{q_{t-1}(y_{t-1})}{p_{t-1}(y_{t-1})}\leq\frac{8 q_{t}(y_t)}{p_{t}(y_t)}\bigg\} \label{eq:defn-J1t-J2t-J3t-J2t}\\
\mathcal{J}_{3,t} & \coloneqq \bigg\{ y_T :\xi_{t}\big(y_T\big)\geq c_{14},\frac{q_{t-1}(y_{t-1})}{p_{t-1}(y_{t-1})}>\frac{8 q_{t}(y_t)}{p_{t}(y_{t})}\bigg\}
\label{eq:defn-J1t-J2t-J3t-J3t}
\end{align}
\end{subequations}
for each $2\leq t\leq T$. 
% By virtue of the properties \eqref{eq:discretize} about $\{\Omega_i\}$, we have
% %
% \begin{subequations}
% 	\label{eq:xi-t-J123t-bound}
% \begin{align}
% \xi_{t}\big(y_{t}\big) & <2c_{14}\qquad\text{for all }i\in\mathcal{J}_{1,t}\text{ and all }y_{T}\in\Omega_{i} \label{eq:xi-t-J123t-bound-J1}\\
% \xi_{t}\big(y_{t}\big) & \geq 0.5c_{14},\frac{q_{t-1}(y_{t-1})}{p_{t-1}(y_{t-1})}\leq\frac{32 q_{t}(y_{t})}{p_{t}(y_{t})}\qquad\text{for all }i\in\mathcal{J}_{2,t}\text{ and all }y_{T}\in\Omega_{i} \label{eq:xi-t-J123t-bound-J2}\\
% %
% \xi_{t}\big(y_{t}\big) & \geq 0.5c_{14},\frac{q_{t-1}(y_{t-1})}{p_{t-1}(y_{t-1})}>\frac{8 q_{t}(y_{t})}{p_{t}(y_{t})}\qquad\text{for all }i\in\mathcal{J}_{3,t}\text{ and all }y_{T}\in\Omega_{i}
% \label{eq:xi-t-J123t-bound-J3}
% \end{align}
% \end{subequations}
%
Equipped with the above definitions, we first make the observation that 
\begin{align}
\mathop{\mathbb{E}}_{Y_{T}\sim p_{T}}\bigg[\frac{q_{1}(Y_{1})}{p_{1}(Y_{1})}\ind\left\{ Y_{1}\in\mathcal{E},Y_{T}\in\mathcal{I}_{4}\right\} \bigg] 
	& \leq 2 \mathop{\mathbb{E}}_{Y_{T}\sim p_{T}}\bigg[\frac{q_{\tau-1}(Y_{1})}{p_{\tau-1}(Y_{1})}\ind\left\{ Y_{1}\in\mathcal{E},Y_{T}\in\mathcal{I}_{4}\right\} \bigg]\notag\\
 	& = 2\sum_{t=2}^{T}\mathop{\mathbb{E}}_{Y_{T}\sim p_{T}}\bigg[\frac{q_{t-1}(Y_{t-1})}{p_{t-1}(Y_{t-1})}\ind\left\{ Y_{1}\in\mathcal{E},Y_{T}\in\mathcal{I}_{4}\right\}
	\ind\{\tau=t\} \bigg]\notag\\
	& \leq 2\sum_{t=2}^{T}\mathop{\mathbb{E}}_{Y_{T}\sim p_{T}}\bigg[\frac{q_{t-1}(Y_{t-1})}{p_{t-1}(Y_{t-1})}\ind\left\{Y_{1}\in\mathcal{E}, Y_{T}\in \mathcal{J}_{3,t}\right\} \bigg], 
	\label{eq:sum-I4-UB-13579}
\end{align}
where the first inequality follows from \eqref{eq:pt-qt-equiv-ODE-St-k}, 
and the last line comes from the definition of $\mathcal{I}_4$ (cf.~\eqref{eq:defn-I2-I3-I4-ode-I4}) and $\mathcal{J}_{3,t}$ (cf.~\eqref{eq:defn-J1t-J2t-J3t-J3t}).   
For notational simplicity, let us define, for $2\leq t\leq T$, 
\begin{align*}
h_{t} & \coloneqq\frac{q_{t}(Y_{t})}{p_{t}(Y_{t})}.
	%\left(d\varepsilon_{\Jacobi,t}(Y_{t})+\sqrt{d\log T}\varepsilon_{\score,t}(Y_{t})\right),\\
%\widetilde{h}_{t} & \coloneqq\frac{q_{t}(Y_{t})}{p_{t}(Y_{t})}\ind\left\{ Y_{1}\in\mathcal{E},Y_{T}\in\Omega_{i}\right\}\left(d\varepsilon_{\Jacobi,t}(Y_{t})+\sqrt{d\log T}\varepsilon_{\score,t}(Y_{t})\right).
	%\left(d\varepsilon_{\Jacobi,t}(Y_{t})+\sqrt{d\log T}\varepsilon_{\score,t}(Y_{t})\right);\\
%w_{t} & \coloneqq\frac{q_{t}(Y_{t})}{p_{t}(Y_{t})}\ind\left\{ Y_{1}\in\mathcal{E},Y_{T}\in\Omega_{i}\right\} \left(d\varepsilon_{\Jacobi,t}(Y_{t})+\sqrt{d\log T}\varepsilon_{\score,t}(Y_{t})\right).
\end{align*}
In view of the second inequality in \eqref{eq:defn-J1t-J2t-J3t-J3t}, one has 
$h_{t-1} > 8h_{t}$ as long as $y_T \in \mathcal{J}_{3,t}$. 
Consequently, 
\begin{align*}
& \sum_{t=2}^{T}h_{t-1}\ind\left\{Y_T \in \mathcal{J}_{3,t}\right\} \\
& < 
\sum_{t=2}^{T}h_{t-1}\ind\left\{Y_T \in \mathcal{J}_{3,t}\right\} + \frac{1}{7}\sum_{t=2}^{T}h_{t-1}\ind\left\{Y_T \in \mathcal{J}_{3,t}\right\} -\frac{8}{7}\sum_{t=2}^{T}h_{t}\ind\left\{Y_T \in \mathcal{J}_{3,t}\right\}\\
& =\frac{8}{7}\sum_{t=2}^{T}
\bigg(
	\Big(h_{t-1} - h_{t-1}\ind\left\{Y_T \in \mathcal{J}_{1,t}\right\}  - h_{t-1}\ind\left\{Y_T \in \mathcal{J}_{2,t}\right\} \Big)
-
 \Big(h_{t} - h_{t}\ind\left\{Y_T \in \mathcal{J}_{1,t}\right\}  - h_{t}\ind\left\{Y_T \in \mathcal{J}_{2,t} \right\}\Big) 
\bigg)\\
&= \frac{8}{7}\sum_{t=2}^{T}\big(h_{t}-{h}_{t-1}\big)\ind\left\{Y_T \in \mathcal{J}_{1,t}\cup \mathcal{J}_{2,t}\right\}
+
	\frac{8}{7}\sum_{t=2}^{T} \big(h_{t-1} - h_{t}\big).
%& \leq\frac{8}{7}\sum_{t=2}^{T}\bigg(\sum_{i\in \mathcal{J}_{1,t}}+\sum_{i\in \mathcal{J}_{2,t}}\bigg)h_{t}
%+
%	\frac{8}{7}\sum_{t=2}^{T}\sum_{i\geq 1} (h_{t-1,i} - h_{t}),
\end{align*}
Here, the second line holds true since, for all $t$, one has (i) $\mathcal{J}_{1,t}\cup \mathcal{J}_{2,t} \cup \mathcal{J}_{3,t}= \mathbb{R}^d$, and (ii) $\mathcal{J}_{1,t}$, 
$\mathcal{J}_{2,t}$ and $\mathcal{J}_{3,t}$ are disjoint. 
Substituting this into \eqref{eq:sum-I4-UB-13579}, we arrive at
\begin{align}
	&\mathop{\mathbb{E}}_{Y_{T}\sim p_{T}}\bigg[\frac{q_{1}(Y_{1})}{p_{1}(Y_{1})}
	\ind\left\{ Y_{1}\in\mathcal{E}, Y_T \in \mathcal{J}_{3,t}\right\} \bigg]  
	\leq 2\sum_{t=2}^{T}\mathop{\mathbb{E}}_{Y_{T}\sim p_{T}}\big[h_{t-1}\ind\left\{Y_T \in \mathcal{J}_{3,t}\right\}\big] \notag\\
 & \quad \leq\frac{8}{7}\sum_{t=2}^{T}\Big(\mathop{\mathbb{E}}_{Y_{T}\sim p_{T}}\big[h_{t}\ind\left\{Y_T \in \mathcal{J}_{1,t}\cup \mathcal{J}_{2,t}\right\}\big] - \mathop{\mathbb{E}}_{Y_{T}\sim p_{T}}\big[h_{t-1}\ind\left\{Y_T \in \mathcal{J}_{1,t}\cup \mathcal{J}_{2,t}\right\}\big]\Big) \notag\\
	&\qquad\qquad + \frac{8}{7}\sum_{t=2}^{T}\Big(\mathop{\mathbb{E}}_{Y_{T}\sim p_{T}}\big[h_{t-1}\big]-\mathop{\mathbb{E}}_{Y_{T}\sim p_{T}}\big[h_{t}\big]\Big) . 
	\label{eq:sum-I4-UB-7924}
\end{align}

In order to further bound \eqref{eq:sum-I4-UB-7924}, 
we make note of a few basic facts. Firstly, the identity below holds: 
\begin{align*}
\mathop{\mathbb{E}}_{Y_{T}\sim p_{T}}\big[h_{t}\big] 
& =\mathop{\mathbb{E}}_{Y_{T}\sim p_{T}}\bigg[\frac{q_{t}(Y_{t})}{p_{t}(Y_{t})} \bigg]
	% =\mathop{\mathbb{E}}_{Y_{T}\sim p_{T}}\bigg[\frac{q_{t}(Y_{t})}{p_{t}(Y_{t})} \bigg] 
	=\mathop{\mathbb{E}}_{Y_{t}\sim p_{t}}\bigg[\frac{q_{t}(Y_{t})}{p_{t}(Y_{t})} \bigg]
	= 1, \qquad  2\leq t\leq T.
% \\
 % & =\mathop{\mathbb{E}}_{Y_{t}\sim p_{t}}\bigg[\frac{q_{t}(Y_{t})}{p_{t}(Y_{t})}\ind\left\{ Y_{1}\in\mathcal{E}\right\} \bigg]=  q_{t}(\mathcal{E}) .
\end{align*}
Secondly, by defining the set
\begin{align}
	\mathcal{E}_t &\coloneqq \Big\{y : q_{t}(y) >  \exp\big(- c_{6} d\log T \big)  \Big\}, \qquad 2\leq t\leq T, 
\end{align}
we can show that
\begin{align*}
	&\sum_{t=2}^{T}\mathop{\mathbb{E}}_{Y_{T}\sim p_{T}}\big[h_{t}\ind\left\{ Y_{t}\notin\mathcal{E}_{t}, Y_T\in\mathcal{J}_{1,t}\right\} \big] 
 \le\sum_{t=2}^{T}\mathop{\mathbb{E}}_{Y_{T}\sim p_{T}}\bigg[\frac{q_{t}(Y_{t})}{p_{t}(Y_{t})}\ind\left\{ Y_{t}\notin\mathcal{E}_{t}\right\} \bigg]
=\sum_{t=2}^{T}\mathop{\mathbb{E}}_{Y_{t}\sim p_{t}}\bigg[\frac{q_{t}(Y_{t})}{p_{t}(Y_{t})}\ind\left\{ Y_{t}\notin\mathcal{E}_{t}\right\} \bigg]\\
 & \qquad =\sum_{t=2}^{T}\mathbb{P}_{Y_{t}\sim q_{t}}\left\{ Y_{t}\notin\mathcal{E}_{t} \right\} =\sum_{t=2}^{T}\mathbb{P}_{X_{t}\sim q_{t}}\left\{ X_{t}\notin\mathcal{E}_{t}\right\} \\
 & \qquad\leq\sum_{t=2}^{T}\mathbb{P}_{X_{t}\sim q_{t}}\left\{ X_{t}\notin\mathcal{E}_{t} \text{ and }\|X_{t}\|_{2}\leq T^{2c_{R}+2}\right\} +\sum_{t=2}^{T}\mathbb{P}_{X_{t}\sim q_{t}}\left\{ \|X_{t}\|_{2}>T^{2c_{R}+2}\right\} \\
 & \qquad\leq\sum_{t=2}^{T}{\displaystyle \int}_{x_{t}:q_{t}(x_{t})\leq\exp(-c_{6}d\log T),\|x_{t}\|_{2}\leq T^{2c_{R}+2}}q_{t}(x_{t})\mathrm{d}x_{t}+T\exp\big(-c_{6}d\log T\big)\\
 & \qquad\leq T\big(2T^{2c_{R}+2}\big)^{d}\exp(-c_{6}d\log T)+T\exp\big(-c_{6}d\log T\big)\le\exp\Big(-\frac{c_{6}}{2}d\log T\Big),
\end{align*}
where the penultimate line comes from \eqref{eq:Xt-2range-ODE}, and
the last inequality holds true as long as $c_{6}$ is large enough. 
Plugging the preceding two results into \eqref{eq:sum-I4-UB-7924}, we reach
\begin{align}
	&\mathop{\mathbb{E}}_{Y_{T}\sim p_{T}}\bigg[\frac{q_{1}(Y_{1})}{p_{1}(Y_{1})}\ind\left\{ Y_{1}\in\mathcal{E},Y_{T}\in\mathcal{I}_{4}\right\} \bigg]  
	 \le \frac{8}{7}\sum_{t=2}^{T}
	\mathop{\mathbb{E}}_{Y_{T}\sim p_{T}}\big[ (h_{t}-h_{t-1}) \ind\left\{y_{t}\in\mathcal{E}_t, Y_T \in \mathcal{J}_{1,t}\right\}\big] 
	%- \mathop{\mathbb{E}}_{Y_{T}\sim p_{T}}\big[(h_{t-1}-1) \ind\left\{y_{t}\in\mathcal{E}_t, y_T\in \mathcal{J}_{1,t}\right\}\big]\Big) 
	\notag\\
	&\quad\quad\qquad\qquad + \frac{8}{7}\sum_{t=2}^{T}\mathop{\mathbb{E}}_{Y_{T}\sim p_{T}}\big[h_{t}\ind\left\{Y_T \in \mathcal{J}_{2,t}\right\}\big] + \exp\Big(-\frac{c_{6}}{2}d\log T\Big).
	% + q_{1}(\mathcal{E})
	% - q_{T}(\mathcal{E}).
	\label{eq:sum-I4-UB-7935}
\end{align}
%

% Note 
% \begin{align*}
% 	&\Exs_{Y_T\sim p_T} \sum_{t=2}^{T}\sum_{i\geq 1} (h_{t-1} - \widetilde{h}_t)\\
% 	&=
% 	\Exs_{Y_T\sim p_T}\sum_{t=2}^{T}\sum_{i\geq 1}\frac{q_{t-1}(Y_{t-1})}{p_{t-1}(Y_{t-1})}\ind\left\{ Y_{1}\in\mathcal{E},Y_{T}\in\Omega_{i}\right\}\left(d\varepsilon_{\Jacobi,t}(Y_{t})+\sqrt{d\log T}\varepsilon_{\score,t}(Y_{t})\right) \\
% 	&\quad -
% 	\Exs_{Y_T\sim p_T}\sum_{t=2}^{T}\sum_{i\geq 1} \frac{q_{t}(Y_{t})}{p_{t}(Y_{t})}\ind\left\{ Y_{1}\in\mathcal{E},Y_{T}\in\Omega_{i}\right\}\left(d\varepsilon_{\Jacobi,t}(Y_{t})+\sqrt{d\log T}\varepsilon_{\score,t}(Y_{t})\right)\\
% 	%
% 	&=\sum_{t=2}^{T}\Exs_{Y_T\sim p_T} (q_{t-1}(Y_{t-1}) - q_t(Y_t))\left(d\varepsilon_{\Jacobi,t}(Y_{t})+\sqrt{d\log T}\varepsilon_{\score,t}(Y_{t})\right)
% \end{align*}

%Taking this observation in conjunction with \eqref{eq:sum-I4-UB-13579}, we arrive at
%%
%\begin{align}
% & \sum_{i\in\mathcal{I}_{4}}\mathop{\mathbb{E}}_{Y_{T}\sim p_{T}}\bigg[\frac{q_{1}(Y_{1})}{p_{1}(Y_{1})}\ind\left\{ Y_{1}\in\mathcal{E},Y_{T}\in\Omega_{i}\right\} \left(d\varepsilon_{\Jacobi,t}(Y_{t})+\sqrt{d\log T}\varepsilon_{\score,t}(Y_{t})\right)\bigg]\notag\\
% & \lesssim \frac{\log T}{T}\sum_{t=1}^{T}\bigg(\sum_{i\in J_{1,t}}+\sum_{i\in J_{2,t}}\bigg)\mathop{\mathbb{E}}_{Y_{T}\sim p_{T}}\bigg[\frac{q_{t}(Y_{t})}{p_{t}(Y_{t})}\ind\left\{ Y_{1}\in\mathcal{E},Y_{T}\in\Omega_{i}\right\} \left(d\varepsilon_{\Jacobi,t}(Y_{t})+\sqrt{d\log T}\varepsilon_{\score,t}(Y_{t})\right)\bigg]. 
%\end{align}
%%

As it turns out, the sum w.r.t.~the set $\mathcal{J}_{1,t}$ and the sum w.r.t.~the set $\mathcal{J}_{2,t}$ 
in \eqref{eq:sum-I4-UB-7935}  can be controlled respectively using the same arguments as for $\mathcal{I}_1$ and $\mathcal{I}_3$ to derive
\begin{align*}
\sum_{t=2}^{T}\mathop{\mathbb{E}}_{Y_{T}\sim p_{T}}\big[(h_{t}-h_{t-1})\ind\left\{ y_{t}\in\mathcal{E}_{t},Y_{T}\in\mathcal{J}_{1,t}\right\} \big] & \lesssim\frac{d\log^{4}T}{T}+\sqrt{d\log^{3}T}\varepsilon_{\score}+(d\log T)\varepsilon_{\Jacobi},\\
\sum_{t=2}^{T}\mathop{\mathbb{E}}_{Y_{T}\sim p_{T}}\big[h_{t}\ind\left\{ Y_{T}\in\mathcal{J}_{2,t}\right\} \big] & \lesssim\sqrt{d\log^{3}T}\varepsilon_{\score}+(d\log T)\varepsilon_{\Jacobi};
\end{align*}
we omit the arguments here for the sake of brevity.  
Therefore, we have proven that
\begin{align}
\mathop{\mathbb{E}}_{Y_{T}\sim p_{T}}\bigg[\frac{q_{1}(Y_{1})}{p_{1}(Y_{1})}\ind\left\{ Y_{1}\in\mathcal{E},Y_{T}\in\mathcal{I}_{4}\right\} \bigg] & \lesssim\frac{d\log^{4}T}{T}+\sqrt{d\log^{3}T}\varepsilon_{\score}+(d\log T)\varepsilon_{\Jacobi}.  
	\label{eq:UB-I4-ode}
\end{align}

\paragraph{Putting all this together.} 
Taking \eqref{eq:UB-I2-ode}, \eqref{eq:UB-I3-ode} and \eqref{eq:UB-I4-ode} together, we establish the advertised result. 

\bibliographystyle{apalike}
\bibliography{reference-diffusion}

%%%%%%%%%%%%%%%%%%%%%%%%%%%%%%%%%%%%%%%%%%%%%%%%%%%%%%%%%%%%%%%%%%%%%%%%%%%%%%%%%%%%%%%%%%%%5

\end{document}